\documentclass[11pt, a4paper]{article}

\usepackage{booktabs}       
 \usepackage{multirow}
 \usepackage{amsfonts}       

\usepackage{amsmath}
\usepackage{amssymb}
\usepackage{mathtools}
\usepackage{graphicx}
\usepackage{hyperref}
\usepackage{geometry}
\usepackage{algorithm}
\usepackage{algpseudocode}
\usepackage{booktabs}
\usepackage{caption}
\usepackage{subcaption}
\usepackage{array}
\usepackage{physics}
\usepackage{enumitem}
\usepackage[svgnames]{xcolor}

\usepackage{newtxtext}
\usepackage{newtxmath}

\usepackage{amsthm}
\usepackage{thmtools}
\usepackage{thm-restate}

\theoremstyle{plain}

\numberwithin{theorem}{section}
\numberwithin{lemma}{section}

\usepackage{etoolbox} 
\newbool{includeapp}
\setbool{includeapp}{true} 

\usepackage{siunitx}

\hypersetup{
  colorlinks = true,
  allcolors = DarkRed,
}

\definecolor{color1}{HTML}{0077BB}
\definecolor{color2}{HTML}{EE7733}
\definecolor{color3}{HTML}{33BBEE}
\definecolor{color4}{HTML}{EE3377}
\definecolor{color5}{HTML}{CC3311}
\definecolor{color6}{HTML}{009988}
\definecolor{color7}{HTML}{BBBBBB}

\input{feynman_setup}

\usepackage[
    textwidth=3cm, 
    ]{todonotes}

\newcounter{todocounter}

\colorlet{jgcolor}{green!40!white}

\newcommand{\jginline}[2][]{\tikzexternaldisable
  \ifthenelse { \equal {#1} {} }
    { \def\temp {#2} }  
    { \def\temp {#1} }   
  \refstepcounter{todocounter}\todo[color=jgcolor,inline,caption={\textbf{\thetodocounter. JG} \temp}]{\textbf{\thetodocounter. JG:} #2}{}\tikzexternalenable}
\newcommand{\jgblock}[2][{}]{\tikzexternaldisable
  \ifthenelse { \equal {#1} {} }
  { \def\templist {\emph{block comment}}
    \def\tempheader {}}  
  { \def\templist {#1}
    \def\tempheader {#1}}   
  \refstepcounter{todocounter}\todo[color=jgcolor,inline,caption={\textbf{\thetodocounter. JG} \templist}]{\textbf{\thetodocounter. JG: \tempheader}\\\begin{minipage}{\textwidth}#2\end{minipage}}{}\tikzexternalenable}

\colorlet{mgcolor}{blue!20!white}

\newcommand{\mginline}[2][]{\tikzexternaldisable
  \ifthenelse { \equal {#1} {} }
    { \def\temp {#2} }  
    { \def\temp {#1} }   
  \refstepcounter{todocounter}\todo[color=mgcolor,inline,caption={\textbf{\thetodocounter. MG} \temp}]{\textbf{\thetodocounter. MG:} #2}{}\tikzexternalenable}
\newcommand{\mgblock}[2][{}]{\tikzexternaldisable
  \ifthenelse { \equal {#1} {} }
  { \def\templist {\emph{block comment}}
    \def\tempheader {}}  
  { \def\templist {#1}
    \def\tempheader {#1}}   
  \refstepcounter{todocounter}\todo[color=mgcolor,inline,caption={\textbf{\thetodocounter. MG} \templist}]{\textbf{\thetodocounter. MG: \tempheader}\\\begin{minipage}{\textwidth}#2\end{minipage}}{}\tikzexternalenable}

\colorlet{pmcolor}{yellow!40!white}

\newcommand{\pminline}[2][]{\tikzexternaldisable
  \ifthenelse { \equal {#1} {} }
    { \def\temp {#2} }  
    { \def\temp {#1} }   
  \refstepcounter{todocounter}\todo[color=pmcolor,inline,caption={\textbf{\thetodocounter. PM} \temp}]{\textbf{\thetodocounter. PM:} #2}{}\tikzexternalenable}
\newcommand{\pmblock}[2][{}]{\tikzexternaldisable
  \ifthenelse { \equal {#1} {} }
  { \def\templist {\emph{block comment}}
    \def\tempheader {}}  
  { \def\templist {#1}
    \def\tempheader {#1}}   
  \refstepcounter{todocounter}\todo[color=pmcolor,inline,caption={\textbf{\thetodocounter. PM} \templist}]{\textbf{\thetodocounter. PM: \tempheader}\\\begin{minipage}{\textwidth}#2\end{minipage}}{}\tikzexternalenable}

\usepackage[backend=biber,style=numeric,sorting=none]{biblatex}
\DeclareFieldFormat{urldate}{}

\DeclareSourcemap{
  \maps[datatype=bibtex]{
    \map{
      \step[fieldset=primaryclass, null]
    }
  }
}

\newcommand{\EE}{\mathbb{E}}

\makeatletter
\def\@fnsymbol#1{\ensuremath{\ifcase#1\or *\or a\or b\or c\or d\or e\or f\or g \or h \else\@ctrerr\fi}}
\makeatother



\title{Criticality and Saturation in\\ 
Orthogonal Neural Networks}

\author{%
  Max Guillen\\
  Department of Mathematical Sciences\\
  Chalmers University of Technology\\
  University of Gothenburg\\
  SE-412 96 Gothenburg, Sweden \\
  \texttt{maxgui@chalmers.se} \\
  \and
  Jan E.\ Gerken\\
  Department of Mathematical Sciences\\
  Chalmers University of Technology\\
  University of Gothenburg\\
  SE-412 96 Gothenburg, Sweden \\
  \texttt{gerken@chalmers.se}
}

\date{}

\begin{document}

\maketitle

\begin{abstract}
 It has been known for a long time that initializing weight matrices to be orthogonal instead of having i.i.d. Gaussian components can improve training performance. This phenomenon can be analyzed using finite-width corrections, where the infinite-width statistics are supplemented by a power series in $1/\mathrm{width}$. In particular, recent empirical results by Day et al. show that the tensors appearing in this treatment stabilize for large depth, as opposed to the tensors of i.i.d.-initialized networks. In this article, we derive explicit layer-wise recursion relations for the tensors appearing in the finite-width expansion of the network statistics in the case of orthogonal initializations. We also provide an extension of recently-introduced Feynman diagrams for the corresponding recursions in the i.i.d.-case which are valid to all orders in $1/\mathrm{width}$. Finally, we show explicitly that the recursions we derive reproduce the stability of the finite-width tensors which was observed for activation functions with vanishing fixed point. This work therefore provides a theoretical explanation for the stability of nonlinear networks of finite width initialized with orthogonal weights, closing a long-standing gap in the literature. We validate our theoretical results experimentally by showing that numerical solutions of our recursion relations and their analytical large-depth expansions agree excellently with Monte-Carlo estimates from network ensembles.
\end{abstract}



\section{Introduction}
\label{sec:intro}

Neural network training usually starts from randomly initialized parameters  and a good choice of initialization distribution is critical to ensure stable training dynamics to avoid exploding or vanishing gradients. Typically, the initial weight components are sampled i.i.d.\ from Gaussian distributions whose variance scales inversely with the width of the network~\cite{glorot2010, he2015a}. However, it was found that initializing the weight matrices to be orthogonal (i.e. to satisfy $WW^{\top}=\mathbb{I}$, yielding eigenvalues all exactly one) improves performance~\cite{saxe2014, mishkin2016}.

This phenomenon has been studied theoretically for many years. However, these studies are largely restricted to linear networks~\cite{saxe2014, hu2019} or mean-field theory approaches in the large-width limit~\cite{pennington2017, pennington2018a, xiao2018}. Also the neural tangent kernel (NTK) at infinite width~\cite{jacot2018} has been used to analyze orthogonal initializations. However, at infinite width, the NTKs of networks initialized with orthogonal and Gaussian weights agree~\cite{huang2021}, rendering the infinite-width NTK an unsuitable tool to understand the observed differences between the two initializations.

In this article, we instead use $1/\text{width}$ corrections to the neural network statistics at infinite-width in order to investigate the effect of orthogonal initializations for nonlinear networks at finite width. In this framework~\cite{roberts2022}, the network statistics are studied in terms of the cumulants of the preactivations and derivatives. The cumulants are then decomposed into tensors which are expanded in a power series in $1/\text{width}$, with the zero-order contribution corresponding to the infinite-width limit. For orthogonal initializations, the first corrections to the statistics of preactivations were worked out analytically and corrections to the remaining tensors appearing in the first correction to the infinite-width training dynamics were sampled numerically~\cite{day2023}. These results show empirically that the network statistics stabilize at large depth for orthogonal initializations, in contrast to the case of Gaussian initializations. In summary, no theoretical analysis was so far able to show that orthogonal initializations of nonlinear, finite-width neural networks lead to improved performance over Gaussian initializations. In this article, we fill this gap.

In particular, the finite-width corrections to the tensors governing the network statistics obey layer-wise recursion relations. These have been derived only for the cumulant of four preactivations~\cite{day2023} (the $V_{4}$ tensor). Here, we derive the recursion relations for the additional ten fundamental tensors ($D$, $F$, $A$, $B$, $P$, $Q$, $R$, $S$, $T$ and $U$) which govern the first correction to the training dynamics. The orthogonal statistics are captured by Weingarten functions~\cite{weingarten1978} which add additional terms to the recursion relations compared to the Gaussian case. We then show by explicit iteration of our recursions and by expanding their solutions around the large-depth limit that our theoretical results reproduce the empirically observed stability of these tensors. Since these recursion relations are laborious to derive algebraically, we also extend a recently-introduced framework to facilitate these computations using Feynman diagrams~\cite{guillen2025} to the case of orthogonal weights. This extension, which introduces a novel propagator and charge, simplifies the computation of recursion relations dramatically. We prove the completeness of our Feynman diagrams to all orders in $1\text{width}$ and demonstrate their power by computing the correction to order $1/\text{width}^{2}$ of the recursion relation of the cumulant of six preactivations (the $V_{6}$ tensor).

\paragraph{Limitations} Although our analysis is limited to MLPs, there are no conceptual obstacles for extensions to other architectures. Our experiments and the large-depth expansion are restricted to the $\tanh$ activation function, although our theoretical results are general. Finally, our explicit calculations are mostly restricted to the order $1/\text{width}$, although our Feynman diagrams are valid to all orders.

Our main contributions are:
\begin{itemize}
\item We derive for the case of orthogonal initializations the recursion relations for the tensors $D$, $F$, $A$, $B$, $P$, $Q$, $R$, $S$, $T$, $U$ and the first correction to the NTK mean, $\Theta^{\{1\}}$, at order $1/n$, where $n$ is the width of the network. We furthermore derive the recursion relation for the tensor $V_{6}$ at order $1/n^{2}$.
\item We provide Feynman rules that simplify the computation of these recursion relations dramatically and prove that they reproduce the correct algebraic expressions to all orders.
\item We solve our recursion relations by iteration in the single-input case and show that the solution matches empirical results for tanh-MLPs. We analytically compute a large-depth expansion of the solution and show that it supports the saturation observed empirically. We furthermore extend criticality results for the neural network Gaussian process kernel (NNGP) and the NTK from Gaussian to orthogonal initializations.
\end{itemize}


\section{Related work}
Orthogonal initializations of neural network weights have been considered for a long time~\cite{ngiam2010, saxe2014}. An ablation showing performance boosts by orthogonal initializations can be found in Ref.~\cite{mishkin2016}.

Theoretical studies of orthogonal initializations mainly consider linear neural networks~\cite{saxe2014} or use a mean-field theory approach which requires the infinite-width limit. In the latter case, a considerable amount of work has been done on dynamical isometry, which requires orthogonal initializations and supposes that the singular values of the Jacobian of the network are all close to one. With the help of free probability theory, it could be shown that ReLU nonlinearities destroy the dynamical isometry present in deep linear networks with orthogonal initialization, but the sigmoid nonlinearity retains it~\cite{pennington2017}. These results have been extended to further nonlinearities~\cite{pennington2018a} and to convolutional networks~\cite{xiao2018}. All these studies focus on initialization but convergence rates for orthogonally initialized networks were also derived in the linear setting~\cite{hu2019}, confirming the faster convergence in this case.

The NTK for orthogonal initializations at infinite width was found to agree with that of Gaussian initializations~\cite{huang2021}, motivating the study of finite-width corrections. The approach used in this article was introduced in Ref.~\cite{yaida2020}, with further details available in a textbook~\cite{roberts2022}, yielding results about symmetries~\cite{maiti2021}, initialization stability~\cite{banta2024} and scaling laws~\cite{maloney2022, zhang2025} among others. Our Feynman-diagrammatic treatment of the finite-width corrections is based on the rules introduced in~\cite{guillen2025} which include all tensors appearing in the first correction to the training dynamics. Earlier papers using Feynman diagrams to capture preactivation statistics include~\cite{banta2024, halverson2021, grosvenor2022, demirtas2024, maloney2022, zhang2025}.

This article is most closely related to Ref.~\cite{day2023}, where finite-width corrections were first considered in the orthogonal case. We substantially extend the results in this reference, as outlined above.


\section{Orthogonal neural networks}\label{sec:orthogonal_matrices}
We consider an \(L\)-layer multilayer perceptron (MLP)
\(\mathcal{N} : \mathbb{R}^{n_{\mathrm{in}}} \to \mathbb{R}^{n_{\mathrm{out}}}\),
defined recursively by the feed-forward equations
\begin{align}
z^{(\ell)}_{i}(x)
&=
\sum_{j=1}^{n}
W_{ij}^{(\ell)} \,
\sigma\!\left(z^{(\ell-1)}_{j}(x)\right),
\qquad
i = 1, \ldots, n,\label{eq:7}
\end{align}
where \(x \in \mathbb{R}^{n_{\mathrm{in}}}\) denotes the input, indices \(i,j,\ldots\) label neurons, \(\sigma\) is a pointwise activation function, and the weight matrices \(W^{(\ell)} \in \mathbb{R}^{n \times n}\) are sampled independently from the orthogonal group with respect to the Haar measure. For simplicity, we assume that all hidden layers have the same width \(n_\ell = n\), and that biases are omitted.

For a matrix W drawn from the orthogonal Haar measure \(\mathrm{Haar}(\mathrm{O}(n))\) and scaled by $\sqrt{C_{W}}$, the joint moments of its entries are characterized by the orthogonal Weingarten calculus:
\begin{align}\label{weingartenformula}
\mathbb{E}\!\left[
W^{(\ell)}_{i_1 j_1} \cdots W^{(\ell)}_{i_{2k} j_{2k}}
\right]
&=
(C_{W})^{k}\sum_{\pi,\sigma \in \mathcal{P}_2(2k)}
\delta_{i}^{\pi}\,
\delta_{j}^{\sigma}\,
\mathcal{W}\!\left(\pi^{-1}\sigma\right),
\end{align}
where \(C_{W}\) is a layer-independent hyperparameter, \(\mathcal{P}_2(2k)\) denotes the set of pair partitions of \(\{1,\dots,2k\}\), and \(\mathcal{W}\) is the orthogonal Weingarten function~\cite{Collins2006jgn, Collins2009}. As illustrative examples, the second- and fourth-order moments are given by 
\begin{align}
    \mathbb{E}[W^{(\ell)}_{i_{1}j_{1}}W^{(\ell)}_{i_{2}j_{2}}] &= C_{W}(12)_{i}(12)_{j}\mathcal{W}[1]& \label{twopoints}\\
        \mathbb{E}[W^{(\ell)}_{i_{1}j_{1}}W^{(\ell)}_{i_{2}j_{2}}W^{(\ell)}_{i_{3}j_{3}}W^{(\ell)}_{i_{4}j_{4}}] &= 
        (C_{W})^{2} \begin{pmatrix}
        (12)(34)_{i} \\ (13)(24)_{i} \\ (14)(23)_{i}
        \end{pmatrix}^{T}
        \begin{pmatrix}
        \mathcal{W}[1,1] & \mathcal{W}[2] & \mathcal{W}[2]\\
        \mathcal{W}[2] & \mathcal{W}[1,1] & \mathcal{W}[2]\\
        \mathcal{W}[2] & \mathcal{W}[2] & \mathcal{W}[1,1] 
        \end{pmatrix}
        \begin{pmatrix}
        (12)(34)_{j} \\
        (13)(24)_{j} \\
        (14)(23)_{j}
        \end{pmatrix} & \label{fourpoints}
      \end{align}
where we employ the compact notation \((12)(34)\cdots(2k-1 \, 2k)_{i} \equiv 
\delta_{i_1 i_2}\,\delta_{i_3 i_4}\cdots
\delta_{i_{2k-1} i_{2k}}\), and the corresponding orthogonal Weingarten functions are given by
\begin{align}
\mathcal{W}[1] &= \frac{1}{n}, \qquad
\mathcal{W}[1,1] = \frac{n+1}{(n-1)n(n+2)}, \qquad
\mathcal{W}[2] = -\frac{1}{(n-1)n(n+2)} .
\label{firstWfunctions}
\end{align}
Consequently, in contrast to the standard Gaussian setting where Wick\'s theorem implies vanishing higher-order cumulants, the cumulants of the orthogonal weights \(W\), i.e. the connected parts of their joint moments, are nonzero and play a central role in determining the finite-width statistics of the neural network ensemble \(\mathcal{N}\).

As discussed in~\cite{roberts2022}, effective field theory (EFT) techniques enable the
systematic computation of preactivation statistics to all orders in
\(1/n\). Within this framework, the neural network Gaussian process (NNGP) kernel $\widehat{K}^{(\ell+1)}_{i_{1}i_{2}}(x_{1}, x_{2}) = z_{i_{1}}^{(\ell+1)}(x_{1})z_{i_{2}}^{(\ell+1)}(x_{2})$, defines the expectation $K^{(\ell+1)}(x_{1},x_{2}) \equiv \mathbb{E}[\widehat{K}_{ii}^{(\ell+1)}(x_{1},x_{2})]$. At leading order in \(1/n\), this kernel satisfies the recursion
\begin{align}\label{nngprecursion}
K^{(\ell+1)}(x_{1},x_{2}) &= C_{W}\langle\sigma^{(\ell)}_{1}\sigma^{(\ell)}_{2}\rangle_{K^{(\ell)}} + \mathcal{O}\left(\frac{1}{n}\right)
\end{align}
where we use the shorthand \(\sigma^{(\ell)}_{a} \equiv \sigma\!\left(z^{(\ell)}(x_{a})\right)\), and \(\langle \cdot \rangle_{K}\) denotes Gaussian expectation with covariance \(K\).

At next-to-leading order, the fourth-order cumulant of the preactivations takes the form
\begin{align}
\mathbb{E}[z_{i_{1}}^{(\ell+1)}(x_{1})&z_{i_{2}}^{(\ell+1)}(x_{2})z_{i_{3}}^{(\ell+1)}(x_{3})z_{i_{4}}^{(\ell+1)}(x_{4})]\nonumber\\ 
 &= \frac{1}{n}\bigg[(12)(34)_{i}V^{(\ell+1)}_{1234} + (13)(24)_{i}V^{(\ell+1)}_{1324} + (14)(23)_{i}V^{(\ell+1)}_{1423}\bigg]
\end{align}
where we write $V_{1234}^{(\ell+1)} = V^{(\ell+1)}(x_{1},x_{2},x_{3},x_{4})$. The function \(V^{(\ell+1)}_{1234}\) obeys the recursion
\begin{equation}\label{fourpointrecursion}
\begin{split}
\hspace{-10pt}V^{(\ell+1)}_{1234}
\!\!&=\!\!
(C_W)^{2}
\Big[
\langle\sigma^{(\ell)}_{1}\!\sigma^{(\ell)}_{2}\!\sigma^{(\ell)}_{3}\!\sigma^{(\ell)}_{4}\rangle_{\!K^{(\ell)}}{-}\langle \sigma^{(\ell)}_{1}\!\sigma^{(\ell)}_{2}\rangle_{\!K^{(\ell)}} \!\langle \sigma^{(\ell)}_{3}\!\sigma^{(\ell)}_{4} \rangle_{\!K^{(\ell)}} {-} \langle \sigma^{(\ell)}_{1}\!\sigma^{(\ell)}_{3} \rangle_{\!K^{(\ell)}} \!\langle \sigma^{(\ell)}_{2}\!\sigma^{(\ell)}_{4}\rangle_{\!K^{(\ell)}}\\
&\hspace{-22pt} 
{-}
\langle \sigma^{(\ell)}_{1}\sigma^{(\ell)}_{4} \rangle_{\!K^{(\ell)}} \langle \sigma^{(\ell)}_{2}\sigma^{(\ell)}_{3} \rangle_{\!K^{(\ell)}}
\Big]{+}\frac{(C_W)^{2}}{4}\!\!\!\!\!\!\!
\sum_{\substack{\beta_i \in \{1,2,3,4\} \\ i=1,\ldots,4}}\!\!\!\!\!\!\!
V^{(\ell)}_{\beta_1\beta_2\beta_3\beta_4}
\!
\Biggl\langle \!\frac{d^{2}(
\sigma^{(\ell)}_{1}\sigma^{(\ell)}_{2})}{dz^{(\ell)}_{\beta_{1}}d z^{(\ell)}_{\beta_{2}}} \!\Biggr\rangle_{\!\!\!K^{(\ell)}}\!\!\!
\Biggl\langle \!\frac{d^{2}(
\sigma^{(\ell)}_{3}\sigma^{(\ell)}_{4})}{d z^{(\ell)}_{\beta_{3}}d z^{(\ell)}_{\beta_{4}}} \!\Biggr\rangle_{\!\!\!K^{(\ell)}} 
\end{split}
\end{equation}
As noted in~\cite{day2025}, the recursion~\eqref{fourpointrecursion} differs from its Gaussian counterpart by the last two terms in the square brackets. As we show below, this small change has significant consequences for information processing in orthogonal networks.

\subsection{The NTK at finite-width}
\label{subsec:ntk-finite-width}

The layer-\(\ell\) empirical (NTK) associated with the network \(\mathcal{N}\) is defined as
\begin{align}
\widehat{\Theta}^{(\ell)}_{ij}(x_1,x_2)
=
\sum_{\mu,\ell'}
\lambda_{\theta}^{(\ell')}\frac{\partial z^{(\ell)}_{i}(x_1)}{\partial \theta^{(\ell')}_{\mu}}
\,
\frac{\partial z^{(\ell)}_{j}(x_2)}{\partial \theta^{(\ell')}_{\mu}},
\end{align}
where \(\lambda_{\theta}^{(\ell')}\) is a layer-dependent training hyperparameter, \(\theta^{(\ell')}_{\mu}\) denotes the weight parameters at layer \(\ell'\), and the sum runs over all parameters in layers
\(\ell' \le \ell\). The NTK of the full network is given by
\(\widehat{\Theta} = \widehat{\Theta}^{(L)}\). As is well known, this kernel is a nonlinear, initialization-dependent object that evolves during training. In the small learning-rate regime, the NTK provides a leading-order description of the training dynamics. Consequently, understanding its behavior is important for analyzing neural network learning dynamics~\cite{jacot2018}.

To compute the statistics of the NTK, we employ the chain rule to derive the following forward recursion:
\begin{align}
\widehat{\Theta}^{(\ell+1)}_{i_1 i_2}(x_1,x_2)
&=
(12)_{i}
\left(
\lambda_{b}^{(\ell+1)}
+
\frac{\lambda_{W}^{(\ell+1)}}{n}
\sum_{j=1}^{n}
\sigma^{(\ell)}_{j,1}\,
\sigma^{(\ell)}_{j,2}
\right)
\nonumber
\\
&\quad
+
\sum_{j_1,j_2=1}^{n}
W^{(\ell+1)}_{i_1 j_1}\,
W^{(\ell+1)}_{i_2 j_2}\,
\sigma'^{(\ell)}_{j_1,1}\,
\sigma'^{(\ell)}_{j_2,2}\,
\widehat{\Theta}^{(\ell)}_{j_1 j_2}(x_1,x_2),\label{ntkrecursion}
\end{align}
where the standard rescaling \(\lambda_{W}^{(\ell+1)} \mapsto \lambda_{W}^{(\ell+1)}/n\) has been applied to ensure that all network parameters contribute at the same order. As discussed in~\cite{roberts2022}, an appropriate choice of the depth-dependence of the training hyperparameters depends on the activation function under consideration.

The fluctuations of the NTK are defined as
\(
\widehat{\Delta \Theta}^{(\ell)}
\equiv
\widehat{\Theta}^{(\ell)}
-
\mathbb{E}[\widehat{\Theta}^{(\ell)}].
\)
Applying the EFT techniques developed in~\cite{roberts2022} to the recursionc\eqref{ntkrecursion} yields the following leading-order equation for the mean NTK
\begin{align}\label{ntkrecursionleadingorder}
\Theta^{(\ell+1)}_{12}
&=
\lambda_{b}^{(\ell+1)}
+
\lambda_{W}^{(\ell+1)}
\big\langle
\sigma^{(\ell)}_{1}\sigma^{(\ell)}_{2}
\big\rangle_{K^{(\ell)}}
+
C_{W}
\big\langle
\sigma'^{(\ell)}_{1}\sigma'^{(\ell)}_{2}
\big\rangle_{K^{(\ell)}}
\,
\Theta^{(\ell)}_{12}
+
\mathcal{O}\!\left(\frac{1}{n}\right).
\end{align}
As in the Gaussian case, the leading-order statistics of the NTK in \(1/n\) are determined by the cross-cumulant between the preactivations and the NTK fluctuation, together with the NTK variance. The former admits the following compact expression
\begin{align}
  &\EE^{c}_{\theta}[z^{(\ell+1)}_{i_{1}}(x_{1})z^{(\ell+1)}_{i_{2}}(x_{2})\widehat{\Delta\Theta}^{(\ell+1)}_{i_{3}i_{4}}(\textcolor{color1}{x_{3}},\textcolor{color1}{x_{4}})]\nonumber\\
  &\qquad=\frac{1}{n}\!\!\left(\! D^{(\ell+1)}_{12\textcolor{color1}{34}}(12)(34)_{i}{+}F^{(\ell+1)}_{1\textcolor{color1}{3}2\textcolor{color1}{4}}(13)(24)_{i}{+}F^{(\ell+1)}_{1\textcolor{color1}{4}2\textcolor{color1}{3}}(14)(23)_{i} \!\right)\,,\label{eq:3}
\end{align}
where the tensors $D$ and $F$ can be computed directly from the relation \eqref{ntkrecursion}, see Appendix~\ref{app:orthogonal_NTK_tensors} for details. As shown in the Appendix, a careful algebraic analysis yields, for instance, the following leading-order \(1/n\) recursion relations for the tensor $F$
\begin{align}
  F^{(\ell+1)}_{1\textcolor{color1}{3}2\textcolor{color1}{4}} &=(C_{W})^{2}\left[ \langle \sigma^{(\ell)}_{1}\sigma^{(\ell)}_{2}\sigma'^{(\ell)}_{\textcolor{color1}{3}}\sigma'^{(\ell)}_{\textcolor{color1}{4}} \rangle_{K^{(\ell)}} - \langle \sigma^{(\ell)}_{1}\sigma^{(\ell)}_{2}\rangle_{K^{(\ell)}}\langle \sigma'^{(\ell)}_{\textcolor{color1}{3}}\sigma'^{(\ell)}_{\textcolor{color1}{4}} \rangle_{K^{(\ell)}} \right]\Theta^{(\ell)}_{\textcolor{color1}{3}\textcolor{color1}{4}}\nonumber\\
  &\hspace{-0.5cm}+(C_{W})^{2}\!\!\!\sum_{\alpha,\beta,\gamma,\delta=1}^{4}\langle \sigma^{(\ell)}_{1}\sigma'^{(\ell)}_{\textcolor{color1}{3}}z^{(\ell)}_{\alpha} \rangle_{K^{(\ell)}}\langle \sigma_{2}^{(\ell)}\sigma'^{(\ell)}_{\textcolor{color1}{4}}z^{(\ell)}_{\beta} \rangle_{K^{(\ell)}} K^{\alpha\gamma}_{(\ell)}K^{\beta\delta}_{(\ell)}F^{(\ell)}_{\gamma\textcolor{color1}{3}\delta\textcolor{color1}{4}}+\mathcal{O}\left( \frac{1}{n} \right) \label{eq:F}
\end{align}
As we will show, this recursion can be derived efficiently using a diagrammatic framework based on a consistent set of Feynman rules and graph-theoretic principles, yielding a substantial simplification over conventional algebraic approaches. An analogous construction applies to the tensor $D$, as well as to the NTK variance tensors $A$ and $B$ \cite{roberts2021a}, see Appendix~\ref{app:orthogonal_NTK_tensors} for details.


\section{Feynman diagrams}
\label{sec:feynman-diagrams}

A novel diagrammatic framework, inspired by the Feynman-diagram description of interactions among fundamental particles in high-energy physics, was recently introduced for the study of deep neural networks with Gaussian initializations~\cite{guillen2025,banta2024}. In this section, we extend this framework to incorporate the orthogonality properties of the network parameters.

\subsection{Feynman rules}\label{subsec:feynman-rules}

Without loss of generality, consider the channel $(12)_{i}(34)_{i}\ldots (2k\,2k-1)_{i}$, corresponding to the pairing $(12)(34)\ldots (2k\,2k-1)$. The Feynman rules reproducing tensors with this channel structure can be organized into two groups. The first group implements the orthogonality constraints as follows:
\allowdisplaybreaks
\begin{enumerate}
    \item Preactivations and NTKs are denoted by external lines, as illustrated below.
\begin{align}
    z_{\alpha} \equiv
    \begin{tikzpicture}[baseline=-0.1cm]
      \begin{feynman}
        \vertex (l) {};
        \vertex[right = 0pt of l, dot, minimum size=3pt, label = {left: {\footnotesize $\alpha^{}$}}] (x1) {};
        \vertex[right = 30pt of x1, dot, minimum size=0pt] (b) {};
        \diagram*{
          (x1),
          (x1) -- [inner sep = 4pt] (b) 
        };
      \end{feynman}
    \end{tikzpicture}
    &\qquad \widehat{\Delta \Theta}_{\textcolor{color1}{\alpha\beta}} \equiv
    \begin{tikzpicture}[baseline=0.05cm]
      \begin{feynman}
        \vertex (l) {};
        \vertex[right = 0pt of l, color1, dot, minimum size=3pt, label = {left: {\footnotesize $\textcolor{color1}{\alpha^{}}$}}] (x1) {};
        \vertex[above = 8pt of l, color1, dot, minimum size=3pt, label = {left: {\footnotesize $\textcolor{color1}{\beta^{}}$}}] (x2) {};
        \vertex[right = 30pt of x1, dot, minimum size=0pt] (b) {};
        \vertex[right = 30pt of x2, dot, minimum size=0pt] (bb) {};
        \diagram*{
          (x1), (x2),
          (x1) -- [color1, ghost, inner sep = 4pt] (b),
          (x2) -- [color1, ghost, inner sep = 4pt] (bb)
        };
      \end{feynman}
    \end{tikzpicture}\label{eq:6main}
  \end{align}
In~\eqref{eq:6main}, a colored line represents a single NTK label. Distinct colors are used for external dotted lines associated with different NTKs. The colors of the dotted lines correspond to the $\theta$-indices appearing in these definitions. Since the $\theta$-indices are contracted in pairs, distinct colors encode the corresponding contraction pattern. 

\item Define the cubic vertices as
\begingroup
\allowdisplaybreaks
\begin{align}\label{orthogonalityvertex}
 \begin{tikzpicture}[baseline=(b), scale=0.8, transform shape]
      \begin{feynman}
        \vertex (l) {};
        \vertex[below = 15pt of l, dot, minimum size=3pt, label = {left: {\footnotesize $\alpha^{c}$}}] (x1) {};
        \vertex[above = 15pt of l, dot, minimum size=3pt, label = {left: {\footnotesize $\beta^{c}$}}] (x2) {};
        \vertex[right = 10pt of l, dot, minimum size=0pt] (v12) {};
        \vertex[right = 33pt of v12, dot, minimum size=0pt] (b) {};
        \diagram*{
          (x1) --  (v12) --  (x2),
          (v12) -- [photon, edge label = {\scriptsize \;$\sigma_{i,\alpha}^{(\ell)}\sigma_{i,\beta}^{(\ell)}$}, inner sep = 4pt] (b) 
        };
      \end{feynman}
    \end{tikzpicture} &\sim C_{W} & 
    \begin{tikzpicture}[baseline=(b), scale=0.8, transform shape]
      \begin{feynman}
        \vertex (l) {};
        \vertex[below = 15pt of l, color1, dot, minimum size=3pt, label = {left: {\footnotesize $\textcolor{color1}{\alpha^{}}$}}] (x1) {};
        \vertex[above = 15pt of l, color1, dot, minimum size=3pt, label = {left: {\footnotesize $\textcolor{color1}{\beta^{}}$}}] (x2) {};
        \vertex[right = 10pt of l, dot, minimum size=0pt] (v12) {};
        \vertex[right = 30pt of v12, dot, minimum size=0pt] (b) {};
        \diagram*{
          (x1) --  [color1, ghost] (v12) --  [color1, ghost] (x2),
          (v12) -- [black, photon, edge label = {\scriptsize \;$\sigma_{i,\alpha}^{(\ell)}\sigma_{i,\beta}^{(\ell)}$}, inner sep = 4pt] (b) 
        };
      \end{feynman}
    \end{tikzpicture} &\sim 1  &
    \begin{tikzpicture}[baseline=(b), scale=0.8, transform shape]
      \begin{feynman}
        \vertex (l) {};
        \vertex[below = 15pt of l, color1, dot, minimum size=3pt, label = {left: {\footnotesize $\textcolor{color1}{\alpha}^{c}$}}] (x1) {};
        \vertex[above = 15pt of l, color1, dot, minimum size=3pt, label = {left: {\footnotesize $\textcolor{color1}{\beta}^{c}$}}] (x2) {};
        \vertex[right = 10pt of l, dot, minimum size=0pt] (v12) {};
        \vertex[right = 30pt of v12, dot, minimum size=0pt] (b) {};
        \diagram*{
          (x1) --  [color1, ghost] (v12) --  [color1, ghost] (x2),
          (v12) -- [black, photon, edge label = {\scriptsize \;$\Theta^{(\ell)}_{\alpha\beta}\sigma'^{(\ell)}_{i,\alpha}\sigma'^{(\ell)}_{i,\beta}$}, inner sep = 4pt] (b) 
        };
      \end{feynman}
    \end{tikzpicture} &\sim C_{W}  \nonumber\\
    \begin{tikzpicture}[baseline=(b), scale=0.8, transform shape]
      \begin{feynman}
        \vertex (l) {};
        \vertex[below = 15pt of l, color1, dot, minimum size=3pt, label = {left: {\footnotesize $\textcolor{color1}{\alpha}^{c}$}}] (x1) {};
        \vertex[above = 15pt of l, color1, dot, minimum size=3pt, label = {left: {\footnotesize $\textcolor{color1}{\beta}^{c}$}}] (x2) {};
        \vertex[right = 10pt of l, dot, minimum size=0pt] (v12) {};
        \vertex[right = 30pt of v12, dot, minimum size=0pt] (b) {};
        \diagram*{
          (x1) --  [color1, ghost] (v12) --  [color1, ghost] (x2),
          (v12) -- [color1doubghost, edge label = {\scriptsize \;$\sigma'^{(\ell)}_{i,\textcolor{color1}{\alpha}}\sigma'^{(\ell)}_{i,\textcolor{color1}{\beta}}$}, inner sep = 4pt] (b) 
        };
      \end{feynman}
    \end{tikzpicture}&\sim C_{W}  &
    \begin{tikzpicture}[baseline=(b), scale=0.8, transform shape]
      \begin{feynman}
        \vertex (l) {};
        \vertex[below = 15pt of l, dot, minimum size=3pt, label = {left: {\footnotesize $\alpha^{c}$}}] (x1) {};
        \vertex[above = 15pt of l, color1, dot, minimum size=3pt, label = {left: {\footnotesize $\textcolor{color1}{\beta}^{c}$}}] (x2) {};
        \vertex[right = 10pt of l, dot, minimum size=0pt] (v12) {};
        \vertex[right = 33pt of v12, dot, minimum size=0pt] (b) {};
        \diagram*{
          (x1) --  (v12) --  [color1, ghost] (x2),
          (v12) -- [color1blackghost, edge label = {\scriptsize \;$\sigma^{(\ell)}_{i,\alpha}\sigma'^{(\ell)}_{i,\textcolor{color1}{\beta}}$}, inner sep = 4pt] (b) 
        };
      \end{feynman}
    \end{tikzpicture} &\sim C_{W}  
    &\begin{tikzpicture}[baseline=(b), scale=0.8, transform shape]
      \begin{feynman}
        \vertex (l) {};
        \vertex[below = 15pt of l, color1, dot, minimum size=3pt, label = {left: {\footnotesize $\textcolor{color1}{\alpha}^{c}$}}] (x1) {};
        \vertex[above = 15pt of l, color2, dot, minimum size=3pt, label = {left: {\footnotesize $\textcolor{color2}{\beta}^{c}$}}] (x2) {};
        \vertex[right = 10pt of l, dot, minimum size=0pt] (v12) {};
        \vertex[right = 30pt of v12, dot, minimum size=0pt] (b) {};
        \diagram*{
          (x1) --  [color1, ghost] (v12) --  [color2, ghost] (x2),
          (v12) -- [color2color1ghost, edge label = {\scriptsize \;$\sigma'^{(\ell)}_{i,\textcolor{color1}{\alpha}}\sigma'^{(\ell)}_{i,\textcolor{color2}{\beta}}$}, inner sep = 4pt] (b) 
        };
      \end{feynman}
    \end{tikzpicture} &\sim C_{W} 
  \end{align}
  \endgroup
Here the superscript $c$, referred to as the orthogonality charge, keeps track of the label's orthogonal character, and lines that do not end in a dot represent internal lines.

\item Draw a square propagator connecting internal lines in all possible ways, consistent with the chosen pairing. The square represents the full expectation value
\begin{align}
\begin{tikzpicture}[baseline=(b)]
      \begin{feynman}
        \vertex (l) {};
        \vertex[right = 0pt of l, label = {above: {\textcolor{color1}{$ $}}}] (v12) {};
        \vertex[right = 32pt of v12, squareblob] (b) {};
        \vertex[right = 32pt of b, label = {above: {\textcolor{color1}{$ $}}}] (v34) {};
        \vertex[right = 8pt of v34] (r) {};
        \diagram*{
          (v12) -- [photon] (b) -- [photon] (v34)
        };
        \vertex[below = 15pt of b] {{\scriptsize $\EE[\,\cdot\,]$}};
      \end{feynman}
\end{tikzpicture}
\end{align}
This procedure generates distinct diagram types, both connected and disconnected. The connected diagrams are further classified as $s$-class diagrams, defined by the number $s$ of square propagators apppearing in the diagram. The square propagator obeys an additional selection rule: all k-class diagrams formed from pairs of external lines corresponding to different object types vanish.



\item For each $s$-class diagram, generate all inequivalent permutations of its $2m$ external labels carrying orthogonality charge. Multiply each resulting diagram by $\frac{1}{n}$ for every uncharged pairing, and by the appropriate $m$-class Weingarten function $\mathcal{W}$, determined by the relative ordering $\tau$ of the diagram's labels with respect to the original pairing $\pi = (12)(34)\ldots (2k\,2k-1)$: $\mathcal{W}[\tau,\pi]=\mathcal{W}[\ell(\tau\circ \pi)]$, where $\circ$ denotes ordinary permutation multiplication and $\ell(e)$ denotes the cycle length of $e$.

\item Multiply each $s$-class contribution by the M\"obius coefficient $(-1)^{s-1}(s-1)!$, and sum over all classes.

\end{enumerate}
The second group implements the effective field theory techniques developed in~\cite{roberts2022}, applied to the square propagator in the diagrammatic construction of the previous step, through the following set of Feynman rules analogous to those introduced in~\cite{guillen2025,banta2024}:
\begin{enumerate}
    \setcounter{enumi}{5} 
\item We define the bare propagator as
\begin{align}
    \langle\hspace{3mm}\rangle_{K^{(\ell)}} \equiv
    \begin{tikzpicture}[baseline=-0.1cm]
      \tikzfeynmanset{every blob = {/tikz/fill=white!50, /tikz/minimum size=15pt}}
      \begin{feynman}
        \vertex (l) {};
        \vertex[above = 0pt of l, dot, minimum size=0pt] (v12) {};
        \vertex[above = 0pt of v12, blob] (b) {};
        \vertex[above = 0pt of b, dot, minimum size=0pt] (v34) {};
        \diagram*{
          (v12) -- (b) -- (v34), 
        };
      \end{feynman}
    \end{tikzpicture}   
\end{align}
  where $\langle \hspace{3mm}\rangle_{K^{(\ell)}}$ denotes a zero-mean Gaussian expectation with covariance specified by $K^{(\ell)}$. The expectation value is taken over the decorations of the internal lines attached to the propagator, which obeys the same selection rules described in~\cite{guillen2025}. These rules are summarized as follows:
\begin{enumerate}[label=(\alph*)]
\item Propagators may only connect to internal lines emanating from cubic vertices or from the internal quartic vertices introduced below. In particular, propagators cannot be directly connected to other propagators.

\item Dotted lines attached to a propagator do not enter the Gaussian expectation value, as they carry no decorations.

\item Each preactivation line decorated with $z_{i}$ acts as a derivative with respect to $z_{i}$ acting on the argument of the Gaussian expectation value.

\item The neural indices of all internal lines connected to a propagator must be identical.

\item If both dotted and dashed lines of the same color are attached to the propagator, they must appear in pairs carrying the same sample index. The two lines in each pair attach to different vertices. Moreover, if both vertices are drawn in the orientation specified in the Feynman rules, the top-to-bottom ordering of the sample indices (and therefore the colors) of the lines connected to the two vertices must coincide.

\item A pair of dashed lines of the same color connected to the propagator contributes a factor $\Theta_{\alpha\beta}$ when the two lines attach to different vertices, where $\alpha$ and $\beta$ denote the sample indices of the pair.
\end{enumerate}
  \item Quartic vertices are defined analogously, following~\cite{guillen2025}. Explicitly,
  \begin{align}
     \begin{tikzpicture}[baseline=(b), scale=0.8, transform shape]
      \begin{feynman}
        \vertex (l) {};
        \vertex[below = 15pt of l, dot, minimum size=3pt, label = {left: {\footnotesize $\alpha_{1}$}}] (x1) {};
        \vertex[above = 15pt of l, dot, minimum size=3pt, label = {left: {\footnotesize $\alpha_{2}$}}] (x2) {};
        \vertex[right = 20pt of l, quarticblob] (b) {};
        \vertex[below = 25pt of b] {\scriptsize $\frac{1}{n}V_{\alpha_{1}\alpha_{2}\alpha_{3}\alpha_{4}}^{(\ell+1)}$}; 
        \vertex[right = 20pt of b] (r) {};
        \vertex[above = 15pt of r, dot, minimum size=3pt, label = {right: {\footnotesize $\alpha_{3}$}}] (x3) {};
        \vertex[below = 15pt of r, dot, minimum size=3pt, label = {right: {\footnotesize $\alpha_{4}$}}] (x4) {};
        \diagram*{
          (x1) -- (b) -- (x2), 
          (x3) -- (b) -- (x4)
        };
      \end{feynman}
    \end{tikzpicture}
     & &
    \begin{tikzpicture}[baseline=(b), scale=0.8, transform shape]
      \begin{feynman}
        \vertex (l) {};
        \vertex[below = 15pt of l, dot, minimum size=3pt, label = {left: {\footnotesize $\alpha_{1}$}}] (x1) {};
        \vertex[above = 15pt of l, dot, minimum size=3pt, label = {left: {\footnotesize $\alpha_{2}$}}] (x2) {};
        \vertex[right = 20pt of l, quarticblob] (b) {};
        \vertex[below = 25pt of b] {\scriptsize $\frac{1}{n}D_{\alpha_{1}\alpha_{2}\textcolor{color1}{\alpha_{3}\alpha_{4}}}^{(\ell+1)}$}; 
        \vertex[right = 20pt of b] (r) {};
        \vertex[above = 15pt of r, dot, color1, minimum size=3pt, label = {right: {\footnotesize $\textcolor{color1}{\alpha_{3}}$}}] (x3) {};
        \vertex[below = 15pt of r, dot, color1, minimum size=3pt, label = {right: {\footnotesize $\textcolor{color1}{\alpha_{4}}$}}] (x4) {};
        \diagram*{
          (x1) -- (b) -- (x2), 
          (x3) -- [color1, ghost] (b) -- [color1, ghost] (x4)
        };
      \end{feynman}
    \end{tikzpicture}
     & &
    \begin{tikzpicture}[baseline=(b), scale=0.8, transform shape]
      \begin{feynman}
        \vertex (l) {};
        \vertex[below = 15pt of l, dot, minimum size=3pt, label = {left: {\footnotesize $\alpha_{1}$}}] (x1) {};
        \vertex[above = 15pt of l, dot, color1, minimum size=3pt, label = {left: {\footnotesize $\textcolor{color1}{\alpha_{3}}$}}] (x2) {};
        \vertex[right = 20pt of l, quarticblob] (b) {};
        \vertex[below = 25pt of b] {\scriptsize $\frac{1}{n}F_{\alpha_{1}\textcolor{color1}{\alpha_{3}}\alpha_{2}\textcolor{color1}{\alpha_{4}}}^{(\ell+1)}$}; 
        \vertex[right = 20pt of b] (r) {};
        \vertex[above = 15pt of r, dot, minimum size=3pt, label = {right: {\footnotesize $\alpha_{2}$}}] (x3) {};
        \vertex[below = 15pt of r, dot, color1, minimum size=3pt, label = {right: {\footnotesize $\textcolor{color1}{\alpha_{4}}$}}] (x4) {};
        \diagram*{
          (x1) -- (b) -- [color1, ghost] (x2), 
          (x3) -- (b) -- [color1, ghost] (x4)
        };
      \end{feynman}
    \end{tikzpicture}
    & & 
    \begin{tikzpicture}[baseline=(b), scale=0.8, transform shape]
      \begin{feynman}
        \vertex (l) {};
        \vertex[below = 15pt of l, dot, color2, minimum size=3pt, label = {left: {\footnotesize $\textcolor{color2}{\alpha_{1}}$}}] (x1) {};
        \vertex[above = 15pt of l, dot, color2, minimum size=3pt, label = {left: {\footnotesize $\textcolor{color2}{\alpha_{2}}$}}] (x2) {};
        \vertex[right = 20pt of l, quarticblob] (b) {};
        \vertex[below = 25pt of b] {\scriptsize $\frac{1}{n}A_{\textcolor{color2}{\alpha_{1}}\textcolor{color2}{\alpha_{2}}\textcolor{color1}{\alpha_{3}}\textcolor{color1}{\alpha_{4}}}^{(\ell+1)}$}; 
        \vertex[right = 20pt of b] (r) {};
        \vertex[above = 15pt of r, dot, color1, minimum size=3pt, label = {right: {\footnotesize $\textcolor{color1}{\alpha_{3}}$}}] (x3) {};
        \vertex[below = 15pt of r, dot, color1, minimum size=3pt, label = {right: {\footnotesize $\textcolor{color1}{\alpha_{4}}$}}] (x4) {};
        \diagram*{
          (x1) -- [color2, ghost] (b) -- [color2, ghost] (x2), 
          (x3) -- [color1, ghost] (b) -- [color1, ghost] (x4)
        };
      \end{feynman}
    \end{tikzpicture}
     & & 
    \begin{tikzpicture}[baseline=(b), scale=0.8, transform shape]
      \begin{feynman}
        \vertex (l) {};
        \vertex[below = 15pt of l, dot, color2, minimum size=3pt, label = {left: {\footnotesize $\textcolor{color2}{\alpha_{1}}$}}] (x1) {};
        \vertex[above = 15pt of l, dot, color1, minimum size=3pt, label = {left: {\footnotesize $\textcolor{color1}{\alpha_{3}}$}}] (x2) {};
        \vertex[right = 20pt of l, quarticblob] (b) {};
        \vertex[below = 25pt of b] {\scriptsize $\frac{1}{n}B_{\textcolor{color2}{\alpha_{1}}\textcolor{color1}{\alpha_{3}}\textcolor{color2}{\alpha_{2}}\textcolor{color1}{\alpha_{4}}}^{(\ell+1)}$}; 
        \vertex[right = 20pt of b] (r) {};
        \vertex[above = 15pt of r, dot, color2, minimum size=3pt, label = {right: {\footnotesize $\textcolor{color2}{\alpha_{2}}$}}] (x3) {};
        \vertex[below = 15pt of r, dot, color1, minimum size=3pt, label = {right: {\footnotesize $\textcolor{color1}{\alpha_{4}}$}}] (x4) {};
        \diagram*{
          (x1) -- [color2, ghost] (b) -- [color1, ghost] (x2), 
          (x3) -- [color2, ghost] (b) -- [color1, ghost] (x4)
        };
      \end{feynman}
    \end{tikzpicture} \label{feynmanrulesquarticntk}
  \end{align}
\item Higher-order NTK and preactivation tensors are introduced via a natural generalization of the vertices in~\eqref{feynmanrulesquarticntk}.
\item The square propagator decomposes into all connected and disconnected diagrams built from the bare propagator, quartic vertices, and higher-order vertices, with internal lines remaining undotted. This decomposition respects the selection rules (a)-(f).
\end{enumerate}
In Appendix~\ref{app:proofs_three}, we prove that the Feynman rules (1)-(9) are complete to all orders in the 1/n expansion. We further derive a simplified formulation valid at leading order in \(1/n\) in Appendix~\ref{app:simplified_feynman_rules}. As a consistency check, we recover the results of~\cite{banta2023} and~\cite{guillen2025} for preactivations and the NTK with Gaussian weights by restricting to the leading diagonal Weingarten contributions.

These Feynman rules satisfy the following theorem:

\begin{restatable}[]{theorem}{firsttheorem}
\label{theoremone}
The Feynman rules stated in items~1-9, uniquely determine the recursion relations governing the layerwise evolution of the orthogonal NTK tensors $D$, $F$, $A$, $B$ at order \(1/n\).
\end{restatable}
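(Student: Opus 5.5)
Throughout, "recursion" means the layer-$(\ell+1)$ expression for each of $D$, $F$, $A$, $B$ at order $1/n$, in terms of layer-$\ell$ quantities. The plan is to compare the diagrammatic output with the algebraic derivation term by term, using the all-orders completeness result of Appendix~\ref{app:proofs_three} as the backbone and specializing it to order $1/n$.

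\textbf{Step 1: algebraic target.} Start from the forward equations~\eqref{eq:7} and~\eqref{ntkrecursion}. For each tensor, write the relevant mixed cumulant at layer $\ell+1$: the $z z \widehat{\Delta\Theta}$ cumulant~\eqref{eq:3} for $D,F$, and the NTK variance for $A,B$. Each is a sum over neural indices of a product of two or four layer-$(\ell+1)$ weights times layer-$\ell$ functions ($\sigma\sigma$, $\sigma\sigma'$, $\Theta\sigma'\sigma'$, and the $1/n$-rescaled $\lambda_W$ term).

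By layer independence, the weight expectation factorizes from the layer-$\ell$ expectation. Apply the Weingarten formula~\eqref{weingartenformula}. With two weights this gives $\mathcal{W}[1]=1/n$. With four weights it gives the $3\times 3$ matrix of~\eqref{fourpoints}, with entries from~\eqref{firstWfunctions}.

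Project onto a fixed channel, say $(12)(34)_i$. The diagonal entry $\mathcal{W}[1,1]=n^{-2}+O(n^{-3})$ reproduces the Gaussian contribution. The off-diagonal entries $\mathcal{W}[2]=-n^{-3}+\dots$, contracted against the $j$-deltas, each produce one free index sum of size $n$. That makes them contribute at order $1/n$: these are exactly the extra ``subtracted'' terms, as in~\eqref{fourpointrecursion} and the second term of~\eqref{eq:F}.

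Finally, convert the remaining layer-$\ell$ expectations of products of $\sigma,\sigma',\Theta$ into Gaussian expectations plus $V,D,F,A,B$ corrections, using the EFT expansion of~\cite{roberts2022}. This yields the target recursions.

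\textbf{Step 2: diagrammatic side.} Enumerate, for each tensor, all diagrams allowed by rules 1--5. External lines and cubic vertices with charge $c$ are joined through square propagators. The $s$-class counting with Möbius weights $(-1)^{s-1}(s-1)!$ is precisely the moment-to-cumulant inversion over set partitions, so summing classes yields the connected part. Rule 4 assigns $\mathcal{W}[\ell(\tau\circ\pi)]$ to each relative ordering $\tau$ of charged labels. At order $1/n$ only two kinds of $\tau$ survive: the identity ordering, carrying $\mathcal{W}[1,1]$, and single transpositions, carrying $\mathcal{W}[2]$. Any $\tau$ whose cycle structure would require $\mathcal{W}$ with more cycles is subleading.

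Then expand each square propagator with rules 6--9. At order $1/n$, a connected square is either a bare Gaussian propagator or contains exactly one quartic vertex from~\eqref{feynmanrulesquarticntk}. Selection rules (a)--(f) fix the derivative insertions and the $\Theta$ factors. One checks that each surviving diagram equals exactly one term of Step 1: the bare propagators give the Gaussian expectations, and the single-quartic diagrams give the $\langle\sigma\sigma' z\rangle K^{-1}K^{-1}F$-type terms. The matching of $\mathcal{W}$ arguments follows from identifying the $j$-index delta contractions with $\tau\circ\pi$.

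\textbf{Step 3: uniqueness, and the main obstacle.} Uniqueness follows once we show the rules admit no ambiguous choice at order $1/n$. Rules 3--4 enumerate inequivalent permutations without overcounting, and the selection rule ``different object types vanish'' removes the cross terms between distinct objects that would otherwise appear.

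I expect the main obstacle to be the power counting behind Step 2's order-$1/n$ truncation. Free neural-index sums multiply the $n^{-3}$ Weingarten entries. Disconnected $s$-class contributions of order $O(1)$ must cancel exactly against products of lower cumulants. I would handle this first for $D$ and $F$ (four weights, one NTK), checking that the Möbius sum reproduces the subtraction in~\eqref{eq:F}. The argument then extends to $A$ and $B$, where the two NTK colors require tracking rule (e) carefully.
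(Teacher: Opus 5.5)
Your overall plan (algebraic target, enumerate with rules 1--5, expand squares with rules 6--9, match term by term) is the paper's. However, the step that actually carries the proof is missing. In its place, Step 1 asserts something false: that every off-diagonal $\mathcal{W}[2]$ entry contributes at order $1/n$.

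Since $\mathcal{W}[2]=-n^{-3}+\dots$, such a term reaches order $1/n$ only if its double sum $\sum_{j,k}$ produces a full $n^{2}$. One free sum, as you write, gives only $n^{-2}$, which is subleading. So the term survives only if, for $j\neq k$, the square propagator splits into two \emph{disconnected} bare propagators, each with nonvanishing leading value. Whether that split is allowed is decided by the selection rules (a)--(f): a bare propagator cannot carry an unpaired dashed line. This is color conservation, equivalently $\mathbb{E}[\widehat{\Theta}_{jk}]=0$ for $j\neq k$.

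Applying this criterion diagram by diagram is the content of the theorem.
\begin{itemize}
\item For $D$ and $A$, the $\mathcal{W}[2]$ diagrams have vertices carrying only one line of each color. Every orthogonal correction is therefore $O(n^{-2})$, and the recursions coincide with the Gaussian ones.
\item For $F$, only the diagram with $\sigma_j\sigma_j$ on one side and $\Theta\,\sigma'_k\sigma'_k$ on the other survives. The crossed $(14)(23)$ diagram and the $\widehat{\Delta\Theta}$ piece drop out.
\item For $B$, only the diagram with $\Theta\,\sigma'\sigma'$ on both sides survives.
\end{itemize}
Taken literally, your Step 1 would insert spurious orthogonal terms into $D$ and $A$. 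Relatedly, your Step 2 says a square at order $1/n$ is either a bare propagator or contains one quartic vertex. This omits exactly the disconnected two-bare-propagator decomposition allowed by rule 9, which is where the surviving orthogonal correction lives.

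In Step 3 you also misattribute the extra term in \eqref{eq:F} by planning to check that ``the M\"obius sum reproduces the subtraction.'' For $F$ (and $B$) there are no $2$-class diagrams at all, because the square propagator vanishes on pairs of lines of different type. The subtraction $-\langle\sigma_1\sigma_2\rangle\langle\sigma'_3\sigma'_4\rangle\Theta_{34}$ is therefore not a M\"obius/cumulant subtraction. It is precisely the single surviving $\mathcal{W}[2]$ diagram, as your own Step 1 had correctly suggested. The M\"obius $s=2$ term matters only in channels like those of $D$ and $A$.

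To close the gap, state the criterion ``$n^{-3}$ needs $n^{2}$, which needs a disconnected, color-conserving split'' and apply it to each diagram. The remainder of your matching is then what the paper does:
\begin{itemize}
\item $j=k$ gives a single bare propagator with one sum;
\item $j\neq k$ in the $\mathcal{W}[1,1]$ diagram gives two bare propagators joined by an $F$ or $B$ quartic vertex, with two sums.
\end{itemize}
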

\begin{proof}

We illustrate the proof using the tensor $F$, deriving the recursion relation~\eqref{eq:F} directly from the Feynman rules above. The analysis for the tensors $D$, $A$ and $B$ is analogous and deferred to Appendix~\ref{app:proofs_one}, completing the proof.

The Feynman rules (1)-(5) produce the following diagrams for the tensor $F$:
\begin{align}
  \begin{tikzpicture}[baseline=(b)]
      \begin{feynman}
        \vertex (l) {};
        \vertex[below = 15pt of l, dot, minimum size=3pt, label = {left: {\footnotesize $1$}}] (x1) {};
        \vertex[above = 15pt of l, color1, dot, minimum size=3pt, label = {left: {\footnotesize $\textcolor{color1}{3}$}}] (x2) {};
        \vertex[right = 20pt of l, quarticblob] (b) {};
        \vertex[right = 20pt of b] (r) {};
        \vertex[above = 15pt of r, dot, minimum size=3pt, label = {right: {\footnotesize $2$}}] (x3) {};
        \vertex[below = 15pt of r, color1, dot, minimum size=3pt, label = {right: {\footnotesize $\textcolor{color1}{4}$}}] (x4) {};
        \diagram*{
          (x1) -- (b) -- [color1, ghost] (x2), 
          (x3) -- (b) -- [color1, ghost](x4)
        };
      \end{feynman}
    \end{tikzpicture}
\!\!& = 
\mathcal{W}[1,1]\,\!\!\sum_{j,k}\!\!
\begin{tikzpicture}[baseline=(b)]
\begin{feynman}
\vertex (l) {};
\vertex[below = 15pt of l, dot, minimum size=3pt, label = {left: {\footnotesize $1^{c}$}}] (x1) {};
\vertex[above = 15pt of l, color1, dot, minimum size=3pt, label = {left: {\footnotesize $\textcolor{color1}{3}^{c}$}}] (x2) {};
\vertex[right = 8pt of l, dot, minimum size=0pt] (v12) {};
\vertex[right = 30pt of v12, squareblob] (b) {};
\vertex[right = 30pt of b, dot, minimum size=0pt] (v34) {};
\vertex[right = 8pt of v34] (r) {};
\vertex[above = 15pt of r, dot, minimum size=3pt, label = {right: {\footnotesize $2^{c}$}}] (x3) {};
\vertex[below = 15pt of r, color1, dot, minimum size=3pt, label = {right: {\footnotesize $\textcolor{color1}{4}^{c}$}}] (x4) {};
\diagram*{
	(x1) -- (v12) -- [color1, ghost](x2),
	(v12) -- [color1blackghost, edge label = {\scriptsize \;$\sigma_{j}\sigma'_{j}$}, inner sep = 4pt] (b) -- [blackcolor1ghost, edge label = {\scriptsize \,$\sigma_{k}\sigma'_{k}$}, inner sep = 4pt] (v34), 
	 (x3) -- (v34) -- [color1, ghost](x4)
};
\end{feynman}
\end{tikzpicture}\!\! +
\mathcal{W}[2]\,\!\!\sum_{j,k}\!\!
\begin{tikzpicture}[baseline=(b)]
\begin{feynman}
\vertex (l) {};
\vertex[below = 15pt of l, dot, minimum size=3pt, label = {left: {\footnotesize $1^{c}$}}] (x1) {};
\vertex[above = 15pt of l, dot, minimum size=3pt, label = {left: {\footnotesize $2^{c}$}}] (x2) {};
\vertex[right = 8pt of l, dot, minimum size=0pt] (v12) {};
\vertex[right = 30pt of v12, squareblob] (b) {};
\vertex[right = 35pt of b, dot, minimum size=0pt] (v34) {};
\vertex[right = 8pt of v34] (r) {};
\vertex[above = 15pt of r, color1, dot, minimum size=3pt, label = {right: {\footnotesize $\textcolor{color1}{3}^{c}$}}] (x3) {};
\vertex[below = 15pt of r, color1, dot, minimum size=3pt, label = {right: {\footnotesize $\textcolor{color1}{4}^{c}$}}] (x4) {};
\diagram*{
	(x1) -- (v12) -- (x2),
	(v12) -- [photon, edge label = {\scriptsize \;$\sigma_{j}\sigma_{j}$}, inner sep = 4pt] (b) -- [photon, edge label = {\scriptsize \,$\Theta\, \sigma'_{k}\sigma'_{k}$}, inner sep = 4pt] (v34), 
	 (x3) -- [color1, ghost] (v34) -- [color1, ghost] (x4)
};
\end{feynman}
\end{tikzpicture}\nonumber\\
&\hspace{-35pt} + \mathcal{W}[2]\,\!\!\sum_{j,k}\!\!
\begin{tikzpicture}[baseline=(b)]
\begin{feynman}
\vertex (l) {};
\vertex[below = 15pt of l, dot, minimum size=3pt, label = {left: {\footnotesize $1^{c}$}}] (x1) {};
\vertex[above = 15pt of l, dot, minimum size=3pt, label = {left: {\footnotesize $2^{c}$}}] (x2) {};
\vertex[right = 8pt of l, dot, minimum size=0pt] (v12) {};
\vertex[right = 30pt of v12, squareblob] (b) {};
\vertex[right = 30pt of b, dot, minimum size=0pt] (v34) {};
\vertex[right = 8pt of v34] (r) {};
\vertex[above = 15pt of r, color1, dot, minimum size=3pt, label = {right: {\footnotesize $\textcolor{color1}{3}^{c}$}}] (x3) {};
\vertex[below = 15pt of r, color1, dot, minimum size=3pt, label = {right: {\footnotesize $\textcolor{color1}{4}^{c}$}}] (x4) {};
\diagram*{
	(x1) -- (v12) -- (x2),
	(v12) -- [photon, edge label = {\scriptsize \;$\sigma_{j}\sigma_{j}$}, inner sep = 4pt] (b) -- [color1doubghost, edge label = {\scriptsize \,$\sigma'_{k}\sigma'_{k}$}, inner sep = 4pt] (v34), 
	 (x3) -- [color1, ghost] (v34) -- [color1, ghost] (x4)
};
\end{feynman}
\end{tikzpicture}
\!\!+ \mathcal{W}[2]\,\!\!\sum_{j,k}\!\!
\begin{tikzpicture}[baseline=(b)]
\begin{feynman}
\vertex (l) {};
\vertex[below = 15pt of l, dot, minimum size=3pt, label = {left: {\footnotesize $1^{c}$}}] (x1) {};
\vertex[above = 15pt of l, color1, dot, minimum size=3pt, label = {left: {\footnotesize $\textcolor{color1}{4}^{c}$}}] (x2) {};
\vertex[right = 8pt of l, dot, minimum size=0pt] (v12) {};
\vertex[right = 30pt of v12, squareblob] (b) {};
\vertex[right = 30pt of b, dot, minimum size=0pt] (v34) {};
\vertex[right = 8pt of v34] (r) {};
\vertex[above = 15pt of r, dot, minimum size=3pt, label = {right: {\footnotesize $2^{c}$}}] (x3) {};
\vertex[below = 15pt of r, color1, dot, minimum size=3pt, label = {right: {\footnotesize $\textcolor{color1}{3}^{c}$}}] (x4) {};
\diagram*{
	(x1) -- (v12) -- [color1, ghost] (x2),
	(v12) -- [color1blackghost, edge label = {\scriptsize \;$\sigma_{j}\sigma'_{j}$}, inner sep = 4pt] (b) -- [blackcolor1ghost, edge label = {\scriptsize \,$\sigma_{k}\sigma'_{k}$}, inner sep = 4pt] (v34), 
	 (x3) -- (v34) -- [color1, ghost] (x4)
};
\end{feynman}
\end{tikzpicture}
\label{ftensorsquareprop}
\end{align}
The absence of 2-class diagrams in~\eqref{ftensorsquareprop} follows from the fact that the square propagator vanishes when acting on a pair of lines of different type. The 1-class diagrams are generated from the inequivalent permutations of the reference pairing $(13)(24)$, namely $\{(13)(24),(12)(34),(14)(23)\}$. The corresponding $k=2$ Weingarten functions multiplying the subdiagrams are determined by the relative ordering of the reference pairing $\tau$ and the pairing $\pi$ appearing in the subdiagram: $\mathcal{W}[\pi,\tau]=\mathcal{W}[1,1]$ for $\pi=\tau$, $\mathcal{W}[\pi,\tau]=\mathcal{W}[2]$ for $\pi\neq\tau$. The overall factor of 1 follows from the M\"obius relation for a single-block partition.

At first order in \(1/n\), the Weingarten functions can be approximated by their leading terms in the expansions \eqref{w11expanded} and \eqref{w2expanded}. In this manner, \eqref{ftensorsquareprop} simplifies to
\begin{align}
  \begin{tikzpicture}[baseline=(b)]
      \begin{feynman}
        \vertex (l) {};
        \vertex[below = 15pt of l, dot, minimum size=3pt, label = {left: {\footnotesize $1$}}] (x1) {};
        \vertex[above = 15pt of l, color1, dot, minimum size=3pt, label = {left: {\footnotesize $\textcolor{color1}{3}$}}] (x2) {};
        \vertex[right = 20pt of l, quarticblob] (b) {};
        \vertex[right = 20pt of b] (r) {};
        \vertex[above = 15pt of r, dot, minimum size=3pt, label = {right: {\footnotesize $2$}}] (x3) {};
        \vertex[below = 15pt of r, color1, dot, minimum size=3pt, label = {right: {\footnotesize $\textcolor{color1}{4}$}}] (x4) {};
        \diagram*{
          (x1) -- (b) -- [color1, ghost] (x2), 
          (x3) -- (b) -- [color1, ghost](x4)
        };
      \end{feynman}
    \end{tikzpicture}
\!\!& = 
\frac{1}{n^{2}}\,\!\!\sum_{j,k}\!\!
\begin{tikzpicture}[baseline=(b)]
\begin{feynman}
\vertex (l) {};
\vertex[below = 15pt of l, dot, minimum size=3pt, label = {left: {\footnotesize $1^{c}$}}] (x1) {};
\vertex[above = 15pt of l, color1, dot, minimum size=3pt, label = {left: {\footnotesize $\textcolor{color1}{3}^{c}$}}] (x2) {};
\vertex[right = 8pt of l, dot, minimum size=0pt] (v12) {};
\vertex[right = 30pt of v12, squareblob] (b) {};
\vertex[right = 30pt of b, dot, minimum size=0pt] (v34) {};
\vertex[right = 8pt of v34] (r) {};
\vertex[above = 15pt of r, dot, minimum size=3pt, label = {right: {\footnotesize $2^{c}$}}] (x3) {};
\vertex[below = 15pt of r, color1, dot, minimum size=3pt, label = {right: {\footnotesize $\textcolor{color1}{4}^{c}$}}] (x4) {};
\diagram*{
	(x1) -- (v12) -- [color1, ghost](x2),
	(v12) -- [color1blackghost, edge label = {\scriptsize \;$\sigma_{j}\sigma'_{j}$}, inner sep = 4pt] (b) -- [blackcolor1ghost, edge label = {\scriptsize \,$\sigma_{k}\sigma'_{k}$}, inner sep = 4pt] (v34), 
	 (x3) -- (v34) -- [color1, ghost](x4)
};
\end{feynman}
\end{tikzpicture} -\frac{1}{n^{3}}\sum_{j,k}\!\!
\begin{tikzpicture}[baseline=(b)]
\begin{feynman}
\vertex (l) {};
\vertex[below = 15pt of l, dot, minimum size=3pt, label = {left: {\footnotesize $1^{c}$}}] (x1) {};
\vertex[above = 15pt of l, dot, minimum size=3pt, label = {left: {\footnotesize $2^{c}$}}] (x2) {};
\vertex[right = 8pt of l, dot, minimum size=0pt] (v12) {};
\vertex[right = 30pt of v12, squareblob] (b) {};
\vertex[right = 35pt of b, dot, minimum size=0pt] (v34) {};
\vertex[right = 8pt of v34] (r) {};
\vertex[above = 15pt of r, color1, dot, minimum size=3pt, label = {right: {\footnotesize $\textcolor{color1}{3}^{c}$}}] (x3) {};
\vertex[below = 15pt of r, color1, dot, minimum size=3pt, label = {right: {\footnotesize $\textcolor{color1}{4}^{c}$}}] (x4) {};
\diagram*{
	(x1) -- (v12) -- (x2),
	(v12) -- [photon, edge label = {\scriptsize \;$\sigma_{j}\sigma_{j}$}, inner sep = 4pt] (b) -- [photon, edge label = {\scriptsize \,$\Theta\, \sigma'_{k}\sigma'_{k}$}, inner sep = 4pt] (v34), 
	 (x3) -- [color1, ghost] (v34) -- [color1, ghost] (x4)
};
\end{feynman}
\end{tikzpicture}\nonumber\\
&\hspace{-55pt} -\frac{1}{n^3}\,\!\!\sum_{j,k}\!\!
\begin{tikzpicture}[baseline=(b)]
\begin{feynman}
\vertex (l) {};
\vertex[below = 15pt of l, dot, minimum size=3pt, label = {left: {\footnotesize $1^{c}$}}] (x1) {};
\vertex[above = 15pt of l, dot, minimum size=3pt, label = {left: {\footnotesize $2^{c}$}}] (x2) {};
\vertex[right = 8pt of l, dot, minimum size=0pt] (v12) {};
\vertex[right = 30pt of v12, squareblob] (b) {};
\vertex[right = 30pt of b, dot, minimum size=0pt] (v34) {};
\vertex[right = 8pt of v34] (r) {};
\vertex[above = 15pt of r, color1, dot, minimum size=3pt, label = {right: {\footnotesize $\textcolor{color1}{3}^{c}$}}] (x3) {};
\vertex[below = 15pt of r, color1, dot, minimum size=3pt, label = {right: {\footnotesize $\textcolor{color1}{4}^{c}$}}] (x4) {};
\diagram*{
	(x1) -- (v12) -- (x2),
	(v12) -- [photon, edge label = {\scriptsize \;$\sigma_{j}\sigma_{j}$}, inner sep = 4pt] (b) -- [color1doubghost, edge label = {\scriptsize \,$\sigma'_{k}\sigma'_{k}$}, inner sep = 4pt] (v34), 
	 (x3) -- [color1, ghost] (v34) -- [color1, ghost] (x4)
};
\end{feynman}
\end{tikzpicture}\!\! -\frac{1}{n^{3}}\,\!\!\sum_{j,k}\!\!
\begin{tikzpicture}[baseline=(b)]
\begin{feynman}
\vertex (l) {};
\vertex[below = 15pt of l, dot, minimum size=3pt, label = {left: {\footnotesize $1^{c}$}}] (x1) {};
\vertex[above = 15pt of l, color1, dot, minimum size=3pt, label = {left: {\footnotesize $\textcolor{color1}{4}^{c}$}}] (x2) {};
\vertex[right = 8pt of l, dot, minimum size=0pt] (v12) {};
\vertex[right = 30pt of v12, squareblob] (b) {};
\vertex[right = 30pt of b, dot, minimum size=0pt] (v34) {};
\vertex[right = 8pt of v34] (r) {};
\vertex[above = 15pt of r, dot, minimum size=3pt, label = {right: {\footnotesize $2^{c}$}}] (x3) {};
\vertex[below = 15pt of r, color1, dot, minimum size=3pt, label = {right: {\footnotesize $\textcolor{color1}{3}^{c}$}}] (x4) {};
\diagram*{
	(x1) -- (v12) -- [color1, ghost] (x2),
	(v12) -- [color1blackghost, edge label = {\scriptsize \;$\sigma_{j}\sigma'_{j}$}, inner sep = 4pt] (b) -- [blackcolor1ghost, edge label = {\scriptsize \,$\sigma_{k}\sigma'_{k}$}, inner sep = 4pt] (v34), 
	 (x3) -- (v34) -- [color1, ghost] (x4)
};
\end{feynman}
\end{tikzpicture}\!\! + \mathcal{O}\left(\frac{1}{n^{2}}\right)
\label{ftensorsquarepropexpanded}
\end{align}
We now apply the second set of Feynman rules (6)–(9). As an immediate consequence, the last terms in~\eqref{ftensorsquarepropexpanded} vanish at order \(1/n\). A nonzero contribution would require the corresponding diagrams to produce a factor of $n^{2}$, which occurs when $j\neq k$ and a square propagator decomposes into two bare propagators forming disconnected subdiagrams. However, such configurations are forbidden by the selection rules (a)–(f), as each subdiagram violates color conservation. Consequently, we are left with
\begin{align}
\hspace{-20pt} \begin{tikzpicture}[baseline=(b)]
      \begin{feynman}
        \vertex (l) {};
        \vertex[below = 15pt of l, dot, minimum size=3pt, label = {left: {\footnotesize $1$}}] (x1) {};
        \vertex[above = 15pt of l, color1, dot, minimum size=3pt, label = {left: {\footnotesize $\textcolor{color1}{3}$}}] (x2) {};
        \vertex[right = 20pt of l, quarticblob] (b) {};
        \vertex[right = 20pt of b] (r) {};
        \vertex[above = 15pt of r, dot, minimum size=3pt, label = {right: {\footnotesize $2$}}] (x3) {};
        \vertex[below = 15pt of r, color1, dot, minimum size=3pt, label = {right: {\footnotesize $\textcolor{color1}{4}$}}] (x4) {};
        \diagram*{
          (x1) -- (b) -- [color1, ghost] (x2), 
          (x3) -- (b) -- [color1, ghost](x4)
        };
      \end{feynman}
    \end{tikzpicture}
\!\!\!\!& = \!
\frac{1}{n^{2}}\,\!\!\sum_{j,k}\!\!\!\!
\begin{tikzpicture}[baseline=(b)]
\begin{feynman}
\vertex (l) {};
\vertex[below = 15pt of l, dot, minimum size=3pt, label = {left: {\footnotesize $1^{c}$}}] (x1) {};
\vertex[above = 15pt of l, color1, dot, minimum size=3pt, label = {left: {\footnotesize $\textcolor{color1}{3}^{c}$}}] (x2) {};
\vertex[right = 8pt of l, dot, minimum size=0pt] (v12) {};
\vertex[right = 30pt of v12, squareblob] (b) {};
\vertex[right = 30pt of b, dot, minimum size=0pt] (v34) {};
\vertex[right = 8pt of v34] (r) {};
\vertex[above = 15pt of r, dot, minimum size=3pt, label = {right: {\footnotesize $2^{c}$}}] (x3) {};
\vertex[below = 15pt of r, color1, dot, minimum size=3pt, label = {right: {\footnotesize $\textcolor{color1}{4}^{c}$}}] (x4) {};
\diagram*{
	(x1) -- (v12) -- [color1, ghost](x2),
	(v12) -- [color1blackghost, edge label = {\scriptsize \;$\sigma_{j}\sigma'_{j}$}, inner sep = 4pt] (b) -- [blackcolor1ghost, edge label = {\scriptsize \,$\sigma_{k}\sigma'_{k}$}, inner sep = 4pt] (v34), 
	 (x3) -- (v34) -- [color1, ghost](x4)
};
\end{feynman}
\end{tikzpicture}\!\!\!\!{-}\frac{1}{n^3}\,\!\!\sum_{j,k}\!\!\!\!
\begin{tikzpicture}[baseline=(b)]
\begin{feynman}
\vertex (l) {};
\vertex[below = 15pt of l, dot, minimum size=3pt, label = {left: {\footnotesize $1^{c}$}}] (x1) {};
\vertex[above = 15pt of l, dot, minimum size=3pt, label = {left: {\footnotesize $2^{c}$}}] (x2) {};
\vertex[right = 8pt of l, dot, minimum size=0pt] (v12) {};
\vertex[right = 30pt of v12, squareblob] (b) {};
\vertex[right = 35pt of b, dot, minimum size=0pt] (v34) {};
\vertex[right = 8pt of v34] (r) {};
\vertex[above = 15pt of r, color1, dot, minimum size=3pt, label = {right: {\footnotesize $\textcolor{color1}{3}^{c}$}}] (x3) {};
\vertex[below = 15pt of r, color1, dot, minimum size=3pt, label = {right: {\footnotesize $\textcolor{color1}{4}^{c}$}}] (x4) {};
\diagram*{
	(x1) -- (v12) -- (x2),
	(v12) -- [photon, edge label = {\scriptsize \;$\sigma_{j}\sigma_{j}$}, inner sep = 4pt] (b) -- [photon, edge label = {\scriptsize \,$\Theta\,\sigma'_{k}\sigma'_{k}$}, inner sep = 4pt] (v34), 
	 (x3) -- [color1, ghost] (v34) -- [color1, ghost] (x4)
};
\end{feynman}
\end{tikzpicture}\!\!\!\!\!{+} \mathcal{O}\left(\frac{1}{n^{2}}\right)\label{ftensorsquarepropexpandedremoved}
\end{align}
We now expand the diagrams in~\eqref{ftensorsquarepropexpandedremoved} in terms of the bare propagator and quartic vertices by analyzing the diagonal and off-diagonal neural components of each subdiagram, together with the selection rules (a)-(f). When $i=j$, the first diagram in~\eqref{ftensorsquarepropexpandedremoved} contributes as
\begin{align}
\frac{1}{n^{2}}\sum_{j}
\begin{tikzpicture}[baseline=(b)]
\tikzfeynmanset{every blob = {/tikz/fill=white!50, /tikz/minimum size=15pt}}
\begin{feynman}
\vertex (l) {};
\vertex[below = 15pt of l, dot, minimum size=3pt, label = {left: {\footnotesize $1$}}] (x1) {};
\vertex[above = 15pt of l, color1, dot, minimum size=3pt, label = {left: {\footnotesize $\textcolor{color1}{3}$}}] (x2) {};
\vertex[right = 8pt of l, dot, minimum size=0pt] (v12) {};
\vertex[right = 30pt of v12, blob] (b) {};
\vertex[right = 30pt of b, dot, minimum size=0pt] (v34) {};
\vertex[right = 8pt of v34] (r) {};
\vertex[above = 15pt of r, dot, minimum size=3pt, label = {right: {\footnotesize $2$}}] (x3) {};
\vertex[below = 15pt of r, color1, dot, minimum size=3pt, label = {right: {\footnotesize $\textcolor{color1}{4}$}}] (x4) {};
\diagram*{
	(x1) -- (v12) -- [color1, ghost] (x2),
	(v12) -- [color1blackghost, edge label = {\scriptsize \;$\sigma_{j}\sigma'_{j}$}, inner sep = 4pt] (b) -- [blackcolor1ghost, edge label = {\scriptsize \,$\sigma_{j}\sigma'_{j}$}, inner sep = 4pt] (v34), 
	 (x3) -- (v34) -- [color1, ghost] (x4)
};
\end{feynman}
\end{tikzpicture}\label{ftensorbarepropijsame}
\end{align}
When $i\neq j$, the first and second diagrams reduce respectively to
\begin{align}
\hspace{-10pt}\frac{1}{n^{2}}\sum_{j_1, j_2}\!\!\!
\begin{tikzpicture}[baseline=(b)]
  \tikzfeynmanset{every blob = {/tikz/fill=white!50, /tikz/minimum size=15pt}}
  \begin{feynman}
    \vertex (l) {};
    \vertex[below = 15pt of l, dot, minimum size=3pt, label = {left: {\footnotesize $1$}}] (x1) {};
    \vertex[above = 15pt of l, color1, dot, minimum size=3pt, label = {left: {\footnotesize $\textcolor{color1}{3}$}}] (x2) {};
    \vertex[right = 8pt of l, dot, minimum size=0pt] (v12) {};
    \vertex[right = 35pt of v12, blob] (b12) {};
    \vertex[right = 16pt of b12, dot, minimum size=0pt] (w12) {};
    \vertex[above = 1pt of w12, label = {above: {\scriptsize \hspace{-10pt} $z_{j_1}$}}] (w12u) {};
    \vertex[below = 8pt of w12] (w12d) {};
    \vertex[left = 10pt of w12d] (w12dl) {};
    \tikzfeynmanset{every blob = {/tikz/fill=gray!50, /tikz/minimum size=15pt}}
    \vertex[right = 3pt of w12, blob , minimum size = 6pt] (b) {};
    \vertex[right = 3pt of b, dot, minimum size=0pt] (w34) {};
    \tikzfeynmanset{every blob = {/tikz/fill=white!50, /tikz/minimum size=15pt}}
    \vertex[right = 16pt of w34, blob] (b34) {};
    \vertex[above = 1pt of w34, label = {above: {\scriptsize \hspace{10pt} $z_{j_2}$}}] (w34u) {};
    \vertex[below = 8pt of w34] (w34d) {};
    \vertex[right = 10pt of w34d] (w34dr) {};
    \vertex[right = 35pt of b34, dot, minimum size=0pt] (v34) {};
    \vertex[right = 8pt of v34] (r) {};
    \vertex[above = 15pt of r, dot, minimum size=3pt, label = {right: {\footnotesize $2$}}] (x3) {};
    \vertex[below = 15pt of r, color1, dot, minimum size=3pt, label = {right: {\footnotesize $\textcolor{color1}{4}$}}] (x4) {};
    \diagram*{
      (x1) -- (v12) -- [color1, ghost] (x2),
      (v12) -- [color1blackghost, edge label = {\scriptsize \;$\sigma_{j_{1}}\sigma'_{j_{1}}$}, inner sep = 4pt] (b12) -- [color1, ghost, quarter left] (w12) -- [quarter left] (b12),
      (b34) -- [color1, ghost, quarter left] (w34) -- [quarter left] (b34) -- [blackcolor1ghost, edge label = {\scriptsize \,$\sigma_{j_{2}}\sigma'_{j_{2}}$}, inner sep = 4pt] (v34),
      (x3) --  (v34) -- [color1, ghost] (x4)
    };
    \draw [decoration={brace}, decorate] (w34dr) -- (w12dl) node [pos=0.5, below = 1pt] {\scriptsize $\frac{1}{n}F_4^{(\ell)}$};
  \end{feynman}
\end{tikzpicture}\hspace{2mm}\!,\!\hspace{2mm}
-\frac{1}{n^3}
\sum_{j_1, j_2}\!\!\!
\begin{tikzpicture}[baseline=(b)]
  \tikzfeynmanset{every blob = {/tikz/fill=white!50, /tikz/minimum size=15pt}}
  \begin{feynman}
    \vertex (l) {};
    \vertex[below = 15pt of l, dot, minimum size=3pt, label = {left: {\footnotesize $1$}}] (x1) {};
    \vertex[above = 15pt of l, , dot, minimum size=3pt, label = {left: {\footnotesize $2$}}] (x2) {};
    \vertex[right = 8pt of l, dot, minimum size=0pt] (v12) {};
    \vertex[right = 35pt of v12, blob] (b12) {};
    \vertex[right = 5pt of b12, dot, minimum size=0pt] (w12) {};
    \vertex[above = 1pt of w12, label = {above: {\scriptsize \hspace{-10pt} $ $}}] (w12u) {};
    \vertex[below = 8pt of w12] (w12d) {};
    \vertex[left = 10pt of w12d] (w12dl) {};
    \tikzfeynmanset{every blob = {/tikz/fill=gray!50, /tikz/minimum size=0pt}}
    \vertex[right = 3pt of w12, blob , minimum size = 0pt] (b) {};
    \vertex[right = 3pt of b, dot, minimum size=0pt] (w34) {};
    \tikzfeynmanset{every blob = {/tikz/fill=white!50, /tikz/minimum size=15pt}}
    \vertex[right = 16pt of w34, blob] (b34) {};
    \vertex[above = 1pt of w34, label = {above: {\scriptsize \hspace{10pt} $ $}}] (w34u) {};
    \vertex[below = 8pt of w34] (w34d) {};
    \vertex[right = 10pt of w34d] (w34dr) {};
    \vertex[right = 35pt of b34, dot, minimum size=0pt] (v34) {};
    \vertex[right = 8pt of v34] (r) {};
    \vertex[above = 15pt of r, color1, dot, minimum size=3pt, label = {right: {\footnotesize $\textcolor{color1}{3}$}}] (x3) {};
    \vertex[below = 15pt of r, color1, dot, minimum size=3pt, label = {right: {\footnotesize $\textcolor{color1}{4}$}}] (x4) {};
    \diagram*{
      (x1) -- (v12) -- (x2),
      (v12) -- [photon, edge label = {\scriptsize \;$\sigma_{j_{1}}\sigma_{j_{1}}$}, inner sep = 4pt] (b12),
      (b34) -- [photon, edge label = {\scriptsize \,$\Theta\,\sigma'_{j_{2}}\sigma'_{j_{2}}$}, inner sep = 4pt] (v34),
      (x3) --  [color1, ghost] (v34) -- [color1, ghost] (x4)
    };
  \end{feynman}
\end{tikzpicture}\label{ftensorbarepropijdifftwo}
\end{align}
Substituting~\eqref{ftensorbarepropijsame} and~\eqref{ftensorbarepropijdifftwo} into~\eqref{ftensorsquarepropexpandedremoved}, we obtain
\begin{align}
  \begin{tikzpicture}[baseline=(b)]
      \begin{feynman}
        \vertex (l) {};
        \vertex[below = 15pt of l, dot, minimum size=3pt, label = {left: {\footnotesize $1$}}] (x1) {};
        \vertex[above = 15pt of l, dot, color1, minimum size=3pt, label = {left: {\footnotesize $\textcolor{color1}{3}$}}] (x2) {};
        \vertex[right = 20pt of l, quarticblob] (b) {};
        \vertex[right = 20pt of b] (r) {};
        \vertex[above = 15pt of r, dot, minimum size=3pt, label = {right: {\footnotesize $2$}}] (x3) {};
        \vertex[below = 15pt of r, dot, color1, minimum size=3pt, label = {right: {\footnotesize $\textcolor{color1}{4}$}}] (x4) {};
        \diagram*{
          (x1) -- (b) -- [color1, ghost] (x2), 
          (x3) -- (b) -- [color1, ghost] (x4)
        };
      \end{feynman}
    \end{tikzpicture}
\!\!&= 
\frac{1}{n^{2}}\sum_{j}
\begin{tikzpicture}[baseline=(b)]
\tikzfeynmanset{every blob = {/tikz/fill=white!50, /tikz/minimum size=15pt}}
\begin{feynman}
\vertex (l) {};
\vertex[below = 15pt of l, dot, minimum size=3pt, label = {left: {\footnotesize $1$}}] (x1) {};
\vertex[above = 15pt of l, color1, dot, minimum size=3pt, label = {left: {\footnotesize $\textcolor{color1}{3}$}}] (x2) {};
\vertex[right = 8pt of l, dot, minimum size=0pt] (v12) {};
\vertex[right = 30pt of v12, blob] (b) {};
\vertex[right = 30pt of b, dot, minimum size=0pt] (v34) {};
\vertex[right = 8pt of v34] (r) {};
\vertex[above = 15pt of r, dot, minimum size=3pt, label = {right: {\footnotesize $2$}}] (x3) {};
\vertex[below = 15pt of r, color1, dot, minimum size=3pt, label = {right: {\footnotesize $\textcolor{color1}{4}$}}] (x4) {};
\diagram*{
	(x1) -- (v12) -- [color1, ghost] (x2),
	(v12) -- [color1blackghost, edge label = {\scriptsize \;$\sigma_{j}\sigma'_{j}$}, inner sep = 4pt] (b) -- [blackcolor1ghost, edge label = {\scriptsize \,$\sigma_{j}\sigma'_{j}$}, inner sep = 4pt] (v34), 
	 (x3) -- (v34) -- [color1, ghost] (x4)
};
\end{feynman}
\end{tikzpicture}
+ \frac{1}{n^{2}}\sum_{j_1, j_2}\!
\begin{tikzpicture}[baseline=(b)]
  \tikzfeynmanset{every blob = {/tikz/fill=white!50, /tikz/minimum size=15pt}}
  \begin{feynman}
    \vertex (l) {};
    \vertex[below = 15pt of l, dot, minimum size=3pt, label = {left: {\footnotesize $1$}}] (x1) {};
    \vertex[above = 15pt of l, color1, dot, minimum size=3pt, label = {left: {\footnotesize $\textcolor{color1}{3}$}}] (x2) {};
    \vertex[right = 8pt of l, dot, minimum size=0pt] (v12) {};
    \vertex[right = 35pt of v12, blob] (b12) {};
    \vertex[right = 16pt of b12, dot, minimum size=0pt] (w12) {};
    \vertex[above = 1pt of w12, label = {above: {\scriptsize \hspace{-10pt} $z_{j_1}$}}] (w12u) {};
    \vertex[below = 8pt of w12] (w12d) {};
    \vertex[left = 10pt of w12d] (w12dl) {};
    \tikzfeynmanset{every blob = {/tikz/fill=gray!50, /tikz/minimum size=15pt}}
    \vertex[right = 3pt of w12, blob , minimum size = 6pt] (b) {};
    \vertex[right = 3pt of b, dot, minimum size=0pt] (w34) {};
    \tikzfeynmanset{every blob = {/tikz/fill=white!50, /tikz/minimum size=15pt}}
    \vertex[right = 16pt of w34, blob] (b34) {};
    \vertex[above = 1pt of w34, label = {above: {\scriptsize \hspace{10pt} $z_{j_2}$}}] (w34u) {};
    \vertex[below = 8pt of w34] (w34d) {};
    \vertex[right = 10pt of w34d] (w34dr) {};
    \vertex[right = 35pt of b34, dot, minimum size=0pt] (v34) {};
    \vertex[right = 8pt of v34] (r) {};
    \vertex[above = 15pt of r, dot, minimum size=3pt, label = {right: {\footnotesize $2$}}] (x3) {};
    \vertex[below = 15pt of r, color1, dot, minimum size=3pt, label = {right: {\footnotesize $\textcolor{color1}{4}$}}] (x4) {};
    \diagram*{
      (x1) -- (v12) -- [color1, ghost] (x2),
      (v12) -- [color1blackghost, edge label = {\scriptsize \;$\sigma_{j_{1}}\sigma'_{j_{1}}$}, inner sep = 4pt] (b12) -- [color1, ghost, quarter left] (w12) -- [quarter left] (b12),
      (b34) -- [color1, ghost, quarter left] (w34) -- [quarter left] (b34) -- [blackcolor1ghost, edge label = {\scriptsize \,$\sigma_{j_{2}}\sigma'_{j_{2}}$}, inner sep = 4pt] (v34),
      (x3) --  (v34) -- [color1, ghost] (x4)
    };
    \draw [decoration={brace}, decorate] (w34dr) -- (w12dl) node [pos=0.5, below = 1pt] {\scriptsize $\frac{1}{n}F_4^{(\ell)}$};
  \end{feynman}
\end{tikzpicture}\nonumber\\
&- 
\frac{1}{n^{3}}
\sum_{j_1, j_2}\!
\begin{tikzpicture}[baseline=(b)]
  \tikzfeynmanset{every blob = {/tikz/fill=white!50, /tikz/minimum size=15pt}}
  \begin{feynman}
    \vertex (l) {};
    \vertex[below = 15pt of l, dot, minimum size=3pt, label = {left: {\footnotesize $1$}}] (x1) {};
    \vertex[above = 15pt of l, , dot, minimum size=3pt, label = {left: {\footnotesize $2$}}] (x2) {};
    \vertex[right = 8pt of l, dot, minimum size=0pt] (v12) {};
    \vertex[right = 35pt of v12, blob] (b12) {};
    \vertex[right = 5pt of b12, dot, minimum size=0pt] (w12) {};
    \vertex[above = 1pt of w12, label = {above: {\scriptsize \hspace{-10pt} $ $}}] (w12u) {};
    \vertex[below = 8pt of w12] (w12d) {};
    \vertex[left = 10pt of w12d] (w12dl) {};
    \tikzfeynmanset{every blob = {/tikz/fill=gray!50, /tikz/minimum size=0pt}}
    \vertex[right = 3pt of w12, blob , minimum size = 0pt] (b) {};
    \vertex[right = 3pt of b, dot, minimum size=0pt] (w34) {};
    \tikzfeynmanset{every blob = {/tikz/fill=white!50, /tikz/minimum size=15pt}}
    \vertex[right = 16pt of w34, blob] (b34) {};
    \vertex[above = 1pt of w34, label = {above: {\scriptsize \hspace{10pt} $ $}}] (w34u) {};
    \vertex[below = 8pt of w34] (w34d) {};
    \vertex[right = 10pt of w34d] (w34dr) {};
    \vertex[right = 35pt of b34, dot, minimum size=0pt] (v34) {};
    \vertex[right = 8pt of v34] (r) {};
    \vertex[above = 15pt of r, color1, dot, minimum size=3pt, label = {right: {\footnotesize $\textcolor{color1}{3}$}}] (x3) {};
    \vertex[below = 15pt of r, color1, dot, minimum size=3pt, label = {right: {\footnotesize $\textcolor{color1}{4}$}}] (x4) {};
    \diagram*{
      (x1) -- (v12) -- (x2),
      (v12) -- [photon, edge label = {\scriptsize \;$\sigma_{j_{1}}\sigma_{j_{1}}$}, inner sep = 4pt] (b12),
      (b34) -- [photon, edge label = {\scriptsize \,$\Theta\,\sigma'_{j_{2}}\sigma'_{j_{2}}$}, inner sep = 4pt] (v34),
      (x3) --  [color1, ghost] (v34) -- [color1, ghost] (x4)
    };
  \end{feynman}
\end{tikzpicture} + \mathcal{O}\left(\frac{1}{n^2}\right)\label{ftensoratorderonen}
\end{align}
The \(1/n\) scaling of the diagrams follows from straightforward power counting of the respective summations: When $i=j$, the summation contributes factor of $n$, whereas when $i\neq j$ it yields a factor of $n^2$. Combined with the explicit coefficients on the right-hand side of~\eqref{ftensoratorderonen}, these factors produce an overall contribution of order \(1/n\). It is straightforward to verify that the first and second lines on the RHS of~\eqref{ftensoratorderonen} reproduce the corresponding lines on the RHS of~\eqref{eq:F}.
\end{proof}
In the same manner, the full Feynman rules can be used to reproduce the recursion relations governing higher-derivative versions of the NTK, as detailed below.
\begin{restatable}[]{theorem}{secondtheorem}
\label{theoremtwo}
The set of Feynman rules presented above, in conjunction with those defined in Appendix~\ref{app:general_feynman_rules}, reproduces the recursion relations governing the orthogonal dNTK and ddNTK tensors: $P$, $Q$ and $R$, $S$, $T$, $U$, respectively, at order \(1/n\).
\end{restatable}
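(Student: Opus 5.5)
The plan mirrors the proof of Theorem~\ref{theoremone}: first derive the dNTK and ddNTK recursions algebraically, then show the Feynman rules reproduce them term by term.

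\emph{Algebraic reference.} Differentiate the forward equation~\eqref{eq:7} and the NTK recursion~\eqref{ntkrecursion} once more with respect to the parameters. This gives layer-wise recursions for the empirical dNTK $\widehat{d\Theta}^{(\ell+1)}_{i_0i_1i_2}$ and the ddNTKs $\widehat{dd_I\Theta}$ and $\widehat{dd_{II}\Theta}$. Each right-hand side is a polynomial in $W^{(\ell+1)}$ of degree two, four or six. Its coefficients are built from $\sigma$, $\sigma'$, $\sigma''$ (and $\sigma'''$ for ddNTK) at layer $\ell$, contracted with lower-layer NTK, dNTK and ddNTK objects. The tensors $P,Q$ are defined through $\EE[\widehat{d\Theta}\,z]$ and $\EE[\widehat{d\Theta}\,\widehat{\Delta\Theta}]$-type cross correlators. $R,S,T,U$ are defined through the mean ddNTKs, exactly as in~\cite{roberts2022}, but with the index decomposition over the pairings of the relevant neural indices.

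\emph{Weingarten step.} Take the expectation over $W^{(\ell+1)}$ with~\eqref{weingartenformula}, expand the Weingarten functions to leading order using~\eqref{firstWfunctions}, and apply the EFT reduction of~\cite{roberts2022} to the remaining layer-$\ell$ expectations. The outcome is the reference recursions: the Gaussian ones plus the off-diagonal $\mathcal{W}[2]$-type terms.

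\emph{Diagrammatic side.} Apply rules (1)--(5), supplemented by the higher-derivative vertices of Appendix~\ref{app:general_feynman_rules}, to the external-line configuration of each tensor. These vertices carry $\sigma''$ and $\sigma'''$ decorations and triple-colour lines, and each has one $C_W$ per weight. Then:
\begin{itemize}
\item enumerate the inequivalent permutations of the charged labels relative to the reference pairing of each tensor;
\item attach the matching Weingarten factors $\mathcal{W}[\tau,\pi]$ and keep only their leading terms, so the diagonal pairing gets $n^{-k}$ and each off-diagonal pairing is suppressed by one extra $1/n$;
\item discard $s\ge 2$ class diagrams using the selection rule that a square propagator between lines of different object type vanishes.
\end{itemize}

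Next expand each square propagator with rules (6)--(9) and split each neural sum into diagonal ($j=k$) and off-diagonal ($j\neq k$) parts.
\begin{itemize}
\item Diagonal parts give single bare-propagator diagrams with a factor $n$.
\item Off-diagonal parts give a factor $n^2$ and must contain exactly one quartic vertex (a $V$, $D$, $F$, $A$, $B$ or lower-layer $P,Q$ vertex) joining two bare propagators. These produce the $K^{\alpha\gamma}K^{\beta\delta}$-contracted recursive terms.
\item The off-diagonal Weingarten diagrams survive at order $1/n$ only when their disconnected two-propagator decomposition obeys colour conservation under rules (e)--(f). These give the extra orthogonal terms, analogous to the $\Theta\,\sigma'\sigma'$ term in~\eqref{eq:F}.
\end{itemize}
Power counting then shows that each surviving diagram is $O(1/n)$, matching the algebraic reference term by term. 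The mean ddNTKs $R,S,T,U$ need care here because their leading nonvanishing order is itself $1/n$.

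\emph{Main obstacle.} The hardest step is the bookkeeping of colour and sample-index ordering for the three- and four-label NTK-type lines of dNTK/ddNTK. One must verify that selection rules (e)--(f), extended to these lines, remove exactly the colour-violating off-diagonal Weingarten diagrams and keep the rest. I would first check this on $P$, the simplest case, where the count of $\mathcal{W}[2]$ contributions can be compared directly with the algebra. I would then reduce $R$--$U$ to products of $P$-type and $F$-type subdiagrams, leaning on the all-orders completeness statement of Appendix~\ref{app:proofs_three} to justify that no further diagram classes appear.
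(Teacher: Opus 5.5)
You have the right ingredients: the leading-order Weingarten expansion, and rules (e)--(f) to decide whether a $\mathcal{W}[2]\sim n^{-3}$ diagram can reach order $1/n$ by splitting its square propagator into two disconnected bare propagators with $j\neq k$. But you are missing the one observation that is the whole content of the proof, and as a result you predict the wrong outcome. You expect some off-diagonal diagrams to survive and ``give the extra orthogonal terms, analogous to the $\Theta\,\sigma'\sigma'$ term in~\eqref{eq:F}''. For $P,Q,R,S,T,U$ there are none. Every $\mathcal{W}[2]$ diagram decouples at order $1/n$, so the orthogonal recursions are exactly the Gaussian ones of~\cite{roberts2022}, which the Gaussian-sector rules reproduce as in~\cite{guillen2025}.

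The reason is uniform. In $F$ the extra term exists because both labels of one colour can sit on the same cubic vertex. This leaves the uncoloured internal line $\Theta\,\sigma'_k\sigma'_k$ (algebraically, a diagonal $\widehat\Theta_{kk}$ with $O(1)$ mean), so the two bare propagators factorize into $O(1)$ Gaussian expectations. A dNTK or ddNTK line, by contrast, carries two or three colours. Every cubic vertex in Appendix~\ref{app:general_feynman_rules} that touches such a line emits an internal line carrying at least one colour. A bare propagator attached to that vertex alone therefore has an unpaired colour and is forbidden by rule (e). Algebraically, the four-weight terms contain $\widehat\Theta_{j_0j_1}\widehat\Theta_{j_0j_2}$ (dNTK), $\widehat\Theta_{j_0j_1}\widehat\Theta_{j_0j_2}\widehat\Theta_{j_0j_3}$ ($dd_{\mathrm{I}}$), $\widehat\Theta_{j_1j_3}\widehat\Theta_{j_1j_2}\widehat\Theta_{j_2j_4}$ ($dd_{\mathrm{II}}$), or a lower-layer dNTK/ddNTK. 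Any non-reference pairing with $j\neq k$ therefore leaves an off-diagonal $\widehat\Theta_{jk}$ or a dNTK/ddNTK factor, so the expectation is at most $O(1/n)$ and the contribution is $\mathcal{W}[2]\cdot n^{2}\cdot n^{-1}=O(n^{-2})$. The $\lambda_W$ terms carry only two weights and hence only $\mathcal{W}[1]=1/n$ exactly. Finally, $\mathcal{W}[1,1]=n^{-2}+O(n^{-4})$ reproduces the Gaussian diagonal.

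Once you have this, your ``main obstacle'' disappears: there is no case-by-case colour bookkeeping, no reduction of $R$--$U$ to $P$- and $F$-type subdiagrams, and no need to invoke Theorem~\ref{theoremthree}. Two smaller slips:
\begin{itemize}
\item $P$ and $Q$ are both channels of the single correlator $\EE[\widehat{\mathrm{d}\Theta}_{i_0i_1i_2}z_{i_3}]$. The correlator $\EE[\widehat{\mathrm{d}\Theta}\,\widehat{\Delta\Theta}]$ you mention vanishes identically, because the dNTK is odd and the NTK even under $W^{(\ell+1)}\to-W^{(\ell+1)}$, which is a symmetry of the Haar measure.
\item The dNTK forward equation has degree one or three in $W^{(\ell+1)}$, and the ddNTK ones have degree zero, two or four, not ``two, four or six''. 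So only the $k=1,2$ Weingarten functions enter the order-$1/n$ statistics.
\end{itemize}
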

\begin{proof} 
See Appendix~\ref{app:proofs_two}.
\end{proof} 
The preceding results establish that Feynman diagrams are complete at order \(1/n\). This construction extends straightforwardly to all orders in \(1/n\) by allowing the vertices in~\ref{feynmanrulesquarticntk} to admit an arbitrary number of external legs, without requiring any further modifications. The following theorem formalizes this extension.
\begin{restatable}[]{theorem}{thirdtheorem}
\label{theoremthree}
The Feynman rules defined above, augmented by the higher-order generalizations of the tensors $V$, $D$, $F$, $A$, $B$, $P$, $Q$, $R$, $S$, $T$, $U$, provide a complete characterization of the statistics of the orthogonal NTK and its descendants at arbitrary order in \(1/n\).
\end{restatable}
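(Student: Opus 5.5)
The plan is to prove Theorem~\ref{theoremthree} by structural induction on the layer index $\ell$, showing that every joint cumulant of preactivations, NTKs and their higher-derivative descendants at layer $\ell+1$ is expressed exactly by the diagrams generated by rules (1)--(9). The argument has two parts, mirroring the split of the Feynman rules into two groups.

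\textbf{Step 1: the weight average (rules 1--5).} Write any layer-$(\ell+1)$ observable as a polynomial in $W^{(\ell+1)}$ with coefficients that are products of layer-$\ell$ quantities $\sigma^{(\ell)}_{j}$, $\sigma'^{(\ell)}_{j}$, higher derivatives, and $\widehat{\Theta}^{(\ell)}$. Examples are \eqref{eq:7}, \eqref{ntkrecursion} and the analogous dNTK/ddNTK forward equations. Because $W^{(\ell+1)}$ is independent of layer $\ell$, conditioning on layer $\ell$ and applying the Weingarten formula \eqref{weingartenformula} gives an exact identity. In this identity the moment equals a sum over pairs $(\pi,\sigma)\in\mathcal P_2(2k)^2$. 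The factor $\delta_i^\pi$ fixes the external channel, and $\delta_j^\sigma$ contracts the internal neural indices into the cubic vertices of rule 2. The factor $\mathcal W(\pi^{-1}\sigma)$ is exactly the label-permutation weight of rule 4, with $\mathcal{W}[\ell(\tau\circ\pi)]$ depending only on the coset type. This already covers arbitrary $k$, with no truncation. Rules 1--2 enumerate the bilinears that can appear, and rule 3 records the remaining full expectation over layer $\ell$ as the square propagator.

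To pass from moments to cumulants, I apply Möbius inversion on the partition lattice. This produces the factors $(-1)^{s-1}(s-1)!$ of rule 5 and the classification of diagrams by the number $s$ of square propagators. I also need the selection rule that mixed-type pairs vanish. It follows because $\widehat{\Delta\Theta}$ has zero mean and because odd numbers of $\theta$-lines of a given color cannot be contracted.

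\textbf{Step 2: the layer-$\ell$ expectation (rules 6--9).} Each square propagator is an expectation of a product of functions of the layer-$\ell$ preactivations and NTK-type objects. I would invoke the EFT expansion of \cite{roberts2022}, as already used in \cite{guillen2025,banta2024}. By the induction hypothesis, the layer-$\ell$ joint distribution is an exactly known Gaussian, with covariance $K^{(\ell)}$ and mean $\Theta^{(\ell)}$, perturbed by its connected higher cumulants $V,D,F,A,B,P,\dots$ and their higher-point generalizations. Expanding this expectation via the cumulant (Edgeworth-type) expansion gives exactly three kinds of terms. Gaussian averages are the bare propagators. Derivatives $d/dz$ contracted with cumulant tensors are the quartic and higher vertices of rules 7--8, with the $z$-decorations of rule (c). Disconnected products give rule 9.

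The NTK pairing rules (e)--(f) come from $\EE[\widehat{\Theta}]=\Theta$ combined with color conservation. Since every cumulant of order $2m$ is $O(n^{-(m-1)})$, the expansion is organized by powers of $1/n$. Keeping vertices with arbitrarily many legs therefore makes the expansion exact order by order. Finally, the $n$-powers coming from free neural-index sums, when equal or distinct as in the proof of Theorem~\ref{theoremone}, combine with the explicit $1/n$ prefactors and the large-$n$ expansion of $\mathcal W$. This shows that, at fixed order, only finitely many diagrams contribute, and that their sum reproduces every term. Induction on $\ell$ then closes the argument.

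\textbf{Main obstacle.} The delicate step is proving that the selection rules (a)--(f) are both necessary and sufficient at all orders. In particular, the diagrams must not double count when Weingarten label permutations (rule 4) interact with the Möbius decomposition (rule 5) and with the internal decomposition of square propagators (rule 9). I would first try to establish a bijection between terms in the fully expanded algebraic expression and decorated diagrams. This would use the factorization of the partition lattice into an outer lattice (square propagators) and an inner lattice (bare propagators and vertices), and would show that Möbius functions compose multiplicatively across these two levels.
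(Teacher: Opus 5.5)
Your proposal is correct and follows essentially the paper's route. Rules (1)--(5) come from the orthogonal Weingarten formula plus M\"obius inversion to cumulants, extended to the NTK and its descendants because they are polynomial in the weights. Rules (6)--(9) are the EFT/cumulant expansion of the layer-$\ell$ expectation inside each square propagator, and your explicit induction on $\ell$ and $1/n$ power counting spell out what the paper leaves implicit.

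The double-counting obstacle you flag is milder than you fear. The diagrams simply index the terms of three nested exact identities: the outer M\"obius sum over blocks, the Weingarten sum within each block, and the layer-$\ell$ expansion, which is the Gaussian-weight one with orthogonal cumulant tensors inserted. That is how the paper disposes of it, by appeal to \cite{guillen2025}.
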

\begin{proof} 
See Appendix~\ref{app:proofs_three}.
\end{proof} 
This result demonstrates that our diagrammatic formalism enables systematic computation of preactivation and NTK statistics to arbitrary order in \(1/n\) from a concise set of rules. As a concrete example, we derive in Appendix~\ref{app:v6_tensor} the recursion relation for the $V_{6}$ tensor at order $1/n^2$, in contrast to the cumbersome algebraic manipulations demanded by direct methods.

\section{Applications}
\label{sec:applications-orthogonal}

In this section, we deduce a set of results for orthogonal neural networks from the Feynman-diagrammatic framework developed above.

\subsection{Criticality and stability at finite width}
Deep neural network outputs are highly sensitive to initialization: naive architectural choices can cause preactivations to explode or vanish exponentially. At infinite width, this stability is characterized by the susceptibility $\chi$, which controls the growth of both the NNGP and the NTK. This analysis extends to finite-width Gaussian-initialized networks, where criticality at infinite width enforces criticality at finite width for both preactivation and NTK statistics~\cite{guillen2025,banta2024}.

Next, we combine the ideas developed in~\cite{guillen2025} with the Feynman rules of Section~\ref{sec:feynman-diagrams} to show that
\begin{restatable}[]{theorem}{fourththeorem}
\label{theoremfour}
Criticality of the infinite-width NNGP and NTK implies criticality of orthogonal preactivation and NTK cumulants.
\end{restatable}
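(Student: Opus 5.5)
We plan to show that the linear part of every finite-width recursion, i.e. the part in which the layer-$\ell$ tensor reappears on the right-hand side, is governed by the same infinite-width susceptibilities as in the Gaussian case. The orthogonal modifications are then shown to enter only as inhomogeneous source terms built from lower-order objects. Concretely, for each tensor $X\in\{V,D,F,A,B\}$, and then $P,Q,R,S,T,U$ via Theorem~\ref{theoremtwo}, we write the order-$1/n$ recursion obtained from Theorem~\ref{theoremone} in the schematic form
\begin{align}
X^{(\ell+1)} = \mathcal{M}^{(\ell)}\big[X^{(\ell)}\big] + \mathcal{S}^{(\ell)}_{X},\nonumber
\end{align}
where $\mathcal{M}^{(\ell)}$ collects the diagrams containing a quartic vertex inserted between two bare propagators. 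An example is the second line of~\eqref{eq:F}, or the last term of~\eqref{fourpointrecursion}.

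The first step is a diagrammatic observation. In the derivation of~\eqref{ftensoratorderonen}, the only diagrams that contain a quartic vertex are the $i\neq j$ diagrams with diagonal Weingarten weight $\mathcal{W}[1,1]\to 1/n^2$. The off-diagonal $\mathcal{W}[2]$ diagrams are down by an extra $1/n$ and can contribute only when the square propagator factorizes into bare propagators, so they carry no quartic vertex. Hence $\mathcal{M}^{(\ell)}$ coincides with its Gaussian counterpart of~\cite{guillen2025}. We will argue this in general from Theorem~\ref{theoremthree}: at order $1/n$, exactly one connected higher-order vertex may appear, and any Weingarten-suppressed channel uses its extra $1/n$, leaving none for a vertex. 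Orthogonality therefore only alters $\mathcal{S}_X$. In $\mathcal{S}_X$ it produces subtracted Gaussian expectations such as the extra terms in the brackets of~\eqref{fourpointrecursion} and~\eqref{eq:F}, which depend only on $K^{(\ell)}$ and $\Theta^{(\ell)}$.

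The second step evaluates $\mathcal{M}^{(\ell)}$ at criticality. Following~\cite{guillen2025}, we take the single-input case (or the $K^{(\ell)}=K^\star$ fixed point with its degenerate/perpendicular decomposition). The Gaussian identities
\begin{align}
\tfrac{C_W}{2}\langle \partial^2_z(\sigma\sigma)\rangle_{K}\,K = \chi_\parallel K,\qquad C_W\langle\sigma'\sigma'\rangle_K=\chi_\perp,\nonumber
\end{align}
together with their NTK analogues, reduce $\mathcal{M}^{(\ell)}$ to multiplication by products such as $\chi_\parallel^2$, $\chi_\parallel\chi_\perp$ and $\chi_\perp^2$. The infinite-width criticality conditions $\chi_\parallel=\chi_\perp=1$, which are assumed to hold for the NNGP and NTK, make the homogeneous part marginal: it neither grows nor decays exponentially.

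The third step controls the sources. Since $K^{(\ell)}$ and $\Theta^{(\ell)}$ are critical, they behave at most polynomially in $\ell$. Hence $\mathcal{S}^{(\ell)}_X$ is polynomially bounded, and each $X^{(\ell)}=\sum_{\ell'}\mathcal{S}^{(\ell')}_X+\dots$ grows at most polynomially. This is the definition of criticality for the finite-width cumulants. The argument proceeds in dependency order: $V$ first, then $D,F$, then $A,B$, then the dNTK and ddNTK tensors, each of which uses earlier ones only in its sources. The main obstacle will be the first step in full generality: proving that no Weingarten-weighted diagram with a nontrivial charge permutation can host a quartic vertex at order $1/n$ for every tensor. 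To handle it, we will first attempt a uniform power-counting argument on $s$-class diagrams from rules (4)–(5) and check it against the explicit recursions in Appendices~\ref{app:proofs_one} and~\ref{app:proofs_two}.
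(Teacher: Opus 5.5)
Your proposal is correct and is essentially the paper's argument made explicit. The paper likewise notes that orthogonality enters only through Weingarten-suppressed channels (Rule~4), which contribute lower-rank source terms, so the same-rank part of each recursion, and hence the $\chi_\parallel,\chi_\perp$-driven stability analysis, is the Gaussian one, with the remainder delegated to the bootstrap of~\cite{guillen2025}, which you reconstruct through your $\mathcal{M}+\mathcal{S}$ split and dependency ordering.

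The one caveat is scope. Your claim that suppressed channels host no vertex holds only at order $1/n$. At higher orders they can host vertices (e.g.\ $\Delta V_6$ at order $1/n^2$ contains $V_4^{(\ell)}$), but only of strictly lower rank, and that weaker form is what the paper relies on to cover all orders in $1/n$.
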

\begin{proof} 
The proof is twofold. First, as established in~\cite{guillen2025}, the stability analysis admits a bootstrap structure: criticality of lower-rank tensors suffices to control higher-rank tensors, whose behavior can then be analyzed systematically using the Feynman rules of Section~\ref{sec:feynman-diagrams} or Appendix~\ref{app:general_feynman_rules}. Second, at any fixed order in \(1/n\), orthogonality modifies the recursion of a given tensor only through the introduction of lower-rank tensors, as prescribed by Rule~4 in Section~4.1. This rule introduces Weingarten functions multypling diagrams generated by inequivalent permutations relative to the reference pairing. By definition, these contributions are subleading compared to the identity permutation. Since these lower-rank tensors are already assumed to be critical under the bootstrap hypothesis, the resulting stability analysis for orthogonal tensors coincides with that of the Gaussian case.
\end{proof} 

In Figure~\ref{fig:gradient_stability_plot}, we empirically verify this by sampling the NTK components of a $\tanh$ orthogonal MLP ($n=50$, $L=30$) with inputs drawn i.i.d. from the interval $(0,1)$ across varying $C_{W}$ (see Appendix~\ref{app:experimental_setup}). This illustrates the effectiveness of the diagrammatic framework for deriving results to all orders in \(1/n\).
\begin{figure*}
  \centering
  \includegraphics[width=0.8\textwidth,keepaspectratio]{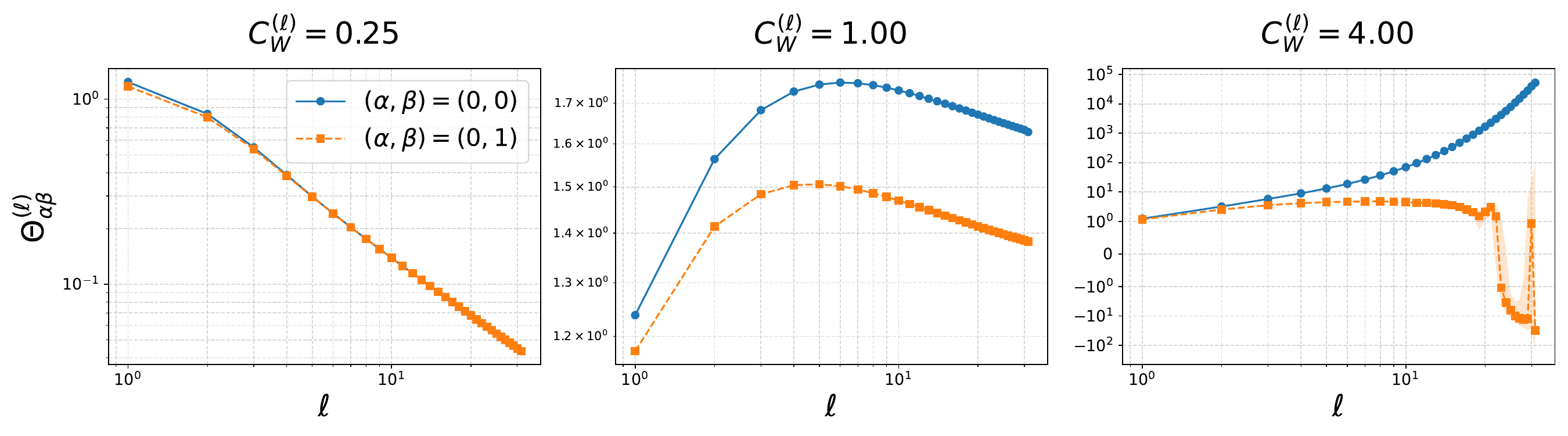}
  \vspace{-0.1cm}
  \caption{\emph{Gradient stability.} Monte Carlo estimates of the NTK \(\Theta_{\alpha\beta}^{(\ell)}\) for a \(\tanh\) orthogonal network (\(n=50\), \(L=30\)) across varying \(C_W\), with the critical case \(C_W=1\) shown in the center. Means are computed over 600 initializations; shaded regions denote standard errors (typically not visible). Results are consistent with Theorem~\ref{theoremfour}.}
  \label{fig:gradient_stability_plot}
\end{figure*}

\subsection{Single-input solutions}\label{subsec:single-input}
Once derived within the Feynman diagrammatic framework, the recursion relations can be solved for given initial conditions. We focus on the single-input setting with \(\tanh\) activation and solve the resulting recursions for the NTK tensors (see Appendix~\ref{app:analytical_recursion_relations} for an explicit derivation). We consider networks of width \(n=50\) and depth \(L=10\), initialized at criticality \(C_W=1\). Gaussian expectations are evaluated numerically in \textit{Mathematica}. We analyze normalized cumulants, defined as ratios to their infinite-width counterparts; for example, \(\tilde{D}^{(\ell)} = D^{(\ell)}/(K^{(\ell)}\Theta^{(\ell)})\) and \(\tilde{F}^{(\ell)} = F^{(\ell)}/(K^{(\ell)}\Theta^{(\ell)})\). The solutions are shown in Figure~\ref{fig:combined_tensors_plot} (a), where we observe excellent agreement with finite-width simulations, supporting the validity of our framework. See Appendix~\ref{app:solut-single-input} for a complete analysis, including NTK derivatives. Our results are consistent with the empirical findings of~\cite{day2023}.
 \begin{figure*}[tb]
  \centering
  \includegraphics[height=0.2\textheight,keepaspectratio]{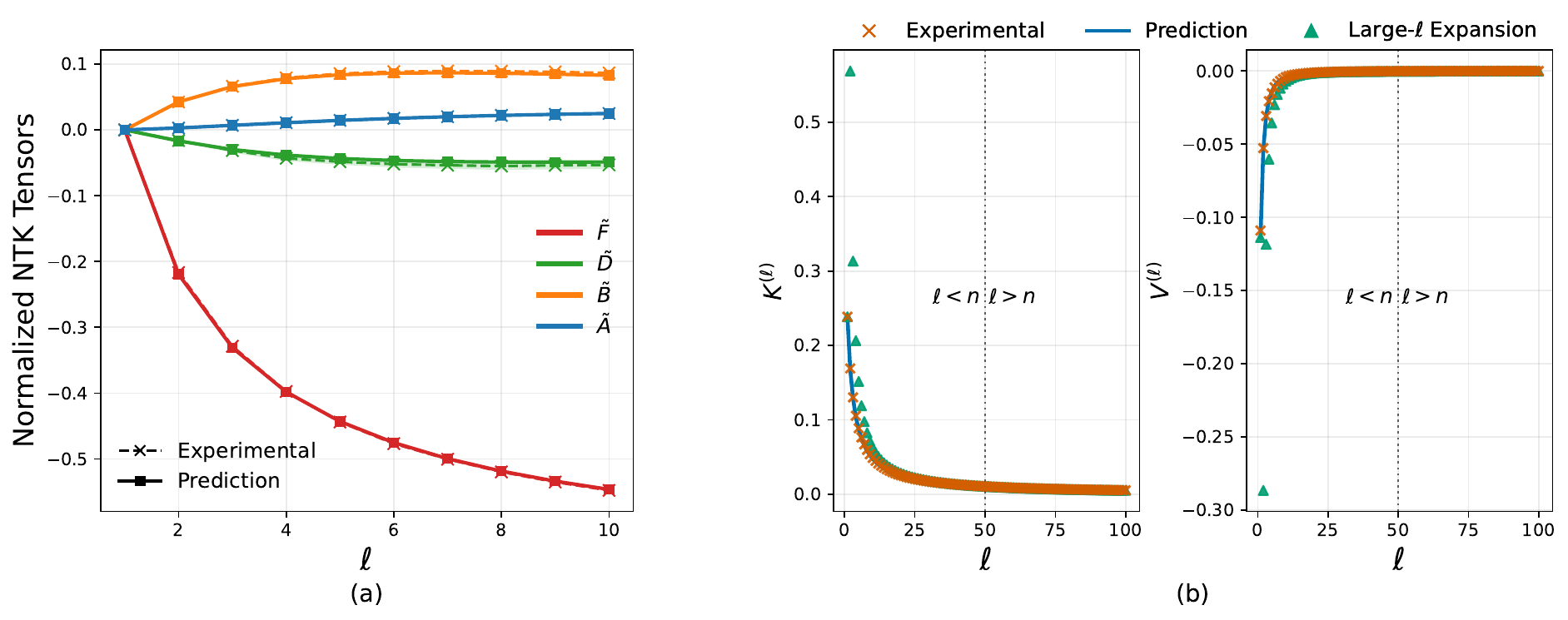}
  \vspace{-0.1cm}
\caption{\emph{Orthogonal saturation.} 
(a) Normalized NTK tensors from Monte Carlo simulations of \(\tanh\) orthogonal networks (\(n=50\), \(L=10\)) at criticality, compared with theoretical predictions. Means are computed over 600 initializations; shaded regions denote standard deviations (typically not visible), showing quantitative agreement. 
(b) Large-\(\ell\) expansions of the NNGP and quartic vertex \(V\), compared with exact solutions and simulations. While discrepancies appear at small \(\ell\), asymptotic predictions are in precise agreement at large \(\ell\). The theory captures the regime \(\ell > n\), consistent with~\cite{day2023}, where orthogonal networks exhibit saturation at large depth, unlike the Gaussian case.}
\label{fig:combined_tensors_plot}
\vspace{-0.5cm}
\end{figure*}


\subsection{Large-$\ell$ expansions}

The recursion relations derived above admit analytic solutions in the large-depth regime. In this limit, one can substitute the standard ansatz
\begin{align}\label{largelexpansion}
\mathcal{O}(\ell) = \ell^{-p_{\mathcal{O}}}\sum_{p,q=1}^{\infty} c^{\mathcal{O}}_{p,q}\frac{\log^{p}\ell}{\ell^{q}}
\end{align}
into the recursion equations and determine the coefficients $c^{\mathcal{O}}_{p,q}$ systematically once the initial conditions are fixed. We derive large-$\ell$ expansions for the NNGP and the quartic vertex $V$ in the single-input setting, and compare them with exact solutions and Monte Carlo simulations in Figure~\ref{fig:combined_tensors_plot} (b). The analytical expansion agrees well with the empirical observations for large $\ell$.
Complete derivations of the expansions~\eqref{largelexpansion} for the NTK tensors and their derivatives are provided in Appendix~\ref{app:large_l_expansion}.

\section{Acknowledgments}

We would like to thank Philipp Misof, Yonatan Kahn and Zhengkang (Kevin) Zhang for insightful and helpful discussions. This work was supported by the Wallenberg AI, Autonomous Systems and Software Program (WASP), funded by the Knut and Alice Wallenberg Foundation.

\renewcommand*{\bibfont}{\normalfont\footnotesize}
\printbibliography

@inproceedings{hu2019,
  ids           = {hu2020a},
  title         = {Provable {{Benefit}} of {{Orthogonal Initialization}} in {{Optimizing Deep Linear Networks}}},
  booktitle     = {International {{Conference}} on {{Learning Representations}}},
  author        = {Hu, Wei and Xiao, Lechao and Pennington, Jeffrey},
  year          = 2019,
  month         = sep,
  eprint        = {2001.05992},
  primaryclass  = {cs},
  urldate       = {2026-03-09},
  archiveprefix = {arXiv},
  langid        = {english}
}

@inproceedings{huang2021,
  ids           = {huang2021a},
  title         = {On the {{Neural Tangent Kernel}} of {{Deep Networks}} with {{Orthogonal Initialization}}},
  booktitle     = {Twenty-{{Ninth International Joint Conference}} on {{Artificial Intelligence}}},
  author        = {Huang, Wei and Du, Weitao and Xu, Richard Yi Da},
  year          = 2021,
  month         = aug,
  volume        = {3},
  eprint        = {2004.05867},
  primaryclass  = {cs},
  pages         = {2577--2583},
  issn          = {1045-0823},
  doi           = {10.24963/ijcai.2021/355},
  urldate       = {2026-03-09},
  archiveprefix = {arXiv},
  langid        = {english}
}

@inproceedings{pennington2017,
  ids           = {pennington2017a},
  title         = {Resurrecting the Sigmoid in Deep Learning through Dynamical Isometry: Theory and Practice},
  shorttitle    = {Resurrecting the Sigmoid in Deep Learning through Dynamical Isometry},
  booktitle     = {Advances in {{Neural Information Processing Systems}}},
  author        = {Pennington, Jeffrey and Schoenholz, Samuel and Ganguli, Surya},
  year          = 2017,
  volume        = {30},
  eprint        = {1711.04735},
  primaryclass  = {cs},
  publisher     = {Curran Associates, Inc.},
  urldate       = {2026-03-09},
  archiveprefix = {arXiv}
}

@inproceedings{pennington2018a,
  ids           = {pennington2018b},
  title         = {The Emergence of Spectral Universality in Deep Networks},
  booktitle     = {Proceedings of the {{Twenty-First International Conference}} on {{Artificial Intelligence}} and {{Statistics}}},
  author        = {Pennington, Jeffrey and Schoenholz, Samuel and Ganguli, Surya},
  year          = 2018,
  month         = mar,
  eprint        = {1802.09979},
  primaryclass  = {stat},
  pages         = {1924--1932},
  publisher     = {PMLR},
  issn          = {2640-3498},
  urldate       = {2026-03-09},
  archiveprefix = {arXiv},
  langid        = {english}
}

@inproceedings{xiao2018,
  title      = {Dynamical {{Isometry}} and a {{Mean Field Theory}} of {{CNNs}}: {{How}} to {{Train}} 10,000-{{Layer Vanilla Convolutional Neural Networks}}},
  shorttitle = {Dynamical {{Isometry}} and a {{Mean Field Theory}} of {{CNNs}}},
  booktitle  = {Proceedings of the 35th {{International Conference}} on {{Machine Learning}}},
  author     = {Xiao, Lechao and Bahri, Yasaman and {Sohl-Dickstein}, Jascha and Schoenholz, Samuel and Pennington, Jeffrey},
  year       = 2018,
  month      = jul,
  pages      = {5393--5402},
  publisher  = {PMLR},
  issn       = {2640-3498},
  urldate    = {2024-02-18},
  langid     = {english}
}

@inproceedings{mishkin2016,
  title         = {All You Need Is a Good Init},
  booktitle     = {International {{Conference}} on {{Learning Representations}} 2016},
  author        = {Mishkin, Dmytro and Matas, Jiri},
  year          = 2016,
  month         = feb,
  eprint        = {1511.06422},
  primaryclass  = {cs},
  publisher     = {arXiv},
  urldate       = {2024-03-13},
  archiveprefix = {arXiv}
}

@misc{saxe2014,
  title         = {Exact Solutions to the Nonlinear Dynamics of Learning in Deep Linear Neural Networks},
  author        = {Saxe, Andrew M. and McClelland, James L. and Ganguli, Surya},
  year          = 2014,
  month         = feb,
  number        = {arXiv:1312.6120},
  eprint        = {1312.6120},
  primaryclass  = {cond-mat, q-bio, stat},
  publisher     = {arXiv},
  urldate       = {2024-03-14},
  archiveprefix = {arXiv}
}

@inproceedings{jacot2018,
  ids           = {jacot2020},
  title         = {Neural {{Tangent Kernel}}: {{Convergence}} and {{Generalization}} in {{Neural Networks}}},
  shorttitle    = {Neural {{Tangent Kernel}}},
  booktitle     = {Advances in {{Neural Information Processing Systems}}},
  author        = {Jacot, Arthur and Gabriel, Franck and Hongler, Clement},
  year          = {2018},
  volume        = {31},
  eprint        = {1806.07572},
  publisher     = {Curran Associates, Inc.},
  urldate       = {2023-09-23},
  archiveprefix = {arXiv}
}

@article{banta2024,
  ids           = {banta2023},
  title         = {Structures of Neural Network Effective Theories},
  author        = {Banta, Ian and Cai, Tianji and Craig, Nathaniel and Zhang, Zhengkang},
  year          = {2024},
  month         = may,
  journal       = {Physical Review D},
  volume        = {109},
  number        = {10},
  eprint        = {2305.02334},
  primaryclass  = {cond-mat, physics:hep-ph, physics:hep-th, stat},
  pages         = {105007},
  doi           = {10.1103/PhysRevD.109.105007},
  urldate       = {2024-06-10},
  archiveprefix = {arXiv}
}

@book{roberts2022,
  ids           = {roberts2021a},
  title         = {The {{Principles}} of {{Deep Learning Theory}}: {{An Effective Theory Approach}} to {{Understanding Neural Networks}}},
  shorttitle    = {The {{Principles}} of {{Deep Learning Theory}}},
  author        = {Roberts, Daniel A. and Yaida, Sho},
  year          = {2022},
  eprint        = {2106.10165},
  publisher     = {Cambridge University Press},
  address       = {Cambridge},
  doi           = {10.1017/9781009023405},
  urldate       = {2024-12-18},
  archiveprefix = {arXiv},
  isbn          = {978-1-316-51933-2}
}

@inproceedings{he2015a,
  title         = {Delving {{Deep}} into {{Rectifiers}}: {{Surpassing Human-Level Performance}} on {{ImageNet Classification}}},
  shorttitle    = {Delving {{Deep}} into {{Rectifiers}}},
  booktitle     = {Proceedings of the {{IEEE International Conference}} on {{Computer Vision}}},
  author        = {He, Kaiming and Zhang, Xiangyu and Ren, Shaoqing and Sun, Jian},
  year          = {2015},
  eprint        = {1502.01852},
  primaryclass  = {cs},
  pages         = {1026--1034},
  urldate       = {2024-06-06},
  archiveprefix = {arXiv}
}

@inproceedings{glorot2010,
  title     = {Understanding the Difficulty of Training Deep Feedforward Neural Networks},
  booktitle = {Proceedings of the {{Thirteenth International Conference}} on {{Artificial Intelligence}} and {{Statistics}}},
  author    = {Glorot, Xavier and Bengio, Yoshua},
  year      = {2010},
  month     = mar,
  pages     = {249--256},
  publisher = {{JMLR Workshop and Conference Proceedings}},
  issn      = {1938-7228},
  urldate   = {2022-12-04},
  langid    = {english}
}

@inproceedings{yaida2020,
  title         = {Non-{{Gaussian}} Processes and Neural Networks at Finite Widths},
  booktitle     = {Proceedings of {{The First Mathematical}} and {{Scientific Machine Learning Conference}}},
  author        = {Yaida, Sho},
  year          = {2020},
  month         = aug,
  eprint        = {1910.00019},
  pages         = {165--192},
  publisher     = {PMLR},
  urldate       = {2023-09-27},
  archiveprefix = {arXiv},
  langid        = {english}
}

@article{halverson2021,
  title         = {Neural {{Networks}} and {{Quantum Field Theory}}},
  author        = {Halverson, James and Maiti, Anindita and Stoner, Keegan},
  year          = {2021},
  month         = sep,
  journal       = {Machine Learning: Science and Technology},
  volume        = {2},
  number        = {3},
  eprint        = {2008.08601},
  pages         = {035002},
  issn          = {2632-2153},
  doi           = {10.1088/2632-2153/abeca3},
  urldate       = {2021-08-06},
  archiveprefix = {arXiv}
}

@article{grosvenor2022,
  ids           = {grosvenor2021},
  title         = {The Edge of Chaos: Quantum Field Theory and Deep Neural Networks},
  shorttitle    = {The Edge of Chaos},
  author        = {Grosvenor, Kevin and Jefferson, Ro},
  year          = {2022},
  month         = mar,
  journal       = {SciPost Physics},
  volume        = {12},
  number        = {3},
  eprint        = {2109.13247},
  pages         = {081},
  issn          = {2542-4653},
  doi           = {10.21468/SciPostPhys.12.3.081},
  urldate       = {2024-07-30},
  archiveprefix = {arXiv},
  langid        = {english}
}

@article{demirtas2024,
  ids           = {demirtas2023},
  title         = {Neural Network Field Theories: Non-{{Gaussianity}}, Actions, and Locality},
  shorttitle    = {Neural Network Field Theories},
  author        = {Demirtas, Mehmet and Halverson, James and Maiti, Anindita and Schwartz, Matthew D and Stoner, Keegan},
  year          = {2024},
  month         = jan,
  journal       = {Machine Learning: Science and Technology},
  volume        = {5},
  number        = {1},
  eprint        = {2307.03223},
  primaryclass  = {hep-th},
  pages         = {015002},
  issn          = {2632-2153},
  doi           = {10.1088/2632-2153/ad17d3},
  urldate       = {2025-05-05},
  archiveprefix = {arXiv},
  langid        = {english}
}

@article{day2025,
  ids           = {day2023},
  title         = {Feature Learning and Generalization in Deep Networks with Orthogonal Weights},
  author        = {Day, Hannah and Kahn, Yonatan and Roberts, Daniel A},
  year          = 2025,
  month         = aug,
  journal       = {Machine Learning: Science and Technology},
  volume        = {6},
  number        = {3},
  eprint        = {2310.07765},
  primaryclass  = {hep-ph, physics:hep-th, stat},
  pages         = {035027},
  publisher     = {IOP Publishing},
  issn          = {2632-2153},
  doi           = {10.1088/2632-2153/adf278},
  urldate       = {2026-03-24},
  archiveprefix = {arXiv},
  langid        = {english}
}

@misc{maloney2022,
  title         = {A {{Solvable Model}} of {{Neural Scaling Laws}}},
  author        = {Maloney, Alexander and Roberts, Daniel A. and Sully, James},
  year          = {2022},
  month         = oct,
  number        = {arXiv:2210.16859},
  eprint        = {2210.16859},
  primaryclass  = {cs},
  publisher     = {arXiv},
  urldate       = {2025-01-22},
  archiveprefix = {arXiv}
}

@article{zhang2025,
  ids           = {zhang2024a},
  title         = {Neural Scaling Laws from Large-{{N}} Field Theory: Solvable Model beyond the Ridgeless Limit},
  shorttitle    = {Neural Scaling Laws from Large-{{N}} Field Theory},
  author        = {Zhang, Zhengkang},
  year          = {2025},
  month         = apr,
  journal       = {Machine Learning: Science and Technology},
  volume        = {6},
  number        = {2},
  eprint        = {2405.19398},
  primaryclass  = {hep-th},
  pages         = {025010},
  issn          = {2632-2153},
  doi           = {10.1088/2632-2153/adc872},
  urldate       = {2025-05-05},
  archiveprefix = {arXiv},
  langid        = {english}
}

@incollection{maiti2021,
  title         = {Symmetry-via-Duality: {{Invariant}} Neural Network Densities from Parameter-Space Correlators},
  shorttitle    = {Symmetry-via-{{Duality}}},
  booktitle     = {Machine Learning in Pure Mathematics and Theoretical Physics},
  author        = {Maiti, Anindita and Stoner, Keegan and Halverson, James},
  eprint        = {2106.00694},
  primaryclass  = {Chapter 8},
  pages         = {293--330},
  doi           = {10.1142/9781800613706_0008},
  archiveprefix = {arXiv},
  chapter       = {Chapter 8}
}

@misc{guillen2025,
  title         = {Finite-{{Width Neural Tangent Kernels}} from {{Feynman Diagrams}}},
  author        = {Guillen, Max and Misof, Philipp and Gerken, Jan E.},
  year          = 2025,
  month         = aug,
  number        = {arXiv:2508.11522},
  eprint        = {2508.11522},
  primaryclass  = {cs},
  publisher     = {arXiv},
  doi           = {10.48550/arXiv.2508.11522},
  urldate       = {2025-08-18},
  archiveprefix = {arXiv}
}

@article{weingarten1978,
  title     = {Asymptotic Behavior of Group Integrals in the Limit of Infinite Rank},
  author    = {Weingarten, Don},
  year      = 1978,
  month     = may,
  journal   = {Journal of Mathematical Physics},
  volume    = {19},
  number    = {5},
  pages     = {999--1001},
  publisher = {AIP Publishing},
  issn      = {0022-2488},
  doi       = {10.1063/1.523807},
  urldate   = {2026-03-24},
  langid    = {english}
}

@inproceedings{ngiam2010,
  title     = {Tiled Convolutional Neural Networks},
  booktitle = {Advances in {{Neural Information Processing Systems}}},
  author    = {Ngiam, Jiquan and Chen, Zhenghao and Chia, Daniel and Koh, Pang and Le, Quoc and Ng, Andrew},
  year      = 2010,
  volume    = {23},
  publisher = {Curran Associates, Inc.},
  urldate   = {2026-03-24}
}

@article{Collins2006jgn,
  author  = {Collins, Beno{\^i}t and {\'S}niady, Piotr},
  title   = {Integration with Respect to the Haar Measure on Unitary, Orthogonal and Symplectic Groups},
  doi     = {10.1007/s00220-006-1554-3},
  journal = {Communications in Mathematical Physics},
  volume  = {264},
  number  = {3},
  pages   = {773--795},
  year    = {2006}
}

@article{Collins2009,
  author  = {Collins, Beno{\^i}t and Matsumoto, Sho},
  title   = {On some properties of orthogonal Weingarten functions},
  doi     = {10.1063/1.3251304},
  journal = {Journal of Mathematical Physics},
  volume  = {50},
  number  = {11},
  pages   = {113516},
  year    = {2009}
}

\ifbool{includeapp}{

\newpage
\appendix
\onecolumn

\section{Orthogonal NTK tensors}
\label{app:orthogonal_NTK_tensors}

The statistics of the joint distribution of preactivations and the NTK in orthogonal neural networks are characterized by cumulants involving both quantities. At first order in \(1/n\), these statistics are fully determined by the two cumulants shown in~\eqref{eq:3} and~\eqref{eq:4}. In this appendix, we derive explicit recursion relations for the tensors $D$, $F$, $A$, and $B$ at all orders in \(1/n\), and then present their first-order \(1/n\) approximations.

\subsection{NTK-preactivation cross-correlator}
The cumulant defined in~\eqref{eq:3} admits the following convenient representation
\begin{align}
&\EE^{c}_{\theta}[z^{(\ell+1)}_{i_{1}}(x_{1})z^{(\ell+1)}_{i_{2}}(x_{2})\widehat{\Delta\Theta}^{(\ell+1)}_{i_{3}i_{4}}(x_{3},x_{4})]\nonumber\\
  &\qquad= \EE^{c}_{\theta}[z^{(\ell+1)}_{i_{1}}(x_{1})z^{(\ell+1)}_{i_{2}}(x_{2})\widehat{\Theta}^{(\ell+1)}_{i_{3}i_{4}}(x_{3},x_{4})]
   - \EE^{c}_{\theta}[z^{(\ell+1)}_{i_{1}}(x_{1})z^{(\ell+1)}_{i_{2}}(x_{2})]\EE^{c}_{\theta}[\widehat{\Theta}^{(\ell+1)}_{i_{3}i_{4}}(x_{3},x_{4})]
\end{align}
Substituting the preactivation recursion~\eqref{eq:7} and the NTK recursion~\eqref{ntkrecursion}, one finds that
\begin{align}
&\EE^{c}_{\theta}[z^{(\ell+1)}_{i_{1}}(x_{1})z^{(\ell+1)}_{i_{2}}(x_{2})\widehat{\Delta\Theta}^{(\ell+1)}_{i_{3}i_{4}}(x_{3},x_{4})]\nonumber\\
  &\qquad= (12)_{i}(34)_{i}\biggl\{\frac{\lambda^{(\ell+1)}_{W}C_{W}}{n^{2}}\sum_{j,k=1}^{n}\bigg(\EE[\sigma^{(\ell)}_{j,1}\sigma^{(\ell)}_{j,2}\sigma^{(\ell)}_{k,3}\sigma^{(\ell)}_{k,4}] - \EE[\sigma^{(\ell)}_{j,1}\sigma^{(\ell)}_{j,2}]\EE[\sigma^{(\ell)}_{k,3}\sigma^{(\ell)}_{k,4}]\bigg) \nonumber\\
  & + (C_{W})^{2}\sum_{j,k=1}^{n}\bigg(\EE\left[\left(\mathcal{W}[1,1]\sigma^{(\ell)}_{j,1}\sigma^{(\ell)}_{j,2} - \frac{1}{n^{2}}\EE[\sigma^{(\ell)}_{j,1}\sigma^{(\ell)}_{j,2}]\right)\sigma'^{(\ell)}_{k,3}\sigma'^{(\ell)}_{k,4}\widehat{\Theta}^{(\ell)}_{kk,34}\right] + 2\mathcal{W}[2]\EE[\sigma_{j,1}^{(\ell)}\sigma'^{(\ell)}_{j,3}\sigma_{k,2}^{(\ell)}\sigma'^{(\ell)}_{k,4}\widehat{\Theta}^{(\ell)}_{jk,34}]
  \bigg)\biggr\} \nonumber\\
  & + (13)(24)_{i}\biggl\{(C_{W})^{2}\sum_{j,k=1}^{n}\bigg(\mathcal{W}[1,1]\EE[\sigma_{j,1}^{(\ell)}\sigma'^{(\ell)}_{j,3}\sigma_{k,2}^{(\ell)}\sigma'^{(\ell)}_{k,4}\widehat{ \Theta}_{jk,34}]\nonumber\\
  & + \mathcal{W}[2]\EE[\sigma_{j,1}^{(\ell)}\sigma_{j,2}^{(\ell)}\sigma'^{(\ell)}_{k,3}\sigma'^{(\ell)}_{k,4}\widehat{\Theta}_{kk,34}] + \mathcal{W}[2]\EE[\sigma_{j,1}^{(\ell)}\sigma'^{(\ell)}_{j,4}\sigma_{k,2}^{(\ell)}\sigma'^{(\ell)}_{k,3}\widehat{ \Theta}_{jk,34}]\bigg)\biggr\}\nonumber\\
  & + (14)(23)_{i}\biggl\{(C_{W})^{2}\sum_{j,k=1}^{n}\bigg(\mathcal{W}[1,1]\EE[\sigma_{j,1}^{(\ell)}\sigma'^{(\ell)}_{j,4}\sigma_{k,2}^{(\ell)}\sigma'^{(\ell)}_{k,3}\widehat{ \Theta}_{jk,34}]\nonumber\\
  & + \mathcal{W}[2]\EE[\sigma_{j,1}^{(\ell)}\sigma_{j,2}^{(\ell)}\sigma'^{(\ell)}_{k,3}\sigma'^{(\ell)}_{k,4}\widehat{\Theta}_{kk,34}] + \mathcal{W}[2]\EE[\sigma_{j,1}^{(\ell)}\sigma'^{(\ell)}_{j,3}\sigma_{k,2}^{(\ell)}\sigma'^{(\ell)}_{k,4}\widehat{ \Theta}_{jk,34}]\bigg)\biggr\} \label{explicitcrosscorr}
\end{align}
Here $\mathcal{W}[1,1]$ and $\mathcal{W}[2]$ denote the $k=2$ Weingarten functions appearing in~\eqref{weingartenformula}, and we have used the index symmetry of the NTK sample labels. A direct comparison of~\eqref{explicitcrosscorr} and~\eqref{eq:3} then yields
\begin{align}
    \frac{1}{n}D^{(\ell+1)}_{1234} &= \frac{\lambda^{(\ell+1)}_{W}C_{W}}{n^{2}}\sum_{j,k=1}^{n}\bigg(\EE[\sigma^{(\ell)}_{j,1}\sigma^{(\ell)}_{j,2}\sigma^{(\ell)}_{k,3}\sigma^{(\ell)}_{k,4}] - \EE[\sigma^{(\ell)}_{j,1}\sigma^{(\ell)}_{j,2}]\EE[\sigma^{(\ell)}_{k,3}\sigma^{(\ell)}_{k,4}]\bigg) \nonumber\\
  & + (C_{W})^{2}\sum_{j,k=1}^{n}\bigg(\EE\left[\left(\mathcal{W}[1,1]\sigma^{(\ell)}_{j,1}\sigma^{(\ell)}_{j,2} - \frac{1}{n^{2}}\EE[\sigma^{(\ell)}_{j,1}\sigma^{(\ell)}_{j,2}]\right)\sigma'^{(\ell)}_{k,3}\sigma'^{(\ell)}_{k,4}\widehat{\Theta}^{(\ell)}_{kk,34}\right]\nonumber\\
  & + 2\mathcal{W}[2]\EE[\sigma_{j,1}^{(\ell)}\sigma'^{(\ell)}_{j,3}\sigma_{k,2}^{(\ell)}\sigma'^{(\ell)}_{k,4}\widehat{\Theta}^{(\ell)}_{jk,34}]
  \bigg)\label{dtensorfull}\\
  \frac{1}{n}F^{(\ell+1)}_{1324}  &= (C_{W})^{2}\sum_{j,k=1}^{n}\bigg(\mathcal{W}[1,1]\EE[\sigma_{j,1}^{(\ell)}\sigma'^{(\ell)}_{j,3}\sigma_{k,2}^{(\ell)}\sigma'^{(\ell)}_{k,4}\widehat{\Theta}^{(\ell)}_{jk,34}]\nonumber\\
  & + \mathcal{W}[2]\EE[\sigma_{j,1}^{(\ell)}\sigma_{j,2}^{(\ell)}\sigma'^{(\ell)}_{k,3}\sigma'^{(\ell)}_{k,4}\widehat{\Theta}^{(\ell)}_{kk,34}] + \mathcal{W}[2]\EE[\sigma_{j,1}^{(\ell)}\sigma'^{(\ell)}_{j,4}\sigma_{k,2}^{(\ell)}\sigma'^{(\ell)}_{k,3}\widehat{ \Theta}^{(\ell)}_{jk,34}]\bigg) \label{ftensorfull}
\end{align}
To obtain the first-order \(1/n\) corrections to these tensors, we expand the $k=2$ Weingarten functions in the large-$n$ regime. Explicitly,
\begin{align}
    \mathcal{W}[1,1] = \frac{n+1}{n(n-1)(n+2)} = \frac{1}{n^{2}} + \frac{2}{n^{4}} - \frac{2}{n^{5}} + \mathcal{O}\left(\frac{1}{n^{6}}\right) \label{w11expanded}\\
    \mathcal{W}[2] = -\frac{1}{n(n-1)(n+2)} = -\frac{1}{n^{3}} + \frac{1}{n^{4}} - \frac{3}{n^{5}} + \mathcal{O}\left(\frac{1}{n^{6}}\right) \label{w2expanded}
\end{align}
Upon substituting~\eqref{w11expanded} and~\eqref{w2expanded} into~\eqref{dtensorfull} and~\eqref{ftensorfull}, we obtain
\begin{align}
  \frac{1}{n}D^{(\ell+1)}_{1234} &= \frac{\lambda_{W}^{(\ell+1)}C_{W}}{n^{2}}\sum_{j,k=1}^{n}\EE[\widehat{\Delta K}^{(\ell)}_{j,12}\widehat{\Delta K}^{(\ell)}_{k,34}] + \frac{(C_{W})^{2}}{n^{2}}\sum_{j,k=1}^{n}\EE[\widehat{\Delta K}^{(\ell)}_{j,12}\sigma'^{(\ell)}_{k,3}\sigma'^{(\ell)}_{k,4}]\Theta^{(\ell)}_{34}\nonumber\\
  & + \frac{(C_{W})^{2}}{n^{2}}\sum_{j,k=1}^{n}\EE[\widehat{\Delta K}^{(\ell)}_{j,12}\sigma'^{(\ell)}_{k,3}\sigma'^{(\ell)}_{k,4}\widehat{\Delta \Theta}^{(\ell)}_{kk,34}] -  2\frac{(C_{W})^{2}}{n^{3}}\sum_{j=1}^{n}\EE[\sigma_{j,1}^{(\ell)}\sigma'^{(\ell)}_{j,3}\sigma_{j,2}^{(\ell)}\sigma'^{(\ell)}_{j,4}]\Theta^{(\ell)}_{34} \nonumber\\
  & -  2\frac{(C_{W})^{2}}{n^{3}}\sum_{j,k=1}^{n}\EE[\sigma_{j,1}^{(\ell)}\sigma'^{(\ell)}_{j,3}\sigma_{k,2}^{(\ell)}\sigma'^{(\ell)}_{k,4}\widehat{\Delta \Theta}^{(\ell)}_{jk,34}] + \mathcal{O}\left(\frac{1}{n^{2}}\right) \\
  \frac{1}{n}F^{(\ell+1)}_{1324}  &= \frac{(C_{W})^{2}}{n^{2}}\bigg[\sum_{j=1}^{n}\EE[\sigma_{j,1}^{(\ell)}\sigma'^{(\ell)}_{j,3}\sigma_{j,2}^{(\ell)}\sigma'^{(\ell)}_{j,4}]\Theta^{(\ell)}_{34} + \sum_{j,k=1}^{n}\EE[\sigma_{j,1}^{(\ell)}\sigma'^{(\ell)}_{j,3}\sigma_{k,2}^{(\ell)}\sigma'^{(\ell)}_{k,4}\widehat{\Delta \Theta}^{(\ell)}_{jk,34}]\bigg] \nonumber\\
  & - \frac{(C_{W})^{2}}{n^{3}}\bigg[\sum_{j,k=1}^{n}\EE[\sigma_{j,1}^{(\ell)}\sigma_{j,2}^{(\ell)}\sigma'^{(\ell)}_{k,3}\sigma'^{(\ell)}_{k,4}]\Theta^{(\ell)}_{34} + \sum_{j,k=1}^{n}\EE[\sigma_{j,1}^{(\ell)}\sigma_{j,2}^{(\ell)}\sigma'^{(\ell)}_{k,3}\sigma'^{(\ell)}_{k,4}\widehat{\Delta \Theta}^{(\ell)}_{jk,34}]\bigg]\nonumber\\
  & - \frac{(C_{W})^{2}}{n^{3}}\bigg[\sum_{j=1}^{n}\EE[\sigma_{j,1}^{(\ell)}\sigma'^{(\ell)}_{j,4}\sigma_{j,2}^{(\ell)}\sigma'^{(\ell)}_{j,3}]\Theta^{(\ell)}_{34} + \sum_{j,k=1}^{n}\EE[\sigma_{j,1}^{(\ell)}\sigma'^{(\ell)}_{j,4}\sigma_{k,2}^{(\ell)}\sigma'^{(\ell)}_{k,3}\widehat{\Delta \Theta}^{(\ell)}_{jk,34}]\bigg] + \mathcal{O}\left(\frac{1}{n^{2}}\right)
\end{align}
Here we introduce the tensor $\widehat{\Delta K}^{(\ell)}_{i,\alpha\beta} = \sigma_{i,\alpha}^{(\ell)}\sigma_{i,\beta}^{(\ell)} - \EE[\sigma_{i,\alpha}^{(\ell)}\sigma_{i,\beta}^{(\ell)}]$, and decompose the NTK tensor into its mean $\Theta$ and fluctuations $\widehat{\Delta \Theta}$. One can then directly apply the effective field theory techniques developed in~\cite{roberts2021a} to expand non-Gaussian expectation values with an interaction controlled by the four-point vertex $V_{4}$. After lengthy and technically involved algebraic computations, one arrives at
\begin{align}
  D^{(\ell +1)}_{12\textcolor{color1}{34}} &= C_{W}\langle \widehat{\Delta G}^{(\ell)}_{12}\widehat{\Delta \Omega}_{34}^{(\ell +1)}\rangle_{K^{(\ell)}} \nonumber\\
  & + \frac{C_{W}}{4}\sum_{\beta_{1},\beta_{2},\beta_{3},\beta_{4} \in \{1,2,3,4\}}V^{(\ell)}_{(\beta_{1}\beta_{2})(\beta_{3}\beta_{4})}\Biggl\langle \frac{d^{2}(\widehat{\Delta G}^{(\ell)}_{12})}{d z^{(\ell)}_{\beta_{1}}d z^{(\ell)}_{\beta_{2}}}\Biggr\rangle_{K^{(\ell)}}\Biggl\langle \frac{d^{2}(\widehat{\Delta \Omega}^{(\ell +1)}_{34})}{d z^{(\ell)}_{\beta_{1}}d z^{(\ell)}_{\beta_{2}}}\Biggr\rangle_{K^{(\ell)}}\nonumber\\
  & + (C_{W})^{2}\sum_{\beta_{1},\beta_{2} \in \{1,2,3,4\}}\Biggl\langle \frac{d^{2}(\widehat{\Delta G}^{(\ell)}_{12})}{d z^{(\ell)}_{\beta_{1}}d z^{(\ell)}_{\beta_{2}}}\Biggr\rangle_{K^{(\ell)}} \langle \sigma'^{(\ell)}_{\textcolor{color1}{3}}\sigma'^{(\ell)}_{\textcolor{color1}{4}}\rangle_{K^{(\ell)}}D^{(\ell)}_{\beta_{1}\beta_{2}\textcolor{color1}{34}} + \mathcal{O}\left(\frac{1}{n}\right)\label{drecursion}\,,\\
  F^{(\ell+1)}_{1\textcolor{color1}{3}2\textcolor{color1}{4}} &=(C_{W})^{2}\left[ \langle \sigma^{(\ell)}_{1}\sigma^{(\ell)}_{2}\sigma'^{(\ell)}_{\textcolor{color1}{3}}\sigma'^{(\ell)}_{\textcolor{color1}{4}} \rangle_{K^{(\ell)}} - \langle \sigma^{(\ell)}_{1}\sigma^{(\ell)}_{2}\rangle_{K^{(\ell)}}\langle \sigma'^{(\ell)}_{\textcolor{color1}{3}}\sigma'^{(\ell)}_{\textcolor{color1}{4}} \rangle_{K^{(\ell)}} \right]\Theta^{(\ell)}_{\textcolor{color1}{3}\textcolor{color1}{4}}\nonumber\\
  &\hspace{-0.5cm}+(C_{W})^{2}\!\!\!\sum_{\alpha,\beta,\gamma,\delta=1}^{4}\Biggl\langle \frac{d(\sigma^{(\ell)}_{1}\sigma'^{(\ell)}_{\textcolor{color1}{3}})}{dz^{(\ell)}_{\gamma}} \Biggr\rangle_{K^{(\ell)}}\Biggl\langle \frac{d(\sigma_{2}^{(\ell)}\sigma'^{(\ell)}_{\textcolor{color1}{4}})}{dz^{(\ell)}_{\delta}} \Biggr\rangle_{K^{(\ell)}}F^{(\ell)}_{\gamma\textcolor{color1}{3}\delta\textcolor{color1}{4}} + \mathcal{O}\left(\frac{1}{n}\right)\label{frecursion} \,,
\end{align}
where  $\widehat{\Delta G}_{\alpha\beta}^{(\ell)} = \sigma_{i,\alpha}^{(\ell)}\sigma_{i,\beta}^{(\ell)} - \langle\sigma_{i,\alpha}^{(\ell)}\sigma_{i,\beta}^{(\ell)}\rangle_{K^{(\ell)}}$, and $\widehat{\Omega}^{(\ell+1)}_{i,\alpha\beta} = \lambda^{(\ell+1)}_{W}\sigma^{(\ell)}_{i,\alpha}\sigma^{(\ell)}_{i,\beta} + C_{W}\Theta_{\alpha\beta}^{(\ell)}\sigma'^{(\ell)}_{i,\alpha}\sigma'^{(\ell)}_{i,\beta}$ with $\widehat{\Delta\Omega}_{i,\alpha\beta}^{(\ell+1)}=\widehat{\Omega}_{i,\alpha\beta}^{(\ell+1)}-\langle \widehat{\Omega}_{i,\alpha\beta}^{(\ell+1)} \rangle_{K^{(\ell)}}$.

We observe that the recursion relation for the $D$ tensor coincides with its Gaussian counterpart, while the recursion relation for the $F$ tensor acquires an additional term proportional to the NTK mean. The expression in~\eqref{eq:F} is obtained from~\eqref{frecursion} by performing integrations-by-part inside the Gaussian expectation values of derivative terms.

\subsection{NTK variance}
The NTK variance admits the compact representation
\begin{align}
&\EE^{c}_{\theta}[\widehat{\Delta \Theta}^{(\ell+1)}_{i_{1}i_{2}}(\textcolor{color1}{x_{1}},\textcolor{color1}{x_{2}})\widehat{\Delta \Theta}^{(\ell+1)}_{i_{3}i_{4}}(\textcolor{color2}{x_{3}},\textcolor{color2}{x_{4}})]\nonumber\\
&\qquad=\frac{1}{n}\!\!\left(\! A^{(\ell+1)}_{\textcolor{color1}{12}\textcolor{color2}{34}}(12)(34)_{i}{+}B^{(\ell+1)}_{\textcolor{color1}{1}\textcolor{color2}{3}\textcolor{color1}{2}\textcolor{color2}{4}}(13)(24)_{i}{+}B^{(\ell+1)}_{\textcolor{color1}{1}\textcolor{color2}{4}\textcolor{color1}{2}\textcolor{color2}{3}}(14)(23)_{i} \!\right)\,,\label{eq:4}
\end{align}
where the tensors $A$ and $B$ are fully determined by $\hat{K}$ and $\hat{\Theta}$. To see this, we rewrite the cumulant~\eqref{eq:4} in the equivalent form
\begin{align}
&\EE^{c}_{\theta}[\widehat{\Delta \Theta}^{(\ell+1)}_{i_{1}i_{2}}(x_{1},x_{2})\widehat{\Delta \Theta}^{(\ell+1)}_{i_{3}i_{4}}(x_{3},x_{4})]\nonumber\\
  &\qquad= \EE^{c}_{\theta}[\widehat{\Theta}^{(\ell+1)}_{i_{1}i_{2}}(x_{1},x_{2})\widehat{\Theta}^{(\ell+1)}_{i_{3}i_{4}}(x_{3},x_{4})] - \EE^{c}_{\theta}[\widehat{\Theta}^{(\ell+1)}_{i_{1}i_{2}}(x_{1},x_{2})]\EE^{c}_{\theta}[\widehat{\Theta}^{(\ell+1)}_{i_{3}i_{4}}(x_{3},x_{4})]
\end{align}
Substituting the recursion of the NTK tensor from~\eqref{ntkrecursion}, we obtain that
\begin{align}
&\EE^{c}_{\theta}[\widehat{\Delta \Theta}^{(\ell+1)}_{i_{1}i_{2}}(x_{1},x_{2})\widehat{\Delta \Theta}^{(\ell+1)}_{i_{3}i_{4}}(x_{3},x_{4})]\nonumber\\
  &\qquad= (12)(34)_{i}\biggl\{\frac{(\lambda^{(\ell+1)}_{W})^{2}}{n^{2}}\sum_{j,k}\EE[\widehat{\Delta K}^{(\ell)}_{j,12}\widehat{\Delta K}^{(\ell)}_{j,34}] + \frac{\lambda_{W}^{(\ell+1)}C_{W}}{n^{2}}\sum_{j,k=1}^{n}\EE[\widehat{\Delta K}_{j,12}^{(\ell)}\sigma'^{(\ell)}_{k,3}\sigma'^{(\ell)}_{k,4}\widehat{\Theta}^{(\ell)}_{kk,34}]\nonumber\\
  & + \frac{\lambda_{W}^{(\ell+1)}C_{W}}{n^{2}}\sum_{j,k=1}^{n}\EE[\widehat{\Delta K}_{j,34}^{(\ell)}\sigma'^{(\ell)}_{k,1}\sigma'^{(\ell)}_{k,2}\widehat{\Theta}^{(\ell)}_{kk,12}] + (C_{W})^{2}\bigg[\sum_{j,k=1}^{n}\mathcal{W}[1,1]\EE[\sigma'^{(\ell)}_{j,1}\sigma'^{(\ell)}_{j,2}\widehat{\Theta}_{jj,12}\sigma'^{(\ell)}_{k,3}\sigma'^{(\ell)}_{k,4}\widehat{\Theta}^{(\ell)}_{kk,34}] \nonumber\\
  & - \frac{1}{n^{2}}\sum_{j,k=1}^{n}\EE[\sigma'^{(\ell)}_{j,1}\sigma'^{(\ell)}_{j,2}\widehat{\Theta}^{(\ell)}_{jj,12}]\EE[\sigma'^{(\ell)}_{k,3}\sigma'^{(\ell)}_{k,4}\widehat{\Theta}^{(\ell)}_{kk,34}] + \sum_{j,k=1}^{n}\mathcal{W}[2]\EE[\sigma'^{(\ell)}_{j,1}\sigma'^{(\ell)}_{k,2}\widehat{\Theta}^{(\ell)}_{jk,12}\sigma'^{(\ell)}_{j,3}\sigma'^{(\ell)}_{k,4}\widehat{\Theta}^{(\ell)}_{jk,34}] \nonumber\\
  & + \sum_{j,k=1}^{n}\mathcal{W}[2]\EE[\sigma'^{(\ell)}_{j,1}\sigma'^{(\ell)}_{k,2}\widehat{\Theta}^{(\ell)}_{jk,12}\sigma'^{(\ell)}_{k,3}\sigma'^{(\ell)}_{j,4}\widehat{\Theta}^{(\ell)}_{kj,34}]
  \bigg]\biggr\}\nonumber\\
  & + (13)(24)_{i}(C_{W})^{2}\biggl\{\sum_{j,k=1}^{n}\mathcal{W}[2]\EE[\sigma'^{(\ell)}_{j,1}\sigma'^{(\ell)}_{j,2}\widehat{\Theta}^{(\ell)}_{jj,12}\sigma'^{(\ell)}_{k,3}\sigma'^{(\ell)}_{k,4}\widehat{\Theta}^{(\ell)}_{kk,34}]\nonumber\\
  & + \sum_{j,k=1}^{n}\mathcal{W}[1,1]\EE[\sigma'^{(\ell)}_{j,1}\sigma'^{(\ell)}_{k,2}\widehat{\Theta}^{(\ell)}_{jk,12}\sigma'^{(\ell)}_{j,3}\sigma'^{(\ell)}_{k,4}\widehat{\Theta}^{(\ell)}_{jk,34}] + \sum_{j,k=1}^{n}\mathcal{W}[2]\EE[\sigma'^{(\ell)}_{j,1}\sigma'^{(\ell)}_{k,2}\widehat{\Theta}^{(\ell)}_{jk,12}\sigma'^{(\ell)}_{k,3}\sigma'^{(\ell)}_{j,4}\widehat{\Theta}^{(\ell)}_{kj,34}]\biggr\}\nonumber\\
  & + (14)(23)_{i}(C_{W})^{2}\biggl\{\sum_{j,k=1}^{n}\mathcal{W}[2]\EE[\sigma'^{(\ell)}_{j,1}\sigma'^{(\ell)}_{j,2}\widehat{\Theta}^{(\ell)}_{jj,12}\sigma'^{(\ell)}_{k,3}\sigma'^{(\ell)}_{k,4}\widehat{\Theta}^{(\ell)}_{kk,34}]\nonumber\\
  & + \sum_{j,k=1}^{n}\mathcal{W}[2]\EE[\sigma'^{(\ell)}_{j,1}\sigma'^{(\ell)}_{k,2}\widehat{\Theta}^{(\ell)}_{jk,12}\sigma'^{(\ell)}_{j,3}\sigma'^{(\ell)}_{k,4}\widehat{\Theta}^{(\ell)}_{jk,34}] + \sum_{j,k=1}^{n}\mathcal{W}[1,1]\EE[\sigma'^{(\ell)}_{j,1}\sigma'^{(\ell)}_{k,2}\widehat{\Theta}^{(\ell)}_{jk,12}\sigma'^{(\ell)}_{k,3}\sigma'^{(\ell)}_{j,4}\widehat{\Theta}^{(\ell)}_{kj,34}]\biggr\}
\end{align}
A direct comparison with \eqref{eq:4} then yields
\begin{align}
\frac{1}{n}A^{(\ell+1)}_{1234} &= \frac{(\lambda^{(\ell+1)}_{W})^{2}}{n^{2}}\sum_{j,k}\EE[\widehat{\Delta K}^{(\ell)}_{j,12}\widehat{\Delta K}^{(\ell)}_{j,34}] + \frac{\lambda_{W}^{(\ell+1)}C_{W}}{n^{2}}\sum_{j,k=1}^{n}\EE[\widehat{\Delta K}_{j,12}^{(\ell)}\sigma'^{(\ell)}_{k,3}\sigma'^{(\ell)}_{k,4}\widehat{\Theta}^{(\ell)}_{kk,34}]\nonumber\\
  & + \frac{\lambda_{W}^{(\ell+1)}C_{W}}{n^{2}}\sum_{j,k=1}^{n}\EE[\widehat{\Delta K}_{j,34}^{(\ell)}\sigma'^{(\ell)}_{k,1}\sigma'^{(\ell)}_{k,2}\widehat{\Theta}^{(\ell)}_{kk,12}] + (C_{W})^{2}\bigg[\sum_{j,k=1}^{n}\mathcal{W}[1,1]\EE[\sigma'^{(\ell)}_{j,1}\sigma'^{(\ell)}_{j,2}\widehat{\Theta}^{(\ell)}_{jj,12}\sigma'^{(\ell)}_{k,3}\sigma'^{(\ell)}_{k,4}\widehat{\Theta}^{(\ell)}_{kk,34}] \nonumber\\
  & - \frac{1}{n^{2}}\sum_{j,k=1}^{n}\EE[\sigma'^{(\ell)}_{j,1}\sigma'^{(\ell)}_{j,2}\widehat{\Theta}^{(\ell)}_{jj,12}]\EE[\sigma'^{(\ell)}_{k,3}\sigma'^{(\ell)}_{k,4}\widehat{\Theta}^{(\ell)}_{kk,34}] + \sum_{j,k=1}^{n}\mathcal{W}[2]\EE[\sigma'^{(\ell)}_{j,1}\sigma'^{(\ell)}_{k,2}\widehat{\Theta}^{(\ell)}_{jk,12}\sigma'^{(\ell)}_{j,3}\sigma'^{(\ell)}_{k,4}\widehat{\Theta}^{(\ell)}_{jk,34}] \nonumber\\
  & + \sum_{j,k=1}^{n}\mathcal{W}[2]\EE[\sigma'^{(\ell)}_{j,1}\sigma'^{(\ell)}_{k,2}\widehat{\Theta}^{(\ell)}_{jk,12}\sigma'^{(\ell)}_{k,3}\sigma'^{(\ell)}_{j,4}\widehat{\Theta}^{(\ell)}_{kj,34}]
  \bigg]\\
\frac{1}{n}B^{(\ell+1)}_{1324} &= (C_{W})^{2}\biggl\{\sum_{j,k=1}^{n}\mathcal{W}[2]\EE[\sigma'^{(\ell)}_{j,1}\sigma'^{(\ell)}_{j,2}\widehat{\Theta}^{(\ell)}_{jj,12}\sigma'^{(\ell)}_{k,3}\sigma'^{(\ell)}_{k,4}\widehat{\Theta}^{(\ell)}_{kk,34}]\nonumber\\
  & + \sum_{j,k=1}^{n}\mathcal{W}[1,1]\EE[\sigma'^{(\ell)}_{j,1}\sigma'^{(\ell)}_{k,2}\widehat{\Theta}^{(\ell)}_{jk,12}\sigma'^{(\ell)}_{j,3}\sigma'^{(\ell)}_{k,4}\widehat{\Theta}^{(\ell)}_{jk,34}] + \sum_{j,k=1}^{n}\mathcal{W}[2]\EE[\sigma'^{(\ell)}_{j,1}\sigma'^{(\ell)}_{k,2}\widehat{\Theta}^{(\ell)}_{jk,12}\sigma'^{(\ell)}_{k,3}\sigma'^{(\ell)}_{j,4}\widehat{\Theta}^{(\ell)}_{kj,34}]\biggr\}
\end{align}
The use of the \(1/n\)-expansions \eqref{w11expanded}, \eqref{w2expanded} allows the following simplifications
\begin{align}
\frac{1}{n}A^{(\ell+1)}_{1234} &= \frac{1}{n^{2}}\sum_{j,k}\EE[\widehat{\Delta \Omega}^{(\ell+1)}_{j,12}\widehat{\Delta \Omega}^{(\ell+1)}_{j,34}] + \frac{C_{W}}{n^{2}}\sum_{j,k=1}^{n}\EE[\widehat{\Delta \Omega}_{j,12}^{(\ell+1)}\sigma'^{(\ell)}_{k,3}\sigma'^{(\ell)}_{k,4}\widehat{\Delta \Theta}^{(\ell)}_{kk,34}]\nonumber\\
  & + \frac{C_{W}}{n^{2}}\sum_{j,k=1}^{n}\EE[\widehat{\Delta \Omega}_{j,34}^{(\ell+1)}\sigma'^{(\ell)}_{k,1}\sigma'^{(\ell)}_{k,2}\widehat{\Delta \Theta}^{(\ell)}_{kk,12}] + \frac{(C_{W})^{2}}{n^{2}}\bigg[\sum_{j,k=1}^{n}\EE[\sigma'^{(\ell)}_{j,1}\sigma'^{(\ell)}_{j,2}\widehat{\Delta \Theta}^{(\ell)}_{jj,12}\sigma'^{(\ell)}_{k,3}\sigma'^{(\ell)}_{k,4}\widehat{\Delta \Theta}^{(\ell)}_{kk,34}] \nonumber\\
  & - \sum_{j,k=1}^{n}\EE[\sigma'^{(\ell)}_{j,1}\sigma'^{(\ell)}_{j,2}\widehat{\Delta \Theta}^{(\ell)}_{jj,12}]\EE[\sigma'^{(\ell)}_{k,3}\sigma'^{(\ell)}_{k,4}\widehat{\Delta \Theta}^{(\ell)}_{kk,34}] - \frac{2}{n}\sum_{j=1}^{n}\EE[\sigma'^{(\ell)}_{j,1}\sigma'^{(\ell)}_{j,2}\sigma'^{(\ell)}_{j,3}\sigma'^{(\ell)}_{j,4}]\Theta^{(\ell)}_{12}\Theta^{(\ell)}_{34} \nonumber\\
  & - \frac{2}{n}\sum_{j=1}^{n}\EE[\sigma'^{(\ell)}_{j,1}\sigma'^{(\ell)}_{j,2}\sigma'^{(\ell)}_{j,3}\sigma'^{(\ell)}_{j,4}\widehat{\Delta \Theta}^{(\ell)}_{jj,34}]\Theta^{(\ell)}_{12} - \frac{2}{n}\sum_{j=1}^{n}\EE[\sigma'^{(\ell)}_{j,1}\sigma'^{(\ell)}_{j,2}\widehat{\Delta \Theta}^{(\ell)}_{jj,12}\sigma'^{(\ell)}_{j,3}\sigma'^{(\ell)}_{j,4}]\Theta^{(\ell)}_{34}\nonumber\\
  & - \frac{1}{n}\sum_{j,k=1}^{n}\EE[\sigma'^{(\ell)}_{j,1}\sigma'^{(\ell)}_{k,2}\widehat{\Delta \Theta}^{(\ell)}_{jk,12}\sigma'^{(\ell)}_{j,3}\sigma'^{(\ell)}_{k,4}\widehat{\Delta \Theta}^{(\ell)}_{jk,34}] - \frac{1}{n}\sum_{j,k=1}^{n}\EE[\sigma'^{(\ell)}_{j,1}\sigma'^{(\ell)}_{k,2}\widehat{\Delta \Theta}^{(\ell)}_{jk,12}\sigma'^{(\ell)}_{k,3}\sigma'^{(\ell)}_{j,4}\widehat{\Delta \Theta}^{(\ell)}_{kj,34}]
  \bigg]\nonumber\\
  & + \mathcal{O}\left(\frac{1}{n^{2}}\right)\label{atensorexpr}\\
\frac{1}{n}B^{(\ell+1)}_{1324} &= \frac{(C_{W})^{2}}{n^{2}}\bigg[-\frac{1}{n}\sum_{j,k=1}^{n}\EE[\sigma'^{(\ell)}_{j,1}\sigma'^{(\ell)}_{j,2}\sigma'^{(\ell)}_{k,3}\sigma'^{(\ell)}_{k,4}]\Theta^{(\ell)}_{12}\Theta^{(\ell)}_{34} -\frac{1}{n}\sum_{j,k=1}^{n}\EE[\sigma'^{(\ell)}_{j,1}\sigma'^{(\ell)}_{j,2}\sigma'^{(\ell)}_{k,3}\sigma'^{(\ell)}_{k,4}\widehat{\Delta \Theta}^{(\ell)}_{kk,34}]\Theta^{(\ell)}_{12}\nonumber\\
& -\frac{1}{n}\sum_{j,k=1}^{n}\EE[\sigma'^{(\ell)}_{j,1}\sigma'^{(\ell)}_{j,2}\widehat{\Delta \Theta}^{(\ell)}_{jj,12}\sigma'^{(\ell)}_{k,3}\sigma'^{(\ell)}_{k,4}]\Theta^{(\ell)}_{34} -\frac{1}{n}\sum_{j,k=1}^{n}\EE[\sigma'^{(\ell)}_{j,1}\sigma'^{(\ell)}_{j,2}\widehat{\Delta \Theta}^{(\ell)}_{jj,12}\sigma'^{(\ell)}_{k,3}\sigma'^{(\ell)}_{k,4}\widehat{\Delta \Theta}^{(\ell)}_{kk,34}]\nonumber\\ 
 & + \sum_{j=1}^{n}\EE[\sigma'^{(\ell)}_{j,1}\sigma'^{(\ell)}_{j,2}\sigma'^{(\ell)}_{j,3}\sigma'^{(\ell)}_{j,4}]\Theta^{(\ell)}_{12}\Theta^{(\ell)}_{34} + \sum_{j=1}^{n}\EE[\sigma'^{(\ell)}_{j,1}\sigma'^{(\ell)}_{j,2}\widehat{\Delta \Theta}^{(\ell)}_{jj,12}\sigma'^{(\ell)}_{j,3}\sigma'^{(\ell)}_{j,4}]\Theta^{(\ell)}_{34} \nonumber\\
 & + \sum_{j=1}^{n}\EE[\sigma'^{(\ell)}_{j,1}\sigma'^{(\ell)}_{j,2}\sigma'^{(\ell)}_{j,3}\sigma'^{(\ell)}_{j,4}\widehat{\Delta \Theta}^{(\ell)}_{jj,34}]\Theta^{(\ell)}_{12}  + \sum_{j,k=1}^{n}\EE[\sigma'^{(\ell)}_{j,1}\sigma'^{(\ell)}_{k,2}\widehat{\Delta \Theta}^{(\ell)}_{jk,12}\sigma'^{(\ell)}_{j,3}\sigma'^{(\ell)}_{k,4}\widehat{\Delta \Theta}^{(\ell)}_{jk,34}] \nonumber\\
 & - \frac{1}{n}\sum_{j=1}^{n}\EE[\sigma'^{(\ell)}_{j,1}\sigma'^{(\ell)}_{j,2}\sigma'^{(\ell)}_{j,3}\sigma'^{(\ell)}_{j,4}]\Theta^{(\ell)}_{12}\Theta^{(\ell)}_{34} - \frac{1}{n}\sum_{j=1}^{n}\EE[\sigma'^{(\ell)}_{j,1}\sigma'^{(\ell)}_{j,2}\widehat{\Delta \Theta}^{(\ell)}_{jj,12}\sigma'^{(\ell)}_{j,3}\sigma'^{(\ell)}_{j,4}]\Theta^{(\ell)}_{34}\nonumber\\
 & - \frac{1}{n}\sum_{j=1}^{n}\EE[\sigma'^{(\ell)}_{j,1}\sigma'^{(\ell)}_{j,2}\sigma'^{(\ell)}_{j,3}\sigma'^{(\ell)}_{j,4}\widehat{\Delta \Theta}^{(\ell)}_{jj,34}]\Theta^{(\ell)}_{12} - \frac{1}{n}\sum_{j,k=1}^{n}\EE[\sigma'^{(\ell)}_{j,1}\sigma'^{(\ell)}_{k,2}\widehat{\Delta \Theta}^{(\ell)}_{jk,12}\sigma'^{(\ell)}_{k,3}\sigma'^{(\ell)}_{j,4}\widehat{\Delta \Theta}^{(\ell)}_{kj,34}]\bigg] \nonumber\\ 
 &+ \mathcal{O}\left(\frac{1}{n^{2}}\right)\label{btensorexpr}
\end{align}
Here, we again decompose the NTK tensor into its mean $\Theta$ and fluctuations $\widehat{\Delta \Theta}$. These expressions can be further simplified by noting that not all contributions arising from the off-diagonal entries of the Weingarten matrix appear at order \(1/n\). In particular, the terms proportional to \(1/n\) inside the square brackets in~\eqref{atensorexpr} scale at most linearly with $n$, as follows from~\eqref{eq:3} and~\eqref{eq:4}, yielding an overall contribution of order $\mathcal{O}\left(\frac{1}{n^{2}}\right)$. This observation leads to the formula
\begin{align}
\frac{1}{n}A^{(\ell+1)}_{1234} &= \frac{1}{n^{2}}\sum_{j,k}\EE[\widehat{\Delta \Omega}^{(\ell+1)}_{j,12}\widehat{\Delta \Omega}^{(\ell+1)}_{j,34}] + \frac{C_{W}}{n^{2}}\sum_{j,k=1}^{n}\EE[\widehat{\Delta \Omega}_{j,12}^{(\ell+1)}\sigma'^{(\ell)}_{k,3}\sigma'^{(\ell)}_{k,4}\widehat{\Delta \Theta}^{(\ell)}_{kk,34}]\nonumber\\
  & + \frac{C_{W}}{n^{2}}\sum_{j,k=1}^{n}\EE[\widehat{\Delta \Omega}_{j,34}^{(\ell+1)}\sigma'^{(\ell)}_{k,1}\sigma'^{(\ell)}_{k,2}\widehat{\Delta \Theta}^{(\ell)}_{kk,12}] + \frac{(C_{W})^{2}}{n^{2}}\bigg[\sum_{j,k=1}^{n}\EE[\sigma'^{(\ell)}_{j,1}\sigma'^{(\ell)}_{j,2}\widehat{\Delta \Theta}^{(\ell)}_{jj,12}\sigma'^{(\ell)}_{k,3}\sigma'^{(\ell)}_{k,4}\widehat{\Delta \Theta}^{(\ell)}_{kk,34}] \nonumber\\
  & - \sum_{j,k=1}^{n}\EE[\sigma'^{(\ell)}_{j,1}\sigma'^{(\ell)}_{j,2}\widehat{\Delta \Theta}^{(\ell)}_{jj,12}]\EE[\sigma'^{(\ell)}_{k,3}\sigma'^{(\ell)}_{k,4}\widehat{\Delta \Theta}^{(\ell)}_{kk,34}]\bigg] + \mathcal{O}\left(\frac{1}{n^{2}}\right)\label{firstordera}
\end{align}  
Similarly, the only term that can yield a non-zero contribution at order $\frac{1}{n}$ in~\eqref{btensorexpr} is the first term, $\EE[\sigma'^{(\ell)}_{j,1}\sigma'^{(\ell)}_{j,2}\sigma'^{(\ell)}_{k,3}\sigma'^{(\ell)}_{k,4}]\Theta^{(\ell)}_{12}\Theta^{(\ell)}_{34}$, through its off-diagonal components. Likewise, the terms of order $n^{0}$ that vanish at order $\frac{1}{n}$ are precisely those containing a single NTK fluctuation inside the expectation value, in agreement with~\eqref{eq:3}. We therefore obtain
\begin{align}
\frac{1}{n}B^{(\ell+1)}_{1324} &= \frac{(C_{W})^{2}}{n^{2}}\bigg[-\frac{1}{n}\sum_{j,k=1}^{n}\EE[\sigma'^{(\ell)}_{j,1}\sigma'^{(\ell)}_{j,2}\sigma'^{(\ell)}_{k,3}\sigma'^{(\ell)}_{k,4}]\Theta^{(\ell)}_{12}\Theta^{(\ell)}_{34} + \sum_{j=1}^{n}\EE[\sigma'^{(\ell)}_{j,1}\sigma'^{(\ell)}_{j,2}\sigma'^{(\ell)}_{j,3}\sigma'^{(\ell)}_{j,4}]\Theta^{(\ell)}_{12}\Theta^{(\ell)}_{34} \nonumber\\
& + \sum_{j,k=1}^{n}\EE[\sigma'^{(\ell)}_{j,1}\sigma'^{(\ell)}_{k,2}\widehat{\Delta \Theta}^{(\ell)}_{jk,12}\sigma'^{(\ell)}_{j,3}\sigma'^{(\ell)}_{k,4}\widehat{\Delta \Theta}^{(\ell)}_{jk,34}] \bigg] + \mathcal{O}\left(\frac{1}{n^{2}}\right)\label{firstorderb}
\end{align}
One can apply the effective field theory techniques of~\cite{roberts2021a} to compute the first-order \(1/n\) corrections to the expectation values appearing in~\eqref{firstordera} and~\eqref{firstorderb}. Using~\eqref{eq:3} and~\eqref{eq:4}, together with extensive path-integral manipulations and algebraic identities, one finds that
\begin{eqnarray}
	A^{(\ell +1)}_{\textcolor{color1}{12}\textcolor{color2}{34}} &=& \langle \widehat{\Delta \Omega}^{(\ell + 1)}_{12}\widehat{\Delta \Omega}_{34}^{(\ell +1)}\rangle_{K^{(\ell)}} \nonumber\\
	&& + \frac{1}{4}\sum_{\beta_{1},\beta_{2},\beta_{3},\beta_{4} \in \{1,2,3,4\}}V^{(\ell)}_{(\beta_{1}\beta_{2})(\beta_{3}\beta_{4})}\Biggl\langle \frac{d^{2}(\widehat{\Delta \Omega}^{(\ell + 1)}_{12})}{d z^{(\ell)}_{\beta_{1}}d z^{(\ell)}_{\beta_{2}}}\Biggr\rangle_{K^{(\ell)}}\Biggl\langle \frac{d^{2}(\widehat{\Delta \Omega}^{(\ell +1)}_{34})}{d z^{(\ell)}_{\beta_{1}}d z^{(\ell)}_{\beta_{2}}}\Biggr\rangle_{K^{(\ell)}}\nonumber\\
	&& + \frac{C_{W}}{2}\sum_{\beta_{1},\beta_{2} \in \{1,2,3,4\}}\Biggl\langle \frac{d^{2}(\widehat{\Delta \Omega}^{(\ell + 1)}_{12})}{d z^{(\ell)}_{\beta_{1}}d z^{(\ell)}_{\beta_{2}}}\Biggr\rangle_{K^{(\ell)}} \langle \sigma'^{(\ell)}_{\textcolor{color2}{3}}\sigma'^{(\ell)}_{\textcolor{color2}{4}}\rangle_{K^{(\ell)}}D^{(\ell)}_{\beta_{1}\beta_{2}\textcolor{color2}{34}} \nonumber\\
	&& + \frac{C_{W}}{2}\sum_{\beta_{3},\beta_{4} \in \{1,2,3,4\}}\Biggl\langle \frac{d^{2}(\widehat{\Delta \Omega}^{(\ell + 1)}_{34})}{d z^{(\ell)}_{\beta_{3}}d z^{(\ell)}_{\beta_{4}}}\Biggr\rangle_{K^{(\ell)}} \langle \sigma'^{(\ell)}_{\textcolor{color1}{1}}\sigma'^{(\ell)}_{\textcolor{color1}{2}}\rangle_{K^{(\ell)}}D^{(\ell)}_{\textcolor{color1}{12}\beta_{3}\beta_{4}}\nonumber\\
	&& + (C_{W})^{2}\langle \sigma'^{(\ell)}_{\textcolor{color1}{1}}\sigma'^{(\ell)}_{\textcolor{color1}{2}}\rangle_{K^{(\ell)}} \langle \sigma'^{(\ell)}_{\textcolor{color2}{3}}\sigma'^{(\ell)}_{\textcolor{color2}{4}}\rangle_{K^{(\ell)}}A^{(\ell)}_{\textcolor{color1}{12}\textcolor{color2}{34}} + \mathcal{O}\left(\frac{1}{n}\right)\,,
    \label{eq:A_recursion_algebraic}\\
B^{(\ell +1)}_{\textcolor{color1}{1}\textcolor{color2}{3}\textcolor{color1}{2}\textcolor{color2}{4}} &=& (C_{W})^{2}\bigg[\langle \sigma'^{(\ell)}_{\textcolor{color1}{1}}\sigma'^{(\ell)}_{\textcolor{color1}{2}}\sigma'^{(\ell)}_{\textcolor{color2}{3}}\sigma'^{(\ell)}_{\textcolor{color2}{4}}\rangle_{K^{(\ell)}} - \langle\sigma'^{(\ell)}_{\textcolor{color1}{1}}\sigma'^{(\ell)}_{\textcolor{color1}{2}}\rangle_{K^{(\ell)}}\langle\sigma'^{(\ell)}_{\textcolor{color2}{3}}\sigma'^{(\ell)}_{\textcolor{color2}{4}}\rangle_{K^{(\ell)}} \bigg]\Theta^{(\ell)}_{\textcolor{color1}{12}}\Theta^{(\ell)}_{\textcolor{color2}{34}}\nonumber\\
&& + (C_{W})^{2}\langle \sigma'^{(\ell)}_{\textcolor{color1}{1}}\sigma'^{(\ell)}_{\textcolor{color2}{3}}\rangle_{K^{(\ell)}}\langle \sigma'^{(\ell)}_{\textcolor{color1}{2}}\sigma'^{(\ell)}_{\textcolor{color2}{4}}\rangle_{K^{(\ell)}}B^{(\ell)}_{\textcolor{color1}{1}\textcolor{color2}{3}\textcolor{color1}{2}\textcolor{color2}{4}} + \mathcal{O}\left(\frac{1}{n}\right)\,.
\label{eq:B_recursion_algebraic}
\end{eqnarray}
Here we observe that the recursion relation for the tensor $A$ coincides with its Gaussian counterpart, while that for the tensor $B$ acquires an additional term proportional to the NTK mean.

\section{Generalized Feynman rules}
\label{app:general_feynman_rules}
Without loss of generality, consider the channel $(12)_{i}(34)_{i}\ldots (2k\,2k-1)_{i}$, corresponding to the pairing $(12)(34)\ldots (2k\,2k-1)$. The Feynman rules reproducing tensors with this channel structure can be organized into two groups. The first group implements the orthogonality constraints as follows:
\allowdisplaybreaks
\begin{enumerate}
    \item Preactivations, NTKs, dNTKs and ddNTKs are represented by external lines, as illustrated below.
\begin{align}
    z_{\alpha} \equiv
    \begin{tikzpicture}[baseline=-0.1cm]
      \begin{feynman}
        \vertex (l) {};
        \vertex[right = 0pt of l, dot, minimum size=3pt, label = {left: {\footnotesize $\alpha^{}$}}] (x1) {};
        \vertex[right = 30pt of x1, dot, minimum size=0pt] (b) {};
        \diagram*{
          (x1),
          (x1) -- [inner sep = 4pt] (b) 
        };
      \end{feynman}
    \end{tikzpicture}
    &\qquad \widehat{\Delta \Theta}_{\textcolor{color1}{\alpha\beta}} \equiv
    \begin{tikzpicture}[baseline=0.05cm]
      \begin{feynman}
        \vertex (l) {};
        \vertex[right = 0pt of l, color1, dot, minimum size=3pt, label = {left: {\footnotesize $\textcolor{color1}{\alpha^{}}$}}] (x1) {};
        \vertex[above = 8pt of l, color1, dot, minimum size=3pt, label = {left: {\footnotesize $\textcolor{color1}{\beta^{}}$}}] (x2) {};
        \vertex[right = 30pt of x1, dot, minimum size=0pt] (b) {};
        \vertex[right = 30pt of x2, dot, minimum size=0pt] (bb) {};
        \diagram*{
          (x1), (x2),
          (x1) -- [color1, ghost, inner sep = 4pt] (b),
          (x2) -- [color1, ghost, inner sep = 4pt] (bb)
        };
      \end{feynman}
    \end{tikzpicture}\nonumber\\
    \widehat{\mathrm{d} \Theta}_{\textcolor{color3}{\delta_0}\textcolor{color1}{\delta_1}\textcolor{color2}{\delta_2}} \equiv \begin{tikzpicture}[baseline=(bb)]
    \begin{feynman}
          \vertex (l) {};
          \vertex[above = 12pt of l, color1, dot, minimum size=3pt, label = {left: {\footnotesize $\textcolor{color1}{\delta_1^{}}$}}] (x1) {};
          \vertex[above = 24pt of l, color2, dot, minimum size=3pt, label = {left: {\footnotesize $\textcolor{color2}{\delta_2^{}}$}}] (x2) {};
          \vertex[above = 0pt of l, color3, dot, minimum size=3pt, label = {left: {\footnotesize $\textcolor{color3}{\delta_0}^{}$}}] (x3) {};
          \vertex[right = 30pt of x1, dot, minimum size=0pt] (bb) {};
          \vertex[right = 30pt of x2, dot, minimum size=0pt] (bbb) {};
          \vertex[right = 30pt of x3, dot, minimum size=0pt] (b) {};
          \diagram*{
            (x1), (x2), (x3),
            (x3) -- [color1color2dntknew, inner sep = 4pt] (b),
            (x1) -- [color1, ghost, inner sep = 4pt] (bb),
            (x2) -- [color2, ghost, inner sep = 4pt] (bbb)    
          };
        \end{feynman}
      \end{tikzpicture}
   \qquad \widehat{\mathrm{dd_I} \Theta}_{\textcolor{color4}{\delta_0}\textcolor{color2}{\delta_1}\textcolor{color1}{\delta_2}\textcolor{color6}{\delta_3}} &\equiv \begin{tikzpicture}[baseline=(b)]
        \begin{feynman}
          \vertex (l) {};
          \vertex[above = 12pt of l, color2, dot, minimum size=3pt, label = {left: {\footnotesize $\textcolor{color2}{\delta_1^{}}$}}] (x1) {};
          \vertex[above = 24pt of l, color1, dot, minimum size=3pt, label = {left: {\footnotesize $\textcolor{color1}{\delta_2^{}}$}}] (x2) {};
          \vertex[above = 36pt of l, color6, dot, minimum size=3pt, label = {left: {\footnotesize $\textcolor{color6}{\delta_3^{}}$}}] (x3) {};
          \vertex[right = 0pt of l, color4, dot, minimum size=3pt, label = {left: {\footnotesize $\textcolor{color4}{\delta_0^{}}$}}] (x4) {};
          \vertex[right = 30pt of x1, dot, minimum size=0pt] (b) {};
          \vertex[right = 30pt of x2, dot, minimum size=0pt] (bb) {};
          \vertex[right = 30pt of x3, dot, minimum size=0pt] (bbb) {};
          \vertex[right = 30pt of x4, dot, minimum size=0pt] (bbbb) {};
          \diagram*{
            (x1), (x2), (x3), (x4),
            (x1) -- [color2, ghost, inner sep = 4pt] (b),
            (x2) -- [color1, ghost, inner sep = 4pt] (bb),
            (x3) -- [color6, ghost, inner sep = 4pt] (bbb),
            (x4) -- [color6color1color2ddntknew, inner sep = 4pt] (bbbb)
          };
        \end{feynman}
      \end{tikzpicture}
    \qquad \widehat{\mathrm{dd_{II}} \Theta}_{\textcolor{color5}{\delta_1}\textcolor{color7}{\delta_2}\textcolor{color1}{\delta_3}\textcolor{color2}{\delta_4}} \equiv \begin{tikzpicture}[baseline=(bbbb)]
        \begin{feynman}
          \vertex (l) {};
          \vertex[above = 24pt of l, color1, dot, minimum size=3pt, label = {left: {\footnotesize $\textcolor{color1}{\delta_3^{}}$}}] (x1) {};
          \vertex[above = 36pt of l, color2, dot, minimum size=3pt, label = {left: {\footnotesize $\textcolor{color2}{\delta_4^{}}$}}] (x2) {};
          \vertex[right = 0pt of l, color5, dot, minimum size=3pt, label = {left: {\footnotesize $\textcolor{color5}{\delta_1^{}}$}}] (x3) {};
          \vertex[above = 12pt of l, color7, dot, minimum size=3pt, label = {left: {\footnotesize $\textcolor{color7}{\delta_2^{}}$}}] (x4) {};
          \vertex[right = 30pt of x1, dot, minimum size=0pt] (b) {};
          \vertex[right = 30pt of x2, dot, minimum size=0pt] (bb) {};
          \vertex[right = 30pt of x3, dot, minimum size=0pt] (bbb) {};
          \vertex[right = 30pt of x4, dot, minimum size=0pt] (bbbb) {};
          \diagram*{
            (x1), (x2), (x3), (x4),
            (x1) -- [color1, ghost, inner sep = 4pt] (b),
            (x2) -- [color2, ghost, inner sep = 4pt] (bb),
            (x3) -- [color6color1ddntknew, inner sep = 4pt] (bbb),
            (x4) -- [color6color2ddntknew, inner sep = 4pt] (bbbb)
          };
        \end{feynman}
      \end{tikzpicture}\label{eq:6}
  \end{align}
In the first line of~\eqref{eq:6}, a colored line represents a single NTK label. Distinct colors are used for external dotted lines associated with different NTKs. In the second line, we introduce the diagrammatic representations of the dNTK: $\widehat{\mathrm{d} \Theta}_{\delta_0\delta_1\delta_2}
    =\sum_{\mu\nu}\frac{d^2 z_{\delta_0}}{d\theta_{\mu}d\theta_{\nu}}\frac{d z_{\delta_1}}{d\theta_\mu}\frac{d z_{\delta_2}}{d\theta_\nu}$, the dd$_{\text{I}}$NTK: $
    \widehat{\mathrm{dd_I} \Theta}_{\delta_0\delta_1\delta_2\delta_3}
    =\sum_{\mu\nu\rho}\frac{d^3 z_{\delta_0}}{d\theta_{\mu}d\theta_{\nu}d\theta_\rho}\frac{d z_{\delta_1}}{d\theta_\mu}\frac{d z_{\delta_2}}{d\theta_\nu}\frac{d z_{\delta_3}}{d\theta_\rho}
    $, and the dd$_{\text{II}}$NTK: $\widehat{\mathrm{dd_{II}} \Theta}_{\delta_1\delta_2\delta_3\delta_4}
    =\sum_{\mu\nu\rho}\frac{d^2 z_{\delta_1}}{d\theta_{\mu}d\theta_{\nu}}\frac{d^2 z_{\delta_2}}{d\theta_{\rho}d\theta_{\nu}}\frac{d z_{\delta_3}}{d\theta_\mu}\frac{d z_{\delta_4}}{d\theta_\rho}
    $. 
    The colors of the dotted lines correspond to the $\theta$-indices appearing in these definitions. Since the $\theta$-indices are contracted in pairs, distinct colors encode the corresponding contraction pattern. In the diagrammatic notation, triple lines represent third derivatives, double lines represent second derivatives, and single lines represent first derivatives.

\item Define the cubic vertices as
\begingroup
\allowdisplaybreaks
\begin{align}\label{orthogonalityvertex}
 \begin{tikzpicture}[baseline=(b)]
      \begin{feynman}
        \vertex (l) {};
        \vertex[below = 15pt of l, dot, minimum size=3pt, label = {left: {\footnotesize $\alpha^{c}$}}] (x1) {};
        \vertex[above = 15pt of l, dot, minimum size=3pt, label = {left: {\footnotesize $\beta^{c}$}}] (x2) {};
        \vertex[right = 10pt of l, dot, minimum size=0pt] (v12) {};
        \vertex[right = 33pt of v12, dot, minimum size=0pt] (b) {};
        \diagram*{
          (x1) --  (v12) --  (x2),
          (v12) -- [photon, edge label = {\scriptsize \;$\sigma_{i,\alpha}^{(\ell)}\sigma_{i,\beta}^{(\ell)}$}, inner sep = 4pt] (b) 
        };
      \end{feynman}
    \end{tikzpicture} &\sim C_{W} & 
    \begin{tikzpicture}[baseline=(b)]
      \begin{feynman}
        \vertex (l) {};
        \vertex[below = 15pt of l, color1, dot, minimum size=3pt, label = {left: {\footnotesize $\textcolor{color1}{\alpha^{}}$}}] (x1) {};
        \vertex[above = 15pt of l, color1, dot, minimum size=3pt, label = {left: {\footnotesize $\textcolor{color1}{\beta^{}}$}}] (x2) {};
        \vertex[right = 10pt of l, dot, minimum size=0pt] (v12) {};
        \vertex[right = 30pt of v12, dot, minimum size=0pt] (b) {};
        \diagram*{
          (x1) --  [color1, ghost] (v12) --  [color1, ghost] (x2),
          (v12) -- [black, photon, edge label = {\scriptsize \;$\sigma_{i,\alpha}^{(\ell)}\sigma_{i,\beta}^{(\ell)}$}, inner sep = 4pt] (b) 
        };
      \end{feynman}
    \end{tikzpicture} &\sim 1  &
    \begin{tikzpicture}[baseline=(b)]
      \begin{feynman}
        \vertex (l) {};
        \vertex[below = 15pt of l, color1, dot, minimum size=3pt, label = {left: {\footnotesize $\textcolor{color1}{\alpha}^{c}$}}] (x1) {};
        \vertex[above = 15pt of l, color1, dot, minimum size=3pt, label = {left: {\footnotesize $\textcolor{color1}{\beta}^{c}$}}] (x2) {};
        \vertex[right = 10pt of l, dot, minimum size=0pt] (v12) {};
        \vertex[right = 30pt of v12, dot, minimum size=0pt] (b) {};
        \diagram*{
          (x1) --  [color1, ghost] (v12) --  [color1, ghost] (x2),
          (v12) -- [black, photon, edge label = {\scriptsize \;$\Theta^{(\ell)}_{\alpha\beta}\sigma'^{(\ell)}_{i,\alpha}\sigma'^{(\ell)}_{i,\beta}$}, inner sep = 4pt] (b) 
        };
      \end{feynman}
    \end{tikzpicture} &\sim C_{W}  \nonumber\\
    \begin{tikzpicture}[baseline=(b)]
      \begin{feynman}
        \vertex (l) {};
        \vertex[below = 15pt of l, color1, dot, minimum size=3pt, label = {left: {\footnotesize $\textcolor{color1}{\alpha}^{c}$}}] (x1) {};
        \vertex[above = 15pt of l, color1, dot, minimum size=3pt, label = {left: {\footnotesize $\textcolor{color1}{\beta}^{c}$}}] (x2) {};
        \vertex[right = 10pt of l, dot, minimum size=0pt] (v12) {};
        \vertex[right = 30pt of v12, dot, minimum size=0pt] (b) {};
        \diagram*{
          (x1) --  [color1, ghost] (v12) --  [color1, ghost] (x2),
          (v12) -- [color1doubghost, edge label = {\scriptsize \;$\sigma'^{(\ell)}_{i,\textcolor{color1}{\alpha}}\sigma'^{(\ell)}_{i,\textcolor{color1}{\beta}}$}, inner sep = 4pt] (b) 
        };
      \end{feynman}
    \end{tikzpicture}&\sim C_{W}  &
    \begin{tikzpicture}[baseline=(b)]
      \begin{feynman}
        \vertex (l) {};
        \vertex[below = 15pt of l, dot, minimum size=3pt, label = {left: {\footnotesize $\alpha^{c}$}}] (x1) {};
        \vertex[above = 15pt of l, color1, dot, minimum size=3pt, label = {left: {\footnotesize $\textcolor{color1}{\beta}^{c}$}}] (x2) {};
        \vertex[right = 10pt of l, dot, minimum size=0pt] (v12) {};
        \vertex[right = 33pt of v12, dot, minimum size=0pt] (b) {};
        \diagram*{
          (x1) --  (v12) --  [color1, ghost] (x2),
          (v12) -- [color1blackghost, edge label = {\scriptsize \;$\sigma^{(\ell)}_{i,\alpha}\sigma'^{(\ell)}_{i,\textcolor{color1}{\beta}}$}, inner sep = 4pt] (b) 
        };
      \end{feynman}
    \end{tikzpicture} &\sim C_{W}  
    &\begin{tikzpicture}[baseline=(b)]
      \begin{feynman}
        \vertex (l) {};
        \vertex[below = 15pt of l, color1, dot, minimum size=3pt, label = {left: {\footnotesize $\textcolor{color1}{\alpha}^{c}$}}] (x1) {};
        \vertex[above = 15pt of l, color2, dot, minimum size=3pt, label = {left: {\footnotesize $\textcolor{color2}{\beta}^{c}$}}] (x2) {};
        \vertex[right = 10pt of l, dot, minimum size=0pt] (v12) {};
        \vertex[right = 30pt of v12, dot, minimum size=0pt] (b) {};
        \diagram*{
          (x1) --  [color1, ghost] (v12) --  [color2, ghost] (x2),
          (v12) -- [color2color1ghost, edge label = {\scriptsize \;$\sigma'^{(\ell)}_{i,\textcolor{color1}{\alpha}}\sigma'^{(\ell)}_{i,\textcolor{color2}{\beta}}$}, inner sep = 4pt] (b) 
        };
      \end{feynman}
    \end{tikzpicture} &\sim C_{W} \nonumber\\
    \begin{tikzpicture}[baseline=(b)]
      \begin{feynman}
        \vertex (l) {};
        \vertex[below = 15pt of l, color3, dot, minimum size=3pt, label = {below: {\footnotesize $\textcolor{color3}{\alpha}^{c}$}}] (x1) {};
        \vertex[above = 15pt of l, dot, minimum size=3pt, label = {above: {\footnotesize $\beta^{c}$}}] (x2) {};
        \vertex[right = 10pt of l, dot, minimum size=0pt] (v12) {};
        \vertex[right = 30pt of v12, dot, minimum size=0pt] (b) {};
        \diagram*{
          (x1) --  [color2color1dntknew] (v12) -- (x2),
          (v12) -- [color1color2ghost, edge label = {\scriptsize \;$\sigma''^{(\ell)}_{i,\textcolor{color3}{\alpha}}\sigma^{(\ell)}_{i,\beta}$}, inner sep = 4pt] (b) 
        };
      \end{feynman}
    \end{tikzpicture} &\sim C_{W} &
     \begin{tikzpicture}[baseline=(b)]
      \begin{feynman}
        \vertex (l) {};
        \vertex[below = 15pt of l, color3, dot, minimum size=3pt, label = {below: {\footnotesize $\textcolor{color3}{\alpha}^{c}$}}] (x1) {};
        \vertex[above = 15pt of l, dot, minimum size=3pt, label = {above: {\footnotesize $\beta^{c}$}}] (x2) {};
        \vertex[right = 10pt of l, dot, minimum size=0pt] (v12) {};
        \vertex[right = 33pt of v12, dot, minimum size=0pt] (b) {};
        \diagram*{
          (x1) --  [color2color1dntknew] (v12) -- (x2),
          (v12) -- [blackcolor1color2ghost, edge label = {\scriptsize \;$\sigma'^{(\ell)}_{i,\textcolor{color3}{\alpha}}\sigma^{(\ell)}_{i,\beta}$}, inner sep = 4pt] (b) 
        };
      \end{feynman}
    \end{tikzpicture} &\sim C_{W}
    &  \begin{tikzpicture}[baseline=(b)]
      \begin{feynman}
        \vertex (l) {};
        \vertex[below = 15pt of l, color3, dot, minimum size=3pt, label = {below: {\footnotesize $\textcolor{color3}{\alpha}^{}$}}] (x1) {};
        \vertex[above = 15pt of l, color1, dot, minimum size=3pt, label = {above: {\footnotesize $\textcolor{color1}{\beta^{}}$}}] (x2) {};
        \vertex[right = 10pt of l, dot, minimum size=0pt] (v12) {};
        \vertex[right = 33pt of v12, dot, minimum size=0pt] (b) {};
        \diagram*{
          (x1) --  [color2color1dntknew] (v12) -- [color1, ghost] (x2),
          (v12) -- [blackcolor2ghost, edge label = {\scriptsize \;$\sigma'^{(\ell)}_{i,\textcolor{color2}{\alpha}}\sigma^{(\ell)}_{i,\beta}$}, inner sep = 4pt] (b) 
        };
      \end{feynman}
    \end{tikzpicture} &\sim 1  \nonumber\\
    \begin{tikzpicture}[baseline=(b)]
      \begin{feynman}
        \vertex (l) {};
        \vertex[below = 15pt of l, color3, dot, minimum size=3pt, label = {below: {\footnotesize $\textcolor{color3}{\alpha}^{c}$}}] (x1) {};
        \vertex[above = 15pt of l, color1, dot, minimum size=3pt, label = {above: {\footnotesize $\textcolor{color1}{\beta}^{c}$}}] (x2) {};
        \vertex[right = 10pt of l, dot, minimum size=0pt] (v12) {};
        \vertex[right = 33pt of v12, dot, minimum size=0pt] (b) {};
        \diagram*{
          (x1) --  [color2color1dntknew] (v12) -- [color1, ghost] (x2),
          (v12) -- [blackcolor2ghost, edge label = {\scriptsize \hspace{2mm} \;$\Theta^{(\ell)}_{\textcolor{color1}{\alpha\beta}}\sigma''^{(\ell)}_{i,\textcolor{color2}{\alpha}}\sigma'^{(\ell)}_{i,\beta}$}, inner sep = 4pt] (b) 
        };
      \end{feynman}
    \end{tikzpicture} &\sim C_{W} &  
    \begin{tikzpicture}[baseline=(b)]
      \begin{feynman}
        \vertex (l) {};
        \vertex[below = 15pt of l, color3, dot, minimum size=3pt, label = {below: {\footnotesize $\textcolor{color3}{\alpha}^{c}$}}] (x1) {};
        \vertex[above = 15pt of l, color1, dot, minimum size=3pt, label = {above: {\footnotesize $\textcolor{color1}{\beta}^{c}$}}] (x2) {};
        \vertex[right = 10pt of l, dot, minimum size=0pt] (v12) {};
        \vertex[right = 33pt of v12, dot, minimum size=0pt] (b) {};
        \diagram*{
          (x1) --  [color2color1dntknew] (v12) -- [color1, ghost] (x2),
          (v12) -- [ntkdntkcolor1color2, edge label = {\scriptsize \;$\sigma'^{(\ell)}_{i,\textcolor{color3}{\alpha}}\sigma'^{(\ell)}_{i,\textcolor{color1}{\beta}}$}, inner sep = 4pt] (b) 
        };
      \end{feynman}
    \end{tikzpicture} &\sim C_{W} &
    \begin{tikzpicture}[baseline=(b)]
      \begin{feynman}
        \vertex (l) {};
        \vertex[below = 15pt of l, color7, dot, minimum size=3pt, label = {below: {\footnotesize $\textcolor{color7}{\alpha}^{c}$}}] (x1) {};
        \vertex[above = 15pt of l, color1, dot, minimum size=3pt, label = {above: {\footnotesize $\textcolor{color1}{\beta}^{c}$}}] (x2) {};
        \vertex[right = 10pt of l, dot, minimum size=0pt] (v12) {};
        \vertex[right = 33pt of v12, dot, minimum size=0pt] (b) {};
        \diagram*{
          (x1) --  [color6color2ddntknew] (v12) -- [color1, ghost] (x2),
          (v12) -- [ntkdntkblackcolor1color2color6, edge label = {\scriptsize \;$\sigma''^{(\ell)}_{i,\textcolor{color7}{\alpha}}\sigma'^{(\ell)}_{i,\textcolor{color1}{\beta}}$}, inner sep = 4pt] (b) 
        };
      \end{feynman}
    \end{tikzpicture} &\sim C_{W} \nonumber\\
     \begin{tikzpicture}[baseline=(b)]
      \begin{feynman}
        \vertex (l) {};
        \vertex[below = 15pt of l, color7, dot, minimum size=3pt, label = {below: {\footnotesize $\textcolor{color7}{\alpha}^{c}$}}] (x1) {};
        \vertex[above = 15pt of l, color1, dot, minimum size=3pt, label = {above: {\footnotesize $\textcolor{color1}{\beta}^{c}$}}] (x2) {};
        \vertex[right = 10pt of l, dot, minimum size=0pt] (v12) {};
        \vertex[right = 33pt of v12, dot, minimum size=0pt] (b) {};
        \diagram*{
          (x1) --  [color6color2ddntknew] (v12) -- [color1, ghost] (x2),
          (v12) -- [ntkdntkcolor1color2color6, edge label = {\scriptsize \;$\sigma'^{(\ell)}_{i,\textcolor{color7}{\alpha}}\sigma'^{(\ell)}_{i,\textcolor{color1}{\beta}}$}, inner sep = 4pt] (b) 
        };
      \end{feynman}
    \end{tikzpicture} &\sim C_{W} &    
     \begin{tikzpicture}[baseline=(b)]
      \begin{feynman}
        \vertex (l) {};
        \vertex[below = 15pt of l, color7, dot, minimum size=3pt, label = {below: {\footnotesize $\textcolor{color7}{\alpha^{}}$}}] (x1) {};
        \vertex[above = 15pt of l, color5, dot, minimum size=3pt, label = {above: {\footnotesize $\textcolor{color5}{\beta^{}}$}}] (x2) {};
        \vertex[right = 10pt of l, dot, minimum size=0pt] (v12) {};
        \vertex[right = 33pt of v12, dot, minimum size=0pt] (b) {};
        \diagram*{
          (x1) --  [color6color2ddntknew] (v12) -- [color6color1ddntknew] (x2),
          (v12) -- [color1color2ghost, edge label = {\scriptsize \;$\sigma'^{(\ell)}_{i,\textcolor{color2}{\alpha}}\sigma'^{(\ell)}_{i,\textcolor{color1}{\beta}}$}, inner sep = 4pt] (b) 
        };
      \end{feynman}
    \end{tikzpicture} &\sim 1 
    & \begin{tikzpicture}[baseline=(b)]
      \begin{feynman}
        \vertex (l) {};
        \vertex[below = 15pt of l, color7, dot, minimum size=3pt, label = {below: {\footnotesize $\textcolor{color7}{\alpha}^{c}$}}] (x1) {};
        \vertex[above = 15pt of l, color5, dot, minimum size=3pt, label = {above: {\footnotesize $\textcolor{color5}{\beta}^{c}$}}] (x2) {};
        \vertex[right = 10pt of l, dot, minimum size=0pt] (v12) {};
        \vertex[right = 33pt of v12, dot, minimum size=0pt] (b) {};
        \diagram*{
          (x1) --  [color6color2ddntknew] (v12) -- [color6color1ddntknew] (x2),
          (v12) -- [color1color2ghost, edge label = {\scriptsize \hspace{2mm} \;$\Theta^{(\ell)}_{\textcolor{color6}{\alpha\beta}}\sigma''^{(\ell)}_{i,\textcolor{color2}{\alpha}}\sigma''^{(\ell)}_{i,\textcolor{color1}{\beta}}$}, inner sep = 4pt] (b) 
        };
      \end{feynman}
    \end{tikzpicture} &\sim C_{W} \nonumber\\
     \begin{tikzpicture}[baseline=(b)]
      \begin{feynman}
        \vertex (l) {};
        \vertex[below = 15pt of l, color7, dot, minimum size=3pt, label = {below: {\footnotesize $\textcolor{color7}{\alpha}^{c}$}}] (x1) {};
        \vertex[above = 15pt of l, color5, dot, minimum size=3pt, label = {above: {\footnotesize $\textcolor{color5}{\beta}^{c}$}}] (x2) {};
        \vertex[right = 10pt of l, dot, minimum size=0pt] (v12) {};
        \vertex[right = 33pt of v12, dot, minimum size=0pt] (b) {};
        \diagram*{
          (x1) --  [color6color2ddntknew] (v12) -- [color6color1ddntknew] (x2),
          (v12) -- [ddntkddntkcolor1color2ghost, edge label = {\scriptsize \;$\sigma'^{(\ell)}_{i,\textcolor{color7}{\alpha}}\sigma'^{(\ell)}_{i,\textcolor{color5}{\beta}}$}, inner sep = 4pt] (b) 
        };
      \end{feynman}
    \end{tikzpicture} &\sim C_{W} & 
    \begin{tikzpicture}[baseline=(b)]
      \begin{feynman}
        \vertex (l) {};
        \vertex[below = 15pt of l, color4, dot, minimum size=3pt, label = {below: {\footnotesize $\textcolor{color4}{\alpha^{}}$}}] (x1) {};
        \vertex[above = 15pt of l, color6, dot, minimum size=3pt, label = {above: {\footnotesize $\textcolor{color6}{\beta^{}}$}}] (x2) {};
        \vertex[right = 10pt of l, dot, minimum size=0pt] (v12) {};
        \vertex[right = 33pt of v12, dot, minimum size=0pt] (b) {};
        \diagram*{
          (x1) --  [color6color1color2ddntknew] (v12) -- [color6, ghost] (x2),
          (v12) -- [color1color2ghost, edge label = {\scriptsize \;$\sigma''^{(\ell)}_{i,\textcolor{color3}{\alpha}}\sigma^{(\ell)}_{i,\beta}$}, inner sep = 4pt] (b) 
        };
      \end{feynman}
    \end{tikzpicture} &\sim 1 & 
    \begin{tikzpicture}[baseline=(b)]
      \begin{feynman}
        \vertex (l) {};
        \vertex[below = 15pt of l, color4, dot, minimum size=3pt, label = {below: {\footnotesize $\textcolor{color4}{\alpha}^{c}$}}] (x1) {};
        \vertex[above = 15pt of l, color6, dot, minimum size=3pt, label = {above: {\footnotesize $\textcolor{color6}{\beta}^{c}$}}] (x2) {};
        \vertex[right = 10pt of l, dot, minimum size=0pt] (v12) {};
        \vertex[right = 33pt of v12, dot, minimum size=0pt] (b) {};
        \diagram*{
          (x1) --  [color6color1color2ddntknew] (v12) -- [color6, ghost] (x2),
          (v12) -- [color1color2ghost, edge label = {\scriptsize \hspace{2mm} \;$\Theta^{(\ell)}_{\textcolor{color6}{\alpha\beta}}\sigma'''^{(\ell)}_{i,\textcolor{color3}{\alpha}}\sigma'^{(\ell)}_{i,\beta}$}, inner sep = 4pt] (b) 
        };
      \end{feynman}
    \end{tikzpicture} &\sim C_{W}  \nonumber\\
    \begin{tikzpicture}[baseline=(b)]
      \begin{feynman}
        \vertex (l) {};
        \vertex[below = 15pt of l, color4, dot, minimum size=3pt, label = {below: {\footnotesize $\textcolor{color4}{\alpha^{}}$}}] (x1) {};
        \vertex[above = 15pt of l, color6, dot, minimum size=3pt, label = {above: {\footnotesize $\textcolor{color6}{\beta^{}}$}}] (x2) {};
        \vertex[right = 10pt of l, dot, minimum size=0pt] (v12) {};
        \vertex[right = 33pt of v12, dot, minimum size=0pt] (b) {};
        \diagram*{
          (x1) --  [color6color1color2ddntknew] (v12) -- [color6, ghost] (x2),
          (v12) -- [blackcolor1color2ghost, edge label = {\scriptsize \;$\sigma'^{(\ell)}_{i,\textcolor{color3}{\alpha}}\sigma^{(\ell)}_{i,\beta}$}, inner sep = 4pt] (b) 
        };
      \end{feynman}
    \end{tikzpicture} &\sim 1 &
    \begin{tikzpicture}[baseline=(b)]
      \begin{feynman}
        \vertex (l) {};
        \vertex[below = 15pt of l, color4, dot, minimum size=3pt, label = {below: {\footnotesize $\textcolor{color4}{\alpha}^{c}$}}] (x1) {};
        \vertex[above = 15pt of l, color6, dot, minimum size=3pt, label = {above: {\footnotesize $\textcolor{color6}{\beta}^{c}$}}] (x2) {};
        \vertex[right = 10pt of l, dot, minimum size=0pt] (v12) {};
        \vertex[right = 33pt of v12, dot, minimum size=0pt] (b) {};
        \diagram*{
          (x1) --  [color6color1color2ddntknew] (v12) -- [color6, ghost] (x2),
          (v12) -- [blackcolor1color2ghost, edge label = {\scriptsize \hspace{1mm}\;$\Theta^{(\ell)}_{\textcolor{color6}{\alpha\beta}}\sigma''^{(\ell)}_{i,\textcolor{color3}{\alpha}}\sigma'^{(\ell)}_{i,\beta}$}, inner sep = 4pt] (b) 
        };
      \end{feynman}
    \end{tikzpicture} &\sim C_{W} & & & &
  \end{align}
  \endgroup
Here the superscript $c$, referred to as the orthogonality charge, keeps track of the label's orthogonal character, and lines that do not end in a dot represent internal lines.

\item Draw a square propagator connecting internal lines in all possible ways, consistent with the chosen pairing. The square represents the full expectation value. 
\begin{align}
\begin{tikzpicture}[baseline=(b)]
      \begin{feynman}
        \vertex (l) {};
        \vertex[right = 0pt of l, label = {above: {\textcolor{color1}{$ $}}}] (v12) {};
        \vertex[right = 32pt of v12, squareblob] (b) {};
        \vertex[right = 32pt of b, label = {above: {\textcolor{color1}{$ $}}}] (v34) {};
        \vertex[right = 8pt of v34] (r) {};
        \diagram*{
          (v12) -- [photon] (b) -- [photon] (v34)
        };
        \vertex[below = 15pt of b] {{\scriptsize $\EE[\,\cdot\,]$}};
      \end{feynman}
\end{tikzpicture}
\end{align}
This procedure generates distinct diagram types, both connected and disconnected. The connected diagrams are further classified as $s$-class diagrams, defined by the number $s$ of square propagators appearing in the diagram. The square propagator obeys an additional selection rule: all k-class diagrams formed from pairs of external lines corresponding to different object types vanish.



\item For each $s$-class diagram, generate all inequivalent permutations of its $2m$ external labels carrying orthogonality charge. Multiply each resulting diagram by \(1/n\) for every uncharged pairing, and by the appropriate $m$-class Weingarten function $\mathcal{W}$, determined by the relative ordering $\tau$ of the diagram's labels with respect to the original pairing $\pi = (12)(34)\ldots (2k\,2k-1)$: $\mathcal{W}[\tau,\pi] = \mathcal{W}[\ell (\tau \circ \pi)]$, where $\circ$ denotes ordinary permutation multiplication and $\ell(e)$ denotes the cycle length of $e$.

\item Multiply each $s$-class contribution by the M\"obius coefficient $(-1)^{s-1}(s-1)!$, and sum over all classes.
\end{enumerate}
The second group implements the effective field theory techniques developed in~\cite{roberts2021a}, applied to the square propagator in the diagrammatic construction of the previous step, through the following set of Feynman rules analogous to those introduced in~\cite{guillen2025}:
\begin{enumerate}
    \setcounter{enumi}{5} 
\item We define the bare propagator as
\begin{equation}
    \langle\hspace{3mm}\rangle_{K^{(\ell)}} \equiv
    \begin{tikzpicture}[baseline=-0.1cm]
      \begin{feynman}
        \tikzfeynmanset{every blob = {/tikz/fill=white!50, /tikz/minimum size=15pt}}
        \vertex (l) {};
        \vertex[above = 0pt of l, dot, minimum size=0pt] (v12) {};
        \vertex[above = 0pt of v12, blob] (b) {};
        \vertex[above = 0pt of b, dot, minimum size=0pt] (v34) {};
        \diagram*{
          (v12) -- (b) -- (v34), 
        };
      \end{feynman}
    \end{tikzpicture}   
  \end{equation}
  where $\langle \hspace{3mm}\rangle_{K^{(\ell)}}$ denotes a zero-mean Gaussian expectation with covariance specified by $K^{(\ell)}$. The expectation value is taken over the decorations of the internal lines attached to the propagator, which obeys the same selection rules described in~\cite{guillen2025}. These rules are summarized as follows:
\begin{enumerate}[label=(\alph*)]
\item Propagators may only connect to internal lines emanating from cubic vertices or from the internal quartic vertices introduced below. In particular, propagators cannot be directly connected to other propagators.

\item Dotted lines attached to a propagator do not enter the Gaussian expectation value, as they carry no decorations.

\item Each preactivation line decorated with $z_{i}$ acts as a derivative with respect to $z_{i}$ acting on the argument of the Gaussian expectation value.

\item The neural indices of all internal lines connected to a propagator must be identical.

\item If both dotted and dashed lines of the same color are attached to the propagator, they must appear in pairs carrying the same sample index. The two lines in each pair attach to different vertices. Moreover, if both vertices are drawn in the orientation specified in the Feynman rules, the top-to-bottom ordering of the sample indices (and therefore the colors) of the lines connected to the two vertices must coincide.

\item A pair of dashed lines of the same color connected to the propagator contributes a factor $\Theta_{\alpha\beta}$ when the two lines attach to different vertices, where $\alpha$ and $\beta$ denote the sample indices of the pair.
\end{enumerate}
  \item Quartic vertices are defined analogously, following~\cite{guillen2025}. Explicitly,
  \begin{align}
     \begin{tikzpicture}[baseline=(b)]
      \begin{feynman}
        \vertex (l) {};
        \vertex[below = 15pt of l, dot, minimum size=3pt, label = {left: {\footnotesize $\alpha_{1}$}}] (x1) {};
        \vertex[above = 15pt of l, dot, minimum size=3pt, label = {left: {\footnotesize $\alpha_{2}$}}] (x2) {};
        \vertex[right = 20pt of l, quarticblob] (b) {};
        \vertex[below = 25pt of b] {\scriptsize $\frac{1}{n}V_{\alpha_{1}\alpha_{2}\alpha_{3}\alpha_{4}}^{(\ell+1)}$}; 
        \vertex[right = 20pt of b] (r) {};
        \vertex[above = 15pt of r, dot, minimum size=3pt, label = {right: {\footnotesize $\alpha_{3}$}}] (x3) {};
        \vertex[below = 15pt of r, dot, minimum size=3pt, label = {right: {\footnotesize $\alpha_{4}$}}] (x4) {};
        \diagram*{
          (x1) -- (b) -- (x2), 
          (x3) -- (b) -- (x4)
        };
      \end{feynman}
    \end{tikzpicture}
     & &
    \begin{tikzpicture}[baseline=(b)]
      \begin{feynman}
        \vertex (l) {};
        \vertex[below = 15pt of l, dot, minimum size=3pt, label = {left: {\footnotesize $\alpha_{1}$}}] (x1) {};
        \vertex[above = 15pt of l, dot, minimum size=3pt, label = {left: {\footnotesize $\alpha_{2}$}}] (x2) {};
        \vertex[right = 20pt of l, quarticblob] (b) {};
        \vertex[below = 25pt of b] {\scriptsize $\frac{1}{n}D_{\alpha_{1}\alpha_{2}\textcolor{color1}{\alpha_{3}\alpha_{4}}}^{(\ell+1)}$}; 
        \vertex[right = 20pt of b] (r) {};
        \vertex[above = 15pt of r, dot, color1, minimum size=3pt, label = {right: {\footnotesize $\textcolor{color1}{\alpha_{3}}$}}] (x3) {};
        \vertex[below = 15pt of r, dot, color1, minimum size=3pt, label = {right: {\footnotesize $\textcolor{color1}{\alpha_{4}}$}}] (x4) {};
        \diagram*{
          (x1) -- (b) -- (x2), 
          (x3) -- [color1, ghost] (b) -- [color1, ghost] (x4)
        };
      \end{feynman}
    \end{tikzpicture}
     & &
    \begin{tikzpicture}[baseline=(b)]
      \begin{feynman}
        \vertex (l) {};
        \vertex[below = 15pt of l, dot, minimum size=3pt, label = {left: {\footnotesize $\alpha_{1}$}}] (x1) {};
        \vertex[above = 15pt of l, dot, color1, minimum size=3pt, label = {left: {\footnotesize $\textcolor{color1}{\alpha_{3}}$}}] (x2) {};
        \vertex[right = 20pt of l, quarticblob] (b) {};
        \vertex[below = 25pt of b] {\scriptsize $\frac{1}{n}F_{\alpha_{1}\textcolor{color1}{\alpha_{3}}\alpha_{2}\textcolor{color1}{\alpha_{4}}}^{(\ell+1)}$}; 
        \vertex[right = 20pt of b] (r) {};
        \vertex[above = 15pt of r, dot, minimum size=3pt, label = {right: {\footnotesize $\alpha_{2}$}}] (x3) {};
        \vertex[below = 15pt of r, dot, color1, minimum size=3pt, label = {right: {\footnotesize $\textcolor{color1}{\alpha_{4}}$}}] (x4) {};
        \diagram*{
          (x1) -- (b) -- [color1, ghost] (x2), 
          (x3) -- (b) -- [color1, ghost] (x4)
        };
      \end{feynman}
    \end{tikzpicture}
   \nonumber\\
    \begin{tikzpicture}[baseline=(b)]
      \begin{feynman}
        \vertex (l) {};
        \vertex[below = 15pt of l, dot, color2, minimum size=3pt, label = {left: {\footnotesize $\textcolor{color2}{\alpha_{1}}$}}] (x1) {};
        \vertex[above = 15pt of l, dot, color2, minimum size=3pt, label = {left: {\footnotesize $\textcolor{color2}{\alpha_{2}}$}}] (x2) {};
        \vertex[right = 20pt of l, quarticblob] (b) {};
        \vertex[below = 25pt of b] {\scriptsize $\frac{1}{n}A_{\textcolor{color2}{\alpha_{1}}\textcolor{color2}{\alpha_{2}}\textcolor{color1}{\alpha_{3}}\textcolor{color1}{\alpha_{4}}}^{(\ell+1)}$}; 
        \vertex[right = 20pt of b] (r) {};
        \vertex[above = 15pt of r, dot, color1, minimum size=3pt, label = {right: {\footnotesize $\textcolor{color1}{\alpha_{3}}$}}] (x3) {};
        \vertex[below = 15pt of r, dot, color1, minimum size=3pt, label = {right: {\footnotesize $\textcolor{color1}{\alpha_{4}}$}}] (x4) {};
        \diagram*{
          (x1) -- [color2, ghost] (b) -- [color2, ghost] (x2), 
          (x3) -- [color1, ghost] (b) -- [color1, ghost] (x4)
        };
      \end{feynman}
    \end{tikzpicture}
    & &
    \begin{tikzpicture}[baseline=(b)]
      \begin{feynman}
        \vertex (l) {};
        \vertex[below = 15pt of l, dot, color2, minimum size=3pt, label = {left: {\footnotesize $\textcolor{color2}{\alpha_{1}}$}}] (x1) {};
        \vertex[above = 15pt of l, dot, color1, minimum size=3pt, label = {left: {\footnotesize $\textcolor{color1}{\alpha_{3}}$}}] (x2) {};
        \vertex[right = 20pt of l, quarticblob] (b) {};
        \vertex[below = 25pt of b] {\scriptsize $\frac{1}{n}B_{\textcolor{color2}{\alpha_{1}}\textcolor{color1}{\alpha_{3}}\textcolor{color2}{\alpha_{2}}\textcolor{color1}{\alpha_{4}}}^{(\ell+1)}$}; 
        \vertex[right = 20pt of b] (r) {};
        \vertex[above = 15pt of r, dot, color2, minimum size=3pt, label = {right: {\footnotesize $\textcolor{color2}{\alpha_{2}}$}}] (x3) {};
        \vertex[below = 15pt of r, dot, color1, minimum size=3pt, label = {right: {\footnotesize $\textcolor{color1}{\alpha_{4}}$}}] (x4) {};
        \diagram*{
          (x1) -- [color2, ghost] (b) -- [color1, ghost] (x2), 
          (x3) -- [color2, ghost] (b) -- [color1, ghost] (x4)
        };
      \end{feynman}
    \end{tikzpicture} & &
    \begin{tikzpicture}[baseline=(b)]
      \begin{feynman}
        \vertex (l) {};
        \vertex[below = 16pt of l,color1, dot, minimum size=3pt, label = {below: {\footnotesize $\textcolor{color1}{\alpha_1^{}}$}}] (x1) {};
        \vertex[above = 16pt of l, color2, dot, minimum size=3pt, label = {above: {\footnotesize $\textcolor{color2}{\alpha_2^{}}$}}] (x2) {};
        \vertex[right = 8pt of l, dot, minimum size=0pt] (v12) {};
        \vertex[below = 25pt of v12, dot, minimum size=0pt] (v12d) {};
        \vertex[right = 8pt of v12d, dot, minimum size=0pt] (v12dd) {};
        \vertex[right = 20pt of v12, quarticblob] (b) {};
         \vertex[below = 25pt of b] {\scriptsize $\frac{1}{n}P_{\textcolor{color3}{\alpha_{3}}\textcolor{color1}{\alpha_{1}}\textcolor{color2}{\alpha_{2}}\alpha_{4}}^{(\ell+1)}$}; 
        \vertex[right = 20pt of b, dot, minimum size=0pt] (v34) {};
        \vertex[below = 25pt of v34, dot, minimum size=0pt] (v34d) {};
        \vertex[left = 8pt of v34d, dot, minimum size=0pt] (v34dd) {};
        \vertex[right = 8pt of v34] (r) {};
        \vertex[above = 16pt of r, color3, dot, minimum size=3pt, label = {above: {\footnotesize $\textcolor{color3}{\alpha_3^{}}$}}] (x3) {};
        \vertex[below = 16pt of r, dot, minimum size=3pt, label = {below: {\footnotesize $\alpha_4^{}$}}] (x4) {};
        \diagram*{
          (x1) -- [color1, ghost] (b) -- [color2, ghost] (x2), 
          (x3) -- [color1color2dntknew, ghost] (b) -- (x4)
        };
      \end{feynman}
    \end{tikzpicture}
    \nonumber\\
    \begin{tikzpicture}[baseline=(b)]
      \begin{feynman}
        \vertex (l) {};
        \vertex[below = 16pt of l, color1!40!color2, dot, minimum size=3pt, label = {below: {\footnotesize $\textcolor{color3}{\alpha_1^{}}$}}] (x1) {};
        \vertex[above = 16pt of l, color1, dot, minimum size=3pt, label = {above: {\footnotesize $\textcolor{color1}{\alpha_3^{}}$}}] (x2) {};
        \vertex[right = 8pt of l, dot, minimum size=0pt] (v12) {};
        \vertex[below = 25pt of v12, dot, minimum size=0pt] (v12d) {};
        \vertex[right = 8pt of v12d, dot, minimum size=0pt] (v12dd) {};
        \vertex[right = 20pt of v12, quarticblob] (b) {};
        \vertex[below = 25pt of b] {\scriptsize $\frac{1}{n}Q_{\textcolor{color3}{\alpha_{1}}\textcolor{color2}{\alpha_{2}}\textcolor{color1}{\alpha_{3}}\alpha_{4}}^{(\ell+1)}$}; 
        \vertex[right = 20pt of b, dot, minimum size=0pt] (v34) {};
        \vertex[below = 25pt of v34, dot, minimum size=0pt] (v34d) {};
        \vertex[left = 8pt of v34d, dot, minimum size=0pt] (v34dd) {};
        \vertex[right = 8pt of v34] (r) {};
        \vertex[above = 16pt of r, color2, dot, minimum size=3pt, label = {above: {\footnotesize $\textcolor{color2}{\alpha_2^{}}$}}] (x3) {};
        \vertex[below = 16pt of r, dot, minimum size=3pt, label = {below: {\footnotesize $\alpha_4^{}$}}] (x4) {};
        \diagram*{
          (x1) -- [color2color1dntknew] (b) -- [color1, ghost] (x2), 
          (x3) -- [color2, ghost] (b) -- (x4)
        };
      \end{feynman}
    \end{tikzpicture}
    & &
    \begin{tikzpicture}[baseline=(b)]
      \begin{feynman}
        \vertex (l) {};
        \vertex[below = 16pt of l,brown, dot, minimum size=3pt, label = {below: {\footnotesize $\textcolor{color4}{\alpha_1^{}}$}}] (x1) {};
        \vertex[above = 16pt of l, color6, dot, minimum size=3pt, label = {above: {\footnotesize $\textcolor{color6}{\alpha_2^{}}$}}] (x2) {};
        \vertex[right = 8pt of l, dot, minimum size=0pt] (v12) {};
        \vertex[below = 25pt of v12, dot, minimum size=0pt] (v12d) {};
        \vertex[right = 8pt of v12d, dot, minimum size=0pt] (v12dd) {};
        \vertex[right = 20pt of v12, quarticblob] (b) {};
        \vertex[below = 25pt of b] {\scriptsize $\frac{1}{n}R_{\textcolor{color4}{\alpha_{1}}\textcolor{color6}{\alpha_{2}}\textcolor{color2}{\alpha_{3}}\textcolor{color1}{\alpha_{4}}}^{(\ell+1)}$}; 
        \vertex[right = 20pt of b, dot, minimum size=0pt] (v34) {};
        \vertex[below = 25pt of v34, dot, minimum size=0pt] (v34d) {};
        \vertex[left = 8pt of v34d, dot, minimum size=0pt] (v34dd) {};
        \vertex[right = 8pt of v34] (r) {};
        \vertex[above = 16pt of r, color2, dot, minimum size=3pt, label = {above: {\footnotesize $\textcolor{color2}{\alpha_3^{}}$}}] (x3) {};
        \vertex[below = 16pt of r, color1, dot, minimum size=3pt, label = {below: {\footnotesize $\textcolor{color1}{\alpha_4^{}}$}}] (x4) {};
        \diagram*{
          (x1) -- [color6color1color2ddntknew] (b) -- [color6, ghost] (x2), 
          (x3) -- [color2, ghost] (b) -- [color1, ghost](x4)
        };
      \end{feynman}
    \end{tikzpicture} 
    & &
    \begin{tikzpicture}[baseline=(b)]
      \begin{feynman}
        \vertex (l) {};
        \vertex[below = 16pt of l,color7, dot, minimum size=3pt, label = {below: {\footnotesize $\textcolor{color7}{\alpha_1^{}}$}}] (x1) {};
        \vertex[above = 16pt of l, color5, dot, minimum size=3pt, label = {above: {\footnotesize $\textcolor{color5}{\alpha_2^{}}$}}] (x2) {};
        \vertex[right = 8pt of l, dot, minimum size=0pt] (v12) {};
        \vertex[below = 25pt of v12, dot, minimum size=0pt] (v12d) {};
        \vertex[right = 8pt of v12d, dot, minimum size=0pt] (v12dd) {};
        \vertex[right = 20pt of v12, quarticblob] (b) {};
        \vertex[below = 25pt of b] {\scriptsize $\frac{1}{n}S_{\textcolor{color7}{\alpha_{1}}\textcolor{color5}{\alpha_{2}}\textcolor{color2}{\alpha_{3}}\textcolor{color1}{\alpha_{4}}}^{(\ell+1)}$}; 
        \vertex[right = 20pt of b, dot, minimum size=0pt] (v34) {};
        \vertex[below = 25pt of v34, dot, minimum size=0pt] (v34d) {};
        \vertex[left = 8pt of v34d, dot, minimum size=0pt] (v34dd) {};
        \vertex[right = 8pt of v34] (r) {};
        \vertex[above = 16pt of r, color2, dot, minimum size=3pt, label = {above: {\footnotesize $\textcolor{color2}{\alpha_3^{}}$}}] (x3) {};
        \vertex[below = 16pt of r, color1, dot, minimum size=3pt, label = {below: {\footnotesize $\textcolor{color1}{\alpha_4^{}}$}}] (x4) {};
        \diagram*{
          (x1) -- [color6color2ddntknew] (b) -- [color6color1ddntknew] (x2), 
          (x3) -- [color2, ghost] (b) -- [color1, ghost] (x4)
        };
      \end{feynman}
    \end{tikzpicture}
    \nonumber\\
    \begin{tikzpicture}[baseline=(b)]
      \begin{feynman}
        \vertex (l) {};
        \vertex[below = 16pt of l, color7, dot, minimum size=3pt, label = {below: {\footnotesize $\textcolor{color7}{\alpha_1^{}}$}}] (x1) {};
        \vertex[above = 16pt of l, color2, dot, minimum size=3pt, label = {above: {\footnotesize $\textcolor{color2}{\alpha_3^{}}$}}] (x2) {};
        \vertex[right = 8pt of l, dot, minimum size=0pt] (v12) {};
        \vertex[below = 25pt of v12, dot, minimum size=0pt] (v12d) {};
        \vertex[right = 8pt of v12d, dot, minimum size=0pt] (v12dd) {};
        \vertex[right = 20pt of v12, quarticblob] (b) {};
        \vertex[below = 25pt of b] {\scriptsize $\frac{1}{n}T_{\textcolor{color7}{\alpha_{1}}\textcolor{color2}{\alpha_{3}}\textcolor{color1}{\alpha_{4}}\textcolor{color5}{\alpha_{2}}}^{(\ell+1)}$}; 
        \vertex[right = 20pt of b, dot, minimum size=0pt] (v34) {};
        \vertex[below = 25pt of v34, dot, minimum size=0pt] (v34d) {};
        \vertex[left = 8pt of v34d, dot, minimum size=0pt] (v34dd) {};
        \vertex[right = 8pt of v34] (r) {};
        \vertex[above = 16pt of r, color5, dot, minimum size=3pt, label = {above: {\footnotesize $\textcolor{color5}{\alpha_2^{}}$}}] (x3) {};
        \vertex[below = 16pt of r, color1, dot, minimum size=3pt, label = {below: {\footnotesize $\textcolor{color1}{\alpha_4^{}}$}}] (x4) {};
        \diagram*{
          (x1) -- [color6color2ddntk2new] (b) -- [color2, ghost] (x2), 
          (x3) -- [color6color1ddntk2new, ghost] (b) -- [color1, ghost] (x4)
        };
      \end{feynman}
    \end{tikzpicture} 
    & &
    \begin{tikzpicture}[baseline=(b)]
      \begin{feynman}
        \vertex (l) {};
        \vertex[below = 16pt of l, color7, dot, minimum size=3pt, label = {below: {\footnotesize $\textcolor{color7}{\alpha_1^{}}$}}] (x1) {};
        \vertex[above = 16pt of l, color1, dot, minimum size=3pt, label = {above: {\footnotesize $\textcolor{color1}{\alpha_4^{}}$}}] (x2) {};
        \vertex[right = 8pt of l, dot, minimum size=0pt] (v12) {};
        \vertex[below = 25pt of v12, dot, minimum size=0pt] (v12d) {};
        \vertex[right = 8pt of v12d, dot, minimum size=0pt] (v12dd) {};
        \vertex[right = 20pt of v12, quarticblob] (b) {};
        \vertex[below = 25pt of b] {\scriptsize $\frac{1}{n}U_{\textcolor{color7}{\alpha_{1}}\textcolor{color1}{\alpha_{4}}\textcolor{color5}{\alpha_{2}}\textcolor{color2}{\alpha_{3}}}^{(\ell+1)}$}; 
        \vertex[right = 20pt of b, dot, minimum size=0pt] (v34) {};
        \vertex[below = 25pt of v34, dot, minimum size=0pt] (v34d) {};
        \vertex[left = 8pt of v34d, dot, minimum size=0pt] (v34dd) {};
        \vertex[right = 8pt of v34] (r) {};
        \vertex[above = 16pt of r, color5, dot, minimum size=3pt, label = {above: {\footnotesize $\textcolor{color5}{\alpha_2^{}}$}}] (x3) {};
        \vertex[below = 16pt of r, color2, dot, minimum size=3pt, label = {below: {\footnotesize $\textcolor{color2}{\alpha_3^{}}$}}] (x4) {};
        \diagram*{
          (x1) -- [color6color2ddntk2new] (b) -- [color1, ghost] (x2), 
          (x3) -- [color6color1ddntk2new, ghost] (b) -- [color2, ghost] (x4)
        };
      \end{feynman}
    \end{tikzpicture} 
    \label{feynmanrulesquartic}
  \end{align}
\item Higher-order NTK and preactivation tensors are introduced via a natural generalization of the vertices in~\eqref{feynmanrulesquartic}.
\item The square propagator decomposes into all connected and disconnected diagrams built from the bare propagator, quartic vertices, and higher-order vertices, with internal lines remaining undotted. This decomposition respects the selection rules (a)-(f).
\end{enumerate}
 



\section{Feynman rules in action}
\label{app:v_b4_f4}

In this appendix, we explicitly apply the Feynman rules (1)-(9) of Section~\ref{sec:feynman-diagrams} to the tensors $V_{4}$, $F$, and $B$.

\subsection{The quartic vertex: $V_{4}$}
The Feynman rules (1)-(5) generate the following diagrams for the quartic vertex $V_{4}$:
\begin{align}
  \begin{tikzpicture}[baseline=(b)]
      \begin{feynman}
        \vertex (l) {};
        \vertex[below = 15pt of l, dot, minimum size=3pt, label = {left: {\footnotesize $1$}}] (x1) {};
        \vertex[above = 15pt of l, dot, minimum size=3pt, label = {left: {\footnotesize $2$}}] (x2) {};
        \vertex[right = 20pt of l, quarticblob] (b) {};
        \vertex[right = 20pt of b] (r) {};
        \vertex[above = 15pt of r, dot, minimum size=3pt, label = {right: {\footnotesize $3$}}] (x3) {};
        \vertex[below = 15pt of r, dot, minimum size=3pt, label = {right: {\footnotesize $4$}}] (x4) {};
        \diagram*{
          (x1) -- (b) -- (x2), 
          (x3) -- (b) -- (x4)
        };
      \end{feynman}
    \end{tikzpicture}
\!\!& = 
\mathcal{W}[1,1]\,\!\!\sum_{j,k}\!\!
\begin{tikzpicture}[baseline=(b)]
\begin{feynman}
\vertex (l) {};
\vertex[below = 15pt of l, dot, minimum size=3pt, label = {left: {\footnotesize $1^{c}$}}] (x1) {};
\vertex[above = 15pt of l, dot, minimum size=3pt, label = {left: {\footnotesize $2^{c}$}}] (x2) {};
\vertex[right = 8pt of l, dot, minimum size=0pt] (v12) {};
\vertex[right = 30pt of v12, squareblob] (b) {};
\vertex[right = 30pt of b, dot, minimum size=0pt] (v34) {};
\vertex[right = 8pt of v34] (r) {};
\vertex[above = 15pt of r, dot, minimum size=3pt, label = {right: {\footnotesize $3^{c}$}}] (x3) {};
\vertex[below = 15pt of r, dot, minimum size=3pt, label = {right: {\footnotesize $4^{c}$}}] (x4) {};
\diagram*{
	(x1) -- (v12) -- (x2),
	(v12) -- [photon, edge label = {\scriptsize \;$\sigma_{j}\sigma_{j}$}, inner sep = 4pt] (b) -- [photon, edge label = {\scriptsize \,$\sigma_{k}\sigma_{k}$}, inner sep = 4pt] (v34), 
	 (x3) -- (v34) -- (x4)
};
\end{feynman}
\end{tikzpicture} +
\mathcal{W}[2]\,\!\!\sum_{j,k}\!\!
\begin{tikzpicture}[baseline=(b)]
\begin{feynman}
\vertex (l) {};
\vertex[below = 15pt of l, dot, minimum size=3pt, label = {left: {\footnotesize $1^{c}$}}] (x1) {};
\vertex[above = 15pt of l, dot, minimum size=3pt, label = {left: {\footnotesize $3^{c}$}}] (x2) {};
\vertex[right = 8pt of l, dot, minimum size=0pt] (v12) {};
\vertex[right = 30pt of v12, squareblob] (b) {};
\vertex[right = 30pt of b, dot, minimum size=0pt] (v34) {};
\vertex[right = 8pt of v34] (r) {};
\vertex[above = 15pt of r, dot, minimum size=3pt, label = {right: {\footnotesize $2^{c}$}}] (x3) {};
\vertex[below = 15pt of r, dot, minimum size=3pt, label = {right: {\footnotesize $4^{c}$}}] (x4) {};
\diagram*{
	(x1) -- (v12) -- (x2),
	(v12) -- [photon, edge label = {\scriptsize \;$\sigma_{j}\sigma_{j}$}, inner sep = 4pt] (b) -- [photon, edge label = {\scriptsize \,$\sigma_{k}\sigma_{k}$}, inner sep = 4pt] (v34), 
	 (x3) -- (v34) -- (x4)
};
\end{feynman}
\end{tikzpicture}\nonumber\\
& + \mathcal{W}[2]\,\!\!\sum_{j,k}\!\!
\begin{tikzpicture}[baseline=(b)]
\begin{feynman}
\vertex (l) {};
\vertex[below = 15pt of l, dot, minimum size=3pt, label = {left: {\footnotesize $1^{c}$}}] (x1) {};
\vertex[above = 15pt of l, dot, minimum size=3pt, label = {left: {\footnotesize $4^{c}$}}] (x2) {};
\vertex[right = 8pt of l, dot, minimum size=0pt] (v12) {};
\vertex[right = 30pt of v12, squareblob] (b) {};
\vertex[right = 30pt of b, dot, minimum size=0pt] (v34) {};
\vertex[right = 8pt of v34] (r) {};
\vertex[above = 15pt of r, dot, minimum size=3pt, label = {right: {\footnotesize $2^{c}$}}] (x3) {};
\vertex[below = 15pt of r, dot, minimum size=3pt, label = {right: {\footnotesize $3^{c}$}}] (x4) {};
\diagram*{
	(x1) -- (v12) -- (x2),
	(v12) -- [photon, edge label = {\scriptsize \;$\sigma_{j}\sigma_{j}$}, inner sep = 4pt] (b) -- [photon, edge label = {\scriptsize \,$\sigma_{k}\sigma_{k}$}, inner sep = 4pt] (v34), 
	 (x3) -- (v34) -- (x4)
};
\end{feynman}
\end{tikzpicture}
- (\mathcal{W}[1])^{2}\sum_{j,k}\begin{tikzpicture}[baseline=(b)]
  \begin{feynman}
    \vertex (l) {};
    \vertex[below = 15pt of l, dot, minimum size=3pt, label = {left: {\footnotesize $1^{c}$}}] (x1) {};
    \vertex[above = 15pt of l, , dot, minimum size=3pt, label = {left: {\footnotesize $2^{c}$}}] (x2) {};
    \vertex[right = 8pt of l, dot, minimum size=0pt] (v12) {};
    \vertex[right = 35pt of v12, squareblob] (b12) {};
    \vertex[right = 5pt of b12, dot, minimum size=0pt] (w12) {};
    \vertex[above = 1pt of w12, label = {above: {\scriptsize \hspace{-10pt} $ $}}] (w12u) {};
    \vertex[below = 8pt of w12] (w12d) {};
    \vertex[left = 10pt of w12d] (w12dl) {};
    \tikzfeynmanset{every blob = {/tikz/fill=gray!50, /tikz/minimum size=0pt}}
    \vertex[right = 3pt of w12, blob , minimum size = 0pt] (b) {};
    \vertex[right = 3pt of b, dot, minimum size=0pt] (w34) {};
    \vertex[right = 16pt of w34, squareblob] (b34) {};
    \vertex[above = 1pt of w34, label = {above: {\scriptsize \hspace{10pt} $ $}}] (w34u) {};
    \vertex[below = 8pt of w34] (w34d) {};
    \vertex[right = 10pt of w34d] (w34dr) {};
    \vertex[right = 35pt of b34, dot, minimum size=0pt] (v34) {};
    \vertex[right = 8pt of v34] (r) {};
    \vertex[above = 15pt of r, dot, minimum size=3pt, label = {right: {\footnotesize $3^{c}$}}] (x3) {};
    \vertex[below = 15pt of r, dot, minimum size=3pt, label = {right: {\footnotesize $4^{c}$}}] (x4) {};
    \diagram*{
      (x1) -- (v12) -- (x2),
      (v12) -- [photon, edge label = {\scriptsize \;$\sigma_{j}\sigma_{j}$}, inner sep = 4pt] (b12),
      (b34) -- [photon, edge label = {\scriptsize \,$\sigma_{k}\sigma_{k}$}, inner sep = 4pt] (v34),
      (x3) --  (v34) -- (x4)
    };
  \end{feynman}
\end{tikzpicture}\label{quarticvertexsquare}
\end{align}
The first three terms in \eqref{quarticvertexsquare} are 1-class diagrams, as they are rendered fully connected by the square propagator. They arise from the three inequivalent pairings obtained by permuting the reference pairing $(12)(34)$, namely $\{(12)(34),(13)(24),(14)(23)\}$. The $k=2$ Weingarten functions are selected according to the relative ordering between the pairing $\pi$ appearing in the diagram and the reference pairing $\tau$: $\mathcal{W}[\pi,\tau] = \mathcal{W}[1,1]$ for $\pi=\tau$, $\mathcal{W}[\pi,\tau] = \mathcal{W}[2]$ for $\pi\neq\tau$. The coefficient multiplying each contribution is fixed by the Möbius formula evaluated on the single-block partition $s=1$. 

By contrast, the last diagram in \eqref{quarticvertexsquare} belongs to the 2-class, since the two cubic vertices are disconnected. Each subdiagram admits a unique pairing, yielding the $k=1$ Weingarten function $W[1]$. The negative coefficient associated with this contribution follows from the Möbius formula applied to the two-block partition $s=2$.

We now apply the second set of Feynman rules, (6)–(9), to the square propagators in~\eqref{quarticvertexsquare}. Since we are interested in the first-order $\frac{1}{n}$ correction to the quartic vertex, we approximate the Weingarten functions by their leading terms in the $\frac{1}{n}$ expansions~\eqref{w11expanded} and~\eqref{w2expanded}. Consequently,~\eqref{quarticvertexsquare} simplifies to  
\begin{align}
 + \mathcal{O}\left(\frac{1}{n^{2}}\right)\label{quarticvertexsquareapprox}
\end{align}
To evaluate~\eqref{quarticvertexsquareapprox}, we distinguish two cases. When $i=j$, the only nontrivial contribution arises from the first line of~\eqref{quarticvertexsquareapprox}, which reads
\begin{align}
     \frac{1}{n^{2}}\sum_{j}\!\!
%
\label{fourpointdiagonal}
\end{align}
When $i\neq j$, the first line of~\eqref{quarticvertexsquareapprox} yields the following diagram:
\begin{align}
 \frac{1}{n^{2}}\sum_{j_1, j_2}\!
%
\label{fourpointoffdiagonalone}
\end{align}
whereas the second line produces the following diagram:
\begin{align}
- 
\frac{1}{n^{3}}\left(
\sum_{j_1, j_2}\!
%
 \right) + \mathcal{O}\left(\frac{1}{n^{2}}\right)
\nonumber\\
\end{align}
The \(1/n\) scaling of the diagrams on the right-hand side can be verified by straightforward power counting: the summation in the first diagram yields a factor of $n$, while the summations in the remaining diagrams yield factors of $n^{2}$, which combine with the explicit prefactors to produce an overall \(1/n\) contribution. Consequently, the expression~\eqref{v4coneovern} reproduces the recursion relation for the quartic vertex shown in~\eqref{fourpointrecursion}, previously derived in~\cite{banta2023} through direct algebraic manipulations.

\subsection{Joint NTK-preactivation cumulant: $F$}

We already applied the Feynman rules of Section~\ref{sec:feynman-diagrams} to $F$ in the main text; for completeness, we repeat the derivation here.

The Feynman rules (1)-(5) produce the following diagrams for the tensor $F$:
\begin{align}
  \begin{tikzpicture}[baseline=(b)]
      \begin{feynman}
        \vertex (l) {};
        \vertex[below = 15pt of l, dot, minimum size=3pt, label = {left: {\footnotesize $1$}}] (x1) {};
        \vertex[above = 15pt of l, color1, dot, minimum size=3pt, label = {left: {\footnotesize $\textcolor{color1}{3}$}}] (x2) {};
        \vertex[right = 20pt of l, quarticblob] (b) {};
        \vertex[right = 20pt of b] (r) {};
        \vertex[above = 15pt of r, dot, minimum size=3pt, label = {right: {\footnotesize $2$}}] (x3) {};
        \vertex[below = 15pt of r, color1, dot, minimum size=3pt, label = {right: {\footnotesize $\textcolor{color1}{4}$}}] (x4) {};
        \diagram*{
          (x1) -- (b) -- [color1, ghost] (x2), 
          (x3) -- (b) -- [color1, ghost](x4)
        };
      \end{feynman}
    \end{tikzpicture}
\!\!& = 
\mathcal{W}[1,1]\,\!\!\sum_{j,k}\!\!
\begin{tikzpicture}[baseline=(b)]
\begin{feynman}
\vertex (l) {};
\vertex[below = 15pt of l, dot, minimum size=3pt, label = {left: {\footnotesize $1^{c}$}}] (x1) {};
\vertex[above = 15pt of l, color1, dot, minimum size=3pt, label = {left: {\footnotesize $\textcolor{color1}{3}^{c}$}}] (x2) {};
\vertex[right = 8pt of l, dot, minimum size=0pt] (v12) {};
\vertex[right = 30pt of v12, squareblob] (b) {};
\vertex[right = 30pt of b, dot, minimum size=0pt] (v34) {};
\vertex[right = 8pt of v34] (r) {};
\vertex[above = 15pt of r, dot, minimum size=3pt, label = {right: {\footnotesize $2^{c}$}}] (x3) {};
\vertex[below = 15pt of r, color1, dot, minimum size=3pt, label = {right: {\footnotesize $\textcolor{color1}{4}^{c}$}}] (x4) {};
\diagram*{
	(x1) -- (v12) -- [color1, ghost](x2),
	(v12) -- [color1blackghost, edge label = {\scriptsize \;$\sigma_{j}\sigma'_{j}$}, inner sep = 4pt] (b) -- [blackcolor1ghost, edge label = {\scriptsize \,$\sigma_{k}\sigma'_{k}$}, inner sep = 4pt] (v34), 
	 (x3) -- (v34) -- [color1, ghost](x4)
};
\end{feynman}
\end{tikzpicture} +
\mathcal{W}[2]\,\!\!\sum_{j,k}\!\!
\begin{tikzpicture}[baseline=(b)]
\begin{feynman}
\vertex (l) {};
\vertex[below = 15pt of l, dot, minimum size=3pt, label = {left: {\footnotesize $1^{c}$}}] (x1) {};
\vertex[above = 15pt of l, dot, minimum size=3pt, label = {left: {\footnotesize $2^{c}$}}] (x2) {};
\vertex[right = 8pt of l, dot, minimum size=0pt] (v12) {};
\vertex[right = 30pt of v12, squareblob] (b) {};
\vertex[right = 35pt of b, dot, minimum size=0pt] (v34) {};
\vertex[right = 8pt of v34] (r) {};
\vertex[above = 15pt of r, color1, dot, minimum size=3pt, label = {right: {\footnotesize $\textcolor{color1}{3}^{c}$}}] (x3) {};
\vertex[below = 15pt of r, color1, dot, minimum size=3pt, label = {right: {\footnotesize $\textcolor{color1}{4}^{c}$}}] (x4) {};
\diagram*{
	(x1) -- (v12) -- (x2),
	(v12) -- [photon, edge label = {\scriptsize \;$\sigma_{j}\sigma_{j}$}, inner sep = 4pt] (b) -- [photon, edge label = {\scriptsize \,$\Theta\, \sigma'_{k}\sigma'_{k}$}, inner sep = 4pt] (v34), 
	 (x3) -- [color1, ghost] (v34) -- [color1, ghost] (x4)
};
\end{feynman}
\end{tikzpicture}\nonumber\\
& + \mathcal{W}[2]\,\!\!\sum_{j,k}\!\!
\begin{tikzpicture}[baseline=(b)]
\begin{feynman}
\vertex (l) {};
\vertex[below = 15pt of l, dot, minimum size=3pt, label = {left: {\footnotesize $1^{c}$}}] (x1) {};
\vertex[above = 15pt of l, dot, minimum size=3pt, label = {left: {\footnotesize $2^{c}$}}] (x2) {};
\vertex[right = 8pt of l, dot, minimum size=0pt] (v12) {};
\vertex[right = 30pt of v12, squareblob] (b) {};
\vertex[right = 30pt of b, dot, minimum size=0pt] (v34) {};
\vertex[right = 8pt of v34] (r) {};
\vertex[above = 15pt of r, color1, dot, minimum size=3pt, label = {right: {\footnotesize $\textcolor{color1}{3}^{c}$}}] (x3) {};
\vertex[below = 15pt of r, color1, dot, minimum size=3pt, label = {right: {\footnotesize $\textcolor{color1}{4}^{c}$}}] (x4) {};
\diagram*{
	(x1) -- (v12) -- (x2),
	(v12) -- [photon, edge label = {\scriptsize \;$\sigma_{j}\sigma_{j}$}, inner sep = 4pt] (b) -- [color1doubghost, edge label = {\scriptsize \,$\sigma'_{k}\sigma'_{k}$}, inner sep = 4pt] (v34), 
	 (x3) -- [color1, ghost] (v34) -- [color1, ghost] (x4)
};
\end{feynman}
\end{tikzpicture}
+ \mathcal{W}[2]\,\!\!\sum_{j,k}\!\!
\begin{tikzpicture}[baseline=(b)]
\begin{feynman}
\vertex (l) {};
\vertex[below = 15pt of l, dot, minimum size=3pt, label = {left: {\footnotesize $1^{c}$}}] (x1) {};
\vertex[above = 15pt of l, color1, dot, minimum size=3pt, label = {left: {\footnotesize $\textcolor{color1}{4}^{c}$}}] (x2) {};
\vertex[right = 8pt of l, dot, minimum size=0pt] (v12) {};
\vertex[right = 30pt of v12, squareblob] (b) {};
\vertex[right = 30pt of b, dot, minimum size=0pt] (v34) {};
\vertex[right = 8pt of v34] (r) {};
\vertex[above = 15pt of r, dot, minimum size=3pt, label = {right: {\footnotesize $2^{c}$}}] (x3) {};
\vertex[below = 15pt of r, color1, dot, minimum size=3pt, label = {right: {\footnotesize $\textcolor{color1}{3}^{c}$}}] (x4) {};
\diagram*{
	(x1) -- (v12) -- [color1, ghost] (x2),
	(v12) -- [color1blackghost, edge label = {\scriptsize \;$\sigma_{j}\sigma'_{j}$}, inner sep = 4pt] (b) -- [blackcolor1ghost, edge label = {\scriptsize \,$\sigma_{k}\sigma'_{k}$}, inner sep = 4pt] (v34), 
	 (x3) -- (v34) -- [color1, ghost] (x4)
};
\end{feynman}
\end{tikzpicture}
\label{ftensorsquarepropapp}
\end{align}
The absence of 2-class diagrams in~\eqref{ftensorsquarepropapp} follows from the fact that the square propagator vanishes when acting on a pair of lines of different type. The 1-class diagrams are generated from the inequivalent permutations of the reference pairing $(13)(24)$, namely $\{(13)(24),(12)(34),(14)(23)\}$. The corresponding $k=2$ Weingarten functions multiplying the subdiagrams are determined by the relative ordering of the reference pairing $\tau$ and the pairing $\pi$ appearing in the subdiagram: $\mathcal{W}[\pi,\tau]=\mathcal{W}[1,1]$ for $\pi=\tau$, $\mathcal{W}[\pi,\tau]=\mathcal{W}[2]$ for $\pi\neq\tau$. The overall factor of 1 follows from the M\"obius relation for a single-block partition.

At first order in \(1/n\), the Weingarten functions can be approximated by their leading terms in the expansions \eqref{w11expanded} and \eqref{w2expanded}. In this manner, \eqref{ftensorsquarepropapp} simplifies to
\begin{align}
  \begin{tikzpicture}[baseline=(b)]
      \begin{feynman}
        \vertex (l) {};
        \vertex[below = 15pt of l, dot, minimum size=3pt, label = {left: {\footnotesize $1$}}] (x1) {};
        \vertex[above = 15pt of l, color1, dot, minimum size=3pt, label = {left: {\footnotesize $\textcolor{color1}{3}$}}] (x2) {};
        \vertex[right = 20pt of l, quarticblob] (b) {};
        \vertex[right = 20pt of b] (r) {};
        \vertex[above = 15pt of r, dot, minimum size=3pt, label = {right: {\footnotesize $2$}}] (x3) {};
        \vertex[below = 15pt of r, color1, dot, minimum size=3pt, label = {right: {\footnotesize $\textcolor{color1}{4}$}}] (x4) {};
        \diagram*{
          (x1) -- (b) -- [color1, ghost] (x2), 
          (x3) -- (b) -- [color1, ghost](x4)
        };
      \end{feynman}
    \end{tikzpicture}
\!\!& = 
\frac{1}{n^{2}}\,\!\!\sum_{j,k}\!\!
\begin{tikzpicture}[baseline=(b)]
\begin{feynman}
\vertex (l) {};
\vertex[below = 15pt of l, dot, minimum size=3pt, label = {left: {\footnotesize $1^{c}$}}] (x1) {};
\vertex[above = 15pt of l, color1, dot, minimum size=3pt, label = {left: {\footnotesize $\textcolor{color1}{3}^{c}$}}] (x2) {};
\vertex[right = 8pt of l, dot, minimum size=0pt] (v12) {};
\vertex[right = 30pt of v12, squareblob] (b) {};
\vertex[right = 30pt of b, dot, minimum size=0pt] (v34) {};
\vertex[right = 8pt of v34] (r) {};
\vertex[above = 15pt of r, dot, minimum size=3pt, label = {right: {\footnotesize $2^{c}$}}] (x3) {};
\vertex[below = 15pt of r, color1, dot, minimum size=3pt, label = {right: {\footnotesize $\textcolor{color1}{4}^{c}$}}] (x4) {};
\diagram*{
	(x1) -- (v12) -- [color1, ghost](x2),
	(v12) -- [color1blackghost, edge label = {\scriptsize \;$\sigma_{j}\sigma'_{j}$}, inner sep = 4pt] (b) -- [blackcolor1ghost, edge label = {\scriptsize \,$\sigma_{k}\sigma'_{k}$}, inner sep = 4pt] (v34), 
	 (x3) -- (v34) -- [color1, ghost](x4)
};
\end{feynman}
\end{tikzpicture} -\frac{1}{n^{3}}\sum_{j,k}\!\!
\begin{tikzpicture}[baseline=(b)]
\begin{feynman}
\vertex (l) {};
\vertex[below = 15pt of l, dot, minimum size=3pt, label = {left: {\footnotesize $1^{c}$}}] (x1) {};
\vertex[above = 15pt of l, dot, minimum size=3pt, label = {left: {\footnotesize $2^{c}$}}] (x2) {};
\vertex[right = 8pt of l, dot, minimum size=0pt] (v12) {};
\vertex[right = 30pt of v12, squareblob] (b) {};
\vertex[right = 35pt of b, dot, minimum size=0pt] (v34) {};
\vertex[right = 8pt of v34] (r) {};
\vertex[above = 15pt of r, color1, dot, minimum size=3pt, label = {right: {\footnotesize $\textcolor{color1}{3}^{c}$}}] (x3) {};
\vertex[below = 15pt of r, color1, dot, minimum size=3pt, label = {right: {\footnotesize $\textcolor{color1}{4}^{c}$}}] (x4) {};
\diagram*{
	(x1) -- (v12) -- (x2),
	(v12) -- [photon, edge label = {\scriptsize \;$\sigma_{j}\sigma_{j}$}, inner sep = 4pt] (b) -- [photon, edge label = {\scriptsize \,$\Theta\, \sigma'_{k}\sigma'_{k}$}, inner sep = 4pt] (v34), 
	 (x3) -- [color1, ghost] (v34) -- [color1, ghost] (x4)
};
\end{feynman}
\end{tikzpicture}\nonumber\\
& -\frac{1}{n^3}\,\!\!\sum_{j,k}\!\!
\begin{tikzpicture}[baseline=(b)]
\begin{feynman}
\vertex (l) {};
\vertex[below = 15pt of l, dot, minimum size=3pt, label = {left: {\footnotesize $1^{c}$}}] (x1) {};
\vertex[above = 15pt of l, dot, minimum size=3pt, label = {left: {\footnotesize $2^{c}$}}] (x2) {};
\vertex[right = 8pt of l, dot, minimum size=0pt] (v12) {};
\vertex[right = 30pt of v12, squareblob] (b) {};
\vertex[right = 30pt of b, dot, minimum size=0pt] (v34) {};
\vertex[right = 8pt of v34] (r) {};
\vertex[above = 15pt of r, color1, dot, minimum size=3pt, label = {right: {\footnotesize $\textcolor{color1}{3}^{c}$}}] (x3) {};
\vertex[below = 15pt of r, color1, dot, minimum size=3pt, label = {right: {\footnotesize $\textcolor{color1}{4}^{c}$}}] (x4) {};
\diagram*{
	(x1) -- (v12) -- (x2),
	(v12) -- [photon, edge label = {\scriptsize \;$\sigma_{j}\sigma_{j}$}, inner sep = 4pt] (b) -- [color1doubghost, edge label = {\scriptsize \,$\sigma'_{k}\sigma'_{k}$}, inner sep = 4pt] (v34), 
	 (x3) -- [color1, ghost] (v34) -- [color1, ghost] (x4)
};
\end{feynman}
\end{tikzpicture} -\frac{1}{n^{3}}\,\!\!\sum_{j,k}\!\!
\begin{tikzpicture}[baseline=(b)]
\begin{feynman}
\vertex (l) {};
\vertex[below = 15pt of l, dot, minimum size=3pt, label = {left: {\footnotesize $1^{c}$}}] (x1) {};
\vertex[above = 15pt of l, color1, dot, minimum size=3pt, label = {left: {\footnotesize $\textcolor{color1}{4}^{c}$}}] (x2) {};
\vertex[right = 8pt of l, dot, minimum size=0pt] (v12) {};
\vertex[right = 30pt of v12, squareblob] (b) {};
\vertex[right = 30pt of b, dot, minimum size=0pt] (v34) {};
\vertex[right = 8pt of v34] (r) {};
\vertex[above = 15pt of r, dot, minimum size=3pt, label = {right: {\footnotesize $2^{c}$}}] (x3) {};
\vertex[below = 15pt of r, color1, dot, minimum size=3pt, label = {right: {\footnotesize $\textcolor{color1}{3}^{c}$}}] (x4) {};
\diagram*{
	(x1) -- (v12) -- [color1, ghost] (x2),
	(v12) -- [color1blackghost, edge label = {\scriptsize \;$\sigma_{j}\sigma'_{j}$}, inner sep = 4pt] (b) -- [blackcolor1ghost, edge label = {\scriptsize \,$\sigma_{k}\sigma'_{k}$}, inner sep = 4pt] (v34), 
	 (x3) -- (v34) -- [color1, ghost] (x4)
};
\end{feynman}
\end{tikzpicture} + \mathcal{O}\left(\frac{1}{n^{2}}\right)
\label{ftensorsquarepropexpandedapp}
\end{align}
We now apply the second set of Feynman rules (6)–(9). As an immediate consequence, the last terms in~\eqref{ftensorsquarepropexpandedapp} vanish at order \(1/n\). A nonzero contribution would require the corresponding diagrams to produce a factor of $n^{2}$, which occurs when $j\neq k$ and a square propagator decomposes into two bare propagators forming disconnected subdiagrams. However, such configurations are forbidden by the selection rules (a)–(f), as each subdiagram violates color conservation. Consequently, we are left with
\begin{align}
  \begin{tikzpicture}[baseline=(b)]
      \begin{feynman}
        \vertex (l) {};
        \vertex[below = 15pt of l, dot, minimum size=3pt, label = {left: {\footnotesize $1$}}] (x1) {};
        \vertex[above = 15pt of l, color1, dot, minimum size=3pt, label = {left: {\footnotesize $\textcolor{color1}{3}$}}] (x2) {};
        \vertex[right = 20pt of l, quarticblob] (b) {};
        \vertex[right = 20pt of b] (r) {};
        \vertex[above = 15pt of r, dot, minimum size=3pt, label = {right: {\footnotesize $2$}}] (x3) {};
        \vertex[below = 15pt of r, color1, dot, minimum size=3pt, label = {right: {\footnotesize $\textcolor{color1}{4}$}}] (x4) {};
        \diagram*{
          (x1) -- (b) -- [color1, ghost] (x2), 
          (x3) -- (b) -- [color1, ghost](x4)
        };
      \end{feynman}
    \end{tikzpicture}
\!\!& = 
\frac{1}{n^{2}}\,\!\!\sum_{j,k}\!\!
\begin{tikzpicture}[baseline=(b)]
\begin{feynman}
\vertex (l) {};
\vertex[below = 15pt of l, dot, minimum size=3pt, label = {left: {\footnotesize $1^{c}$}}] (x1) {};
\vertex[above = 15pt of l, color1, dot, minimum size=3pt, label = {left: {\footnotesize $\textcolor{color1}{3}^{c}$}}] (x2) {};
\vertex[right = 8pt of l, dot, minimum size=0pt] (v12) {};
\vertex[right = 30pt of v12, squareblob] (b) {};
\vertex[right = 30pt of b, dot, minimum size=0pt] (v34) {};
\vertex[right = 8pt of v34] (r) {};
\vertex[above = 15pt of r, dot, minimum size=3pt, label = {right: {\footnotesize $2^{c}$}}] (x3) {};
\vertex[below = 15pt of r, color1, dot, minimum size=3pt, label = {right: {\footnotesize $\textcolor{color1}{4}^{c}$}}] (x4) {};
\diagram*{
	(x1) -- (v12) -- [color1, ghost](x2),
	(v12) -- [color1blackghost, edge label = {\scriptsize \;$\sigma_{j}\sigma'_{j}$}, inner sep = 4pt] (b) -- [blackcolor1ghost, edge label = {\scriptsize \,$\sigma_{k}\sigma'_{k}$}, inner sep = 4pt] (v34), 
	 (x3) -- (v34) -- [color1, ghost](x4)
};
\end{feynman}
\end{tikzpicture} -\frac{1}{n^3}\,\!\!\sum_{j,k}\!\!
\begin{tikzpicture}[baseline=(b)]
\begin{feynman}
\vertex (l) {};
\vertex[below = 15pt of l, dot, minimum size=3pt, label = {left: {\footnotesize $1^{c}$}}] (x1) {};
\vertex[above = 15pt of l, dot, minimum size=3pt, label = {left: {\footnotesize $2^{c}$}}] (x2) {};
\vertex[right = 8pt of l, dot, minimum size=0pt] (v12) {};
\vertex[right = 30pt of v12, squareblob] (b) {};
\vertex[right = 35pt of b, dot, minimum size=0pt] (v34) {};
\vertex[right = 8pt of v34] (r) {};
\vertex[above = 15pt of r, color1, dot, minimum size=3pt, label = {right: {\footnotesize $\textcolor{color1}{3}^{c}$}}] (x3) {};
\vertex[below = 15pt of r, color1, dot, minimum size=3pt, label = {right: {\footnotesize $\textcolor{color1}{4}^{c}$}}] (x4) {};
\diagram*{
	(x1) -- (v12) -- (x2),
	(v12) -- [photon, edge label = {\scriptsize \;$\sigma_{j}\sigma_{j}$}, inner sep = 4pt] (b) -- [photon, edge label = {\scriptsize \,$\Theta\,\sigma'_{k}\sigma'_{k}$}, inner sep = 4pt] (v34), 
	 (x3) -- [color1, ghost] (v34) -- [color1, ghost] (x4)
};
\end{feynman}
\end{tikzpicture}+ \mathcal{O}\left(\frac{1}{n^{2}}\right)\nonumber\\
\label{ftensorsquarepropexpandedremovedapp}
\end{align}
We now expand the diagrams in~\eqref{ftensorsquarepropexpandedremovedapp} in terms of the bare propagator and quartic vertices by analyzing the diagonal and off-diagonal neural components of each subdiagram, together with the selection rules (a)-(f). When $i=j$, the first diagram in~\eqref{ftensorsquarepropexpandedremovedapp} contributes as
\begin{align}
\frac{1}{n^{2}}\sum_{j}
\begin{tikzpicture}[baseline=(b)]
\tikzfeynmanset{every blob = {/tikz/fill=white!50, /tikz/minimum size=15pt}}
\begin{feynman}
\vertex (l) {};
\vertex[below = 15pt of l, dot, minimum size=3pt, label = {left: {\footnotesize $1$}}] (x1) {};
\vertex[above = 15pt of l, color1, dot, minimum size=3pt, label = {left: {\footnotesize $\textcolor{color1}{3}$}}] (x2) {};
\vertex[right = 8pt of l, dot, minimum size=0pt] (v12) {};
\vertex[right = 30pt of v12, blob] (b) {};
\vertex[right = 30pt of b, dot, minimum size=0pt] (v34) {};
\vertex[right = 8pt of v34] (r) {};
\vertex[above = 15pt of r, dot, minimum size=3pt, label = {right: {\footnotesize $2$}}] (x3) {};
\vertex[below = 15pt of r, color1, dot, minimum size=3pt, label = {right: {\footnotesize $\textcolor{color1}{4}$}}] (x4) {};
\diagram*{
	(x1) -- (v12) -- [color1, ghost] (x2),
	(v12) -- [color1blackghost, edge label = {\scriptsize \;$\sigma_{j}\sigma'_{j}$}, inner sep = 4pt] (b) -- [blackcolor1ghost, edge label = {\scriptsize \,$\sigma_{j}\sigma'_{j}$}, inner sep = 4pt] (v34), 
	 (x3) -- (v34) -- [color1, ghost] (x4)
};
\end{feynman}
\end{tikzpicture}\label{ftensorbarepropijsameapp}
\end{align}
When $i\neq j$, the first diagram reduces to
\begin{align}
\frac{1}{n^{2}}\sum_{j_1, j_2}\!
\begin{tikzpicture}[baseline=(b)]
  \tikzfeynmanset{every blob = {/tikz/fill=white!50, /tikz/minimum size=15pt}}
  \begin{feynman}
    \vertex (l) {};
    \vertex[below = 15pt of l, dot, minimum size=3pt, label = {left: {\footnotesize $1$}}] (x1) {};
    \vertex[above = 15pt of l, color1, dot, minimum size=3pt, label = {left: {\footnotesize $\textcolor{color1}{3}$}}] (x2) {};
    \vertex[right = 8pt of l, dot, minimum size=0pt] (v12) {};
    \vertex[right = 35pt of v12, blob] (b12) {};
    \vertex[right = 16pt of b12, dot, minimum size=0pt] (w12) {};
    \vertex[above = 1pt of w12, label = {above: {\scriptsize \hspace{-10pt} $z_{j_1}$}}] (w12u) {};
    \vertex[below = 8pt of w12] (w12d) {};
    \vertex[left = 10pt of w12d] (w12dl) {};
    \tikzfeynmanset{every blob = {/tikz/fill=gray!50, /tikz/minimum size=15pt}}
    \vertex[right = 3pt of w12, blob , minimum size = 6pt] (b) {};
    \vertex[right = 3pt of b, dot, minimum size=0pt] (w34) {};
    \tikzfeynmanset{every blob = {/tikz/fill=white!50, /tikz/minimum size=15pt}}
    \vertex[right = 16pt of w34, blob] (b34) {};
    \vertex[above = 1pt of w34, label = {above: {\scriptsize \hspace{10pt} $z_{j_2}$}}] (w34u) {};
    \vertex[below = 8pt of w34] (w34d) {};
    \vertex[right = 10pt of w34d] (w34dr) {};
    \vertex[right = 35pt of b34, dot, minimum size=0pt] (v34) {};
    \vertex[right = 8pt of v34] (r) {};
    \vertex[above = 15pt of r, dot, minimum size=3pt, label = {right: {\footnotesize $2$}}] (x3) {};
    \vertex[below = 15pt of r, color1, dot, minimum size=3pt, label = {right: {\footnotesize $\textcolor{color1}{4}$}}] (x4) {};
    \diagram*{
      (x1) -- (v12) -- [color1, ghost] (x2),
      (v12) -- [color1blackghost, edge label = {\scriptsize \;$\sigma_{j_{1}}\sigma'_{j_{1}}$}, inner sep = 4pt] (b12) -- [color1, ghost, quarter left] (w12) -- [quarter left] (b12),
      (b34) -- [color1, ghost, quarter left] (w34) -- [quarter left] (b34) -- [blackcolor1ghost, edge label = {\scriptsize \,$\sigma_{j_{2}}\sigma'_{j_{2}}$}, inner sep = 4pt] (v34),
      (x3) --  (v34) -- [color1, ghost] (x4)
    };
    \draw [decoration={brace}, decorate] (w34dr) -- (w12dl) node [pos=0.5, below = 1pt] {\scriptsize $\frac{1}{n}F_4^{(\ell)}$};
  \end{feynman}
\end{tikzpicture}\label{ftensorbarepropijdiffoneapp}
\end{align}
whereas the second diagram becomes
\begin{align}
-\frac{1}{n^3}
\sum_{j_1, j_2}\!
\begin{tikzpicture}[baseline=(b)]
  \tikzfeynmanset{every blob = {/tikz/fill=white!50, /tikz/minimum size=15pt}}
  \begin{feynman}
    \vertex (l) {};
    \vertex[below = 15pt of l, dot, minimum size=3pt, label = {left: {\footnotesize $1$}}] (x1) {};
    \vertex[above = 15pt of l, , dot, minimum size=3pt, label = {left: {\footnotesize $2$}}] (x2) {};
    \vertex[right = 8pt of l, dot, minimum size=0pt] (v12) {};
    \vertex[right = 35pt of v12, blob] (b12) {};
    \vertex[right = 5pt of b12, dot, minimum size=0pt] (w12) {};
    \vertex[above = 1pt of w12, label = {above: {\scriptsize \hspace{-10pt} $ $}}] (w12u) {};
    \vertex[below = 8pt of w12] (w12d) {};
    \vertex[left = 10pt of w12d] (w12dl) {};
    \tikzfeynmanset{every blob = {/tikz/fill=gray!50, /tikz/minimum size=0pt}}
    \vertex[right = 3pt of w12, blob , minimum size = 0pt] (b) {};
    \vertex[right = 3pt of b, dot, minimum size=0pt] (w34) {};
    \tikzfeynmanset{every blob = {/tikz/fill=white!50, /tikz/minimum size=15pt}}
    \vertex[right = 16pt of w34, blob] (b34) {};
    \vertex[above = 1pt of w34, label = {above: {\scriptsize \hspace{10pt} $ $}}] (w34u) {};
    \vertex[below = 8pt of w34] (w34d) {};
    \vertex[right = 10pt of w34d] (w34dr) {};
    \vertex[right = 35pt of b34, dot, minimum size=0pt] (v34) {};
    \vertex[right = 8pt of v34] (r) {};
    \vertex[above = 15pt of r, color1, dot, minimum size=3pt, label = {right: {\footnotesize $\textcolor{color1}{3}$}}] (x3) {};
    \vertex[below = 15pt of r, color1, dot, minimum size=3pt, label = {right: {\footnotesize $\textcolor{color1}{4}$}}] (x4) {};
    \diagram*{
      (x1) -- (v12) -- (x2),
      (v12) -- [photon, edge label = {\scriptsize \;$\sigma_{j_{1}}\sigma_{j_{1}}$}, inner sep = 4pt] (b12),
      (b34) -- [photon, edge label = {\scriptsize \,$\Theta\,\sigma'_{j_{2}}\sigma'_{j_{2}}$}, inner sep = 4pt] (v34),
      (x3) --  [color1, ghost] (v34) -- [color1, ghost] (x4)
    };
  \end{feynman}
\end{tikzpicture}\label{ftensorbarepropijdifftwoapp}
\end{align}
Substituting~\eqref{ftensorbarepropijsameapp}, \eqref{ftensorbarepropijdiffoneapp}, and~\eqref{ftensorbarepropijdifftwoapp} into~\eqref{ftensorsquarepropexpandedremovedapp}, we obtain
\begin{align}
  \begin{tikzpicture}[baseline=(b)]
      \begin{feynman}
        \vertex (l) {};
        \vertex[below = 15pt of l, dot, minimum size=3pt, label = {left: {\footnotesize $1$}}] (x1) {};
        \vertex[above = 15pt of l, dot, color1, minimum size=3pt, label = {left: {\footnotesize $\textcolor{color1}{3}$}}] (x2) {};
        \vertex[right = 20pt of l, quarticblob] (b) {};
        \vertex[right = 20pt of b] (r) {};
        \vertex[above = 15pt of r, dot, minimum size=3pt, label = {right: {\footnotesize $2$}}] (x3) {};
        \vertex[below = 15pt of r, dot, color1, minimum size=3pt, label = {right: {\footnotesize $\textcolor{color1}{4}$}}] (x4) {};
        \diagram*{
          (x1) -- (b) -- [color1, ghost] (x2), 
          (x3) -- (b) -- [color1, ghost] (x4)
        };
      \end{feynman}
    \end{tikzpicture}
\!\!&= 
\frac{1}{n^{2}}\sum_{j}
\begin{tikzpicture}[baseline=(b)]
\tikzfeynmanset{every blob = {/tikz/fill=white!50, /tikz/minimum size=15pt}}
\begin{feynman}
\vertex (l) {};
\vertex[below = 15pt of l, dot, minimum size=3pt, label = {left: {\footnotesize $1$}}] (x1) {};
\vertex[above = 15pt of l, color1, dot, minimum size=3pt, label = {left: {\footnotesize $\textcolor{color1}{3}$}}] (x2) {};
\vertex[right = 8pt of l, dot, minimum size=0pt] (v12) {};
\vertex[right = 30pt of v12, blob] (b) {};
\vertex[right = 30pt of b, dot, minimum size=0pt] (v34) {};
\vertex[right = 8pt of v34] (r) {};
\vertex[above = 15pt of r, dot, minimum size=3pt, label = {right: {\footnotesize $2$}}] (x3) {};
\vertex[below = 15pt of r, color1, dot, minimum size=3pt, label = {right: {\footnotesize $\textcolor{color1}{4}$}}] (x4) {};
\diagram*{
	(x1) -- (v12) -- [color1, ghost] (x2),
	(v12) -- [color1blackghost, edge label = {\scriptsize \;$\sigma_{j}\sigma'_{j}$}, inner sep = 4pt] (b) -- [blackcolor1ghost, edge label = {\scriptsize \,$\sigma_{j}\sigma'_{j}$}, inner sep = 4pt] (v34), 
	 (x3) -- (v34) -- [color1, ghost] (x4)
};
\end{feynman}
\end{tikzpicture}
+ \frac{1}{n^{2}}\sum_{j_1, j_2}\!
\begin{tikzpicture}[baseline=(b)]
  \tikzfeynmanset{every blob = {/tikz/fill=white!50, /tikz/minimum size=15pt}}
  \begin{feynman}
    \vertex (l) {};
    \vertex[below = 15pt of l, dot, minimum size=3pt, label = {left: {\footnotesize $1$}}] (x1) {};
    \vertex[above = 15pt of l, color1, dot, minimum size=3pt, label = {left: {\footnotesize $\textcolor{color1}{3}$}}] (x2) {};
    \vertex[right = 8pt of l, dot, minimum size=0pt] (v12) {};
    \vertex[right = 35pt of v12, blob] (b12) {};
    \vertex[right = 16pt of b12, dot, minimum size=0pt] (w12) {};
    \vertex[above = 1pt of w12, label = {above: {\scriptsize \hspace{-10pt} $z_{j_1}$}}] (w12u) {};
    \vertex[below = 8pt of w12] (w12d) {};
    \vertex[left = 10pt of w12d] (w12dl) {};
    \tikzfeynmanset{every blob = {/tikz/fill=gray!50, /tikz/minimum size=15pt}}
    \vertex[right = 3pt of w12, blob , minimum size = 6pt] (b) {};
    \vertex[right = 3pt of b, dot, minimum size=0pt] (w34) {};
    \tikzfeynmanset{every blob = {/tikz/fill=white!50, /tikz/minimum size=15pt}}
    \vertex[right = 16pt of w34, blob] (b34) {};
    \vertex[above = 1pt of w34, label = {above: {\scriptsize \hspace{10pt} $z_{j_2}$}}] (w34u) {};
    \vertex[below = 8pt of w34] (w34d) {};
    \vertex[right = 10pt of w34d] (w34dr) {};
    \vertex[right = 35pt of b34, dot, minimum size=0pt] (v34) {};
    \vertex[right = 8pt of v34] (r) {};
    \vertex[above = 15pt of r, dot, minimum size=3pt, label = {right: {\footnotesize $2$}}] (x3) {};
    \vertex[below = 15pt of r, color1, dot, minimum size=3pt, label = {right: {\footnotesize $\textcolor{color1}{4}$}}] (x4) {};
    \diagram*{
      (x1) -- (v12) -- [color1, ghost] (x2),
      (v12) -- [color1blackghost, edge label = {\scriptsize \;$\sigma_{j_{1}}\sigma'_{j_{1}}$}, inner sep = 4pt] (b12) -- [color1, ghost, quarter left] (w12) -- [quarter left] (b12),
      (b34) -- [color1, ghost, quarter left] (w34) -- [quarter left] (b34) -- [blackcolor1ghost, edge label = {\scriptsize \,$\sigma_{j_{2}}\sigma'_{j_{2}}$}, inner sep = 4pt] (v34),
      (x3) --  (v34) -- [color1, ghost] (x4)
    };
    \draw [decoration={brace}, decorate] (w34dr) -- (w12dl) node [pos=0.5, below = 1pt] {\scriptsize $\frac{1}{n}F_4^{(\ell)}$};
  \end{feynman}
\end{tikzpicture}\nonumber\\
&- 
\frac{1}{n^{3}}
\sum_{j_1, j_2}\!
\begin{tikzpicture}[baseline=(b)]
  \tikzfeynmanset{every blob = {/tikz/fill=white!50, /tikz/minimum size=15pt}}
  \begin{feynman}
    \vertex (l) {};
    \vertex[below = 15pt of l, dot, minimum size=3pt, label = {left: {\footnotesize $1$}}] (x1) {};
    \vertex[above = 15pt of l, , dot, minimum size=3pt, label = {left: {\footnotesize $2$}}] (x2) {};
    \vertex[right = 8pt of l, dot, minimum size=0pt] (v12) {};
    \vertex[right = 35pt of v12, blob] (b12) {};
    \vertex[right = 5pt of b12, dot, minimum size=0pt] (w12) {};
    \vertex[above = 1pt of w12, label = {above: {\scriptsize \hspace{-10pt} $ $}}] (w12u) {};
    \vertex[below = 8pt of w12] (w12d) {};
    \vertex[left = 10pt of w12d] (w12dl) {};
    \tikzfeynmanset{every blob = {/tikz/fill=gray!50, /tikz/minimum size=0pt}}
    \vertex[right = 3pt of w12, blob , minimum size = 0pt] (b) {};
    \vertex[right = 3pt of b, dot, minimum size=0pt] (w34) {};
    \tikzfeynmanset{every blob = {/tikz/fill=white!50, /tikz/minimum size=15pt}}
    \vertex[right = 16pt of w34, blob] (b34) {};
    \vertex[above = 1pt of w34, label = {above: {\scriptsize \hspace{10pt} $ $}}] (w34u) {};
    \vertex[below = 8pt of w34] (w34d) {};
    \vertex[right = 10pt of w34d] (w34dr) {};
    \vertex[right = 35pt of b34, dot, minimum size=0pt] (v34) {};
    \vertex[right = 8pt of v34] (r) {};
    \vertex[above = 15pt of r, color1, dot, minimum size=3pt, label = {right: {\footnotesize $\textcolor{color1}{3}$}}] (x3) {};
    \vertex[below = 15pt of r, color1, dot, minimum size=3pt, label = {right: {\footnotesize $\textcolor{color1}{4}$}}] (x4) {};
    \diagram*{
      (x1) -- (v12) -- (x2),
      (v12) -- [photon, edge label = {\scriptsize \;$\sigma_{j_{1}}\sigma_{j_{1}}$}, inner sep = 4pt] (b12),
      (b34) -- [photon, edge label = {\scriptsize \,$\Theta\,\sigma'_{j_{2}}\sigma'_{j_{2}}$}, inner sep = 4pt] (v34),
      (x3) --  [color1, ghost] (v34) -- [color1, ghost] (x4)
    };
  \end{feynman}
\end{tikzpicture} + \mathcal{O}\left(\frac{1}{n^2}\right)\label{ftensoratorderonenapp}
\end{align}
The \(1/n\) scaling of the diagrams follows from straightforward power counting of the respective summations: When $i=j$, the summation contributes factor of $n$, whereas when $i\neq j$ it yields a factor of $n^2$. Combined with the explicit coefficients on the right-hand side of~\eqref{ftensoratorderonenapp}, these factors produce an overall contribution of order \(1/n^2\). It is straightforward to verify that the first and second lines on the RHS of~\eqref{ftensoratorderonenapp} reproduce the corresponding lines on the RHS of~\eqref{eq:F}.

\subsection{NTK variance: $B$}
The Feynman rules (1)-(5) yield the following diagrams for the tensor $B$:
\begin{align}
  \begin{tikzpicture}[baseline=(b)]
      \begin{feynman}
        \vertex (l) {};
        \vertex[below = 15pt of l, color2, dot, minimum size=3pt, label = {left: {\footnotesize $\textcolor{color2}{1}$}}] (x1) {};
        \vertex[above = 15pt of l, color1, dot, minimum size=3pt, label = {left: {\footnotesize $\textcolor{color1}{3}$}}] (x2) {};
        \vertex[right = 20pt of l, quarticblob] (b) {};
        \vertex[right = 20pt of b] (r) {};
        \vertex[above = 15pt of r, color2, dot, minimum size=3pt, label = {right: {\footnotesize $\textcolor{color2}{2}$}}] (x3) {};
        \vertex[below = 15pt of r, color1, dot, minimum size=3pt, label = {right: {\footnotesize $\textcolor{color1}{4}$}}] (x4) {};
        \diagram*{
          (x1) -- [color2, ghost] (b) -- [color1, ghost] (x2), 
          (x3) -- [color2, ghost] (b) -- [color1, ghost](x4)
        };
      \end{feynman}
    \end{tikzpicture}
\!\!& = 
\mathcal{W}[1,1]\,\!\!\sum_{j,k}\!\!
\begin{tikzpicture}[baseline=(b)]
\begin{feynman}
\vertex (l) {};
\vertex[below = 15pt of l, color2, dot, minimum size=3pt, label = {left: {\footnotesize $\textcolor{color2}{1}^{c}$}}] (x1) {};
\vertex[above = 15pt of l, color1, dot, minimum size=3pt, label = {left: {\footnotesize $\textcolor{color1}{3}^{c}$}}] (x2) {};
\vertex[right = 8pt of l, dot, minimum size=0pt] (v12) {};
\vertex[right = 30pt of v12, squareblob] (b) {};
\vertex[right = 30pt of b, dot, minimum size=0pt] (v34) {};
\vertex[right = 8pt of v34] (r) {};
\vertex[above = 15pt of r, color2, dot, minimum size=3pt, label = {right: {\footnotesize $\textcolor{color2}{2}^{c}$}}] (x3) {};
\vertex[below = 15pt of r, color1, dot, minimum size=3pt, label = {right: {\footnotesize $\textcolor{color1}{4}^{c}$}}] (x4) {};
\diagram*{
	(x1) -- [color2, ghost] (v12) -- [color1, ghost](x2),
	(v12) -- [color1color2ghost, edge label = {\scriptsize \;$\sigma'_{j}\sigma'_{j}$}, inner sep = 4pt] (b) -- [color2color1ghost, edge label = {\scriptsize \,$\sigma'_{k}\sigma'_{k}$}, inner sep = 4pt] (v34), 
	 (x3) -- [color2, ghost] (v34) -- [color1, ghost](x4)
};
\end{feynman}
\end{tikzpicture} +
\mathcal{W}[2]\,\!\!\sum_{j,k}\!\!
\begin{tikzpicture}[baseline=(b)]
\begin{feynman}
\vertex (l) {};
\vertex[below = 15pt of l, color2, dot, minimum size=3pt, label = {left: {\footnotesize $\textcolor{color2}{1}^{c}$}}] (x1) {};
\vertex[above = 15pt of l,color2,  dot, minimum size=3pt, label = {left: {\footnotesize $\textcolor{color2}{2}^{c}$}}] (x2) {};
\vertex[right = 8pt of l, dot, minimum size=0pt] (v12) {};
\vertex[right = 35pt of v12, squareblob] (b) {};
\vertex[right = 35pt of b, dot, minimum size=0pt] (v34) {};
\vertex[right = 8pt of v34] (r) {};
\vertex[above = 15pt of r, color1, dot, minimum size=3pt, label = {right: {\footnotesize $\textcolor{color1}{3}^{c}$}}] (x3) {};
\vertex[below = 15pt of r, color1, dot, minimum size=3pt, label = {right: {\footnotesize $\textcolor{color1}{4}^{c}$}}] (x4) {};
\diagram*{
	(x1) -- [color2, ghost] (v12) -- [color2, ghost] (x2),
	(v12) -- [photon, edge label = {\scriptsize \;$\Theta\,\sigma'_{j}\sigma'_{j}$}, inner sep = 4pt] (b) -- [photon, edge label = {\scriptsize \,$\Theta\,\sigma'_{k}\sigma'_{k}$}, inner sep = 4pt] (v34), 
	 (x3) -- [color1, ghost] (v34) -- [color1, ghost] (x4)
};
\end{feynman}
\end{tikzpicture} \nonumber\\
& + \mathcal{W}[2]\,\!\!\sum_{j,k}\!\!
\begin{tikzpicture}[baseline=(b)]
\begin{feynman}
\vertex (l) {};
\vertex[below = 15pt of l, color2, dot, minimum size=3pt, label = {left: {\footnotesize $\textcolor{color2}{1}^{c}$}}] (x1) {};
\vertex[above = 15pt of l,color2,  dot, minimum size=3pt, label = {left: {\footnotesize $\textcolor{color2}{2}^{c}$}}] (x2) {};
\vertex[right = 8pt of l, dot, minimum size=0pt] (v12) {};
\vertex[right = 35pt of v12, squareblob] (b) {};
\vertex[right = 30pt of b, dot, minimum size=0pt] (v34) {};
\vertex[right = 8pt of v34] (r) {};
\vertex[above = 15pt of r, color1, dot, minimum size=3pt, label = {right: {\footnotesize $\textcolor{color1}{3}^{c}$}}] (x3) {};
\vertex[below = 15pt of r, color1, dot, minimum size=3pt, label = {right: {\footnotesize $\textcolor{color1}{4}^{c}$}}] (x4) {};
\diagram*{
	(x1) -- [color2, ghost] (v12) -- [color2, ghost] (x2),
	(v12) -- [photon, edge label = {\scriptsize \;$\Theta\,\sigma'_{j}\sigma'_{j}$}, inner sep = 4pt] (b) -- [color1doubghost, edge label = {\scriptsize \,$\sigma'_{k}\sigma'_{k}$}, inner sep = 4pt] (v34), 
	 (x3) -- [color1, ghost] (v34) -- [color1, ghost] (x4)
};
\end{feynman}
\end{tikzpicture}
+ \mathcal{W}[2]\,\!\!\sum_{j,k}\!\!
\begin{tikzpicture}[baseline=(b)]
\begin{feynman}
\vertex (l) {};
\vertex[below = 15pt of l, color2, dot, minimum size=3pt, label = {left: {\footnotesize $\textcolor{color2}{1}^{c}$}}] (x1) {};
\vertex[above = 15pt of l,color2,  dot, minimum size=3pt, label = {left: {\footnotesize $\textcolor{color2}{2}^{c}$}}] (x2) {};
\vertex[right = 8pt of l, dot, minimum size=0pt] (v12) {};
\vertex[right = 30pt of v12, squareblob] (b) {};
\vertex[right = 35pt of b, dot, minimum size=0pt] (v34) {};
\vertex[right = 8pt of v34] (r) {};
\vertex[above = 15pt of r, color1, dot, minimum size=3pt, label = {right: {\footnotesize $\textcolor{color1}{3}^{c}$}}] (x3) {};
\vertex[below = 15pt of r, color1, dot, minimum size=3pt, label = {right: {\footnotesize $\textcolor{color1}{4}^{c}$}}] (x4) {};
\diagram*{
	(x1) -- [color2, ghost] (v12) -- [color2, ghost] (x2),
	(v12) -- [color2doubghost, edge label = {\scriptsize \;$\sigma'_{j}\sigma'_{j}$}, inner sep = 4pt] (b) -- [photon, edge label = {\scriptsize \,$\Theta\,\sigma'_{k}\sigma'_{k}$}, inner sep = 4pt] (v34), 
	 (x3) -- [color1, ghost] (v34) -- [color1, ghost] (x4)
};
\end{feynman}
\end{tikzpicture}\nonumber\\
& + \mathcal{W}[2]\,\!\!\sum_{j,k}\!\!
\begin{tikzpicture}[baseline=(b)]
\begin{feynman}
\vertex (l) {};
\vertex[below = 15pt of l, color2, dot, minimum size=3pt, label = {left: {\footnotesize $\textcolor{color2}{1}^{c}$}}] (x1) {};
\vertex[above = 15pt of l,color2,  dot, minimum size=3pt, label = {left: {\footnotesize $\textcolor{color2}{2}^{c}$}}] (x2) {};
\vertex[right = 8pt of l, dot, minimum size=0pt] (v12) {};
\vertex[right = 30pt of v12, squareblob] (b) {};
\vertex[right = 30pt of b, dot, minimum size=0pt] (v34) {};
\vertex[right = 8pt of v34] (r) {};
\vertex[above = 15pt of r, color1, dot, minimum size=3pt, label = {right: {\footnotesize $\textcolor{color1}{3}^{c}$}}] (x3) {};
\vertex[below = 15pt of r, color1, dot, minimum size=3pt, label = {right: {\footnotesize $\textcolor{color1}{4}^{c}$}}] (x4) {};
\diagram*{
	(x1) -- [color2, ghost] (v12) -- [color2, ghost] (x2),
	(v12) -- [color2doubghost, edge label = {\scriptsize \;$\sigma'_{j}\sigma'_{j}$}, inner sep = 4pt] (b) -- [color1doubghost, edge label = {\scriptsize \,$\sigma'_{k}\sigma'_{k}$}, inner sep = 4pt] (v34), 
	 (x3) -- [color1, ghost] (v34) -- [color1, ghost] (x4)
};
\end{feynman}
\end{tikzpicture}
+ \mathcal{W}[2]\,\!\!\sum_{j,k}\!\!
\begin{tikzpicture}[baseline=(b)]
\begin{feynman}
\vertex (l) {};
\vertex[below = 15pt of l, color2, dot, minimum size=3pt, label = {left: {\footnotesize $\textcolor{color2}{1}^{c}$}}] (x1) {};
\vertex[above = 15pt of l, color1, dot, minimum size=3pt, label = {left: {\footnotesize $\textcolor{color1}{4}^{c}$}}] (x2) {};
\vertex[right = 8pt of l, dot, minimum size=0pt] (v12) {};
\vertex[right = 30pt of v12, squareblob] (b) {};
\vertex[right = 30pt of b, dot, minimum size=0pt] (v34) {};
\vertex[right = 8pt of v34] (r) {};
\vertex[above = 15pt of r, color2, dot, minimum size=3pt, label = {right: {\footnotesize $\textcolor{color2}{2}^{c}$}}] (x3) {};
\vertex[below = 15pt of r, color1, dot, minimum size=3pt, label = {right: {\footnotesize $\textcolor{color1}{3}^{c}$}}] (x4) {};
\diagram*{
	(x1) -- [color2, ghost](v12) -- [color1, ghost] (x2),
	(v12) -- [color1color2ghost, edge label = {\scriptsize \;$\sigma'_{j}\sigma'_{j}$}, inner sep = 4pt] (b) -- [color2color1ghost, edge label = {\scriptsize \,$\sigma'_{k}\sigma'_{k}$}, inner sep = 4pt] (v34), 
	 (x3) -- [color2,  ghost] (v34) -- [color1, ghost] (x4)
};
\end{feynman}
\end{tikzpicture}
\label{btensorsquareprop}
\end{align}
In \eqref{btensorsquareprop}, the application of Feyman rule (3) eliminates the presence of 2-class diagrams. Applying rule (4) to the inequivalent permutations of the reference pairing $(13)(24)$, namely $\{(13)(24)\}$, $\{(12)(34)\}$, $\{(14)(23)\}$ generates the 1-class diagrams shown in~\eqref{btensorsquareprop}. The corresponding Weingarten coefficients multiplying each subdiagram are determined by the relative ordering between the reference pairing $\tau$ and the pairing $\pi$ defining the subdiagram: $\mathcal{W}[\pi,\tau]=\mathcal{W}[1,1]$ for $\pi=\tau$, $\mathcal{W}[\pi,\tau]=\mathcal{W}[2]$ for $\pi\neq\tau$. The overall factor of 1 follows from the M\"obius formula for a single-block partition.

Restricting the analysis to first order in \(1/n\), the expansions~\eqref{w11expanded} and~\eqref{w2expanded} reduce~\eqref{btensorsquareprop} to
\begin{align}
 + \mathcal{O}\left(\frac{1}{n^{2}}\right)
\label{btensorsquarepropexpanded}
\end{align}
We now apply the second set of Feynman rules (6)-(9). In particular, the selection rules (a)-(f) eliminate the last four terms in~\eqref{btensorsquarepropexpanded}. To cancel the prefactor \(1/n^3\), the corresponding diagrams would have to contribute a factor of $n^2$, which could only arise from two disconnected bare propagators. However, such subdiagrams violate the color-conservation property of the propagator. Consequently, we are left with
\begin{align}
  %
 + \mathcal{O}\left(\frac{1}{n^{2}}\right)
\label{btensorsquarepropexpandedtwo}
\end{align}
Applying Feynman rules (6)–(9) to the diagonal and off-diagonal neural components on the right-hand side of~\eqref{btensorsquarepropexpandedtwo} generates the following diagrams: When $i=j$, the first term yields

\begin{align}
\frac{1}{n^{2}}\sum_{j}\!\!
%
\label{btensorexpandedijsame}
\end{align}
When $i\neq j$, the first term produces
\begin{align}
\frac{1}{n^{2}}\sum_{j_1, j_2}\!
%
\label{btensorexpandedijdiffone}
\end{align}
whereas the second term becomes
\begin{align}
-\frac{1}{n^{3}}\sum_{j_1, j_2}\!
%
\right) + \mathcal{O}\left(\frac{1}{n^2}\right)
\label{btensorexpandedfinal}
\end{align}
The linear \(1/n\) scaling of the right-hand side of~\eqref{btensorexpandedfinal} follows from simple power counting of the associated summations. Diagrams involving a single neural index contribute a factor of $n$, whereas those involving two distinct neural indices contribute a factor of $n^2$. Combined with the explicit prefactors in~\eqref{btensorexpandedfinal}, each term on the right-hand side yields an overall contribution of order \(1/n\). It is straightforward to verify that the first and second lines on the RHS of~\eqref{btensorexpandedfinal} reproduce the corresponding lines on the RHS of~\eqref{eq:B_recursion_algebraic}.


\section{Feynman rules at leading order in \(1/n\)}
\label{app:simplified_feynman_rules}
We present an alternative formulation of the Feynman rules introduced in Section~\ref{sec:feynman-diagrams}, valid at leading order in \(1/n\), which closely parallels the diagrammatic framework of~\cite{guillen2025} in the Gaussian setting. For simplicity, we restrict attention to the statistics of preactivations and the NTK.
We adopt the same conventions and notation as in~\cite{guillen2025}:

\begin{enumerate}
\item Preactivations and NTKs are represented by external lines, as illustrated below.
\begin{align}
    &z_{\alpha} \equiv
    \begin{tikzpicture}[baseline=-0.1cm]
      \begin{feynman}
        \vertex (l) {};
        \vertex[right = 0pt of l, dot, minimum size=3pt, label = {left: {\footnotesize $\alpha^{}$}}] (x1) {};
        \vertex[right = 30pt of x1, dot, minimum size=0pt] (b) {};
        \diagram*{
          (x1),
          (x1) -- [inner sep = 4pt] (b) 
        };
      \end{feynman}
    \end{tikzpicture}
    \qquad\widehat{\Delta \Theta}_{\textcolor{color1}{\alpha\beta}} \equiv
    \begin{tikzpicture}[baseline=0.05cm]
      \begin{feynman}
        \vertex (l) {};
        \vertex[right = 0pt of l, color1, dot, minimum size=3pt, label = {left: {\footnotesize $\textcolor{color1}{\alpha^{}}$}}] (x1) {};
        \vertex[above = 8pt of l, color1, dot, minimum size=3pt, label = {left: {\footnotesize $\textcolor{color1}{\beta^{}}$}}] (x2) {};
        \vertex[right = 30pt of x1, dot, minimum size=0pt] (b) {};
        \vertex[right = 30pt of x2, dot, minimum size=0pt] (bb) {};
        \diagram*{
          (x1), (x2),
          (x1) -- [color1, ghost, inner sep = 4pt] (b),
          (x2) -- [color1, ghost, inner sep = 4pt] (bb)
        };
      \end{feynman}
    \end{tikzpicture}
  \end{align}
where a colored line corresponds to a single NTK label.

\item The propagator is represented by
\begin{equation}
    \langle\hspace{3mm}\rangle_{K^{(\ell)}} \equiv
    \begin{tikzpicture}[baseline=-0.1cm]
      \begin{feynman}
        \tikzfeynmanset{every blob = {/tikz/fill=white!50, /tikz/minimum size=15pt}}
        \vertex (l) {};
        \vertex[above = 0pt of l, dot, minimum size=0pt] (v12) {};
        \vertex[above = 0pt of v12, blob] (b) {};
        \vertex[above = 0pt of b, dot, minimum size=0pt] (v34) {};
        \diagram*{
          (v12) -- (b) -- (v34), 
        };
      \end{feynman}
    \end{tikzpicture}   
  \end{equation}
  where $\langle \hspace{3mm}\rangle_{K^{(\ell)}}$ denotes a zero-mean Gaussian expectation with covariance specified by $K^{(\ell)}$. The expectation value is taken over the decorations of the internal lines attached to the propagator, which satisfies the set of selection rules (a)-(f) listed in Rule (6) of Section~\ref{sec:feynman-diagrams}.

  \item Cubic vertices are defined as in~\cite{guillen2025}. Explicitly,
   \begin{align}
 \begin{tikzpicture}[baseline=(b)]
      \begin{feynman}
        \vertex (l) {};
        \vertex[below = 15pt of l, dot, minimum size=3pt, label = {left: {\footnotesize $\alpha^{}$}}] (x1) {};
        \vertex[above = 15pt of l, dot, minimum size=3pt, label = {left: {\footnotesize $\beta^{}$}}] (x2) {};
        \vertex[right = 10pt of l, dot, minimum size=0pt] (v12) {};
        \vertex[right = 33pt of v12, dot, minimum size=0pt] (b) {};
        \diagram*{
          (x1) --  (v12) --  (x2),
          (v12) -- [photon, edge label = {\scriptsize \;$\widehat{\Delta G}_{i,\alpha\beta}^{(\ell)}$}, inner sep = 4pt] (b) 
        };
      \end{feynman}
    \end{tikzpicture}\hspace{-0.1cm} \sim \frac{C_{W}}{n}    
  \hspace{0.1cm} ,& \quad 
    \begin{tikzpicture}[baseline=(b)]
      \begin{feynman}
        \vertex (l) {};
        \vertex[below = 15pt of l, color1, dot, minimum size=3pt, label = {left: {\footnotesize $\textcolor{color1}{\alpha^{}}$}}] (x1) {};
        \vertex[above = 15pt of l, color1, dot, minimum size=3pt, label = {left: {\footnotesize $\textcolor{color1}{\beta^{}}$}}] (x2) {};
        \vertex[right = 10pt of l, dot, minimum size=0pt] (v12) {};
        \vertex[right = 30pt of v12, dot, minimum size=0pt] (b) {};
        \diagram*{
          (x1) --  [color1, ghost] (v12) --  [color1, ghost] (x2),
          (v12) -- [black, photon, edge label = {\scriptsize \;$\widehat{\Delta \Omega}_{i,\alpha\beta}^{(\ell+1)}$}, inner sep = 4pt] (b) 
        };
      \end{feynman}
    \end{tikzpicture}\hspace{-0.1cm} \sim \frac{1}{n}     
    \hspace{0.1cm} ,\quad
    \begin{tikzpicture}[baseline=(b)]
      \begin{feynman}
        \vertex (l) {};
        \vertex[below = 15pt of l, dot, minimum size=3pt, label = {left: {\footnotesize $\alpha^{}$}}] (x1) {};
        \vertex[above = 15pt of l, color1, dot, minimum size=3pt, label = {left: {\footnotesize $\textcolor{color1}{\beta^{}}$}}] (x2) {};
        \vertex[right = 10pt of l, dot, minimum size=0pt] (v12) {};
        \vertex[right = 33pt of v12, dot, minimum size=0pt] (b) {};
        \diagram*{
          (x1) --  (v12) --  [color1, ghost] (x2),
          (v12) -- [color1blackghost, edge label = {\scriptsize \;$\sigma^{(\ell)}_{i,\alpha}\sigma'^{(\ell)}_{i,\textcolor{color1}{\beta}}$}, inner sep = 4pt] (b) 
        };
      \end{feynman}
    \end{tikzpicture}\hspace{-0.1cm} \sim \frac{C_{W}}{n}  
    \hspace{0.1cm} , \nonumber\\
    \begin{tikzpicture}[baseline=(b)]
      \begin{feynman}
        \vertex (l) {};
        \vertex[below = 15pt of l, color1, dot, minimum size=3pt, label = {left: {\footnotesize $\textcolor{color1}{\alpha^{}}$}}] (x1) {};
        \vertex[above = 15pt of l, color1, dot, minimum size=3pt, label = {left: {\footnotesize $\textcolor{color1}{\beta^{}}$}}] (x2) {};
        \vertex[right = 10pt of l, dot, minimum size=0pt] (v12) {};
        \vertex[right = 30pt of v12, dot, minimum size=0pt] (b) {};
        \diagram*{
          (x1) --  [color1, ghost] (v12) --  [color1, ghost] (x2),
          (v12) -- [color1doubghost, edge label = {\scriptsize \;$\sigma'^{(\ell)}_{i,\textcolor{color1}{\alpha}}\sigma'^{(\ell)}_{i,\textcolor{color1}{\beta}}$}, inner sep = 4pt] (b) 
        };
      \end{feynman}
    \end{tikzpicture}\hspace{-0.1cm} \sim \frac{C_{W}}{n} \hspace{0.1cm}  ,& \quad
    \begin{tikzpicture}[baseline=(b)]
      \begin{feynman}
        \vertex (l) {};
        \vertex[below = 15pt of l, color1, dot, minimum size=3pt, label = {left: {\footnotesize $\textcolor{color1}{\alpha^{}}$}}] (x1) {};
        \vertex[above = 15pt of l, color2, dot, minimum size=3pt, label = {left: {\footnotesize $\textcolor{color2}{\beta^{}}$}}] (x2) {};
        \vertex[right = 10pt of l, dot, minimum size=0pt] (v12) {};
        \vertex[right = 30pt of v12, dot, minimum size=0pt] (b) {};
        \diagram*{
          (x1) --  [color1, ghost] (v12) --  [color2, ghost] (x2),
          (v12) -- [color2color1ghost, edge label = {\scriptsize \;$\sigma'^{(\ell)}_{i,\textcolor{color1}{\alpha}}\sigma'^{(\ell)}_{i,\textcolor{color2}{\beta}}$}, inner sep = 4pt] (b) 
        };
      \end{feynman}
    \end{tikzpicture}\hspace{-0.1cm} \sim \frac{C_{W}}{n} \label{feynmanrulescubicapp}
  \end{align}
   where $\widehat{\Omega}^{(\ell+1)}_{i,\alpha\beta} = \sigma^{(\ell)}_{i,\alpha}\sigma^{(\ell)}_{i,\beta} + C_{W}\Theta_{\alpha\beta}^{(\ell)}\sigma'^{(\ell)}_{i,\alpha}\sigma'^{(\ell)}_{i,\beta}$ and $\widehat{\Delta\Omega}_{i,\alpha\beta}^{(\ell+1)}=\widehat{\Omega}_{i,\alpha\beta}^{(\ell+1)}-\langle \widehat{\Omega}_{i,\alpha\beta}^{(\ell+1)} \rangle_{K^{(\ell)}}$. Lines without a dot at one end are internal lines. 
   
   \item Quartic vertices are defined analogously, following~\cite{guillen2025}. Explicitly,
  \begin{align}
     \begin{tikzpicture}[baseline=(b)]
      \begin{feynman}
        \vertex (l) {};
        \vertex[below = 15pt of l, dot, minimum size=3pt, label = {left: {\footnotesize $\alpha_{1}$}}] (x1) {};
        \vertex[above = 15pt of l, dot, minimum size=3pt, label = {left: {\footnotesize $\alpha_{2}$}}] (x2) {};
        \vertex[right = 20pt of l, quarticblob] (b) {};
        \vertex[below = 25pt of b] {\scriptsize $\frac{1}{n\*}V_{\alpha_{1}\alpha_{2}\alpha_{3}\alpha_{4}}^{(\ell+1)}$}; 
        \vertex[right = 20pt of b] (r) {};
        \vertex[above = 15pt of r, dot, minimum size=3pt, label = {right: {\footnotesize $\alpha_{3}$}}] (x3) {};
        \vertex[below = 15pt of r, dot, minimum size=3pt, label = {right: {\footnotesize $\alpha_{4}$}}] (x4) {};
        \diagram*{
          (x1) -- (b) -- (x2), 
          (x3) -- (b) -- (x4)
        };
      \end{feynman}
    \end{tikzpicture}
    \hspace{0.5cm} , & \quad
    \begin{tikzpicture}[baseline=(b)]
      \begin{feynman}
        \vertex (l) {};
        \vertex[below = 15pt of l, dot, minimum size=3pt, label = {left: {\footnotesize $\alpha_{1}$}}] (x1) {};
        \vertex[above = 15pt of l, dot, minimum size=3pt, label = {left: {\footnotesize $\alpha_{2}$}}] (x2) {};
        \vertex[right = 20pt of l, quarticblob] (b) {};
        \vertex[below = 25pt of b] {\scriptsize $\frac{1}{n\*}D_{\alpha_{1}\alpha_{2}\textcolor{color1}{\alpha_{3}\alpha_{4}}}^{(\ell+1)}$}; 
        \vertex[right = 20pt of b] (r) {};
        \vertex[above = 15pt of r, dot, color1, minimum size=3pt, label = {right: {\footnotesize $\textcolor{color1}{\alpha_{3}}$}}] (x3) {};
        \vertex[below = 15pt of r, dot, color1, minimum size=3pt, label = {right: {\footnotesize $\textcolor{color1}{\alpha_{4}}$}}] (x4) {};
        \diagram*{
          (x1) -- (b) -- (x2), 
          (x3) -- [color1, ghost] (b) -- [color1, ghost] (x4)
        };
      \end{feynman}
    \end{tikzpicture}
    \hspace{0.5cm} , \quad
    \begin{tikzpicture}[baseline=(b)]
      \begin{feynman}
        \vertex (l) {};
        \vertex[below = 15pt of l, dot, minimum size=3pt, label = {left: {\footnotesize $\alpha_{1}$}}] (x1) {};
        \vertex[above = 15pt of l, dot, color1, minimum size=3pt, label = {left: {\footnotesize $\textcolor{color1}{\alpha_{3}}$}}] (x2) {};
        \vertex[right = 20pt of l, quarticblob] (b) {};
        \vertex[below = 25pt of b] {\scriptsize $\frac{1}{n\*}F_{\alpha_{1}\textcolor{color1}{\alpha_{3}}\alpha_{2}\textcolor{color1}{\alpha_{4}}}^{(\ell+1)}$}; 
        \vertex[right = 20pt of b] (r) {};
        \vertex[above = 15pt of r, dot, minimum size=3pt, label = {right: {\footnotesize $\alpha_{2}$}}] (x3) {};
        \vertex[below = 15pt of r, dot, color1, minimum size=3pt, label = {right: {\footnotesize $\textcolor{color1}{\alpha_{4}}$}}] (x4) {};
        \diagram*{
          (x1) -- (b) -- [color1, ghost] (x2), 
          (x3) -- (b) -- [color1, ghost] (x4)
        };
      \end{feynman}
    \end{tikzpicture}
    \hspace{0.5cm} , \nonumber\\
    \begin{tikzpicture}[baseline=(b)]
      \begin{feynman}
        \vertex (l) {};
        \vertex[below = 15pt of l, dot, color2, minimum size=3pt, label = {left: {\footnotesize $\textcolor{color2}{\alpha_{1}}$}}] (x1) {};
        \vertex[above = 15pt of l, dot, color2, minimum size=3pt, label = {left: {\footnotesize $\textcolor{color2}{\alpha_{2}}$}}] (x2) {};
        \vertex[right = 20pt of l, quarticblob] (b) {};
        \vertex[below = 25pt of b] {\scriptsize $\frac{1}{n\*}A_{\textcolor{color2}{\alpha_{1}}\textcolor{color2}{\alpha_{2}}\textcolor{color1}{\alpha_{3}}\textcolor{color1}{\alpha_{4}}}^{(\ell+1)}$}; 
        \vertex[right = 20pt of b] (r) {};
        \vertex[above = 15pt of r, dot, color1, minimum size=3pt, label = {right: {\footnotesize $\textcolor{color1}{\alpha_{3}}$}}] (x3) {};
        \vertex[below = 15pt of r, dot, color1, minimum size=3pt, label = {right: {\footnotesize $\textcolor{color1}{\alpha_{4}}$}}] (x4) {};
        \diagram*{
          (x1) -- [color2, ghost] (b) -- [color2, ghost] (x2), 
          (x3) -- [color1, ghost] (b) -- [color1, ghost] (x4)
        };
      \end{feynman}
    \end{tikzpicture}
    \hspace{0.5cm} ,& \quad
    \begin{tikzpicture}[baseline=(b)]
      \begin{feynman}
        \vertex (l) {};
        \vertex[below = 15pt of l, dot, color2, minimum size=3pt, label = {left: {\footnotesize $\textcolor{color2}{\alpha_{1}}$}}] (x1) {};
        \vertex[above = 15pt of l, dot, color1, minimum size=3pt, label = {left: {\footnotesize $\textcolor{color1}{\alpha_{3}}$}}] (x2) {};
        \vertex[right = 20pt of l, quarticblob] (b) {};
        \vertex[below = 25pt of b] {\scriptsize $\frac{1}{n\*}B_{\textcolor{color2}{\alpha_{1}}\textcolor{color1}{\alpha_{3}}\textcolor{color2}{\alpha_{2}}\textcolor{color1}{\alpha_{4}}}^{(\ell+1)}$}; 
        \vertex[right = 20pt of b] (r) {};
        \vertex[above = 15pt of r, dot, color2, minimum size=3pt, label = {right: {\footnotesize $\textcolor{color2}{\alpha_{2}}$}}] (x3) {};
        \vertex[below = 15pt of r, dot, color1, minimum size=3pt, label = {right: {\footnotesize $\textcolor{color1}{\alpha_{4}}$}}] (x4) {};
        \diagram*{
          (x1) -- [color2, ghost] (b) -- [color1, ghost] (x2), 
          (x3) -- [color2, ghost] (b) -- [color1, ghost] (x4)
        };
      \end{feynman}
    \end{tikzpicture}
    \label{feynmanrulesquarticapp}
  \end{align}

 \item The orthogonal diagram describing the $2m$-point cumulant for the reference pairing $\pi=(12)(34)\cdots(2m-1\,\,2m)$ is obtained from the Gaussian one, using the Feynman rules \eqref{feynmanrulescubicapp} and \eqref{feynmanrulesquarticapp}, by summing over all reconnections of the external labels:
\begin{equation}\label{leadingorderorthogonal}
V^{\mathrm{orth}}_{2m,\pi}
=
V^{\mathrm{gauss}}_{2m,\pi}
+
\sum_{\lambda\vdash m,\ \lambda\neq (1,\dots,1)}
\frac{\beta_\lambda}{n^{\,m-\ell(\lambda)}}
\sum_{\tau\in\mathcal C_\lambda(\pi)}
\prod_{j=1}^{\ell(\lambda)}
V^{\mathrm{conn}}_{2\lambda_j}
\end{equation}
Here:
\begin{itemize}
    \item $\lambda=(\lambda_1,\dots,\lambda_{\ell(\lambda)})\vdash m$ is a partition of $m$, and $\ell(\lambda)$ is its number of parts;
    \item $\mathcal C_\lambda(\pi)$ denotes the set of pairings $\tau$ such that, when compared with $\pi$, the external labels split into blocks of sizes $2\lambda_1,\dots,2\lambda_{\ell(\lambda)}$;
    \item each block of size $2\lambda_j$ contributes a connected correlator $V^{\mathrm{conn}}_{2\lambda_j}$, with $V^{\mathrm{conn}}_2$ given by 
    \begin{align}\label{orthogonalityvertexsimp}
    \begin{tikzpicture}[baseline=(b)]
      \begin{feynman}
        \vertex (l) {};
        \vertex[below = 15pt of l, dot, minimum size=3pt, label = {left: {\footnotesize $\alpha^{}$}}] (x1) {};
        \vertex[above = 15pt of l, dot, minimum size=3pt, label = {left: {\footnotesize $\beta^{}$}}] (x2) {};
        \vertex[right = 10pt of l, dot, minimum size=0pt] (v12) {};
        \vertex[right = 33pt of v12, dot, minimum size=0pt] (b) {};
        \diagram*{
          (x1) --  (v12) --  (x2),
          (v12) -- [photon, edge label = {\scriptsize \;$\sigma_{i,\alpha}^{(\ell)}\sigma_{i,\beta}^{(\ell)}$}, inner sep = 4pt] (b) 
        };
      \end{feynman} 
      \end{tikzpicture}\hspace{-0.1cm} \sim \frac{C_{W}}{n} \quad, \quad    
    \begin{tikzpicture}[baseline=(b)]
      \begin{feynman}
        \vertex (l) {};
        \vertex[below = 15pt of l, color1, dot, minimum size=3pt, label = {left: {\footnotesize $\textcolor{color1}{\alpha}^{}$}}] (x1) {};
        \vertex[above = 15pt of l, color1, dot, minimum size=3pt, label = {left: {\footnotesize $\textcolor{color1}{\beta}^{}$}}] (x2) {};
        \vertex[right = 10pt of l, dot, minimum size=0pt] (v12) {};
        \vertex[right = 40pt of v12, dot, minimum size=0pt] (b) {};
        \diagram*{
          (x1) --  [color1, ghost] (v12) --  [color1, ghost](x2),
          (v12) -- [photon, edge label = {\scriptsize \;$\Theta^{(\ell)}_{\alpha\beta}\sigma'^{(\ell)}_{i,\alpha}\sigma'^{(\ell)}_{i,\beta}$}, inner sep = 4pt] (b) 
        };
      \end{feynman} 
      \end{tikzpicture}\hspace{-0.1cm} \sim \frac{C_{W}}{n}
    \end{align}
    \item  $\beta_\lambda$ is the leading coefficient of the orthogonal Weingarten function for pairings in the class $\lambda$, namely
    \begin{equation}
    \mathcal W[\tau,\pi]
    =
    \frac{\beta_\lambda}{n^{\,m-\ell(\lambda)}}+O\!\left(n^{-(m-\ell(\lambda)+1)}\right),
    \qquad \tau\in\mathcal C_\lambda(\pi).
    \end{equation}
\end{itemize}
\vspace{0.5em}
Its general expression is given by
\begin{align}
\beta_\lambda
=
\prod_{i=1}^{\ell}
(-1)^{\lambda_i - 1} \, C_{\lambda_i - 1}
\end{align}
where $C_{k}$ is the $k$-th Catalan number, $C_{k} = \frac{1}{k+1}\binom{2k}{k}$. For low orders one finds
\begin{align}
&\beta_{(2)}=-1, \nonumber\\
&\beta_{(2,1)}=-1,\qquad \beta_{(3)}=2, \nonumber\\
&\beta_{(2,1,1)}=-1,\qquad \beta_{(2,2)}=1,\qquad \beta_{(3,1)}=2,\qquad \beta_{(4)}=-5.
\end{align}
\end{enumerate}
We next apply these rules to derive the recursion relations for the tensors $F$ and $B$, recovering the expressions previously obtained from the general Feynman rules of Section~\ref{sec:feynman-diagrams}, namely~\eqref{ftensoratorderonen} and~\eqref{btensorexpandedfinal}.

The layer-$(\ell+1)$ tensor on the LHS of~\eqref{eq:F} is represented by the third quartic vertex in~\eqref{feynmanrulesquarticapp}. We enumerate all Feynman diagrams compatible with these external lines at order \(1/n\). Since the fifth Feynman rule mixes pairing orderings, the analysis splits into two cases.

For the ordering 1324, the Gaussian contraction scheme applies: the only admissible cubic vertex connecting a dotted (NTK) line with a solid (preactivation) line is the third vertex in~\eqref{feynmanrulescubicapp}. This introduces two internal solid-dashed lines weighted by ${\sigma}_{i}^{(\ell)}{\sigma'}_{i}^{(\ell)}$, which may carry either identical or distinct channel indices. In the former case, the lines are connected by a single propagator; in the latter, by two propagators and an internal $F$-tensor at layer $\ell$.

For all other orderings, the fifth Feynman rule applies and requires subtracting the contributions associated with the orderings 1234 and 1423. The pairings $(12)(34)$ and $(14)(23)$, when composed with the reference pairing $(13)(24)$, yield the partition $\lambda=(2)$, with length $\ell(\lambda)=1$. Consequently, these contributions scale as \(1/n\), with coefficient $\beta_{(2)}=-1$. Each pairing contributes a factor $V_{2}$, leading to a product of two such vertices. By the selection rules (a)-(f) of Section~\ref{sec:feynman-diagrams}, the $(14)(23)$ term vanishes at order \(1/n\) due to violation of color-line preservation. The only non-vanishing contribution therefore arises from $(12)(34)$, yielding a product of two cubic vertices: one involving NTK lines and one involving two preactivation lines, as specified in~\eqref{orthogonalityvertexsimp}.

The diagrams corresponding to~\eqref{eq:F} are thus given by
\begin{align}
  \begin{tikzpicture}[baseline=(b)]
      \begin{feynman}
        \vertex (l) {};
        \vertex[below = 15pt of l, dot, minimum size=3pt, label = {left: {\footnotesize $1$}}] (x1) {};
        \vertex[above = 15pt of l, dot, color1, minimum size=3pt, label = {left: {\footnotesize $\textcolor{color1}{3}$}}] (x2) {};
        \vertex[right = 20pt of l, quarticblob] (b) {};
        \vertex[right = 20pt of b] (r) {};
        \vertex[above = 15pt of r, dot, minimum size=3pt, label = {right: {\footnotesize $2$}}] (x3) {};
        \vertex[below = 15pt of r, dot, color1, minimum size=3pt, label = {right: {\footnotesize $\textcolor{color1}{4}$}}] (x4) {};
        \diagram*{
          (x1) -- (b) -- [color1, ghost] (x2), 
          (x3) -- (b) -- [color1, ghost] (x4)
        };
      \end{feynman}
    \end{tikzpicture}
\!\!&= 
\!\!\sum_{j}\!\!
\begin{tikzpicture}[baseline=(b)]
\tikzfeynmanset{every blob = {/tikz/fill=white!50, /tikz/minimum size=15pt}}
\begin{feynman}
\vertex (l) {};
\vertex[below = 15pt of l, dot, minimum size=3pt, label = {left: {\footnotesize $1$}}] (x1) {};
\vertex[above = 15pt of l, color1, dot, minimum size=3pt, label = {left: {\footnotesize $\textcolor{color1}{3}$}}] (x2) {};
\vertex[right = 8pt of l, dot, minimum size=0pt] (v12) {};
\vertex[right = 30pt of v12, blob] (b) {};
\vertex[right = 30pt of b, dot, minimum size=0pt] (v34) {};
\vertex[right = 8pt of v34] (r) {};
\vertex[above = 15pt of r, dot, minimum size=3pt, label = {right: {\footnotesize $2$}}] (x3) {};
\vertex[below = 15pt of r, color1, dot, minimum size=3pt, label = {right: {\footnotesize $\textcolor{color1}{4}$}}] (x4) {};
\diagram*{
	(x1) -- (v12) -- [color1, ghost] (x2),
	(v12) -- [color1blackghost, edge label = {\scriptsize \;$\sigma_{j}\sigma'_{j}$}, inner sep = 4pt] (b) -- [blackcolor1ghost, edge label = {\scriptsize \,$\sigma_{j}\sigma'_{j}$}, inner sep = 4pt] (v34), 
	 (x3) -- (v34) -- [color1, ghost] (x4)
};
\end{feynman}
\end{tikzpicture}
- 
\frac{1}{n}\left(
\sum_{j_1, j_2}\!
\begin{tikzpicture}[baseline=(b)]
  \tikzfeynmanset{every blob = {/tikz/fill=white!50, /tikz/minimum size=15pt}}
  \begin{feynman}
    \vertex (l) {};
    \vertex[below = 15pt of l, dot, minimum size=3pt, label = {left: {\footnotesize $1$}}] (x1) {};
    \vertex[above = 15pt of l, , dot, minimum size=3pt, label = {left: {\footnotesize $2$}}] (x2) {};
    \vertex[right = 8pt of l, dot, minimum size=0pt] (v12) {};
    \vertex[right = 35pt of v12, blob] (b12) {};
    \vertex[right = 5pt of b12, dot, minimum size=0pt] (w12) {};
    \vertex[above = 1pt of w12, label = {above: {\scriptsize \hspace{-10pt} $ $}}] (w12u) {};
    \vertex[below = 8pt of w12] (w12d) {};
    \vertex[left = 10pt of w12d] (w12dl) {};
    \tikzfeynmanset{every blob = {/tikz/fill=gray!50, /tikz/minimum size=0pt}}
    \vertex[right = 3pt of w12, blob , minimum size = 0pt] (b) {};
    \vertex[right = 3pt of b, dot, minimum size=0pt] (w34) {};
    \tikzfeynmanset{every blob = {/tikz/fill=white!50, /tikz/minimum size=15pt}}
    \vertex[right = 16pt of w34, blob] (b34) {};
    \vertex[above = 1pt of w34, label = {above: {\scriptsize \hspace{10pt} $ $}}] (w34u) {};
    \vertex[below = 8pt of w34] (w34d) {};
    \vertex[right = 10pt of w34d] (w34dr) {};
    \vertex[right = 35pt of b34, dot, minimum size=0pt] (v34) {};
    \vertex[right = 8pt of v34] (r) {};
    \vertex[above = 15pt of r, color1, dot, minimum size=3pt, label = {right: {\footnotesize $\textcolor{color1}{3}$}}] (x3) {};
    \vertex[below = 15pt of r, color1, dot, minimum size=3pt, label = {right: {\footnotesize $\textcolor{color1}{4}$}}] (x4) {};
    \diagram*{
      (x1) -- (v12) -- (x2),
      (v12) -- [photon, edge label = {\scriptsize \;$\sigma_{j_{1}}\sigma_{j_{1}}$}, inner sep = 4pt] (b12),
      (b34) -- [photon, edge label = {\scriptsize \,$\sigma'_{j_{2}}\sigma'_{j_{2}}$}, inner sep = 4pt] (v34),
      (x3) --  [color1, ghost] (v34) -- [color1, ghost] (x4)
    };
  \end{feynman}
\end{tikzpicture}
\right) \nonumber\\
&  + \sum_{j_1, j_2}\!
\begin{tikzpicture}[baseline=(b)]
  \tikzfeynmanset{every blob = {/tikz/fill=white!50, /tikz/minimum size=15pt}}
  \begin{feynman}
    \vertex (l) {};
    \vertex[below = 15pt of l, dot, minimum size=3pt, label = {left: {\footnotesize $1$}}] (x1) {};
    \vertex[above = 15pt of l, color1, dot, minimum size=3pt, label = {left: {\footnotesize $\textcolor{color1}{3}$}}] (x2) {};
    \vertex[right = 8pt of l, dot, minimum size=0pt] (v12) {};
    \vertex[right = 35pt of v12, blob] (b12) {};
    \vertex[right = 16pt of b12, dot, minimum size=0pt] (w12) {};
    \vertex[above = 1pt of w12, label = {above: {\scriptsize \hspace{-10pt} $\phi_{j_1}$}}] (w12u) {};
    \vertex[below = 8pt of w12] (w12d) {};
    \vertex[left = 10pt of w12d] (w12dl) {};
    \tikzfeynmanset{every blob = {/tikz/fill=gray!50, /tikz/minimum size=15pt}}
    \vertex[right = 3pt of w12, blob , minimum size = 6pt] (b) {};
    \vertex[right = 3pt of b, dot, minimum size=0pt] (w34) {};
    \tikzfeynmanset{every blob = {/tikz/fill=white!50, /tikz/minimum size=15pt}}
    \vertex[right = 16pt of w34, blob] (b34) {};
    \vertex[above = 1pt of w34, label = {above: {\scriptsize \hspace{10pt} $\phi_{j_2}$}}] (w34u) {};
    \vertex[below = 8pt of w34] (w34d) {};
    \vertex[right = 10pt of w34d] (w34dr) {};
    \vertex[right = 35pt of b34, dot, minimum size=0pt] (v34) {};
    \vertex[right = 8pt of v34] (r) {};
    \vertex[above = 15pt of r, dot, minimum size=3pt, label = {right: {\footnotesize $2$}}] (x3) {};
    \vertex[below = 15pt of r, color1, dot, minimum size=3pt, label = {right: {\footnotesize $\textcolor{color1}{4}$}}] (x4) {};
    \diagram*{
      (x1) -- (v12) -- [color1, ghost] (x2),
      (v12) -- [color1blackghost, edge label = {\scriptsize \;$\sigma_{j_{1}}\sigma'_{j_{1}}$}, inner sep = 4pt] (b12) -- [color1, ghost, quarter left] (w12) -- [quarter left] (b12),
      (b34) -- [color1, ghost, quarter left] (w34) -- [quarter left] (b34) -- [blackcolor1ghost, edge label = {\scriptsize \,$\sigma_{j_{2}}\sigma'_{j_{2}}$}, inner sep = 4pt] (v34),
      (x3) --  (v34) -- [color1, ghost] (x4)
    };
    \draw [decoration={brace}, decorate] (w34dr) -- (w12dl) node [pos=0.5, below = 1pt] {\scriptsize $\frac{1}{n\*}F_4^{(\ell)}$};
  \end{feynman}
\end{tikzpicture} \label{ftensorfeynmandiagram}
\end{align}
The sums over channel indices are included in~\eqref{ftensorfeynmandiagram}. These are the only diagrams contributing at order \(1/n\); all others are either higher order or excluded by the selection rules.


The recursion relation for the tensor $B$ at order \(1/n\) follows analogously. In the Gaussian sector, ordering 1324, the only compatible cubic vertex is the fifth vertex in~\eqref{feynmanrulescubicapp}, yielding two diagrams: one with identical channel indices connected by a single propagator, and one with distinct indices connected by two propagators and an internal $B$-tensor at layer $\ell$. For the remaining orderings, the 1423 contribution vanishes by color-line preservation. The 1234 ordering, constructed from the cubic vertex in~\eqref{orthogonalityvertexsimp}, instead yields a nonvanishing contribution. The associated prefactor is determined by the cycle structure of the pairing $(12)(34)$ relative to $(13)(24)$, namely $\lambda (2)$ with $\beta_{(2)}=-1$. We therefore obtain
\begin{align}
  \begin{tikzpicture}[baseline=(b)]
      \begin{feynman}
        \vertex (l) {};
        \vertex[below = 15pt of l, dot, color1, minimum size=3pt, label = {left: {\footnotesize $\textcolor{color1}{1}$}}] (x1) {};
        \vertex[above = 15pt of l, dot, color2, minimum size=3pt, label = {left: {\footnotesize $\textcolor{color2}{3}$}}] (x2) {};
        \vertex[right = 20pt of l, quarticblob] (b) {};
        \vertex[right = 20pt of b] (r) {};
        \vertex[above = 15pt of r, dot, color1, minimum size=3pt, label = {right: {\footnotesize $\textcolor{color1}{2}$}}] (x3) {};
        \vertex[below = 15pt of r, dot, color2, minimum size=3pt, label = {right: {\footnotesize $\textcolor{color2}{4}$}}] (x4) {};
        \diagram*{
          (x1) -- [color1, ghost](b) -- [color2, ghost] (x2), 
          (x3) -- [color1, ghost](b) -- [color2, ghost] (x4)
        };
      \end{feynman}
    \end{tikzpicture}
\!\!&= 
\!\!\sum_{j}\!\!
\begin{tikzpicture}[baseline=(b)]
\tikzfeynmanset{every blob = {/tikz/fill=white!50, /tikz/minimum size=15pt}}
\begin{feynman}
\vertex (l) {};
\vertex[below = 15pt of l, dot, color1, minimum size=3pt, label = {left: {\footnotesize $\textcolor{color1}{1}$}}] (x1) {};
\vertex[above = 15pt of l, color2, dot, minimum size=3pt, label = {left: {\footnotesize $\textcolor{color2}{3}$}}] (x2) {};
\vertex[right = 8pt of l, dot, minimum size=0pt] (v12) {};
\vertex[right = 30pt of v12, blob] (b) {};
\vertex[right = 30pt of b, dot, minimum size=0pt] (v34) {};
\vertex[right = 8pt of v34] (r) {};
\vertex[above = 15pt of r, dot, color1, minimum size=3pt, label = {right: {\footnotesize $\textcolor{color1}{2}$}}] (x3) {};
\vertex[below = 15pt of r, dot, color2, minimum size=3pt, label = {right: {\footnotesize $\textcolor{color2}{4}$}}] (x4) {};
\diagram*{
	(x1) -- [color1, ghost] (v12) -- [color2, ghost] (x2),
	(v12) -- [color2color1ghost, edge label = {\scriptsize \;$\sigma'_{j}\sigma'_{j}$}, inner sep = 4pt] (b) -- [color1color2ghost, edge label = {\scriptsize \,$\sigma'_{j}\sigma'_{j}$}, inner sep = 4pt] (v34), 
	 (x3) -- [color1, ghost](v34) -- [color2, ghost] (x4)
};
\end{feynman}
\end{tikzpicture} 
+
\sum_{j_1, j_2}\!
\begin{tikzpicture}[baseline=(b)]
  \tikzfeynmanset{every blob = {/tikz/fill=white!50, /tikz/minimum size=15pt}}
  \begin{feynman}
    \vertex (l) {};
    \vertex[below = 15pt of l, dot, color1, minimum size=3pt, label = {left: {\footnotesize $\textcolor{color1}{1}$}}] (x1) {};
    \vertex[above = 15pt of l, color2, dot, minimum size=3pt, label = {left: {\footnotesize $\textcolor{color2}{3}$}}] (x2) {};
    \vertex[right = 8pt of l, dot, minimum size=0pt] (v12) {};
    \vertex[right = 35pt of v12, blob] (b12) {};
    \vertex[right = 16pt of b12, dot, minimum size=0pt] (w12) {};
    \vertex[above = 1pt of w12, label = {above: {\scriptsize \hspace{-10pt} $ $}}] (w12u) {};
    \vertex[below = 8pt of w12] (w12d) {};
    \vertex[left = 10pt of w12d] (w12dl) {};
    \tikzfeynmanset{every blob = {/tikz/fill=gray!50, /tikz/minimum size=15pt}}
    \vertex[right = 3pt of w12, blob , minimum size = 6pt] (b) {};
    \vertex[right = 3pt of b, dot, minimum size=0pt] (w34) {};
    \tikzfeynmanset{every blob = {/tikz/fill=white!50, /tikz/minimum size=15pt}}
    \vertex[right = 16pt of w34, blob] (b34) {};
    \vertex[above = 1pt of w34, label = {above: {\scriptsize \hspace{10pt} $ $}}] (w34u) {};
    \vertex[below = 8pt of w34] (w34d) {};
    \vertex[right = 10pt of w34d] (w34dr) {};
    \vertex[right = 35pt of b34, dot, minimum size=0pt] (v34) {};
    \vertex[right = 8pt of v34] (r) {};
    \vertex[above = 15pt of r, dot, color1, minimum size=3pt, label = {right: {\footnotesize $\textcolor{color1}{2}$}}] (x3) {};
    \vertex[below = 15pt of r, dot, color2, minimum size=3pt, label = {right: {\footnotesize $\textcolor{color2}{4}$}}] (x4) {};
    \diagram*{
      (x1) -- [color1, ghost] (v12) -- [color2, ghost] (x2),
      (v12) -- [color2color1ghost, edge label = {\scriptsize \;$\sigma_{j_{1}}\sigma'_{j_{1}}$}, inner sep = 4pt] (b12) -- [color2, ghost, quarter left] (w12) -- [color1, ghost, quarter left] (b12),
      (b34) -- [color2, ghost, quarter left] (w34) -- [color1, ghost, quarter left] (b34) -- [color1color2ghost, edge label = {\scriptsize \,$\sigma_{j_{2}}\sigma'_{j_{2}}$}, inner sep = 4pt] (v34),
      (x3) -- [color1, ghost] (v34) -- [color2, ghost] (x4)
    };
    \draw [decoration={brace}, decorate] (w34dr) -- (w12dl) node [pos=0.5, below = 1pt] {\scriptsize $\frac{1}{n\*}B_4^{(\ell)}$};
  \end{feynman}
\end{tikzpicture}\nonumber\\
&- 
\frac{1}{n}\left(
\sum_{j_1, j_2}\!
\begin{tikzpicture}[baseline=(b)]
  \tikzfeynmanset{every blob = {/tikz/fill=white!50, /tikz/minimum size=15pt}}
  \begin{feynman}
    \vertex (l) {};
    \vertex[below = 15pt of l, dot, color1, minimum size=3pt, label = {left: {\footnotesize $\textcolor{color1}{1}$}}] (x1) {};
    \vertex[above = 15pt of l, , dot, color1, minimum size=3pt, label = {left: {\footnotesize $\textcolor{color1}{2}$}}] (x2) {};
    \vertex[right = 8pt of l, dot, minimum size=0pt] (v12) {};
    \vertex[right = 35pt of v12, blob] (b12) {};
    \vertex[right = 5pt of b12, dot, minimum size=0pt] (w12) {};
    \vertex[above = 1pt of w12, label = {above: {\scriptsize \hspace{-10pt} $ $}}] (w12u) {};
    \vertex[below = 8pt of w12] (w12d) {};
    \vertex[left = 10pt of w12d] (w12dl) {};
    \tikzfeynmanset{every blob = {/tikz/fill=gray!50, /tikz/minimum size=0pt}}
    \vertex[right = 3pt of w12, blob , minimum size = 0pt] (b) {};
    \vertex[right = 3pt of b, dot, minimum size=0pt] (w34) {};
    \tikzfeynmanset{every blob = {/tikz/fill=white!50, /tikz/minimum size=15pt}}
    \vertex[right = 16pt of w34, blob] (b34) {};
    \vertex[above = 1pt of w34, label = {above: {\scriptsize \hspace{10pt} $ $}}] (w34u) {};
    \vertex[below = 8pt of w34] (w34d) {};
    \vertex[right = 10pt of w34d] (w34dr) {};
    \vertex[right = 35pt of b34, dot, minimum size=0pt] (v34) {};
    \vertex[right = 8pt of v34] (r) {};
    \vertex[above = 15pt of r, color2, dot, minimum size=3pt, label = {right: {\footnotesize $\textcolor{color2}{3}$}}] (x3) {};
    \vertex[below = 15pt of r, color2, dot, minimum size=3pt, label = {right: {\footnotesize $\textcolor{color2}{4}$}}] (x4) {};
    \diagram*{
      (x1) --  [color1, ghost] (v12) --  [color1, ghost] (x2),
      (v12) -- [color1doubghost, edge label = {\scriptsize \;$\sigma'_{j_{1}}\sigma'_{j_{1}}$}, inner sep = 4pt] (b12),
      (b34) -- [color2doubghost, edge label = {\scriptsize \,$\sigma'_{j_{2}}\sigma'_{j_{2}}$}, inner sep = 4pt] (v34),
      (x3) --  [color2, ghost] (v34) -- [color2, ghost] (x4)
    };
  \end{feynman}
\end{tikzpicture}
\right) \label{btensorfeynmandiagram}
\end{align}
The sums over channel indices are included in~\eqref{btensorfeynmandiagram}. These exhaust all contributions at order \(1/n\); all remaining diagrams are either higher order or ruled out by the selection rules.

We have verified that the leading-order Feynman rules (1)-(5) remain valid at order \(1/n^2\), as shown in Appendix~\ref{app:v6_tensor}.

A useful application of the leading-order \(1/n\) Feynman rules (1)-(5) is an alternative proof of Theorem~\ref{theoremfour}. At this order, orthogonality enters only through the introduction of lower-rank tensors relative to the tensor under consideration, as prescribed by rule (5). Consequently, criticality can be determined by analyzing the first term in~\eqref{leadingorderorthogonal}, corresponding to the Gaussian-like sector. In this way, network stability follows as in the Gaussian case. In particular, if the infinite-width NNGP and NTK are at criticality, then the orthogonal network is also at criticality.

\section{Proof of main theorems}
\label{app:proof_main_theorems}

\subsection{Proof of Theorem \ref{theoremone}}\label{app:proofs_one}

\firsttheorem*

This statement was already proved for the tensors $F$ and $B$ in Appendix~\ref{app:v_b4_f4}. To show that the Feynman rules of Appendix~\ref{app:general_feynman_rules} also determine the recursion relations for the tensors $D$ and $A$ at order \(1/n\), it suffices to argue that any corrections arising from orthogonality are at least of order \(1/n^{2}\), as can be inferred from~\eqref{drecursion} and~\eqref{eq:A_recursion_algebraic}. We now verify this case by case.

\subsubsection{Joint NTK-preactivation cumulant}\label{proofford}

\begin{proof}
Orthogonality enters only through Feynman rule (4), which introduces the Weingarten function $\mathcal{W}[2]$ for pairings different from the reference pairing. For the tensor $D$, the reference pairing is $\tau=(12)(\textcolor{color1}{34})$. The diagrams associated with pairings inequivalent to $\tau$ are therefore given by
\begin{align}
 \mathcal{W}[2]\,\!\!\sum_{j,k}\!\!
\begin{tikzpicture}[baseline=(b)]
\begin{feynman}
\vertex (l) {};
\vertex[below = 15pt of l, dot, minimum size=3pt, label = {left: {\footnotesize $1^{c}$}}] (x1) {};
\vertex[above = 15pt of l, color1, dot, minimum size=3pt, label = {left: {\footnotesize $\textcolor{color1}{3}^{c}$}}] (x2) {};
\vertex[right = 8pt of l, dot, minimum size=0pt] (v12) {};
\vertex[right = 30pt of v12, squareblob] (b) {};
\vertex[right = 30pt of b, dot, minimum size=0pt] (v34) {};
\vertex[right = 8pt of v34] (r) {};
\vertex[above = 15pt of r, dot, minimum size=3pt, label = {right: {\footnotesize $2^{c}$}}] (x3) {};
\vertex[below = 15pt of r, color1, dot, minimum size=3pt, label = {right: {\footnotesize $\textcolor{color1}{4}^{c}$}}] (x4) {};
\diagram*{
	(x1) -- (v12) -- [color1, ghost](x2),
	(v12) -- [color1blackghost, edge label = {\scriptsize \;$\sigma_{j}\sigma'_{j}$}, inner sep = 4pt] (b) -- [blackcolor1ghost, edge label = {\scriptsize \,$\sigma_{k}\sigma'_{k}$}, inner sep = 4pt] (v34), 
	 (x3) -- (v34) -- [color1, ghost](x4)
};
\end{feynman}
\end{tikzpicture} + \mathcal{W}[2]\,\!\!\sum_{j,k}\!\!
\begin{tikzpicture}[baseline=(b)]
\begin{feynman}
\vertex (l) {};
\vertex[below = 15pt of l, dot, minimum size=3pt, label = {left: {\footnotesize $1^{c}$}}] (x1) {};
\vertex[above = 15pt of l, color1, dot, minimum size=3pt, label = {left: {\footnotesize $\textcolor{color1}{4}^{c}$}}] (x2) {};
\vertex[right = 8pt of l, dot, minimum size=0pt] (v12) {};
\vertex[right = 30pt of v12, squareblob] (b) {};
\vertex[right = 30pt of b, dot, minimum size=0pt] (v34) {};
\vertex[right = 8pt of v34] (r) {};
\vertex[above = 15pt of r, dot, minimum size=3pt, label = {right: {\footnotesize $2^{c}$}}] (x3) {};
\vertex[below = 15pt of r, color1, dot, minimum size=3pt, label = {right: {\footnotesize $\textcolor{color1}{3}^{c}$}}] (x4) {};
\diagram*{
	(x1) -- (v12) -- [color1, ghost] (x2),
	(v12) -- [color1blackghost, edge label = {\scriptsize \;$\sigma_{j}\sigma'_{j}$}, inner sep = 4pt] (b) -- [blackcolor1ghost, edge label = {\scriptsize \,$\sigma_{k}\sigma'_{k}$}, inner sep = 4pt] (v34), 
	 (x3) -- (v34) -- [color1, ghost] (x4)
};
\end{feynman}
\end{tikzpicture}
\end{align}
Since the leading term of $\mathcal{W}[2]$ scales as \(1/n^{3}\), these contributions can survive at order \(1/n\) only if the corresponding diagrams produce factors of $n^{2}$. This would occur if the square propagator decomposed into two bare propagators with distinct neural indices $j$ and $k$. However, such configurations violate color conservation in the propagator, as specified by the selection rules (a)–(f). Consequently, these diagrams vanish at order \(1/n\), implying that the recursion relation for the orthogonal tensor $D$ coincides with its Gaussian counterpart at order \(1/n\).
\end{proof}

\subsubsection{NTK variance}

\begin{proof}
Similarly to~\eqref{proofford}, orthogonality induces the following diagrams for the tensor $A$ with reference pairing (\textcolor{color1}{12})(\textcolor{color2}{34}):
\begin{align}
\mathcal{W}[2]\,\!\!\sum_{j,k}\!\!
\begin{tikzpicture}[baseline=(b)]
\begin{feynman}
\vertex (l) {};
\vertex[below = 15pt of l, color2, dot, minimum size=3pt, label = {left: {\footnotesize $\textcolor{color2}{1}^{c}$}}] (x1) {};
\vertex[above = 15pt of l, color1, dot, minimum size=3pt, label = {left: {\footnotesize $\textcolor{color1}{3}^{c}$}}] (x2) {};
\vertex[right = 8pt of l, dot, minimum size=0pt] (v12) {};
\vertex[right = 30pt of v12, squareblob] (b) {};
\vertex[right = 30pt of b, dot, minimum size=0pt] (v34) {};
\vertex[right = 8pt of v34] (r) {};
\vertex[above = 15pt of r, color2, dot, minimum size=3pt, label = {right: {\footnotesize $\textcolor{color2}{2}^{c}$}}] (x3) {};
\vertex[below = 15pt of r, color1, dot, minimum size=3pt, label = {right: {\footnotesize $\textcolor{color1}{4}^{c}$}}] (x4) {};
\diagram*{
	(x1) [color2, ghost] -- (v12) -- [color1, ghost](x2),
	(v12) -- [color1color2ghost, edge label = {\scriptsize \;$\sigma_{j}\sigma'_{j}$}, inner sep = 4pt] (b) -- [color2color1ghost, edge label = {\scriptsize \,$\sigma_{k}\sigma'_{k}$}, inner sep = 4pt] (v34), 
	 (x3) -- [color2, ghost] (v34) -- [color1, ghost](x4)
};
\end{feynman}
\end{tikzpicture} + \mathcal{W}[2]\,\!\!\sum_{j,k}\!\!
\begin{tikzpicture}[baseline=(b)]
\begin{feynman}
\vertex (l) {};
\vertex[below = 15pt of l, color2, dot, minimum size=3pt, label = {left: {\footnotesize $\textcolor{color2}{1}^{c}$}}] (x1) {};
\vertex[above = 15pt of l, color1, dot, minimum size=3pt, label = {left: {\footnotesize $\textcolor{color1}{4}^{c}$}}] (x2) {};
\vertex[right = 8pt of l, dot, minimum size=0pt] (v12) {};
\vertex[right = 30pt of v12, squareblob] (b) {};
\vertex[right = 30pt of b, dot, minimum size=0pt] (v34) {};
\vertex[right = 8pt of v34] (r) {};
\vertex[above = 15pt of r, color2, dot, minimum size=3pt, label = {right: {\footnotesize $\textcolor{color2}{2}^{c}$}}] (x3) {};
\vertex[below = 15pt of r, color1, dot, minimum size=3pt, label = {right: {\footnotesize $\textcolor{color1}{3}^{c}$}}] (x4) {};
\diagram*{
	(x1) -- [color2, ghost](v12) -- [color1, ghost] (x2),
	(v12) -- [color1color2ghost, edge label = {\scriptsize \;$\sigma_{j}\sigma'_{j}$}, inner sep = 4pt] (b) -- [color2color1ghost, edge label = {\scriptsize \,$\sigma_{k}\sigma'_{k}$}, inner sep = 4pt] (v34), 
	 (x3) -- [color2,  ghost] (v34) -- [color1, ghost] (x4)
};
\end{feynman}
\end{tikzpicture}
\end{align}
Once again, the leading scaling $\mathcal{W}[2] \sim 1/n^3$ requires the diagrams to contribute a factor $n^{2}$. This would occur if the square propagator decomposed into two bare propagators with distinct neural indices $j$ and $k$. However, such configurations violate color conservation in the propagators and therefore vanish at order \(1/n\). Consequently, the recursion relation for the orthogonal tensor $A$ coincides with its Gaussian counterpart at order \(1/n\).
\end{proof}

\subsection{Proof of Theorem~\ref{theoremtwo}}\label{app:proofs_two}

\secondtheorem*

\begin{proof}
We prove this statement by showing that the recursion relations for the dNTK and ddNTK tensors coincide with those derived in the Gaussian case. As discussed in the main text, this result is further supported by the numerical simulations presented in Section~\ref{subsec:single-input} for the single-input setting.

To see this, note that orthogonality enters through Feynman rule (4), which introduces the Weingarten function $W[2]$ multiplying the 1-class diagrams obtained from inequivalent permutations with respect to the reference pairing. Since the leading scaling of $W[2]$ is \(1/n^3\), such contributions could survive at order \(1/n\) only if the square propagator decomposed into two bare propagators, so that the corresponding Gaussian expectation values with distinct neural indices produce a factor of $n^{2}$. However, for the preactivation-dNTK cumulant and the mean of the dNTK this configuration is forbidden, as it violates color conservation in the bare propagators. Indeed, since the dNTK and ddNTKs are represented by colored lines as in~\eqref{eq:6}, and the cubic vertices involving these lines always include an internal line carrying at least one color \eqref{orthogonalityvertex}, no bare propagator can attach to such vertices. Consequently, the diagrams proportional to $\mathcal{W}[2]$ in the recursion relations of the dNTK and ddNTK tensors decouple at order \(1/n\). Therefore, the recursion relations for the orthogonal dNTK and ddNTK tensors coincide with those obtained in the Gaussian setup.
\end{proof}

\subsection{Proof of Theorem~\ref{theoremthree}}
\label{app:proofs_three}

\thirdtheorem*

\begin{proof}
We begin by observing that the restriction to cubic vertices with two external legs and one internal line in our Feynman rules follows from two structural properties of orthogonal weights: the vanishing of all odd moments, and the fact that weight contractions always occur in pairs of neural indices.

The square propagator introduced in Section~\ref{sec:feynman-diagrams} generates all possible pairings of internal lines. This is equivalent to summing over all connected and disconnected diagrams with an even number of external legs. In parallel, the permutation of external labels carrying orthogonal charge, weighted by the corresponding Weingarten functions, implements the orthogonal contractions of the weight parameters appearing in the definition of the observables. The signs associated with each diagram match those prescribed by the Möbius relation (Rule (5)), thereby reproducing the standard combinatorial structure of cumulants.

Concretely, applying this procedure to $n$ preactivations with a fixed pairing structure generates all even-point diagrams, each weighted by the appropriate Weingarten function. Since these weights encode orthogonal contractions, the square propagators can be identified with expectation values over the weights multiplying the corresponding activations. This allows one to rewrite the full expression as products of parameter expectation values of preactivations $z_{\alpha,i} = \sum_{j}W_{ij}\sigma_{\alpha,j}$, each weighted by Möbius factors determined by the associated set partition. This coincides with the standard definition of the cumulant of $n$ preactivations. 

As an illustration, consider six preactivations with reference pairing $(12)(34)(56)$. Following the discussion above, the square propagator generates all possible even-point connected and disconnected diagrams, weighted by the corresponding Weingarten functions $\mathcal{W}[1,1,1]$, $\mathcal{W}[2,1]$, $\mathcal{W}[3]$, $\mathcal{W}[2]$, $\mathcal{W}[1,1]$, $\mathcal{W}[1]$ (see Appendix~\ref{v6-diagrammatic-derivation} for details). These weights and their associated index contractions coincide with those obtained from the orthogonality of the parameters, allowing one to rewrite the resulting contributions as products of expectation values such as
\begin{align}
\mathbb{E}[z_{1}z_{2}z_{3}z_{4}z_{5}z_{6}] \quad, \quad\mathbb{E}[z_{1}z_{2}z_{3}z_{4}]&\mathbb{E}[z_{5}z_{6}]\quad, \quad\mathbb{E}[z_{1}z_{2}z_{5}z_{6}]\mathbb{E}[z_{3}z_{4}]\quad,\nonumber\\
\mathbb{E}[z_{3}z_{4}z_{5}z_{6}]\mathbb{E}[z_{1}z_{2}] \quad&, \quad\mathbb{E}[z_{1}z_{2}]\mathbb{E}[z_{3}z_{4}]\mathbb{E}[z_{5}z_{6}]
\end{align}
Upon attaching the corresponding Möbius prefactors and summing over all partitions, one recovers the contribution of the channel $(12)(34)(56)$ to the standard sextic cumulant.

The use of the multileg definitions~\eqref{eq:6}, together with the Feynman rules with coloured lines in~\eqref{orthogonalityvertex}, allows one to straightforwardly extend this argument to composite observables such as the NTK, dNTK, and ddNTK, which are multilinear in the weights.

The second set of Feynman rules implements an effective field theory expansion for non-Gaussian probability distributions. In this framework, the square propagator (full expectation value) decomposes into a Gaussian (free) propagator and interaction vertices arising from the \(1/n\)-expansion of the distribution. As shown in prior work~\cite{guillen2025}, this expansion is governed by a unique cubic vertex structure obtained by Gaussianizing the argument of the square propagator, together with higher-order interaction vertices. In particular, no new cubic vertices appear at higher orders, highlighting the uniqueness of our Feynman rules.

Combining these observations, Rules (1)-(5) reproduce the cumulant structure with orthogonal contractions, while Rules (6)-(9) generate the \(1/n\)-expansion of the expectation values entering the cumulant in terms of Gaussian propagators and higher-order interaction vertices. This completes the proof.
\end{proof}

\section{Analytical recursion relations: Single-input case}
\label{app:analytical_recursion_relations}

In this appendix, we list the recursion relations obtained from the Feynman diagram approach in the single-input setting for the NTK, dNTK, and ddNTK tensors. The expressions for the tensors $D$, $F$, $A$ and $B$ follow directly from~\eqref{drecursion}, \eqref{ftensoratorderonen}, \eqref{eq:A_recursion_algebraic}, and~\eqref{eq:B_recursion_algebraic}, respectively. The relations for the tensors $P$, $Q$, $R$, $S$, $T$, $U$ can be derived analogously through the systematic application of the Feynman rules in Appendix~\ref{app:general_feynman_rules}. Since these relations coincide with their Gaussian counterparts at order \(1/n\), one may also use the results of~\cite{roberts2021a}. Explicitly, we obtain

\begin{align}
D^{(\ell+1)}
&= \chi_{\perp}^{(\ell)} \chi_{\parallel}^{(\ell)} D^{(\ell)}
   + \left( \frac{\lambda_{W}^{(\ell+1)}}{C_{W}} \right)
     \Big[
        C_{W}^{2}\,
        \langle \sigma(z)\sigma(z)\sigma(z)\sigma(z)\rangle_{K^{(\ell)}}
        - \big(C_{W} g^{(\ell)}\big)^{2}
        + \big(\chi_{\parallel}^{(\ell)}\big)^{2} V^{(\ell)}
     \Big] \nonumber\\
&\quad
   + \Theta^{(\ell)}
     \Big[
        C_{W}^{2}\,
        \langle \sigma(z)\sigma(z)\sigma'(z)\sigma'(z)\rangle_{K^{(\ell)}}
        - C_{W} g^{(\ell)}\, \chi_{\perp}^{(\ell)}
        + 2 h^{(\ell)}\, \chi_{\parallel}^{(\ell)}\, V^{(\ell)}
     \Big], \label{dsingleinput}
\\[1.2em]
F^{(\ell+1)}
&= \big(\chi_{\parallel}^{(\ell)}\big)^{2} F^{(\ell)}
   + C_{W}^{2}\,\Big(
     \langle \sigma(z)\sigma(z)\sigma'(z)\sigma'(z)\rangle_{K^{(\ell)}} - \langle \sigma(z)\sigma(z)\rangle_{K^{(\ell)}}\langle\sigma'(z)\sigma'(z)\rangle_{K^{(\ell)}}\Big)
     \,\Theta^{(\ell)}, \label{fsingleinput}
\\[1.2em]
A^{(\ell+1)}
&= \big(\chi_{\perp}^{(\ell)}\big)^{2} A^{(\ell)}
   + \left( \frac{\lambda_{W}^{(\ell+1)}}{C_{W}} \right)^{2}
     \Big[
        C_{W}^{2}\,
        \langle \sigma(z)\sigma(z)\sigma(z)\sigma(z)\rangle_{K^{(\ell)}}
        - \big(C_{W} g^{(\ell)}\big)^{2}
        + \big(\chi_{\parallel}^{(\ell)}\big)^{2} V^{(\ell)}
     \Big] \nonumber\\
&\quad
   + 2 \left( \frac{\lambda_{W}^{(\ell+1)}}{C_{W}} \right)
     \Theta^{(\ell)}
     \Big[
        C_{W}^{2}\,
        \langle \sigma(z)\sigma(z)\sigma'(z)\sigma'(z)\rangle_{K^{(\ell)}}
        - C_{W} g^{(\ell)}\, \chi_{\perp}^{(\ell)}
        + 2 h^{(\ell)}\, \chi_{\parallel}^{(\ell)}\, V^{(\ell)}
     \Big] \nonumber\\
&\quad
   + 2 \left( \frac{\lambda_{W}^{(\ell+1)}}{C_{W}} \right)
     \chi_{\perp}^{(\ell)}\chi_{\parallel}^{(\ell)} D^{(\ell)}
   + 4 h^{(\ell)} \chi_{\perp}^{(\ell)} \Theta^{(\ell)} D^{(\ell)} \nonumber\\
&\quad
   + \big(\Theta^{(\ell)}\big)^{2}
     \Big[
        C_{W}^{2}\,
        \langle \sigma'(z)\sigma'(z)\sigma'(z)\sigma'(z)\rangle_{K^{(\ell)}}
        - \big(\chi_{\perp}^{(\ell)}\big)^{2}
        + \big(2 h^{(\ell)}\big)^{2} V^{(\ell)}
     \Big], \label{asingleinput} 
\\[1.8em]
B^{(\ell+1)}
&= \big(\chi_{\perp}^{(\ell)}\big)^{2} B^{(\ell)}
   + C_{W}^{2}\,\Big(
     \langle \sigma'(z)\sigma'(z)\sigma'(z)\sigma'(z)\rangle_{K^{(\ell)}} - \langle \sigma'(z)\sigma'(z)\rangle_{K^{(\ell)}}\langle\sigma'(z)\sigma'(z)\rangle_{K^{(\ell)}}
     \Big)\big(\Theta^{(\ell)}\big)^{2}, \label{bsingleinput}
\\[1.2em]
P^{(\ell+1)}
&= C_{W}^{2}\,
   \big\langle \sigma''(z)\sigma'(z)\sigma'(z)\sigma(z) \big\rangle_{K^{(\ell)}}\,
   (\Theta^{(\ell)})^{2}
 + C_{W}\, \chi_{\perp}^{(\ell)}\,
   \big\langle \sigma''(z)\sigma(z) \big\rangle_{K^{(\ell)}}\, B^{(\ell)} 
\nonumber\\[4pt]
&\quad
 + \Big[
      C_{W}\, \chi_{\perp}^{(\ell)}\,
      \big\langle \sigma''(z)\sigma(z) \big\rangle_{K^{(\ell)}}
      + \big(\chi_{\perp}^{(\ell)}\big)^{2}
   \Big] P^{(\ell)}, \label{psingleinput} 
\\[1.2em]
Q^{(\ell+1)}
&= C_{W}^{2}\,
   \big\langle \sigma''(z)\sigma'(z)\sigma'(z)\sigma(z) \big\rangle_{K^{(\ell)}}\,
   (\Theta^{(\ell)})^{2}
 + \frac{\lambda_{W}^{(\ell+1)}}{C_{W}}\, F^{(\ell+1)}
 + 2 h^{(\ell)}\, \chi_{\parallel}^{(\ell)}\, \Theta^{(\ell)}\, F^{(\ell)}
\nonumber\\[4pt]
&\quad
 + \Big[
      C_{W}\, \chi_{\perp}^{(\ell)}\,
      \big\langle \sigma''(z)\sigma(z) \big\rangle_{K^{(\ell)}}
      + \big(\chi_{\perp}^{(\ell)}\big)^{2}
   \Big] Q^{(\ell)} \label{qsingleinput} \\
R^{(\ell+1)}
&= \big(\chi_{\perp}^{(\ell)}\big)^{2} R^{(\ell)}
 + \lambda_{W}^{(\ell+1)}C_{W} \langle \sigma''(z)\sigma'(z)\sigma'(z)\sigma(z)\rangle_{K^{(\ell)}}
       (\Theta^{(\ell)})^{2}
    \nonumber\\[4pt]
&\quad  + C_{W}^{2}\,
       \langle \sigma'''(z)\sigma'(z)\sigma'(z)\sigma'(z)\rangle_{K^{(\ell)}}
       (\Theta^{(\ell)})^{3}
\nonumber\\[4pt]
&\quad
 + \chi_{\perp}^{(\ell)}
   \Big(
      \lambda^{(\ell+1)}_{W}\,
      \langle \sigma''(z)\sigma(z)\rangle_{K^{(\ell)}}\,
      + C_{W}\Theta^{(\ell)}\langle \sigma'''(z)\sigma'(z)\rangle_{K^{(\ell)}}\Big)\Big(B^{(\ell)} + P^{(\ell)}\Big)\nonumber\\[4pt]
&\quad
 + \chi_{\perp}^{(\ell)}
   \Big(
      \lambda^{(\ell+1)}_{W}\,
      \langle \sigma'(z)\sigma'(z)\rangle_{K^{(\ell)}}\,
      + C_{W}\Theta^{(\ell)}\langle \sigma''(z)\sigma''(z)\rangle_{K^{(\ell)}}\Big)P^{(\ell)} \label{rsingleinput},
\\[1.2em]
S^{(\ell+1)}
&= \big(\chi_{\perp}^{(\ell)}\big)^{2} S^{(\ell)}
 + \lambda_{W}^{(\ell+1)}C_{W} \langle \sigma'(z)\sigma'(z)\sigma'(z)\sigma'(z)\rangle_{K^{(\ell)}}\,
      (\Theta^{(\ell)})^{2}
    \nonumber\\[4pt]
&\quad + C_{W}^{2}\,
      \langle \sigma''(z)\sigma''(z)\sigma'(z)\sigma'(z)\rangle_{K^{(\ell)}}\,
      (\Theta^{(\ell)})^{3}
\nonumber\\[4pt]
&\quad
 + \chi_{\perp}^{(\ell)}
   \Big[
      \lambda_{W}^{(\ell+1)}
      \langle \sigma'(z)\sigma'(z)\rangle_{K^{(\ell)}} 
      + C_{W}\Theta^{(\ell)}\langle \sigma''(z)\sigma''(z)\rangle_{K^{(\ell)}}\Big] B^{(\ell)}, \label{ssingleinput} 
\\[1.2em]
T^{(\ell+1)}
&= \big(\chi_{\perp}^{(\ell)}\big)^{2} T^{(\ell)}
 + 2C_{W}\lambda_{W}^{(\ell+1)}\,
   \langle \sigma''(z)\sigma'(z)\sigma'(z)\sigma(z)\rangle_{K^{(\ell)}}\,
   (\Theta^{(\ell)})^{2}
 \nonumber\\[4pt]
&\quad + C_{W}^{2}\,
   \langle \sigma''(z)\sigma''(z)\sigma'(z)\sigma'(z)\rangle_{K^{(\ell)}}\,
   (\Theta^{(\ell)})^{3}
\nonumber\\[4pt]
&\quad
 + (\lambda_{W}^{(\ell+1)})^{2}\,
   \Theta^{(\ell)}\langle \sigma'(z)\sigma'(z)\sigma(z)\sigma(z)\rangle_{K^{(\ell)}}
\nonumber\\[4pt]
&\quad + \Big(\lambda_{W}^{(\ell+1)}\langle z \sigma'(z)\sigma(z)\rangle_{K^{(\ell)}} + C_{W}\Theta^{(\ell)}\langle z\sigma''(z)\sigma'(z)\rangle_{K^{(\ell)}}\Big)^{2}\frac{F^{(\ell)}}{(K^{(\ell)})^{2}}\nonumber\\[4pt]
&\quad
+ 2\chi_{\perp}^{(\ell)}\bigg[\lambda_{W}^{(\ell+1)}\Big(\langle \sigma''(z)\sigma(z)\rangle_{K^{(\ell)}} + \langle \sigma'(z)\sigma'(z)\rangle_{K^{(\ell)}}\Big) \nonumber\\[4pt]
&\quad + C_{W}\Theta^{(\ell)}\Big(\langle \sigma'''(z)\sigma'(z)\rangle_{K^{(\ell)}} + \langle \sigma''(z)\sigma''(z)\rangle_{K^{(\ell)}}\Big)\bigg]Q^{(\ell)}, \label{tsingleinput} 
\\[1.2em]
U^{(\ell+1)}
&= \big(\chi_{\perp}^{(\ell)}\big)^{2} U^{(\ell)}
 + C_{W}^{2}\,
   \langle \sigma''(z)\sigma''(z)\sigma'(z)\sigma'(z)\rangle_{K^{(\ell)}}\,
   (\Theta^{(\ell)})^{3}, \label{usingleinput} 
\end{align}
where the susceptibilities $\chi^{(\ell)}_{||}$, $\chi^{(\ell)}_{\perp}$, and the auxiliary functions $g^{(\ell)}(K)$, $h^{(\ell)}(K)$ are defined as follows:
\begin{align}
\chi^{(\ell)}_{||} = \frac{C_{W}}{K^{(\ell)}}\langle z^{(\ell)}\sigma^{(\ell)}(z)\sigma'^{(\ell)}(z)\rangle_{K^{(\ell)}}, & \quad \chi^{(\ell)}_{\perp} = C_{W}\langle \sigma'^{(\ell)}(z)\sigma'^{(\ell)}(z)\rangle_{K^{(\ell)}} \nonumber\\
\quad h^{(\ell)}(K) = \frac{C_{W}}{4(K^{(\ell)})^{2}}\langle((z^{(\ell)})^{2}-K^{(\ell)})(\sigma'^{(\ell)}(z)\sigma'^{(\ell)}(z))\rangle_{K^{(\ell)}}, & \quad g^{(\ell)}(K) = \langle \sigma^{(\ell)}\sigma^{(\ell)}\rangle_{K^{(\ell)}}
\end{align}

\subsection{Solution to the single-input recursions}\label{app:solut-single-input}
In this subsection, we analyze the solutions of the recursions~\eqref{dsingleinput}–\eqref{usingleinput}, shown in Figure~\ref{fig:all_tensors_plots}. As in Section~\ref{subsec:single-input}, we consider a square neural network with $\tanh$ activation, width $n=50$, and depth $L=10$, and perform both numerical and symbolic computations in \textit{Mathematica}. The input vector $x_{0}\in\mathbb{R}^{50}$ is taken to be 
\begin{align}\label{n50-inputvector}
x_0 &=
\begin{pmatrix}
0.934738 \\
0.26696 \\
0.784097 \\
0.656448 \\
0.305308 \\
0.401958 \\
0.894594 \\
0.0559893 \\
0.000643274 \\
0.0274513
\end{pmatrix}
\oplus
\begin{pmatrix}
0.377754 \\
0.127474 \\
0.879907 \\
0.710555 \\
0.509949 \\
0.312682 \\
0.0854376 \\
0.869372 \\
0.114232 \\
0.0851646
\end{pmatrix}
\oplus
\begin{pmatrix}
0.254697 \\
0.560475 \\
0.508664 \\
0.0271565 \\
0.426426 \\
0.457646 \\
0.913778 \\
0.40436 \\
0.407187 \\
0.0644401
\end{pmatrix}
\oplus
\begin{pmatrix}
0.256718 \\
0.869761 \\
0.0406222 \\
0.431362 \\
0.906228 \\
0.55979 \\
0.275852 \\
0.553722 \\
0.235762 \\
0.751627
\end{pmatrix}
\oplus
\begin{pmatrix}
0.178558 \\
0.411167 \\
0.100846 \\
0.220264 \\
0.215917 \\
0.490943 \\
0.596323 \\
0.0799147 \\
0.205998 \\
0.0372218
\end{pmatrix}\,.
\end{align}

The normalized tensors are defined by
\begin{align}   
 \tilde{B}^{(\ell)} &= \frac{B^{(\ell)}}{\left(\Theta^{(\ell)}\right)^{2}} , &
 \tilde{A}^{(\ell)} &= \frac{A^{(\ell)}}{\left(\Theta^{(\ell)}\right)^{2}} , &
 \tilde{P}^{(\ell)} &= \frac{P^{(\ell)}}{\left(\Theta^{(\ell)}\right)^{2}}, &
 \tilde{Q}^{(\ell)} &= \frac{Q^{(\ell)}}{\left(\Theta^{(\ell)}\right)^{2}},  \nonumber\\
 \tilde{R}^{(\ell)} &= \frac{R^{(\ell)}K^{(\ell)}}{\left(\Theta^{(\ell)}\right)^{3}}, &
 \tilde{S}^{(\ell)} &= \frac{S^{(\ell)}K^{(\ell)}}{\left(\Theta^{(\ell)}\right)^{3}}, &
 \tilde{T}^{(\ell)} &= \frac{T^{(\ell)}K^{(\ell)}}{\left(\Theta^{(\ell)}\right)^{3}}, &
 \tilde{U}^{(\ell)} &= \frac{U^{(\ell)}K^{(\ell)}}{\left(\Theta^{(\ell)}\right)^{3}}\,.
\end{align}

We report results for the normalized tensors in the following order: the NTK-mixed tensors $\tilde{D}$ and $\tilde{F}$ (Figure~\ref{fig:D_F_tensors_plot}); the NTK variance tensors $\tilde{A}$ and $\tilde{B}$ (Figure~\ref{fig:A_B_tensors_plot}); the dNTK tensors $\tilde{P}$ and $\tilde{Q}$ (Figure~\ref{fig:P_Q_tensors_plot}); the dd$_{\text{I}}$NTK tensor $\tilde{R}$ (Figure~\ref{fig:R_tensor_plot}); and the dd$_{\text{II}}$NTK tensors $\tilde{S}$, $\tilde{T}$, and $\tilde{U}$ (Figure~\ref{fig:S_T_U_tensors_plot}). The corresponding large-$\ell$ expansions, derived in Appendix~\ref{app:large_l_expansion}, are shown for comparison.
\begin{figure*}[tb]
  \centering
  \includegraphics[width=1.0\textwidth,keepaspectratio]{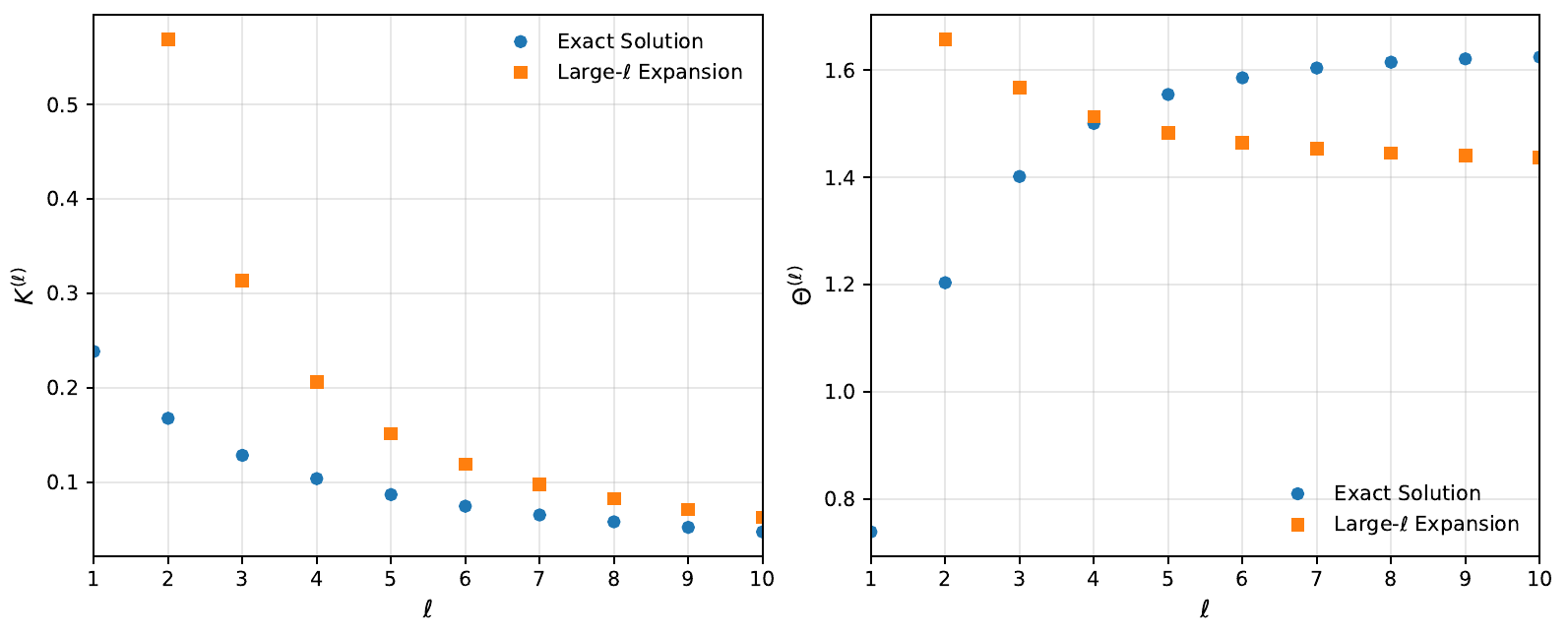}
  \vspace{-0.1cm}
  \caption{\emph{The NNGP and NTK.} Solution of the single-input recursion relations~\eqref{nngprecursion} and~\eqref{ntkrecursionleadingorder}. We consider a $\tanh$ network of width $n=50$ and depth $L=10$ with inputs drawn from $(0,1)$. Blue points denote the exact solutions of~\eqref{nngplexpansion} and~\eqref{ntkrecursionleadingorder}, while orange boxes represent the large-$\ell$ expansions~\eqref{Klargel} and~\eqref{Thetalargel}, evaluated at integer $\ell$. The tensor magnitudes remain smaller than their Gaussian counterparts and exhibit early-layer saturation.}
 \label{fig:NNGP_NTK_tensors_plots}
\end{figure*}
\begin{figure*}[tb]
  \centering
  \includegraphics[width=1.0\textwidth,height=1.0\textheight,keepaspectratio]{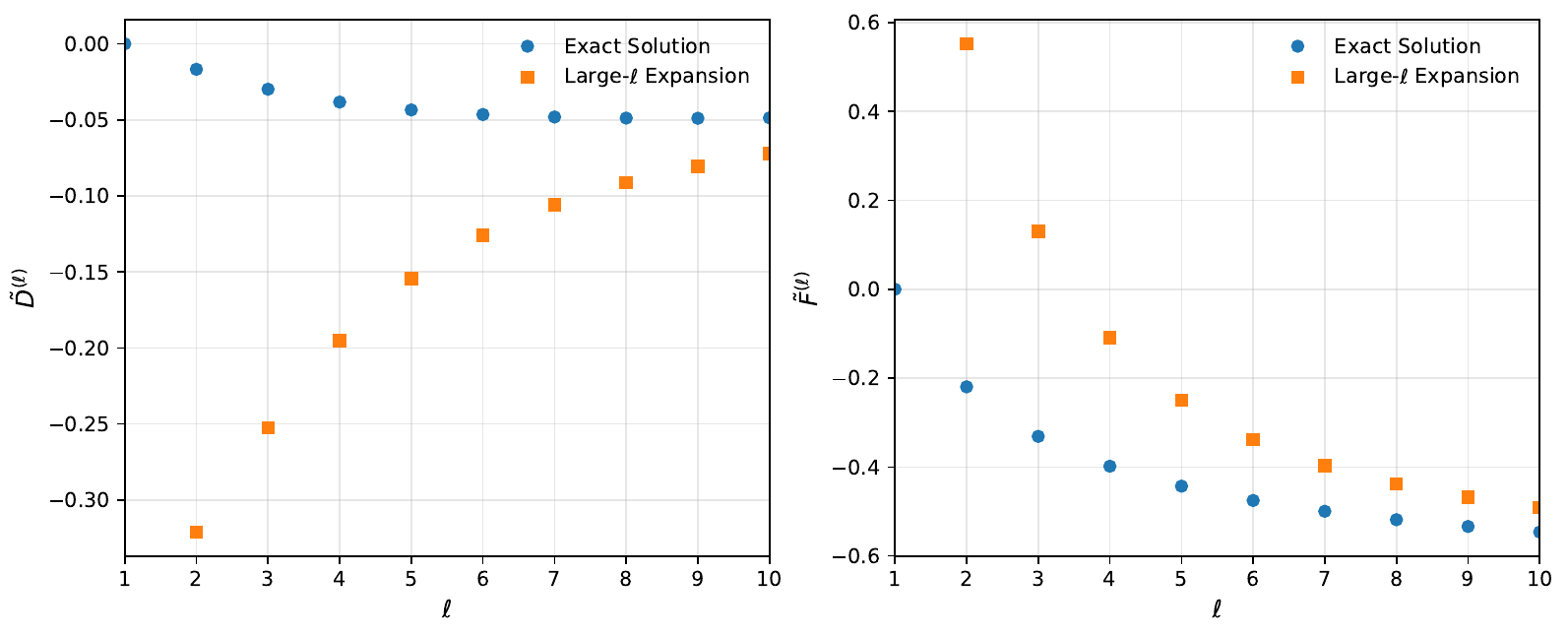}
  \vspace{-0.1cm}
  \caption{\emph{The NTK mixed tensors.} Solution of the single-input recursion relations~\eqref{dsingleinput} and~\eqref{fsingleinput}. We consider a $\tanh$ network of width $n=50$ and depth $L=10$ with inputs drawn from $(0,1)$. Blue points denote the exact solutions of~\eqref{dsingleinput} and~\eqref{fsingleinput}, while orange boxes represent the large-$\ell$ expansions~\eqref{Dlargel} and~\eqref{Flargel}, evaluated at integer $\ell$. The tensor magnitudes remain smaller than their Gaussian counterparts and exhibit early-layer saturation.}
  \label{fig:D_F_tensors_plot}
\end{figure*}
\begin{figure*}[tb]
  \centering
  \includegraphics[width=1.0\textwidth,height=0.5\textheight,keepaspectratio]{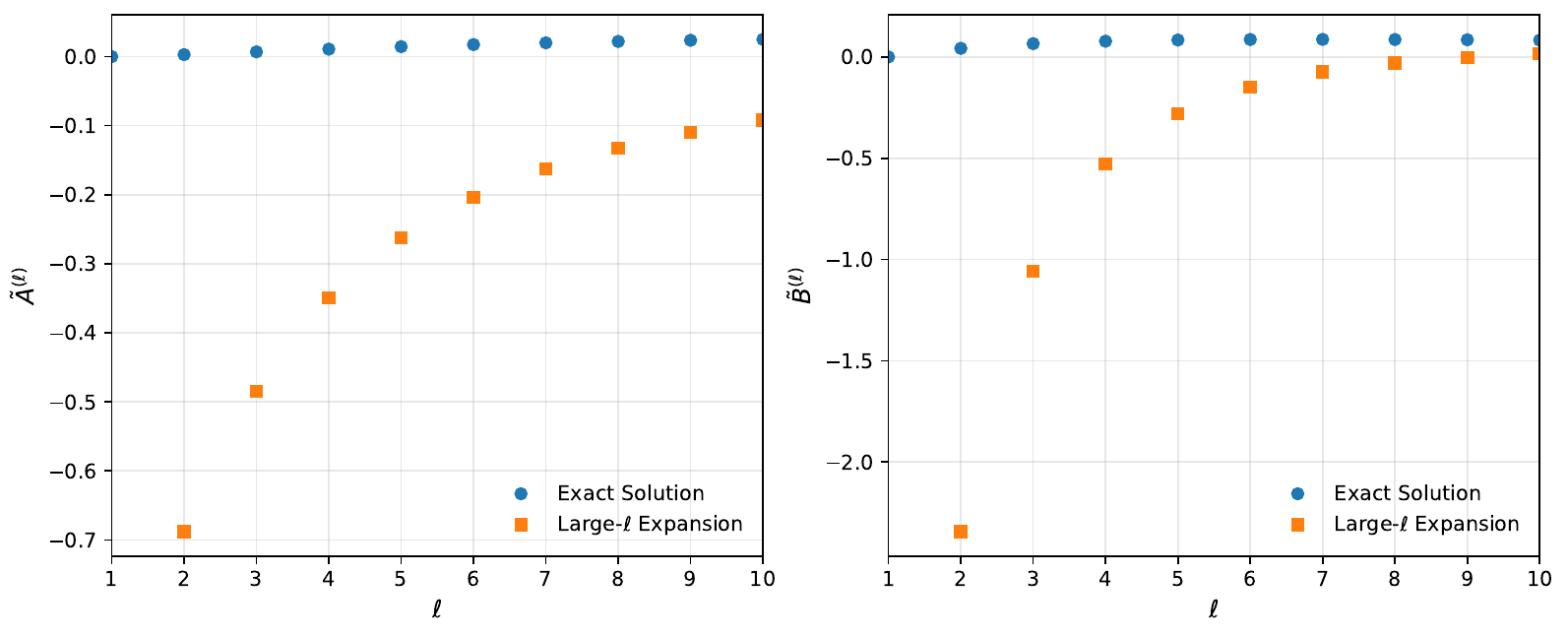}
  \vspace{-0.1cm}
  \caption{\emph{The NTK variance tensors.} Solution of the single-input recursion relations~\eqref{asingleinput} and~\eqref{bsingleinput}. We consider a $\tanh$ network of width $n=50$ and depth $L=10$ with inputs drawn from $(0,1)$. Blue points denote the exact solutions of~\eqref{asingleinput} and~\eqref{bsingleinput}, while orange boxes represent the large-$\ell$ expansion~\eqref{Alargel} and~\eqref{Blargel}, evaluated at integer $\ell$. The magnitude of the tensors remains smaller than its Gaussian counterpart and exhibits early-layer saturation.}
  \label{fig:A_B_tensors_plot}
\end{figure*}
\begin{figure*}[tb]
  \centering
  \includegraphics[width=1.0\textwidth,height=0.5\textheight,keepaspectratio]{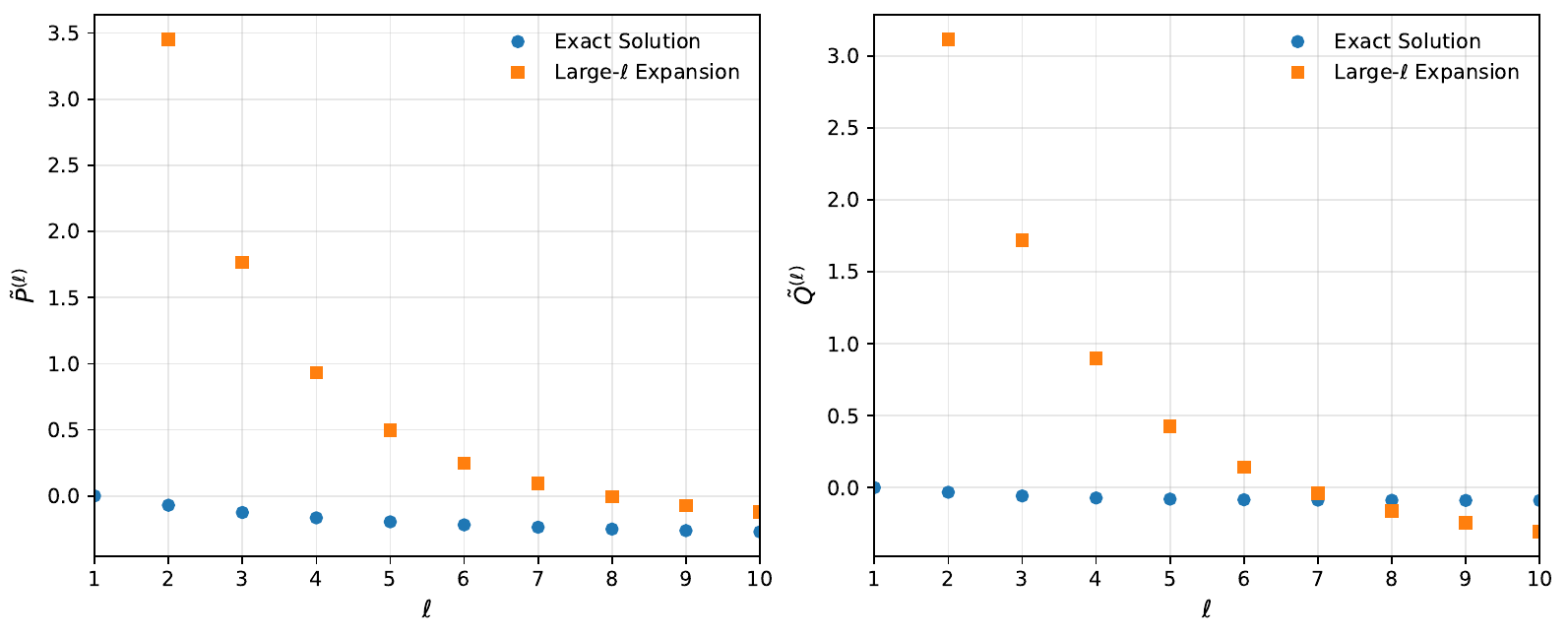}
  \vspace{-0.1cm}
  \caption{\emph{The dNTK tensors.} Solutions of the single-input recursion relations~\eqref{psingleinput} and~\ref{qsingleinput}. We consider a $\tanh$ network of width $n=50$ and depth $L=10$ with inputs drawn from $(0,1)$. Blue points denote the exact solution of~\eqref{psingleinput} and~\ref{qsingleinput}, while orange boxes represent the large-$\ell$ expansions~\eqref{Plargel} and~\eqref{Qlargel}, evaluated at integer $\ell$. The magnitude of the tensors remains smaller than its Gaussian counterpart and exhibits early-layer saturation.}
  \label{fig:P_Q_tensors_plot}
\end{figure*}
\begin{figure*}[tb]
  \centering
  \includegraphics[width=.5\textwidth,height=0.5\textheight,keepaspectratio]{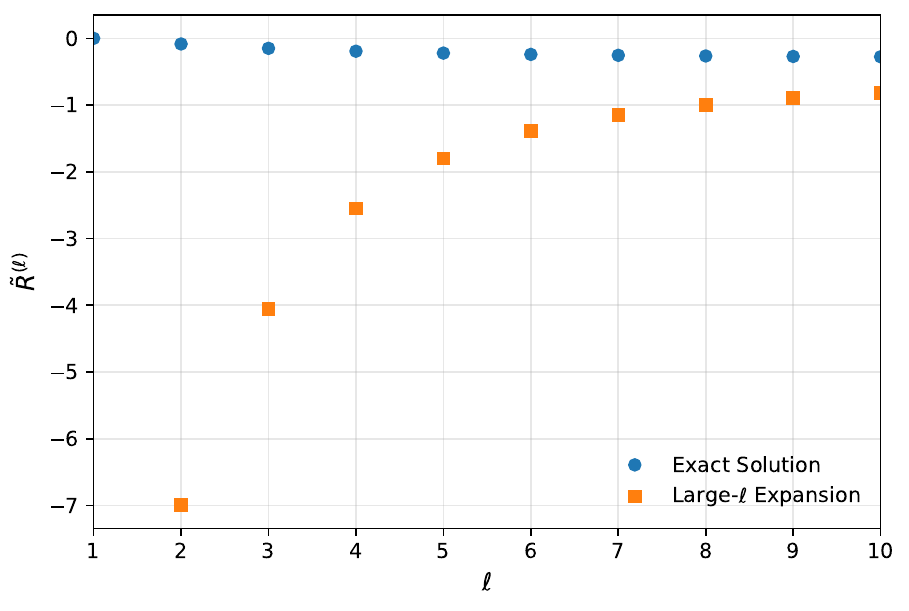}
  \vspace{-0.1cm}
  \caption{\emph{The d$_{\text{I}}$NTK tensor.} Solution of the single-input recursion relation~\eqref{rsingleinput}. We consider a $\tanh$ network of width $n=50$ and depth $L=10$ with inputs drawn from $(0,1)$. Blue points denote the exact solution of~\eqref{rsingleinput}, while orange boxes represent the large-$\ell$ expansion~\eqref{Rlargel}, evaluated at integer $\ell$. The magnitude of the tensor remains smaller than its Gaussian counterpart and exhibits early-layer saturation.}
  \label{fig:R_tensor_plot}
\end{figure*}
\begin{figure*}[tb]
  \centering
  \includegraphics[width=1.0\textwidth,height=0.5\textheight,keepaspectratio]{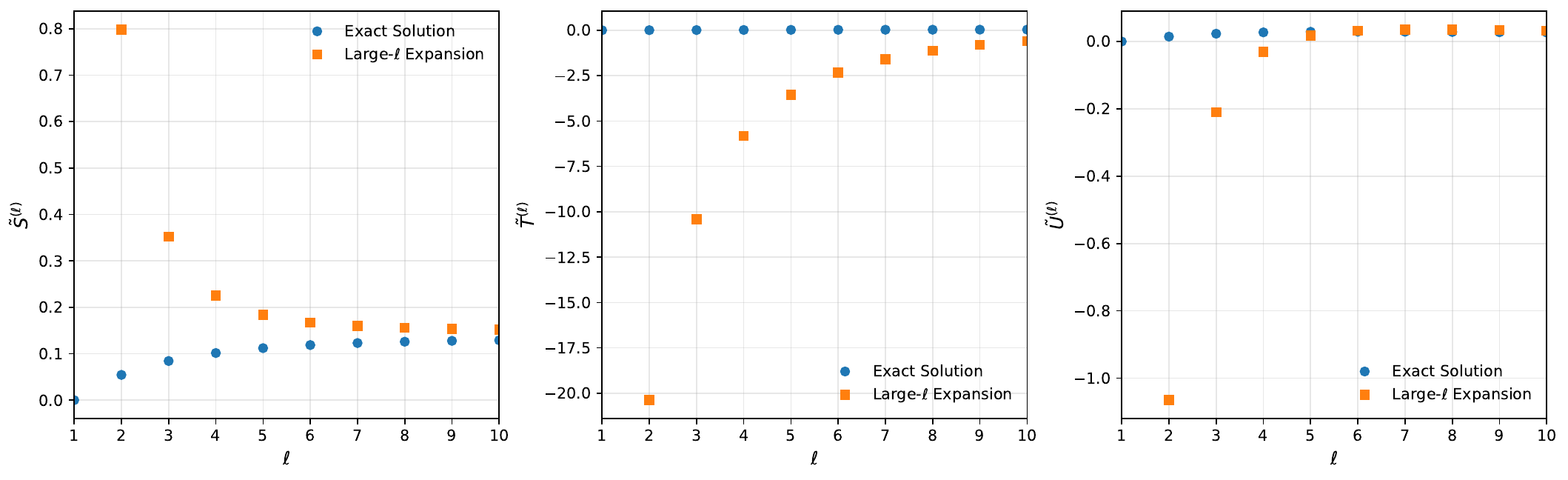}
  \vspace{-0.1cm}
  \caption{\emph{The dd$_{\text{II}}$NTK tensors.} Solutions to the single-input recursion relations~\eqref{ssingleinput}, \eqref{tsingleinput}, and~\eqref{usingleinput}. We consider a $\tanh$ network with width $n=50$ and depth $L=10$, with inputs drawn from $(0,1)$. Blue points show the exact solutions of~\eqref{ssingleinput}, \eqref{tsingleinput}, and~\eqref{usingleinput}, while orange boxes represent the corresponding large-$\ell$ expansions~\eqref{Slargel}, \eqref{Tlargel}, and~\eqref{Ulargel}, evaluated at integer $\ell$. The tensor magnitude remains below its Gaussian counterpart and exhibits early-layer saturation.}
  \label{fig:S_T_U_tensors_plot}
\end{figure*}

For ease of comparison, we provide a combined plot of all NTK, dNTK, and ddNTK tensors in Figure~\ref{fig:all_tensors_plots}. The results show that all tensors attain smaller magnitudes than their Gaussian counterparts and exhibit early-layer saturation, consistent with the empirical findings of~\cite{day2023}, which are quantitatively reproduced here.

\begin{figure*}[tb]
  \centering
  \includegraphics[width=0.7\textwidth,height=0.5\textheight,keepaspectratio]{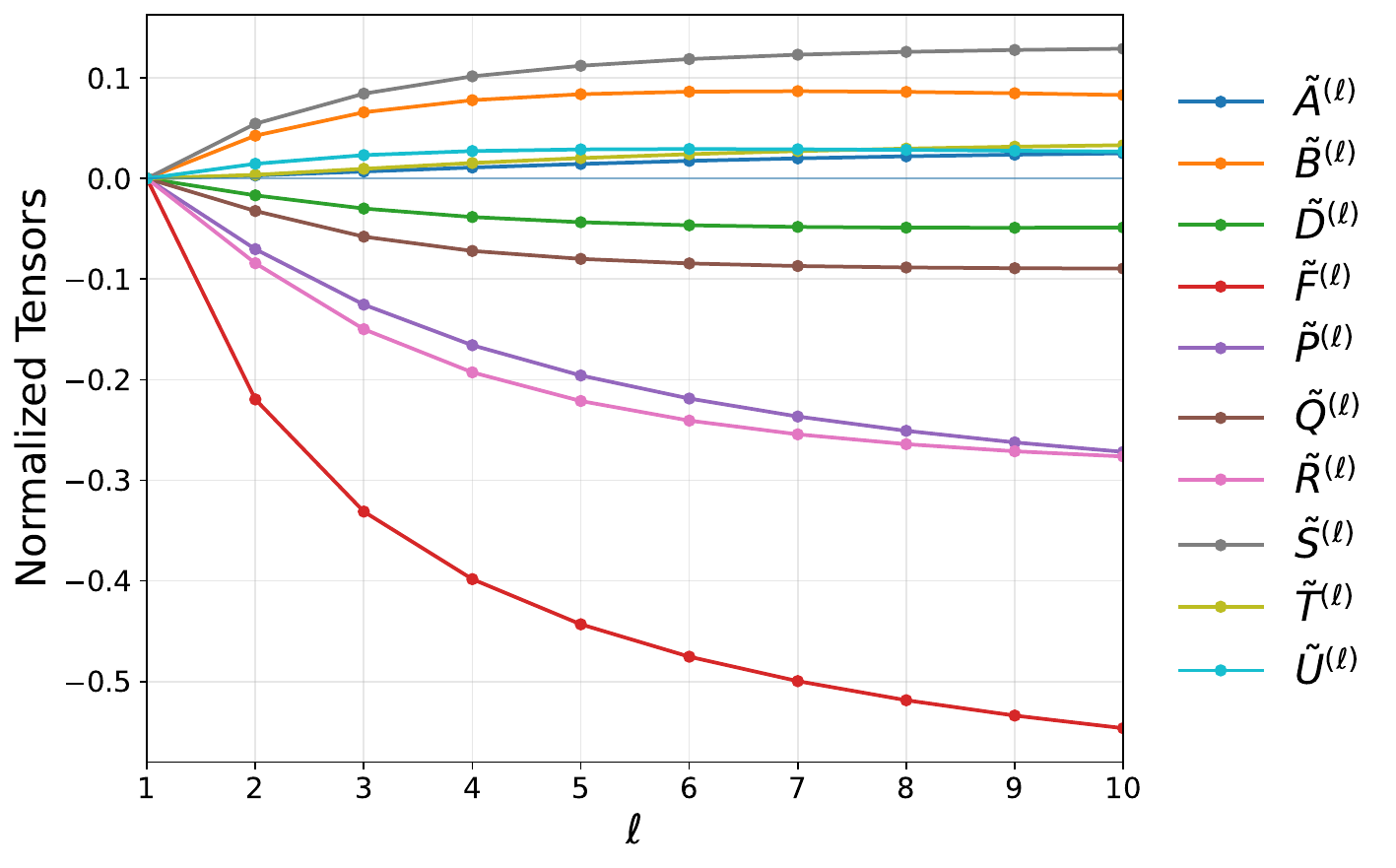}
  \vspace{-0.1cm}
  \caption{\emph{Orthogonal saturation.} Exact solutions of the single-input recursion relations at order $1/n$ (see Theorems~\ref{theoremone} and~\ref{theoremtwo}). We consider a $\tanh$ network of width $n=50$ with inputs drawn from the real interval $(0,1)$. All tensors attain smaller values than their Gaussian counterparts and exhibit early-layer saturation. These predictions are consistent with the empirical results of~\cite{day2023}.}\label{fig:all_tensors_plots}
\end{figure*}

\section{The $V_{6}$ tensor at order \(1/n^2\)}
\label{app:v6_tensor}

In this appendix, we compute the recursion relation governing the layer evolution of the sextic vertex $V_{6}$ using both algebraic and diagrammatic methods. 

\subsection{Algebraic derivation}
\label{sec:v6-algebraic-derivation}
The sextic vertex is defined from the six-point cumulant of the preactivations as follows
\begin{align}\label{six-point-full}
\mathbb{E}^{c}_{\theta}[z_{i_1,\alpha_1}^{(\ell+1)}&\ldots z_{i_6,\alpha_6}^{(\ell+1)}] = \mathbb{E}_{\theta}[z_{i_1,\alpha_1}^{(\ell+1)}\ldots z_{i_6,\alpha_6}^{(\ell+1)}] \nonumber\\
& - \bigg[\mathbb{E}_{\theta}[z_{i_1,\alpha_1}^{(\ell+1)}\ldots z_{i_4,\alpha_4}^{(\ell+1)}]\mathbb{E}_{\theta}[z_{i_5,\alpha_5}^{(\ell+1)}z_{i_6,\alpha_6}^{(\ell+1)}] + \text{14 other (4,2) subdivisions}\bigg]\nonumber\\
& + 2\bigg[\mathbb{E}_{\theta}[z_{i_1,\alpha_1}^{(\ell+1)}z_{i_2,\alpha_2}^{(\ell+1)}]\mathbb{E}_{\theta}[z_{i_3,\alpha_3}^{(\ell+1)}z_{i_4,\alpha_4}^{(\ell+1)}]\mathbb{E}_{\theta}[z_{i_5,\alpha_5}^{(\ell+1)}z_{i_6,\alpha_6}^{(\ell+1)}] + \text{14 other (2,2,2) subdivisions}\bigg]\nonumber\\
&= \frac{1}{n_{\ell}^2}\bigg[\delta_{i_1 i_2}\delta_{i_3 i_4}\delta_{i_5 i_6} V^{(\ell+1)}_{\alpha_1 \alpha_2 \alpha_3 \alpha_4 \alpha_5 \alpha_6} + \text{14 other (2,2,2) subdivisions} \bigg]
\end{align}
To obtain an explicit expression for $V^{(\ell+1)}_{\alpha_1 \alpha_2 \alpha_3 \alpha_4 \alpha_5 \alpha_6}$, we evaluate the expectation values of the orthogonal weights defining the preactivations $z_{i,\alpha}$. In the case of four weights, we introduce the basis $e_{1}=(12)(34)$, $e_{2} = (13)(24)$, $e_{3} = (14)(23)$. The general definition of the Weingarten function, $\mathcal{W}[\tau,\pi]=\mathcal{W}[\ell(\tau\circ \pi)]$, where $\circ$ denotes ordinary permutation multiplication and $\ell(e)$ denotes the cycle length of $e$, implies that $\mathcal{W}[e_{i},e_{j}]=\mathcal{W}[1,1]$ if $e_{i} = e_{j}$, and $\mathcal{W}[e_{i},e_{j}]=\mathcal{W}[2]$ if $e_{i}\neq e_{j}$. Therefore, in the basis $(e_{1},e_{2},e_{3})$, the $k=2$ Weingarten matrix takes the form
\begin{align}\label{k2wmatrix}
M &=
\begin{pmatrix}
\mathcal{W}[1,1] & \mathcal{W}[2] & \mathcal{W}[2] \\
\mathcal{W}[2] & \mathcal{W}[1,1] & \mathcal{W}[2]\\
\mathcal{W}[2] & \mathcal{W}[2] & \mathcal{W}[1,1]
\end{pmatrix}
\end{align}

For six weights, the corresponding basis consists of 15 elements, namely
\begin{align}
\begin{array}{lll}
\tilde{e}_1  = (12)(34)(56), & \tilde{e}_2  = (12)(35)(46), & \tilde{e}_3  = (12)(36)(45), \\[4pt] \tilde{e}_4  = (13)(24)(56), & \tilde{e}_5  = (13)(25)(46), & \tilde{e}_6  = (13)(26)(45), \\[4pt] \tilde{e}_7  = (14)(23)(56), & \tilde{e}_8  = (14)(25)(36), & \tilde{e}_9  = (14)(26)(35), \\[4pt] \tilde{e}_{10} = (15)(23)(46), & \tilde{e}_{11} = (15)(24)(36), & \tilde{e}_{12} = (15)(26)(34), \\[4pt] \tilde{e}_{13} = (16)(23)(45), & \tilde{e}_{14} = (16)(24)(35), & \tilde{e}_{15} = (16)(25)(34).\label{k3basis}
\end{array}
\end{align}
In this case, the cycle lengths can take only three possible values, namely $(1,1,1)$, $(2,1)$, and $(3)$, depending on the structure resulting from the product of three permutations. Consequently, in the basis~\eqref{k3basis}, the $k=3$ Weingarten matrix is represented by a $15\times 15$ matrix whose first row reads
\begin{align}\label{k3wmatrix}
\begin{aligned}
\big(
&\mathcal{W}[1,1,1],\;
\mathcal{W}[2,1],\; \mathcal{W}[2,1],\; \mathcal{W}[2,1],\;
\mathcal{W}[3],\;  \mathcal{W}[3],\; \mathcal{W}[2,1],\\
& \mathcal{W}[3],\; \mathcal{W}[3],\;
\mathcal{W}[3],\; \mathcal{W}[3],\;
\mathcal{W}[2,1],\; \mathcal{W}[3],\; \mathcal{W}[3],\; \mathcal{W}[2,1]
\big)
\end{aligned}
\end{align}
Using~\eqref{k2wmatrix} and~\eqref{k3wmatrix}, we then obtain
\begin{align}\label{v6algebraic}
V^{(\ell+1)}_{\alpha_1 \alpha_2 \alpha_3 \alpha_4 \alpha_5 \alpha_6} &= V^{\mathcal{G}(\ell+1)}_{\alpha_1 \alpha_2 \alpha_3 \alpha_4 \alpha_5 \alpha_6} + \Delta V^{(\ell+1)}_{\alpha_1 \alpha_2 \alpha_3 \alpha_4 \alpha_5 \alpha_6}
\end{align}
where $V^{\mathcal{G}(\ell+1)}_{\alpha_1 \alpha_2 \alpha_3 \alpha_4 \alpha_5 \alpha_6}$ is defined as 
\begin{align}\label{vg6}
\frac{1}{n^{2}_{\ell}} &V^{\mathcal{G}(\ell+1)}_{\alpha_{1}\alpha_{2}\alpha_{3}\alpha_{4}\alpha_{5}\alpha_{6}} = (C^{(\ell+1)}_{W})^{3}\sum_{i,j,k=1}^{n}\bigg[\mathcal{W}[1,1,1]\mathbb{E}[\sigma^{(\ell)}_{i,1}\sigma^{(\ell)}_{i,2}\sigma^{(\ell)}_{j,3}\sigma^{(\ell)}_{j,4}\sigma^{(\ell)}_{k,5}\sigma^{(\ell)}_{k,6}] \nonumber\\
& - \mathcal{W}[1,1]\cdot\mathcal{W}[1]\bigg(\mathbb{E}[\sigma^{(\ell)}_{i,1}\sigma^{(\ell)}_{i,2}\sigma^{(\ell)}_{j,3}\sigma^{(\ell)}_{j,4}]\mathbb{E}[\sigma^{(\ell)}_{k,5}\sigma^{(\ell)}_{k,6}] + \mathbb{E}[\sigma^{(\ell)}_{j,3}\sigma^{(\ell)}_{j,4}\sigma^{(\ell)}_{k,5}\sigma^{(\ell)}_{k,6}]\mathbb{E}[\sigma^{(\ell)}_{i,1}\sigma^{(\ell)}_{i,2}] \nonumber\\
& + \mathbb{E}[\sigma^{(\ell)}_{i,1}\sigma^{(\ell)}_{i,2}\sigma^{(\ell)}_{k,5}\sigma^{(\ell)}_{k,6}]\mathbb{E}[\sigma^{(\ell)}_{j,3}\sigma^{(\ell)}_{j,4}]\bigg) + 2(\mathcal{W}[1])^{3}\mathbb{E}[\sigma^{(\ell)}_{i,1}\sigma^{(\ell)}_{i,2}]\mathbb{E}[\sigma^{(\ell)}_{j,3}\sigma^{(\ell)}_{j,4}]\mathbb{E}[\sigma^{(\ell)}_{k,5}\sigma^{(\ell)}_{k,6}]\bigg]
\end{align}
and $\Delta V^{(\ell+1)}_{\alpha_1 \alpha_2 \alpha_3 \alpha_4 \alpha_5 \alpha_6}$ is given by
\begin{align}\label{deltav6}
\frac{1}{n^{2}_{\ell}}\Delta &V^{(\ell+1)}_{\alpha_1 \alpha_2 \alpha_3 \alpha_4 \alpha_5 \alpha_6} = (C^{(\ell+1)}_{W})^{3}\sum_{i,j,k=1}^{n}\bigg[\mathcal{W}[2,1]\mathbb{E}[\sigma^{(\ell)}_{\alpha_{1},i}\sigma^{(\ell)}_{\alpha_{2},i}\sigma^{(\ell)}_{\alpha_{3},j}\sigma^{(\ell)}_{\alpha_{4},k}\sigma^{(\ell)}_{\alpha_{5},j}\sigma^{(\ell)}_{\alpha_{6},k}]\nonumber\\
& + \mathcal{W}[2,1]\mathbb{E}[\sigma^{(\ell)}_{\alpha_{1},i}\sigma^{(\ell)}_{\alpha_{2},i}\sigma^{(\ell)}_{\alpha_{3},j}\sigma^{(\ell)}_{\alpha_{4},k}\sigma^{(\ell)}_{\alpha_{5},k}\sigma^{(\ell)}_{\alpha_{6},j}] + \mathcal{W}[2,1]\mathbb{E}[\sigma^{(\ell)}_{\alpha_{1},i}\sigma^{(\ell)}_{\alpha_{2},j}\sigma^{(\ell)}_{\alpha_{3},i}\sigma^{(\ell)}_{\alpha_{4},j}\sigma^{(\ell)}_{\alpha_{5},k}\sigma^{(\ell)}_{\alpha_{6},k}] \nonumber\\
& + \mathcal{W}[3]\mathbb{E}[\sigma^{(\ell)}_{\alpha_{1},i}\sigma^{(\ell)}_{\alpha_{2},j}\sigma^{(\ell)}_{\alpha_{3},i}\sigma^{(\ell)}_{\alpha_{4},k}\sigma^{(\ell)}_{\alpha_{5},j}\sigma^{(\ell)}_{\alpha_{6},k}] + \mathcal{W}[3]\mathbb{E}[\sigma^{(\ell)}_{\alpha_{1},i}\sigma^{(\ell)}_{\alpha_{2},j}\sigma^{(\ell)}_{\alpha_{3},i}\sigma^{(\ell)}_{\alpha_{4},k}\sigma^{(\ell)}_{\alpha_{5},k}\sigma^{(\ell)}_{\alpha_{6},j}] \nonumber\\
& + \mathcal{W}[2,1]\mathbb{E}[\sigma^{(\ell)}_{\alpha_{1},i}\sigma^{(\ell)}_{\alpha_{2},j}\sigma^{(\ell)}_{\alpha_{3},j}\sigma^{(\ell)}_{\alpha_{4},i}\sigma^{(\ell)}_{\alpha_{5},k}\sigma^{(\ell)}_{\alpha_{6},k}] + \mathcal{W}[3]\mathbb{E}[\sigma^{(\ell)}_{\alpha_{1},i}\sigma^{(\ell)}_{\alpha_{2},j}\sigma^{(\ell)}_{\alpha_{3},k}\sigma^{(\ell)}_{\alpha_{4},i}\sigma^{(\ell)}_{\alpha_{5},j}\sigma^{(\ell)}_{\alpha_{6},k}]\nonumber\\
& + \mathcal{W}[3]\mathbb{E}[\sigma^{(\ell)}_{\alpha_{1},i}\sigma^{(\ell)}_{\alpha_{2},j}\sigma^{(\ell)}_{\alpha_{3},k}\sigma^{(\ell)}_{\alpha_{4},i}\sigma^{(\ell)}_{\alpha_{5},k}\sigma^{(\ell)}_{\alpha_{6},j}] + \mathcal{W}[3]\mathbb{E}[\sigma^{(\ell)}_{\alpha_{1},i}\sigma^{(\ell)}_{\alpha_{2},j}\sigma^{(\ell)}_{\alpha_{3},j}\sigma^{(\ell)}_{\alpha_{4},k}\sigma^{(\ell)}_{\alpha_{5},i}\sigma^{(\ell)}_{\alpha_{6},k}] \nonumber\\
& + \mathcal{W}[3]\mathbb{E}[\sigma^{(\ell)}_{\alpha_{1},i}\sigma^{(\ell)}_{\alpha_{2},j}\sigma^{(\ell)}_{\alpha_{3},k}\sigma^{(\ell)}_{\alpha_{4},j}\sigma^{(\ell)}_{\alpha_{5},i}\sigma^{(\ell)}_{\alpha_{6},k}] + \mathcal{W}[2,1]\mathbb{E}[\sigma^{(\ell)}_{\alpha_{1},i}\sigma^{(\ell)}_{\alpha_{2},j}\sigma^{(\ell)}_{\alpha_{3},k}\sigma^{(\ell)}_{\alpha_{4},k}\sigma^{(\ell)}_{\alpha_{5},i}\sigma^{(\ell)}_{\alpha_{6},j}]\nonumber\\
& + \mathcal{W}[3]\mathbb{E}[\sigma^{(\ell)}_{\alpha_{1},i}\sigma^{(\ell)}_{\alpha_{2},j}\sigma^{(\ell)}_{\alpha_{3},j}\sigma^{(\ell)}_{\alpha_{4},k}\sigma^{(\ell)}_{\alpha_{5},k}\sigma^{(\ell)}_{\alpha_{6},i}] + \mathcal{W}[3]\mathbb{E}[\sigma^{(\ell)}_{\alpha_{1},i}\sigma^{(\ell)}_{\alpha_{2},j}\sigma^{(\ell)}_{\alpha_{3},k}\sigma^{(\ell)}_{\alpha_{4},j}\sigma^{(\ell)}_{\alpha_{5},k}\sigma^{(\ell)}_{\alpha_{6},i}]\nonumber\\
& + \mathcal{W}[2,1]\mathbb{E}[\sigma^{(\ell)}_{\alpha_{1},i}\sigma^{(\ell)}_{\alpha_{2},j}\sigma^{(\ell)}_{\alpha_{3},k}\sigma^{(\ell)}_{\alpha_{4},k}\sigma^{(\ell)}_{\alpha_{5},j}\sigma^{(\ell)}_{\alpha_{6},i}]\nonumber\\
& - \mathcal{W}[2]\cdot \mathcal{W}[1]\bigg(\mathbb{E}[\sigma^{(\ell)}_{\alpha_1,i}\sigma^{(\ell)}_{\alpha_2,j}\sigma^{(\ell)}_{\alpha_3,i}\sigma^{(\ell)}_{\alpha_4,j}] + \mathbb{E}[\sigma^{(\ell)}_{\alpha_1,i}\sigma^{(\ell)}_{\alpha_2,j}\sigma^{(\ell)}_{\alpha_3,j}\sigma^{(\ell)}_{\alpha_4,i}]\bigg)\mathbb{E}[\sigma^{(\ell)}_{\alpha_5,k}\sigma^{(\ell)}_{\alpha_6,k}]\nonumber\\
& - \mathcal{W}[2]\cdot \mathcal{W}[1]\bigg(\mathbb{E}[\sigma^{(\ell)}_{\alpha_3,i}\sigma^{(\ell)}_{\alpha_4,j}\sigma^{(\ell)}_{\alpha_5,i}\sigma^{(\ell)}_{\alpha_6,j}] + \mathbb{E}[\sigma^{(\ell)}_{\alpha_3,i}\sigma^{(\ell)}_{\alpha_4,j}\sigma^{(\ell)}_{\alpha_5,j}\sigma^{(\ell)}_{\alpha_6,i}]\bigg)\mathbb{E}[\sigma^{(\ell)}_{\alpha_1,k}\sigma^{(\ell)}_{\alpha_2,k}]\nonumber\\
& - \mathcal{W}[2]\mathcal{W}[1]\bigg(\mathbb{E}[\sigma^{(\ell)}_{\alpha_1,i}\sigma^{(\ell)}_{\alpha_2,j}\sigma^{(\ell)}_{\alpha_5,i}\sigma^{(\ell)}_{\alpha_6,j}] + \mathbb{E}[\sigma^{(\ell)}_{\alpha_1,i}\sigma^{(\ell)}_{\alpha_2,j}\sigma^{(\ell)}_{\alpha_5,j}\sigma^{(\ell)}_{\alpha_6,i}]\bigg)\mathbb{E}[\sigma^{(\ell)}_{\alpha_3,k}\sigma^{(\ell)}_{\alpha_4,k}]\bigg]
\end{align}
The term $V^{\mathcal{G}(\ell+1)}_{\alpha_{1}\alpha_{2}\alpha_{3}\alpha_{4}\alpha_{5}\alpha_{6}}$ captures the contribution of the diagonal components of the Weingarten matrices and reduces to the Gaussian case in the infinite-width limit, whereas $\Delta V^{(\ell+1)}_{\alpha_{1}\alpha_{2}\alpha_{3}\alpha_{4}\alpha_{5}\alpha_{6}}$ encodes the off-diagonal contributions.

We now compute $V^{(\ell+1)}_{\alpha_{1}\alpha_{2}\alpha_{3}\alpha_{4}\alpha_{5}\alpha_{6}}$ at order \(1/n^2\). To this end, we expand the $k=3$ Weingarten functions as follows
\begin{align}\label{k3wfunctions}
\begin{aligned}
\mathcal{W}[1,1,1] &= \frac{1}{n^{3}} + \frac{6}{n^{5}} + O\!\left(\frac{1}{n^{6}}\right),\\[4pt]
\mathcal{W}[2,1]   &= -\,\frac{1}{n^{4}} + \frac{1}{n^{5}} + O\!\left(\frac{1}{n^{6}}\right),\\[4pt]
\mathcal{W}[3]     &= \frac{2}{n^{5}} + O\!\left(\frac{1}{n^{6}}\right).
\end{aligned}
\end{align}
Combining~\eqref{k3wfunctions}, \eqref{w11expanded} and~\eqref{w2expanded}, we obtain
\begin{align}\label{v6ordern2}
\frac{1}{n_{\ell}^{2}}V^{\mathcal{G}(\ell+1)}_{\alpha_{1}\alpha_{2}\alpha_{3}\alpha_{4}\alpha_{5}\alpha_{6}} &= \frac{1}{n_{\ell}^{2}}V^{G(\ell+1)}_{\alpha_{1}\alpha_{2}\alpha_{3}\alpha_{4}\alpha_{5}\alpha_{6}} + \frac{6}{n_{\ell}^{2}}\langle\sigma^{(\ell)}_{\alpha_{1}}\sigma^{(\ell)}_{\alpha_{2}}\rangle_{K^{(\ell)}}\langle\sigma^{(\ell)}_{\alpha_{3}}\sigma^{(\ell)}_{\alpha_{4}}\rangle_{K^{(\ell)}}\langle\sigma^{(\ell)}_{\alpha_{5}}\sigma^{(\ell)}_{\alpha_{6}}\rangle_{K^{(\ell)}} \nonumber\\
& - \frac{3\cdot 2}{n_{\ell}^{2}}\langle\sigma^{(\ell)}_{\alpha_{1}}\sigma^{(\ell)}_{\alpha_{2}}\rangle_{K^{(\ell)}}\langle\sigma^{(\ell)}_{\alpha_{3}}\sigma^{(\ell)}_{\alpha_{4}}\rangle_{K^{(\ell)}}\langle\sigma^{(\ell)}_{\alpha_{5}}\sigma^{(\ell)}_{\alpha_{6}}\rangle_{K^{(\ell)}}\nonumber\\
&= \frac{1}{n_{\ell}^{2}}\mathcal{V}^{G(\ell+1)}_{\alpha_{1}\alpha_{2}\alpha_{3}\alpha_{4}\alpha_{5}\alpha_{6}}
\end{align}
The factor 6 in the first equality of \eqref{v6ordern2} originates from $\mathcal{W}[1,1,1]$ in \eqref{vg6}, while the factor $2\cdot 3$ arises from the three contributions of the form $\mathcal{W}[1,1]\cdot \mathcal{W}[1]$. The tensor $V^{G(\ell+1)}_{\alpha_{1}\alpha_{2}\alpha_{3}\alpha_{4}\alpha_{5}\alpha_{6}}$ denotes the Gaussian six-point cumulant at order $\frac{1}{n^{2}}$. The corresponding recursion relation was derived explicitly in~\cite{banta2023} and reads
\allowdisplaybreaks
\begin{align}
\frac{1}{n_{\ell}^{2}}V^{G(\ell+1)}_{\alpha_{1}\alpha_{2}\alpha_{3}\alpha_{4}\alpha_{5}\alpha_{6}} &=\frac{\bigl(C_W^{(\ell+1)}\bigr)^{\*3}}{n_{\ell}^2} \,
\Bigl\langle \widehat{\Delta G}^{(\ell)}_{\alpha_{1}\alpha_{2}} \,\widehat{\Delta G}^{(\ell)}_{\alpha_{3}\alpha_{4}} \,\widehat{\Delta G}^{(\ell)}_{\alpha_{5}\alpha_{6}} \Bigr\rangle_{K^{(\ell)}} \nonumber\\
&\hspace{-60pt} +\,\frac{\bigl(C_W^{(\ell+1)}\bigr)^{\*3}}{4\,n_{\ell} n_{\ell-1}} \!\!\!\!\sum_{\beta_{i} \in \{\alpha_{1},\ldots,\alpha_{6}\}} \!\!\!\!\!\!\!\!V^{(\ell)}_{\beta_{1}\beta_{2}\beta_{3}\beta_{4}} \Biggl[ 
\biggl\langle \frac{d^2 \bigl( \widehat{\Delta G}^{(\ell)}_{\alpha_{1}\alpha_{2}} \, \widehat{\Delta G}^{(\ell)}_{\alpha_{5}\alpha_{6}} \bigr)}{d z^{(\ell)}_{\beta_{1}} \,d z^{(\ell)}_{\beta_{2}}} \biggr\rangle_{\!\!K^{(\ell)}}
\biggl\langle \frac{d^2 \widehat{\Delta G}^{(\ell)}_{\alpha_{3}\alpha_{4}}}{d z^{(\ell)}_{\beta_{3}} \,d z^{(\ell)}_{\beta_{4}} } \biggr\rangle_{\!\!K^{(\ell)}}
\! \nonumber\\[10pt]
&\hspace{-60pt} + \, \biggl\langle \frac{d^2 \bigl( \widehat{\Delta G}^{(\ell)}_{\alpha_{1}\alpha_{2}} \, \widehat{\Delta G}^{(\ell)}_{\alpha_{3}\alpha_{4}} \bigr)}{d z^{(\ell)}_{\beta_{1}} \,d z^{(\ell)}_{\beta_{2}} } \biggr\rangle_{\!\!K^{(\ell)}}
\biggl\langle \frac{d^2 \widehat{\Delta G}^{(\ell)}_{\alpha_{5}\alpha_{6}}}{d z^{(\ell)}_{\beta_{3}} \,d z^{(\ell)}_{\beta_{4}} } \biggr\rangle_{\!\!K^{(\ell)}}\!\!+  \biggl\langle \frac{d^2 \bigl( \widehat{\Delta G}^{(\ell)}_{\alpha_{3}\alpha_{4}} \, \widehat{\Delta G}^{(\ell)}_{\alpha_{5}\alpha_{6}} \bigr)}{d z^{(\ell)}_{\beta_{1}} \,d z^{(\ell)}_{\beta_{2}}} \biggr\rangle_{\!\!K^{(\ell)}}
\biggl\langle \frac{d^2 \widehat{\Delta G}^{(\ell)}_{\alpha_{1}\alpha_{2}}}{d z^{(\ell)}_{\beta_{3}} \,d z^{(\ell)}_{\beta_{4}} } \biggr\rangle_{\!\!K^{(\ell)}}\Biggr]\nonumber\\
&\hspace{-60pt} + \, \frac{\bigl(C_W^{(\ell+1)}\bigr)^{\*3}}{8\,n_{\ell-1}^2}
\!\!\!\!\sum_{\beta_{i}\in \{\alpha_{1},\ldots,\alpha_{6}\}} \!\!\!\!\!\!\!\! V^{(\ell)}_{\beta_{1}\beta_{2}\beta_{3}\beta_{4}\beta_{5}\beta_{6}}
\biggl\langle \frac{d^2 \widehat{\Delta G}^{(\ell)}_{\alpha_{1}\alpha_{2}}}{d z^{(\ell)}_{\beta_{1}} \, d z^{(\ell)}_{\beta_{2}}} \biggr\rangle_{\!\!K^{(\ell)}}
\biggl\langle \frac{d^2 \widehat{\Delta G}^{(\ell)}_{\alpha_{3}\alpha_{4}}}{d z^{(\ell)}_{\beta_{3}} \, d z^{(\ell)}_{\beta_{4}}} \biggr\rangle_{\!\!K^{(\ell)}}
\biggl\langle \frac{d^2 \widehat{\Delta G}^{(\ell)}_{\alpha_{5}\alpha_{6}}}{d z^{(\ell)}_{\beta_{5}} \, d z^{(\ell)}_{\beta_{6}}} \biggr\rangle_{\!\!K^{(\ell)}} \nonumber\\
& \hspace{-60pt} + \, \frac{\bigl(C_W^{(\ell+1)}\bigr)^{\*3}}{16\, n_{\ell-1}^2} \!\!\!\!\sum_{\beta_{i} \in \{\alpha_{1},\ldots \alpha_{6}\}} \!\!\!\!\!\!\!\!V^{(\ell)}_{\beta_{1}\beta_{2}\beta_{3}\beta_{4}} \,V^{(\ell)}_{\beta_{5}\beta_{6}\beta_{7}\beta_{8}} 
\Biggl[
\biggl\langle \frac{d^4 \widehat{\Delta G}^{(\ell)}_{\alpha_{3}\alpha_{4}}}{d z^{(\ell)}_{\beta_{3}} \, d z^{(\ell)}_{\beta_{4}} \, d z^{(\ell)}_{\beta_{7}}  \, d z^{(\ell)}_{\beta_{8}} } \biggr\rangle_{\!\!K^{(\ell)}}
\biggl\langle \frac{d^2 \widehat{\Delta G}^{(\ell)}_{\alpha_{1}\alpha_{2}}}{d z^{(\ell)}_{\beta_{1}} \, d z^{(\ell)}_{\beta_{2}}} \biggr\rangle_{\!\!K^{(\ell)}}
\biggl\langle \frac{d^2 \widehat{\Delta G}^{(\ell)}_{\alpha_{5}\alpha_{6}}}{d z^{(\ell)}_{\beta_{5}} \, d z^{(\ell)}_{\beta_{6}}} \biggr\rangle_{\!\!K^{(\ell)}} 
\nonumber\\[10pt]
&\hspace{-60pt}
+\, \biggl\langle \frac{d^4 \widehat{\Delta G}^{(\ell)}_{\alpha_{1}\alpha_{2}}}{d z^{(\ell)}_{\beta_{3}} \, d z^{(\ell)}_{\beta_{4}} \, d z^{(\ell)}_{\beta_{7}}  \, d z^{(\ell)}_{\beta_{8}} } \biggr\rangle_{\!\!K^{(\ell)}}
\biggl\langle \frac{d^2 \widehat{\Delta G}^{(\ell)}_{\alpha_{3}\alpha_{4}}}{d z^{(\ell)}_{\beta_{1}} \, d z^{(\ell)}_{\beta_{2}}} \biggr\rangle_{\!\!K^{(\ell)}}
\biggl\langle \frac{d^2 \widehat{\Delta G}^{(\ell)}_{\alpha_{5}\alpha_{6}}}{d z^{(\ell)}_{\beta_{5}} \, d z^{(\ell)}_{\beta_{6}}} \biggr\rangle_{\!\!K^{(\ell)}} 
\nonumber\\[10pt]
&\hspace{-60pt}
+\, \biggl\langle \frac{d^4 \widehat{\Delta G}^{(\ell)}_{\alpha_{5}\alpha_{6}}}{d z^{(\ell)}_{\beta_{3}} \, d z^{(\ell)}_{\beta_{4}} \, d z^{(\ell)}_{\beta_{7}}  \, d z^{(\ell)}_{\beta_{8}} } \biggr\rangle_{\!\!K^{(\ell)}}
\biggl\langle \frac{d^2 \widehat{\Delta G}^{(\ell)}_{\alpha_{1}\alpha_{2}}}{d z^{(\ell)}_{\beta_{1}} \, d z^{(\ell)}_{\beta_{2}}} \biggr\rangle_{\!\!K^{(\ell)}}
\biggl\langle \frac{d^2 \widehat{\Delta G}^{(\ell)}_{\alpha_{3}\alpha_{4}}}{d z^{(\ell)}_{\beta_{5}} \, d z^{(\ell)}_{\beta_{6}}} \biggr\rangle_{\!\!K^{(\ell)}} \Biggr]\,.
\end{align}
where $\widehat{\Delta G}^{(\ell)}_{\alpha\beta} = \sigma^{(\ell)}_{\alpha}\sigma^{(\ell)}_{\beta} - \langle\sigma^{(\ell)}_{\alpha}\sigma^{(\ell)}_{\beta}\rangle_{K^{(\ell)}}$. 

Likewise, the off-diagonal contribution $\Delta \mathcal{V}^{(\ell+1)}_{\alpha_{1}\alpha_{2}\alpha_{3}\alpha_{4}\alpha_{5}\alpha_{6}}$ can be systematically computed as follows: 

\begin{enumerate}
\item In the $\frac{1}{n}$-expansion of $\mathcal{W}[2,1]$, only the terms of order $\frac{1}{n^{5}}$ and $\frac{1}{n^{4}}$ contribute nontrivially. The former becomes relevant when all neural indices are distinct, yielding
\begin{align}
  \frac{1}{n_{\ell}^{2}}\bigg[\langle \sigma^{(\ell)}_{\alpha_{1}}\sigma^{(\ell)}_{\alpha_{2}}\rangle_{K^{(\ell)}}\langle \sigma^{(\ell)}_{\alpha_{3}}\sigma^{(\ell)}_{\alpha_{5}}\rangle_{K^{(\ell)}}\langle \sigma^{(\ell)}_{\alpha_{4}}\sigma^{(\ell)}_{\alpha_{6}}\rangle_{K^{(\ell)}} + \langle \sigma^{(\ell)}_{\alpha_{1}}\sigma^{(\ell)}_{\alpha_{2}}\rangle_{K^{(\ell)}}\langle \sigma^{(\ell)}_{\alpha_{3}}\sigma^{(\ell)}_{\alpha_{6}}\rangle_{K^{(\ell)}}\langle \sigma^{(\ell)}_{\alpha_{4}}\sigma^{(\ell)}_{\alpha_{5}}\rangle_{K^{(\ell)}}\nonumber\\
+ \langle \sigma^{(\ell)}_{\alpha_{1}}\sigma^{(\ell)}_{\alpha_{3}}\rangle_{K^{(\ell)}}\langle \sigma^{(\ell)}_{\alpha_{2}}\sigma^{(\ell)}_{\alpha_{4}}\rangle_{K^{(\ell)}}\langle \sigma^{(\ell)}_{\alpha_{5}}\sigma^{(\ell)}_{\alpha_{6}}\rangle_{K^{(\ell)}} + \langle \sigma^{(\ell)}_{\alpha_{1}}\sigma^{(\ell)}_{\alpha_{4}}\rangle_{K^{(\ell)}}\langle \sigma^{(\ell)}_{\alpha_{2}}\sigma^{(\ell)}_{\alpha_{3}}\rangle_{K^{(\ell)}}\langle \sigma^{(\ell)}_{\alpha_{5}}\sigma^{(\ell)}_{\alpha_{6}}\rangle_{K^{(\ell)}}\nonumber\\
+ \langle \sigma^{(\ell)}_{\alpha_{1}}\sigma^{(\ell)}_{\alpha_{5}}\rangle_{K^{(\ell)}}\langle \sigma^{(\ell)}_{\alpha_{2}}\sigma^{(\ell)}_{\alpha_{6}}\rangle_{K^{(\ell)}}\langle \sigma^{(\ell)}_{\alpha_{3}}\sigma^{(\ell)}_{\alpha_{4}}\rangle_{K^{(\ell)}} + \langle \sigma^{(\ell)}_{\alpha_{1}}\sigma^{(\ell)}_{\alpha_{6}}\rangle_{K^{(\ell)}}\langle \sigma^{(\ell)}_{\alpha_{2}}\sigma^{(\ell)}_{\alpha_{5}}\rangle_{K^{(\ell)}}\langle \sigma^{(\ell)}_{\alpha_{3}}\sigma^{(\ell)}_{\alpha_{4}}\rangle_{K^{(\ell)}}\bigg]
\end{align}

The latter splits into two cases:
\begin{itemize}
\item The first case arises when all neural indices are distinct: 
\begin{align}
\frac{3}{n_{\ell}^{2}}\bigg[\langle \sigma^{(\ell)}_{\alpha_{1}}\sigma^{(\ell)}_{\alpha_{2}}\rangle_{K^{(\ell)}}\langle \sigma^{(\ell)}_{\alpha_{3}}\sigma^{(\ell)}_{\alpha_{5}}\rangle_{K^{(\ell)}}\langle \sigma^{(\ell)}_{\alpha_{4}}\sigma^{(\ell)}_{\alpha_{6}}\rangle_{K^{(\ell)}} + \langle \sigma^{(\ell)}_{\alpha_{1}}\sigma^{(\ell)}_{\alpha_{2}}\rangle_{K^{(\ell)}}\langle \sigma^{(\ell)}_{\alpha_{3}}\sigma^{(\ell)}_{\alpha_{6}}\rangle_{K^{(\ell)}}\langle \sigma^{(\ell)}_{\alpha_{4}}\sigma^{(\ell)}_{\alpha_{5}}\rangle_{K^{(\ell)}}\nonumber\\
+ \langle \sigma^{(\ell)}_{\alpha_{1}}\sigma^{(\ell)}_{\alpha_{3}}\rangle_{K^{(\ell)}}\langle \sigma^{(\ell)}_{\alpha_{2}}\sigma^{(\ell)}_{\alpha_{4}}\rangle_{K^{(\ell)}}\langle \sigma^{(\ell)}_{\alpha_{5}}\sigma^{(\ell)}_{\alpha_{6}}\rangle_{K^{(\ell)}} + \langle \sigma^{(\ell)}_{\alpha_{1}}\sigma^{(\ell)}_{\alpha_{4}}\rangle_{K^{(\ell)}}\langle \sigma^{(\ell)}_{\alpha_{2}}\sigma^{(\ell)}_{\alpha_{3}}\rangle_{K^{(\ell)}}\langle \sigma^{(\ell)}_{\alpha_{5}}\sigma^{(\ell)}_{\alpha_{6}}\rangle_{K^{(\ell)}}\nonumber\\
+ \langle \sigma^{(\ell)}_{\alpha_{1}}\sigma^{(\ell)}_{\alpha_{5}}\rangle_{K^{(\ell)}}\langle \sigma^{(\ell)}_{\alpha_{2}}\sigma^{(\ell)}_{\alpha_{6}}\rangle_{K^{(\ell)}}\langle \sigma^{(\ell)}_{\alpha_{3}}\sigma^{(\ell)}_{\alpha_{4}}\rangle_{K^{(\ell)}} + \langle \sigma^{(\ell)}_{\alpha_{1}}\sigma^{(\ell)}_{\alpha_{6}}\rangle_{K^{(\ell)}}\langle \sigma^{(\ell)}_{\alpha_{2}}\sigma^{(\ell)}_{\alpha_{5}}\rangle_{K^{(\ell)}}\langle \sigma^{(\ell)}_{\alpha_{3}}\sigma^{(\ell)}_{\alpha_{4}}\rangle_{K^{(\ell)}}\bigg]
\end{align}

\item The second case occurs when exactly two neural indices coincide. In this case, we find two types of contributions: one that does not involve the four-point cumulant,
\begin{align}
-\frac{1}{n_{\ell}^{2}}\bigg[\langle \sigma^{(\ell)}_{\alpha_{1}}\sigma^{(\ell)}_{\alpha_{2}}\sigma^{(\ell)}_{\alpha_{3}}\sigma^{(\ell)}_{\alpha_{5}}\rangle_{K^{(\ell)}}\langle \sigma^{(\ell)}_{\alpha_{4}}\sigma^{(\ell)}_{\alpha_{6}}\rangle_{K^{(\ell)}} + \langle \sigma^{(\ell)}_{\alpha_{1}}\sigma^{(\ell)}_{\alpha_{2}}\sigma^{(\ell)}_{\alpha_{4}}\sigma^{(\ell)}_{\alpha_{6}}\rangle_{K^{(\ell)}}\langle \sigma^{(\ell)}_{\alpha_{3}}\sigma^{(\ell)}_{\alpha_{5}}\rangle_{K^{(\ell)}}\nonumber\\
 + \langle \sigma^{(\ell)}_{\alpha_{3}}\sigma^{(\ell)}_{\alpha_{5}}\sigma^{(\ell)}_{\alpha_{4}}\sigma^{(\ell)}_{\alpha_{6}}\rangle_{K^{(\ell)}}\langle \sigma^{(\ell)}_{\alpha_{1}}\sigma^{(\ell)}_{\alpha_{2}}\rangle_{K^{(\ell)}} + \langle \sigma^{(\ell)}_{\alpha_{1}}\sigma^{(\ell)}_{\alpha_{2}}\sigma^{(\ell)}_{\alpha_{3}}\sigma^{(\ell)}_{\alpha_{6}}\rangle_{K^{(\ell)}}\langle \sigma^{(\ell)}_{\alpha_{4}}\sigma^{(\ell)}_{\alpha_{5}}\rangle_{K^{(\ell)}}\nonumber\\
+ \langle \sigma^{(\ell)}_{\alpha_{1}}\sigma^{(\ell)}_{\alpha_{2}}\sigma^{(\ell)}_{\alpha_{4}}\sigma^{(\ell)}_{\alpha_{5}}\rangle_{K^{(\ell)}}\langle \sigma^{(\ell)}_{\alpha_{3}}\sigma^{(\ell)}_{\alpha_{6}}\rangle_{K^{(\ell)}}
+ \langle \sigma^{(\ell)}_{\alpha_{4}}\sigma^{(\ell)}_{\alpha_{5}}\sigma^{(\ell)}_{\alpha_{3}}\sigma^{(\ell)}_{\alpha_{6}}\rangle_{K^{(\ell)}}\langle \sigma^{(\ell)}_{\alpha_{1}}\sigma^{(\ell)}_{\alpha_{2}}\rangle_{K^{(\ell)}}\nonumber\\
+ \langle \sigma^{(\ell)}_{\alpha_{1}}\sigma^{(\ell)}_{\alpha_{3}}\sigma^{(\ell)}_{\alpha_{2}}\sigma^{(\ell)}_{\alpha_{4}}\rangle_{K^{(\ell)}}\langle \sigma^{(\ell)}_{\alpha_{5}}\sigma^{(\ell)}_{\alpha_{6}}\rangle_{K^{(\ell)}} + \langle \sigma^{(\ell)}_{\alpha_{1}}\sigma^{(\ell)}_{\alpha_{3}}\sigma^{(\ell)}_{\alpha_{5}}\sigma^{(\ell)}_{\alpha_{6}}\rangle_{K^{(\ell)}}\langle \sigma^{(\ell)}_{\alpha_{2}}\sigma^{(\ell)}_{\alpha_{4}}\rangle_{K^{(\ell)}} \nonumber\\
+ \langle \sigma^{(\ell)}_{\alpha_{2}}\sigma^{(\ell)}_{\alpha_{4}}\sigma^{(\ell)}_{\alpha_{5}}\sigma^{(\ell)}_{\alpha_{6}}\rangle_{K^{(\ell)}}\langle \sigma^{(\ell)}_{\alpha_{1}}\sigma^{(\ell)}_{\alpha_{3}}\rangle_{K^{(\ell)}}
+ \langle \sigma^{(\ell)}_{\alpha_{1}}\sigma^{(\ell)}_{\alpha_{4}}\sigma^{(\ell)}_{\alpha_{2}}\sigma^{(\ell)}_{\alpha_{3}}\rangle_{K^{(\ell)}}\langle \sigma^{(\ell)}_{\alpha_{5}}\sigma^{(\ell)}_{\alpha_{6}}\rangle_{K^{(\ell)}}\nonumber\\ 
+ \langle \sigma^{(\ell)}_{\alpha_{1}}\sigma^{(\ell)}_{\alpha_{4}}\sigma^{(\ell)}_{\alpha_{5}}\sigma^{(\ell)}_{\alpha_{6}}\rangle_{K^{(\ell)}}\langle \sigma^{(\ell)}_{\alpha_{2}}\sigma^{(\ell)}_{\alpha_{3}}\rangle_{K^{(\ell)}} + \langle \sigma^{(\ell)}_{\alpha_{2}}\sigma^{(\ell)}_{\alpha_{3}}\sigma^{(\ell)}_{\alpha_{5}}\sigma^{(\ell)}_{\alpha_{6}}\rangle_{K^{(\ell)}}\langle \sigma^{(\ell)}_{\alpha_{1}}\sigma^{(\ell)}_{\alpha_{4}}\rangle_{K^{(\ell)}}\nonumber\\
+ \langle \sigma^{(\ell)}_{\alpha_{1}}\sigma^{(\ell)}_{\alpha_{5}}\sigma^{(\ell)}_{\alpha_{2}}\sigma^{(\ell)}_{\alpha_{6}}\rangle_{K^{(\ell)}}\langle \sigma^{(\ell)}_{\alpha_{3}}\sigma^{(\ell)}_{\alpha_{4}}\rangle_{K^{(\ell)}} + \langle \sigma^{(\ell)}_{\alpha_{1}}\sigma^{(\ell)}_{\alpha_{5}}\sigma^{(\ell)}_{\alpha_{3}}\sigma^{(\ell)}_{\alpha_{4}}\rangle_{K^{(\ell)}}\langle \sigma^{(\ell)}_{\alpha_{2}}\sigma^{(\ell)}_{\alpha_{6}}\rangle_{K^{(\ell)}}\nonumber\\ + \langle \sigma^{(\ell)}_{\alpha_{2}}\sigma^{(\ell)}_{\alpha_{6}}\sigma^{(\ell)}_{\alpha_{3}}\sigma^{(\ell)}_{\alpha_{4}}\rangle_{K^{(\ell)}}\langle \sigma^{(\ell)}_{\alpha_{1}}\sigma^{(\ell)}_{\alpha_{5}}\rangle_{K^{(\ell)}} + \langle \sigma^{(\ell)}_{\alpha_{1}}\sigma^{(\ell)}_{\alpha_{6}}\sigma^{(\ell)}_{\alpha_{2}}\sigma^{(\ell)}_{\alpha_{5}}\rangle_{K^{(\ell)}}\langle \sigma^{(\ell)}_{\alpha_{3}}\sigma^{(\ell)}_{\alpha_{4}}\rangle_{K^{(\ell)}}\nonumber\\
 + \langle \sigma^{(\ell)}_{\alpha_{1}}\sigma^{(\ell)}_{\alpha_{6}}\sigma^{(\ell)}_{\alpha_{3}}\sigma^{(\ell)}_{\alpha_{4}}\rangle_{K^{(\ell)}}\langle \sigma^{(\ell)}_{\alpha_{2}}\sigma^{(\ell)}_{\alpha_{5}}\rangle_{K^{(\ell)}} + \langle \sigma^{(\ell)}_{\alpha_{2}}\sigma^{(\ell)}_{\alpha_{5}}\sigma^{(\ell)}_{\alpha_{3}}\sigma^{(\ell)}_{\alpha_{4}}\rangle_{K^{(\ell)}}\langle \sigma^{(\ell)}_{\alpha_{1}}\sigma^{(\ell)}_{\alpha_{6}}\rangle_{K^{(\ell)}}
\bigg]
\end{align}
and one that explicitly involves $V_{4}$:
\begin{align}
-\frac{1}{n_{\ell}n_{\ell-1}}\bigg[\langle\sigma^{(\ell)}_{\alpha_{1}}\sigma^{(\ell)}_{\alpha_{2}}|\sigma^{(\ell)}_{\alpha_{3}}\sigma^{(\ell)}_{\alpha_{5}}\rangle_{V^{(\ell)}}\langle \sigma^{(\ell)}_{\alpha_{4}}\sigma^{(\ell)}_{\alpha_{6}}\rangle_{K^{(\ell)}} + \langle\sigma^{(\ell)}_{\alpha_{1}}\sigma^{(\ell)}_{\alpha_{2}}|\sigma^{(\ell)}_{\alpha_{4}}\sigma^{(\ell)}_{\alpha_{6}}\rangle_{V^{(\ell)}}\langle \sigma^{(\ell)}_{\alpha_{3}}\sigma^{(\ell)}_{\alpha_{5}}\rangle_{K^{(\ell)}}\nonumber\\
 + \langle\sigma^{(\ell)}_{\alpha_{3}}\sigma^{(\ell)}_{\alpha_{5}}|\sigma^{(\ell)}_{\alpha_{4}}\sigma^{(\ell)}_{\alpha_{6}}\rangle_{V^{(\ell)}}\langle \sigma^{(\ell)}_{\alpha_{1}}\sigma^{(\ell)}_{\alpha_{2}}\rangle_{K^{(\ell)}}
+ \langle \sigma^{(\ell)}_{\alpha_{1}}\sigma^{(\ell)}_{\alpha_{2}}|\sigma^{(\ell)}_{\alpha_{3}}\sigma^{(\ell)}_{\alpha_{6}}\rangle_{V^{(\ell)}}\langle \sigma^{(\ell)}_{\alpha_{4}}\sigma^{(\ell)}_{\alpha_{5}}\rangle_{K^{(\ell)}}\nonumber\\
+ \langle \sigma^{(\ell)}_{\alpha_{1}}\sigma^{(\ell)}_{\alpha_{2}}|\sigma^{(\ell)}_{\alpha_{4}}\sigma^{(\ell)}_{\alpha_{5}}\rangle_{V^{(\ell)}}\langle \sigma^{(\ell)}_{\alpha_{3}}\sigma^{(\ell)}_{\alpha_{6}}\rangle_{K^{(\ell)}}
+ \langle \sigma^{(\ell)}_{\alpha_{4}}\sigma^{(\ell)}_{\alpha_{5}}|\sigma^{(\ell)}_{\alpha_{3}}\sigma^{(\ell)}_{\alpha_{6}}\rangle_{V^{(\ell)}}\langle \sigma^{(\ell)}_{\alpha_{1}}\sigma^{(\ell)}_{\alpha_{2}}\rangle_{K^{(\ell)}}\nonumber\\
+ \langle \sigma^{(\ell)}_{\alpha_{1}}\sigma^{(\ell)}_{\alpha_{3}}|\sigma^{(\ell)}_{\alpha_{2}}\sigma^{(\ell)}_{\alpha_{4}}\rangle_{V^{(\ell)}}\langle \sigma^{(\ell)}_{\alpha_{5}}\sigma^{(\ell)}_{\alpha_{6}}\rangle_{K^{(\ell)}} + \langle \sigma^{(\ell)}_{\alpha_{1}}\sigma^{(\ell)}_{\alpha_{3}}|\sigma^{(\ell)}_{\alpha_{5}}\sigma^{(\ell)}_{\alpha_{6}}\rangle_{V^{(\ell)}}\langle \sigma^{(\ell)}_{\alpha_{2}}\sigma^{(\ell)}_{\alpha_{4}}\rangle_{K^{(\ell)}}\nonumber\\
 + \langle \sigma^{(\ell)}_{\alpha_{2}}\sigma^{(\ell)}_{\alpha_{4}}|\sigma^{(\ell)}_{\alpha_{5}}\sigma^{(\ell)}_{\alpha_{6}}\rangle_{V^{(\ell)}}\langle \sigma^{(\ell)}_{\alpha_{1}}\sigma^{(\ell)}_{\alpha_{3}}\rangle_{K^{(\ell)}} 
+ \langle \sigma^{(\ell)}_{\alpha_{1}}\sigma^{(\ell)}_{\alpha_{4}}|\sigma^{(\ell)}_{\alpha_{2}}\sigma^{(\ell)}_{\alpha_{3}}\rangle_{V^{(\ell)}}\langle \sigma^{(\ell)}_{\alpha_{5}}\sigma^{(\ell)}_{\alpha_{6}}\rangle_{K^{(\ell)}}\nonumber\\
 + \langle \sigma^{(\ell)}_{\alpha_{1}}\sigma^{(\ell)}_{\alpha_{4}}|\sigma^{(\ell)}_{\alpha_{5}}\sigma^{(\ell)}_{\alpha_{6}}\rangle_{V^{(\ell)}}\langle \sigma^{(\ell)}_{\alpha_{2}}\sigma^{(\ell)}_{\alpha_{3}}\rangle_{K^{(\ell)}} + \langle \sigma^{(\ell)}_{\alpha_{2}}\sigma^{(\ell)}_{\alpha_{3}}|\sigma^{(\ell)}_{\alpha_{5}}\sigma^{(\ell)}_{\alpha_{6}}\rangle_{V^{(\ell)}}\langle \sigma^{(\ell)}_{\alpha_{1}}\sigma^{(\ell)}_{\alpha_{4}}\rangle_{K^{(\ell)}}\nonumber\\
+ \langle \sigma^{(\ell)}_{\alpha_{1}}\sigma^{(\ell)}_{\alpha_{5}}|\sigma^{(\ell)}_{\alpha_{2}}\sigma^{(\ell)}_{\alpha_{6}}\rangle_{V^{(\ell)}}\langle \sigma^{(\ell)}_{\alpha_{3}}\sigma^{(\ell)}_{\alpha_{4}}\rangle_{K^{(\ell)}} + \langle \sigma^{(\ell)}_{\alpha_{1}}\sigma^{(\ell)}_{\alpha_{5}}|\sigma^{(\ell)}_{\alpha_{3}}\sigma^{(\ell)}_{\alpha_{4}}\rangle_{V^{(\ell)}}\langle \sigma^{(\ell)}_{\alpha_{2}}\sigma^{(\ell)}_{\alpha_{6}}\rangle_{K^{(\ell)}}\nonumber\\
 + \langle \sigma^{(\ell)}_{\alpha_{2}}\sigma^{(\ell)}_{\alpha_{6}}|\sigma^{(\ell)}_{\alpha_{3}}\sigma^{(\ell)}_{\alpha_{4}}\rangle_{V^{(\ell)}}\langle \sigma^{(\ell)}_{\alpha_{1}}\sigma^{(\ell)}_{\alpha_{5}}\rangle_{K^{(\ell)}} + \langle \sigma^{(\ell)}_{\alpha_{1}}\sigma^{(\ell)}_{\alpha_{6}}|\sigma^{(\ell)}_{\alpha_{2}}\sigma^{(\ell)}_{\alpha_{5}}\rangle_{V^{(\ell)}}\langle \sigma^{(\ell)}_{\alpha_{3}}\sigma^{(\ell)}_{\alpha_{4}}\rangle_{K^{(\ell)}} \nonumber\\
 + \langle \sigma^{(\ell)}_{\alpha_{1}}\sigma^{(\ell)}_{\alpha_{6}}|\sigma^{(\ell)}_{\alpha_{3}}\sigma^{(\ell)}_{\alpha_{4}}\rangle_{V^{(\ell)}}\langle \sigma^{(\ell)}_{\alpha_{2}}\sigma^{(\ell)}_{\alpha_{5}}\rangle_{K^{(\ell)}} + \langle \sigma^{(\ell)}_{\alpha_{2}}\sigma^{(\ell)}_{\alpha_{5}}|\sigma^{(\ell)}_{\alpha_{3}}\sigma^{(\ell)}_{\alpha_{4}}\rangle_{V^{(\ell)}}\langle \sigma^{(\ell)}_{\alpha_{1}}\sigma^{(\ell)}_{\alpha_{6}}\rangle_{K^{(\ell)}}
\bigg]
\end{align}
where we have introduced the notation $\langle\sigma_{\alpha}\sigma_{\beta}|\sigma_{\delta}\sigma_{\epsilon}\rangle_{V}$ as shorthand for
\begin{align}
\langle\sigma^{(\ell)}_{\alpha}\sigma^{(\ell)}_{\beta}|\sigma^{(\ell)}_{\delta}\sigma^{(\ell)}_{\epsilon}\rangle_{V^{(\ell)}} &= \frac{1}{4}\sum_{\beta_{i} \in \{\alpha,\beta,\delta,\epsilon\}}\Biggl\langle \frac{d^2 (\sigma^{(\ell)}_{\alpha}\sigma^{(\ell)}_{\beta})}{d z^{(\ell)}_{\beta_{1}} d z^{(\ell)}_{\beta_{2}}}\Biggr\rangle_{K^{(\ell)}} V^{(\ell)}_{\beta_{1}\beta_{2}\beta_{3}\beta_{4}} \Biggl\langle \frac{d^2 (\sigma^{(\ell)}_{\delta}\sigma^{(\ell)}_{\epsilon})}{d z^{(\ell)}_{\beta_{3}} d z^{(\ell)}_{\beta_{4}}}\Biggr\rangle_{K^{(\ell)}}
\end{align}
\end{itemize}

\item In the $\frac{1}{n}$-expansion of $\mathcal{W}[3]$, only the term of order $\frac{1}{n^{5}}$ contributes. This occurs when all neural indices are distinct:
\begin{align}
\frac{2}{n_{\ell}^{2}}\bigg[\langle\sigma^{(\ell)}_{\alpha_{1}}\sigma^{(\ell)}_{\alpha_{3}}\rangle_{K^{(\ell)}}\langle\sigma^{(\ell)}_{\alpha_{2}}\sigma^{(\ell)}_{\alpha_{5}}\rangle_{K^{(\ell)}}\langle\sigma^{(\ell)}_{\alpha_{4}}\sigma^{(\ell)}_{\alpha_{6}}\rangle_{K^{(\ell)}} + \langle\sigma^{(\ell)}_{\alpha_{1}}\sigma^{(\ell)}_{\alpha_{3}}\rangle_{K^{(\ell)}}\langle\sigma^{(\ell)}_{\alpha_{2}}\sigma^{(\ell)}_{\alpha_{6}}\rangle_{K^{(\ell)}}\langle\sigma^{(\ell)}_{\alpha_{4}}\sigma^{(\ell)}_{\alpha_{5}}\rangle_{K^{(\ell)}}\nonumber\\
+ \langle\sigma^{(\ell)}_{\alpha_{1}}\sigma^{(\ell)}_{\alpha_{4}}\rangle_{K^{(\ell)}}\langle\sigma^{(\ell)}_{\alpha_{2}}\sigma^{(\ell)}_{\alpha_{5}}\rangle_{K^{(\ell)}}\langle\sigma^{(\ell)}_{\alpha_{3}}\sigma^{(\ell)}_{\alpha_{6}}\rangle_{K^{(\ell)}} + \langle\sigma^{(\ell)}_{\alpha_{1}}\sigma^{(\ell)}_{\alpha_{4}}\rangle_{K^{(\ell)}}\langle\sigma^{(\ell)}_{\alpha_{2}}\sigma^{(\ell)}_{\alpha_{6}}\rangle_{K^{(\ell)}}\langle\sigma^{(\ell)}_{\alpha_{3}}\sigma^{(\ell)}_{\alpha_{5}}\rangle_{K^{(\ell)}}\nonumber\\
+ \langle\sigma^{(\ell)}_{\alpha_{1}}\sigma^{(\ell)}_{\alpha_{5}}\rangle_{K^{(\ell)}}\langle\sigma^{(\ell)}_{\alpha_{2}}\sigma^{(\ell)}_{\alpha_{3}}\rangle_{K^{(\ell)}}\langle\sigma^{(\ell)}_{\alpha_{4}}\sigma^{(\ell)}_{\alpha_{6}}\rangle_{K^{(\ell)}} + \langle\sigma^{(\ell)}_{\alpha_{1}}\sigma^{(\ell)}_{\alpha_{5}}\rangle_{K^{(\ell)}}\langle\sigma^{(\ell)}_{\alpha_{2}}\sigma^{(\ell)}_{\alpha_{4}}\rangle_{K^{(\ell)}}\langle\sigma^{(\ell)}_{\alpha_{3}}\sigma^{(\ell)}_{\alpha_{6}}\rangle_{K^{(\ell)}}\nonumber\\
+ \langle\sigma^{(\ell)}_{\alpha_{1}}\sigma^{(\ell)}_{\alpha_{6}}\rangle_{K^{(\ell)}}\langle\sigma^{(\ell)}_{\alpha_{2}}\sigma^{(\ell)}_{\alpha_{3}}\rangle_{K^{(\ell)}}\langle\sigma^{(\ell)}_{\alpha_{4}}\sigma^{(\ell)}_{\alpha_{5}}\rangle_{K^{(\ell)}} + \langle\sigma^{(\ell)}_{\alpha_{1}}\sigma^{(\ell)}_{\alpha_{6}}\rangle_{K^{(\ell)}}\langle\sigma^{(\ell)}_{\alpha_{2}}\sigma^{(\ell)}_{\alpha_{4}}\rangle_{K^{(\ell)}}\langle\sigma^{(\ell)}_{\alpha_{3}}\sigma^{(\ell)}_{\alpha_{5}}\rangle_{K^{(\ell)}}
\bigg]
\end{align}
\item In the $\frac{1}{n}$-expansion of $\mathcal{W}[2]$, only the terms of order $\frac{1}{n^{4}}$ and $\frac{1}{n^{3}}$ contribute nontrivially. The $\frac{1}{n^{4}}$ term arises when all neural indices are distinct
\begin{align}
-\frac{1}{n_{\ell}^{2}}\bigg[\langle \sigma^{(\ell)}_{\alpha_{1}}\sigma^{(\ell)}_{\alpha_{3}}\rangle_{K^{(\ell)}}\langle\sigma^{(\ell)}_{\alpha_{2}}\sigma^{(\ell)}_{\alpha_{4}}\rangle_{K^{(\ell)}}\langle \sigma^{(\ell)}_{\alpha_{5}}\sigma^{(\ell)}_{\alpha_{6}}\rangle_{K^{(\ell)}} + \langle \sigma^{(\ell)}_{\alpha_{1}}\sigma^{(\ell)}_{\alpha_{4}}\rangle_{K^{(\ell)}}\langle\sigma^{(\ell)}_{\alpha_{2}}\sigma^{(\ell)}_{\alpha_{3}}\rangle_{K^{(\ell)}}\langle \sigma^{(\ell)}_{\alpha_{5}}\sigma^{(\ell)}_{\alpha_{6}}\rangle_{K^{(\ell)}} \nonumber\\
+ \langle \sigma^{(\ell)}_{\alpha_{3}}\sigma^{(\ell)}_{\alpha_{5}}\rangle_{K^{(\ell)}}\langle\sigma^{(\ell)}_{\alpha_{4}}\sigma^{(\ell)}_{\alpha_{6}}\rangle_{K^{(\ell)}}\langle \sigma^{(\ell)}_{\alpha_{1}}\sigma^{(\ell)}_{\alpha_{2}}\rangle_{K^{(\ell)}} + \langle \sigma^{(\ell)}_{\alpha_{3}}\sigma^{(\ell)}_{\alpha_{6}}\rangle_{K^{(\ell)}}\langle\sigma^{(\ell)}_{\alpha_{4}}\sigma^{(\ell)}_{\alpha_{5}}\rangle_{K^{(\ell)}}\langle \sigma^{(\ell)}_{\alpha_{1}}\sigma^{(\ell)}_{\alpha_{2}}\rangle_{K^{(\ell)}} \nonumber\\
+ \langle \sigma^{(\ell)}_{\alpha_{1}}\sigma^{(\ell)}_{\alpha_{5}}\rangle_{K^{(\ell)}}\langle\sigma^{(\ell)}_{\alpha_{2}}\sigma^{(\ell)}_{\alpha_{6}}\rangle_{K^{(\ell)}}\langle \sigma^{(\ell)}_{\alpha_{3}}\sigma^{(\ell)}_{\alpha_{4}}\rangle_{K^{(\ell)}} + \langle \sigma^{(\ell)}_{\alpha_{1}}\sigma^{(\ell)}_{\alpha_{6}}\rangle_{K^{(\ell)}}\langle\sigma^{(\ell)}_{\alpha_{2}}\sigma^{(\ell)}_{\alpha_{5}}\rangle_{K^{(\ell)}}\langle \sigma^{(\ell)}_{\alpha_{3}}\sigma^{(\ell)}_{\alpha_{4}}\rangle_{K^{(\ell)}}\bigg]  
\end{align}
The $\frac{1}{n^{3}}$ contribution, in turn, splits into two cases:
\begin{itemize}
\item one in which all neural indices are distinct:
\begin{align}
-\frac{1}{n_{\ell}^{2}}\bigg[\langle \sigma^{(\ell)}_{\alpha_{1}}\sigma^{(\ell)}_{\alpha_{2}}\rangle_{K^{(\ell)}}\langle \sigma^{(\ell)}_{\alpha_{3}}\sigma^{(\ell)}_{\alpha_{5}}\rangle_{K^{(\ell)}}\langle \sigma^{(\ell)}_{\alpha_{4}}\sigma^{(\ell)}_{\alpha_{6}}\rangle_{K^{(\ell)}} + \langle \sigma^{(\ell)}_{\alpha_{1}}\sigma^{(\ell)}_{\alpha_{2}}\rangle_{K^{(\ell)}}\langle \sigma^{(\ell)}_{\alpha_{3}}\sigma^{(\ell)}_{\alpha_{6}}\rangle_{K^{(\ell)}}\langle \sigma^{(\ell)}_{\alpha_{4}}\sigma^{(\ell)}_{\alpha_{5}}\rangle_{K^{(\ell)}}\nonumber\\
+ \langle \sigma^{(\ell)}_{\alpha_{1}}\sigma^{(\ell)}_{\alpha_{3}}\rangle_{K^{(\ell)}}\langle \sigma^{(\ell)}_{\alpha_{2}}\sigma^{(\ell)}_{\alpha_{4}}\rangle_{K^{(\ell)}}\langle \sigma^{(\ell)}_{\alpha_{5}}\sigma^{(\ell)}_{\alpha_{6}}\rangle_{K^{(\ell)}} + \langle \sigma^{(\ell)}_{\alpha_{1}}\sigma^{(\ell)}_{\alpha_{4}}\rangle_{K^{(\ell)}}\langle \sigma^{(\ell)}_{\alpha_{2}}\sigma^{(\ell)}_{\alpha_{3}}\rangle_{K^{(\ell)}}\langle \sigma^{(\ell)}_{\alpha_{5}}\sigma^{(\ell)}_{\alpha_{6}}\rangle_{K^{(\ell)}}\nonumber\\
+ \langle \sigma^{(\ell)}_{\alpha_{1}}\sigma^{(\ell)}_{\alpha_{5}}\rangle_{K^{(\ell)}}\langle \sigma^{(\ell)}_{\alpha_{2}}\sigma^{(\ell)}_{\alpha_{6}}\rangle_{K^{(\ell)}}\langle \sigma^{(\ell)}_{\alpha_{3}}\sigma^{(\ell)}_{\alpha_{4}}\rangle_{K^{(\ell)}} + \langle \sigma^{(\ell)}_{\alpha_{1}}\sigma^{(\ell)}_{\alpha_{6}}\rangle_{K^{(\ell)}}\langle \sigma^{(\ell)}_{\alpha_{2}}\sigma^{(\ell)}_{\alpha_{5}}\rangle_{K^{(\ell)}}\langle \sigma^{(\ell)}_{\alpha_{3}}\sigma^{(\ell)}_{\alpha_{4}}\rangle_{K^{(\ell)}}\bigg]
\end{align}
\item and one in which exactly two neural indices coincide. In the latter case, we again obtain two types of contributions: one that does not involve the four-point cumulant,
\begin{align}
\frac{1}{n_{\ell}^{2}}\bigg[\langle \sigma^{(\ell)}_{\alpha_{1}}\sigma^{(\ell)}_{\alpha_{3}}\sigma^{(\ell)}_{\alpha_{2}}\sigma^{(\ell)}_{\alpha_{4}}\rangle_{K^{(\ell)}}\langle \sigma^{(\ell)}_{\alpha_{5}}\sigma^{(\ell)}_{\alpha_{6}}\rangle_{K^{(\ell)}} + \langle \sigma^{(\ell)}_{\alpha_{1}}\sigma^{(\ell)}_{\alpha_{4}}\sigma^{(\ell)}_{\alpha_{2}}\sigma^{(\ell)}_{\alpha_{3}}\rangle_{K^{(\ell)}}\langle \sigma^{(\ell)}_{\alpha_{5}}\sigma^{(\ell)}_{\alpha_{6}}\rangle_{K^{(\ell)}} \nonumber\\
+ \langle \sigma^{(\ell)}_{\alpha_{3}}\sigma^{(\ell)}_{\alpha_{5}}\sigma^{(\ell)}_{\alpha_{4}}\sigma^{(\ell)}_{\alpha_{6}}\rangle_{K^{(\ell)}}\langle \sigma^{(\ell)}_{\alpha_{1}}\sigma^{(\ell)}_{\alpha_{2}}\rangle_{K^{(\ell)}} + \langle \sigma^{(\ell)}_{\alpha_{3}}\sigma^{(\ell)}_{\alpha_{6}}\sigma^{(\ell)}_{\alpha_{4}}\sigma^{(\ell)}_{\alpha_{5}}\rangle_{K^{(\ell)}}\langle \sigma^{(\ell)}_{\alpha_{1}}\sigma^{(\ell)}_{\alpha_{2}}\rangle_{K^{(\ell)}} \nonumber\\
+ \langle \sigma^{(\ell)}_{\alpha_{1}}\sigma^{(\ell)}_{\alpha_{5}}\sigma^{(\ell)}_{\alpha_{2}}\sigma^{(\ell)}_{\alpha_{6}}\rangle_{K^{(\ell)}}\langle \sigma^{(\ell)}_{\alpha_{3}}\sigma^{(\ell)}_{\alpha_{4}}\rangle_{K^{(\ell)}} + \langle \sigma^{(\ell)}_{\alpha_{1}}\sigma^{(\ell)}_{\alpha_{6}}\sigma^{(\ell)}_{\alpha_{2}}\sigma^{(\ell)}_{\alpha_{5}}\rangle_{K^{(\ell)}}\langle \sigma^{(\ell)}_{\alpha_{3}}\sigma^{(\ell)}_{\alpha_{4}}\rangle_{K^{(\ell)}}\bigg]  
\end{align}
and one that explicitly involves $V_{4}$
\begin{align}
\frac{1}{n_{\ell}n_{\ell-1}}\bigg[\langle \sigma^{(\ell)}_{\alpha_{1}}\sigma^{(\ell)}_{\alpha_{3}}|\sigma^{(\ell)}_{\alpha_{2}}\sigma^{(\ell)}_{\alpha_{4}}\rangle_{V^{(\ell)}}\langle \sigma^{(\ell)}_{\alpha_{5}}\sigma^{(\ell)}_{\alpha_{6}}\rangle_{K^{(\ell)}} + \langle \sigma^{(\ell)}_{\alpha_{1}}\sigma^{(\ell)}_{\alpha_{4}}|\sigma^{(\ell)}_{\alpha_{2}}\sigma^{(\ell)}_{\alpha_{3}}\rangle_{V^{(\ell)}}\langle \sigma^{(\ell)}_{\alpha_{5}}\sigma^{(\ell)}_{\alpha_{6}}\rangle_{K^{(\ell)}} \nonumber\\
+ \langle \sigma^{(\ell)}_{\alpha_{3}}\sigma^{(\ell)}_{\alpha_{5}}|\sigma^{(\ell)}_{\alpha_{4}}\sigma^{(\ell)}_{\alpha_{6}}\rangle_{V^{(\ell)}}\langle \sigma^{(\ell)}_{\alpha_{1}}\sigma^{(\ell)}_{\alpha_{2}}\rangle_{K^{(\ell)}} + \langle \sigma^{(\ell)}_{\alpha_{3}}\sigma^{(\ell)}_{\alpha_{6}}|\sigma^{(\ell)}_{\alpha_{4}}\sigma^{(\ell)}_{\alpha_{5}}\rangle_{V^{(\ell)}}\langle \sigma^{(\ell)}_{\alpha_{1}}\sigma^{(\ell)}_{\alpha_{2}}\rangle_{K^{(\ell)}} \nonumber\\
+ \langle \sigma^{(\ell)}_{\alpha_{1}}\sigma^{(\ell)}_{\alpha_{5}}|\sigma^{(\ell)}_{\alpha_{2}}\sigma^{(\ell)}_{\alpha_{6}}\rangle_{V^{(\ell)}}\langle \sigma^{(\ell)}_{\alpha_{3}}\sigma^{(\ell)}_{\alpha_{4}}\rangle_{K^{(\ell)}} + \langle \sigma^{(\ell)}_{\alpha_{1}}\sigma^{(\ell)}_{\alpha_{6}}|\sigma^{(\ell)}_{\alpha_{2}}\sigma^{(\ell)}_{\alpha_{5}}\rangle_{V^{(\ell)}}\langle \sigma^{(\ell)}_{\alpha_{3}}\sigma^{(\ell)}_{\alpha_{4}}\rangle_{K^{(\ell)}}\bigg]  
\end{align}
\end{itemize}
\end{enumerate}
\allowdisplaybreaks
Combining all contributions, we obtain
\begin{align}
\frac{1}{n_{\ell}^2}\Delta V^{(\ell+1)}_{\alpha_{1}\ldots \alpha_{6}} &= -\frac{(C_{W}^{(\ell+1)})^{3}}{n_{\ell}^{2}}\bigg[\langle \sigma^{(\ell)}_{\alpha_{1}}\sigma^{(\ell)}_{\alpha_{2}}\sigma^{(\ell)}_{\alpha_{3}}\sigma^{(\ell)}_{\alpha_{5}}\rangle_{K^{(\ell)}}\langle \sigma^{(\ell)}_{\alpha_{4}}\sigma^{(\ell)}_{\alpha_{6}}\rangle_{K^{(\ell)}} + \langle \sigma^{(\ell)}_{\alpha_{1}}\sigma^{(\ell)}_{\alpha_{2}}\sigma^{(\ell)}_{\alpha_{4}}\sigma^{(\ell)}_{\alpha_{6}}\rangle_{K^{(\ell)}}\langle \sigma^{(\ell)}_{\alpha_{3}}\sigma^{(\ell)}_{\alpha_{5}}\rangle_{K^{(\ell)}}\nonumber\\
&\hspace{-30pt} + \langle \sigma^{(\ell)}_{\alpha_{1}}\sigma^{(\ell)}_{\alpha_{2}}\sigma^{(\ell)}_{\alpha_{3}}\sigma^{(\ell)}_{\alpha_{6}}\rangle_{K^{(\ell)}}\langle \sigma^{(\ell)}_{\alpha_{4}}\sigma^{(\ell)}_{\alpha_{5}}\rangle_{K^{(\ell)}}
+ \langle \sigma^{(\ell)}_{\alpha_{1}}\sigma^{(\ell)}_{\alpha_{2}}\sigma^{(\ell)}_{\alpha_{4}}\sigma^{(\ell)}_{\alpha_{5}}\rangle_{K^{(\ell)}}\langle \sigma^{(\ell)}_{\alpha_{3}}\sigma^{(\ell)}_{\alpha_{6}}\rangle_{K^{(\ell)}}\nonumber\\
&\hspace{-30pt} + \langle \sigma^{(\ell)}_{\alpha_{1}}\sigma^{(\ell)}_{\alpha_{3}}\sigma^{(\ell)}_{\alpha_{5}}\sigma^{(\ell)}_{\alpha_{6}}\rangle_{K^{(\ell)}}\langle \sigma^{(\ell)}_{\alpha_{2}}\sigma^{(\ell)}_{\alpha_{4}}\rangle_{K^{(\ell)}} + \langle \sigma^{(\ell)}_{\alpha_{2}}\sigma^{(\ell)}_{\alpha_{4}}\sigma^{(\ell)}_{\alpha_{5}}\sigma^{(\ell)}_{\alpha_{6}}\rangle_{K^{(\ell)}}\langle \sigma^{(\ell)}_{\alpha_{1}}\sigma^{(\ell)}_{\alpha_{3}}\rangle_{K^{(\ell)}} \nonumber\\
&\hspace{-30pt} + \langle \sigma^{(\ell)}_{\alpha_{1}}\sigma^{(\ell)}_{\alpha_{4}}\sigma^{(\ell)}_{\alpha_{5}}\sigma^{(\ell)}_{\alpha_{6}}\rangle_{K^{(\ell)}}\langle \sigma^{(\ell)}_{\alpha_{2}}\sigma^{(\ell)}_{\alpha_{3}}\rangle_{K^{(\ell)}} + \langle \sigma^{(\ell)}_{\alpha_{2}}\sigma^{(\ell)}_{\alpha_{3}}\sigma^{(\ell)}_{\alpha_{5}}\sigma^{(\ell)}_{\alpha_{6}}\rangle_{K^{(\ell)}}\langle \sigma^{(\ell)}_{\alpha_{1}}\sigma^{(\ell)}_{\alpha_{4}}\rangle_{K^{(\ell)}}\nonumber\\
&\hspace{-30pt} + \langle \sigma^{(\ell)}_{\alpha_{1}}\sigma^{(\ell)}_{\alpha_{5}}\sigma^{(\ell)}_{\alpha_{3}}\sigma^{(\ell)}_{\alpha_{4}}\rangle_{K^{(\ell)}}\langle \sigma^{(\ell)}_{\alpha_{2}}\sigma^{(\ell)}_{\alpha_{6}}\rangle_{K^{(\ell)}} + \langle \sigma^{(\ell)}_{\alpha_{2}}\sigma^{(\ell)}_{\alpha_{6}}\sigma^{(\ell)}_{\alpha_{3}}\sigma^{(\ell)}_{\alpha_{4}}\rangle_{K^{(\ell)}}\langle \sigma^{(\ell)}_{\alpha_{1}}\sigma^{(\ell)}_{\alpha_{5}}\rangle_{K^{(\ell)}}
\nonumber\\
&\hspace{-30pt} + \langle \sigma^{(\ell)}_{\alpha_{1}}\sigma^{(\ell)}_{\alpha_{6}}\sigma^{(\ell)}_{\alpha_{3}}\sigma^{(\ell)}_{\alpha_{4}}\rangle_{K^{(\ell)}}\langle \sigma^{(\ell)}_{\alpha_{2}}\sigma^{(\ell)}_{\alpha_{5}}\rangle_{K^{(\ell)}} + \langle \sigma^{(\ell)}_{\alpha_{2}}\sigma^{(\ell)}_{\alpha_{5}}\sigma^{(\ell)}_{\alpha_{3}}\sigma^{(\ell)}_{\alpha_{4}}\rangle_{K^{(\ell)}}\langle \sigma^{(\ell)}_{\alpha_{1}}\sigma^{(\ell)}_{\alpha_{6}}\rangle_{K^{(\ell)}}
\bigg]\nonumber\\
&\hspace{-30pt} -\frac{(C_{W}^{(\ell+1)})^{3}}{n_{\ell}n_{\ell-1}}\bigg[\langle\sigma^{(\ell)}_{\alpha_{1}}\sigma^{(\ell)}_{\alpha_{2}}|\sigma^{(\ell)}_{\alpha_{3}}\sigma^{(\ell)}_{\alpha_{5}}\rangle_{V^{(\ell)}}\langle \sigma^{(\ell)}_{\alpha_{4}}\sigma^{(\ell)}_{\alpha_{6}}\rangle_{K^{(\ell)}} + \langle\sigma^{(\ell)}_{\alpha_{1}}\sigma^{(\ell)}_{\alpha_{2}}|\sigma^{(\ell)}_{\alpha_{4}}\sigma^{(\ell)}_{\alpha_{6}}\rangle_{V^{(\ell)}}\langle \sigma^{(\ell)}_{\alpha_{3}}\sigma^{(\ell)}_{\alpha_{5}}\rangle_{K^{(\ell)}} \nonumber\\
&\hspace{-30pt} + \langle \sigma^{(\ell)}_{\alpha_{1}}\sigma^{(\ell)}_{\alpha_{2}}|\sigma^{(\ell)}_{\alpha_{3}}\sigma^{(\ell)}_{\alpha_{6}}\rangle_{V^{(\ell)}}\langle \sigma^{(\ell)}_{\alpha_{4}}\sigma^{(\ell)}_{\alpha_{5}}\rangle_{K^{(\ell)}}
+ \langle \sigma^{(\ell)}_{\alpha_{1}}\sigma^{(\ell)}_{\alpha_{2}}|\sigma^{(\ell)}_{\alpha_{4}}\sigma^{(\ell)}_{\alpha_{5}}\rangle_{V^{(\ell)}}\langle \sigma^{(\ell)}_{\alpha_{3}}\sigma^{(\ell)}_{\alpha_{6}}\rangle_{K^{(\ell)}}
\nonumber\\
&\hspace{-30pt} + \langle \sigma^{(\ell)}_{\alpha_{1}}\sigma^{(\ell)}_{\alpha_{3}}|\sigma^{(\ell)}_{\alpha_{5}}\sigma^{(\ell)}_{\alpha_{6}}\rangle_{V^{(\ell)}}\langle \sigma^{(\ell)}_{\alpha_{2}}\sigma^{(\ell)}_{\alpha_{4}}\rangle_{K^{(\ell)}} + \langle \sigma^{(\ell)}_{\alpha_{2}}\sigma^{(\ell)}_{\alpha_{4}}|\sigma^{(\ell)}_{\alpha_{5}}\sigma^{(\ell)}_{\alpha_{6}}\rangle_{V^{(\ell)}}\langle \sigma^{(\ell)}_{\alpha_{1}}\sigma^{(\ell)}_{\alpha_{3}}\rangle_{K^{(\ell)}} \nonumber\\
&\hspace{-30pt} + \langle \sigma^{(\ell)}_{\alpha_{1}}\sigma^{(\ell)}_{\alpha_{4}}|\sigma^{(\ell)}_{\alpha_{5}}\sigma^{(\ell)}_{\alpha_{6}}\rangle_{V^{(\ell)}}\langle \sigma^{(\ell)}_{\alpha_{2}}\sigma^{(\ell)}_{\alpha_{3}}\rangle_{K^{(\ell)}} + \langle \sigma^{(\ell)}_{\alpha_{2}}\sigma^{(\ell)}_{\alpha_{3}}|\sigma^{(\ell)}_{\alpha_{5}}\sigma^{(\ell)}_{\alpha_{6}}\rangle_{V^{(\ell)}}\langle \sigma^{(\ell)}_{\alpha_{1}}\sigma^{(\ell)}_{\alpha_{4}}\rangle_{K^{(\ell)}}\nonumber\\
&\hspace{-30pt} + \langle \sigma^{(\ell)}_{\alpha_{1}}\sigma^{(\ell)}_{\alpha_{5}}|\sigma^{(\ell)}_{\alpha_{3}}\sigma^{(\ell)}_{\alpha_{4}}\rangle_{V^{(\ell)}}\langle \sigma^{(\ell)}_{\alpha_{2}}\sigma^{(\ell)}_{\alpha_{6}}\rangle_{K^{(\ell)}} + \langle \sigma^{(\ell)}_{\alpha_{2}}\sigma^{(\ell)}_{\alpha_{6}}|\sigma^{(\ell)}_{\alpha_{3}}\sigma^{(\ell)}_{\alpha_{4}}\rangle_{V^{(\ell)}}\langle \sigma^{(\ell)}_{\alpha_{1}}\sigma^{(\ell)}_{\alpha_{5}}\rangle_{K^{(\ell)}}
\nonumber\\
&\hspace{-30pt} + \langle \sigma^{(\ell)}_{\alpha_{1}}\sigma^{(\ell)}_{\alpha_{6}}|\sigma^{(\ell)}_{\alpha_{3}}\sigma^{(\ell)}_{\alpha_{4}}\rangle_{V^{(\ell)}}\langle \sigma^{(\ell)}_{\alpha_{2}}\sigma^{(\ell)}_{\alpha_{5}}\rangle_{K^{(\ell)}} + \langle \sigma^{(\ell)}_{\alpha_{2}}\sigma^{(\ell)}_{\alpha_{5}}|\sigma^{(\ell)}_{\alpha_{3}}\sigma^{(\ell)}_{\alpha_{4}}\rangle_{V^{(\ell)}}\langle \sigma^{(\ell)}_{\alpha_{1}}\sigma^{(\ell)}_{\alpha_{6}}\rangle_{K^{(\ell)}}
\bigg]\nonumber\\
&\hspace{-30pt} + \frac{2(C_{W}^{(\ell+1)})^{3}}{n_{\ell}^{2}}\bigg[\langle\sigma^{(\ell)}_{\alpha_{1}}\sigma^{(\ell)}_{\alpha_{3}}\rangle_{K^{(\ell)}}\langle\sigma^{(\ell)}_{\alpha_{2}}\sigma^{(\ell)}_{\alpha_{5}}\rangle_{K^{(\ell)}}\langle\sigma^{(\ell)}_{\alpha_{4}}\sigma^{(\ell)}_{\alpha_{6}}\rangle_{K^{(\ell)}}  + \langle\sigma^{(\ell)}_{\alpha_{1}}\sigma^{(\ell)}_{\alpha_{3}}\rangle_{K^{(\ell)}}\langle\sigma^{(\ell)}_{\alpha_{2}}\sigma^{(\ell)}_{\alpha_{6}}\rangle_{K^{(\ell)}}\langle\sigma^{(\ell)}_{\alpha_{4}}\sigma^{(\ell)}_{\alpha_{5}}\rangle_{K^{(\ell)}}\nonumber\\
&\hspace{-30pt} + \langle\sigma^{(\ell)}_{\alpha_{1}}\sigma^{(\ell)}_{\alpha_{4}}\rangle_{K^{(\ell)}}\langle\sigma^{(\ell)}_{\alpha_{2}}\sigma^{(\ell)}_{\alpha_{5}}\rangle_{K^{(\ell)}}\langle\sigma^{(\ell)}_{\alpha_{3}}\sigma^{(\ell)}_{\alpha_{6}}\rangle_{K^{(\ell)}}  + \langle\sigma^{(\ell)}_{\alpha_{1}}\sigma^{(\ell)}_{\alpha_{4}}\rangle_{K^{(\ell)}}\langle\sigma^{(\ell)}_{\alpha_{2}}\sigma^{(\ell)}_{\alpha_{6}}\rangle_{K^{(\ell)}}\langle\sigma^{(\ell)}_{\alpha_{3}}\sigma^{(\ell)}_{\alpha_{5}}\rangle_{K^{(\ell)}}\nonumber\\
&\hspace{-30pt}+ \langle\sigma^{(\ell)}_{\alpha_{1}}\sigma^{(\ell)}_{\alpha_{5}}\rangle_{K^{(\ell)}}\langle\sigma^{(\ell)}_{\alpha_{2}}\sigma^{(\ell)}_{\alpha_{3}}\rangle_{K^{(\ell)}}\langle\sigma^{(\ell)}_{\alpha_{4}}\sigma^{(\ell)}_{\alpha_{6}}\rangle_{K^{(\ell)}} + \langle\sigma^{(\ell)}_{\alpha_{1}}\sigma^{(\ell)}_{\alpha_{5}}\rangle_{K^{(\ell)}}\langle\sigma^{(\ell)}_{\alpha_{2}}\sigma^{(\ell)}_{\alpha_{4}}\rangle_{K^{(\ell)}}\langle\sigma^{(\ell)}_{\alpha_{3}}\sigma^{(\ell)}_{\alpha_{6}}\rangle_{K^{(\ell)}}\nonumber\\
&\hspace{-30pt} + \langle\sigma^{(\ell)}_{\alpha_{1}}\sigma^{(\ell)}_{\alpha_{6}}\rangle_{K^{(\ell)}}\langle\sigma^{(\ell)}_{\alpha_{2}}\sigma^{(\ell)}_{\alpha_{3}}\rangle_{K^{(\ell)}}\langle\sigma^{(\ell)}_{\alpha_{4}}\sigma^{(\ell)}_{\alpha_{5}}\rangle_{K^{(\ell)}} + \langle\sigma^{(\ell)}_{\alpha_{1}}\sigma^{(\ell)}_{\alpha_{6}}\rangle_{K^{(\ell)}}\langle\sigma^{(\ell)}_{\alpha_{2}}\sigma^{(\ell)}_{\alpha_{4}}\rangle_{K^{(\ell)}}\langle\sigma^{(\ell)}_{\alpha_{3}}\sigma^{(\ell)}_{\alpha_{5}}\rangle_{K^{(\ell)}}\bigg]\nonumber\\
&\hspace{-30pt}+ \frac{2(C_{W}^{(\ell+1)})^{3}}{n_{\ell}^{2}}\bigg[\langle \sigma^{(\ell)}_{\alpha_{1}}\sigma^{(\ell)}_{\alpha_{2}}\rangle_{K^{(\ell)}}\langle \sigma^{(\ell)}_{\alpha_{3}}\sigma^{(\ell)}_{\alpha_{5}}\rangle_{K^{(\ell)}}\langle \sigma^{(\ell)}_{\alpha_{4}}\sigma^{(\ell)}_{\alpha_{6}}\rangle_{K^{(\ell)}} + \langle \sigma^{(\ell)}_{\alpha_{1}}\sigma^{(\ell)}_{\alpha_{2}}\rangle_{K^{(\ell)}}\langle \sigma^{(\ell)}_{\alpha_{3}}\sigma^{(\ell)}_{\alpha_{6}}\rangle_{K^{(\ell)}}\langle \sigma^{(\ell)}_{\alpha_{4}}\sigma^{(\ell)}_{\alpha_{5}}\rangle_{K^{(\ell)}}\nonumber\\
&\hspace{-30pt} + \langle \sigma^{(\ell)}_{\alpha_{1}}\sigma^{(\ell)}_{\alpha_{3}}\rangle_{K^{(\ell)}}\langle \sigma^{(\ell)}_{\alpha_{2}}\sigma^{(\ell)}_{\alpha_{4}}\rangle_{K^{(\ell)}}\langle \sigma^{(\ell)}_{\alpha_{5}}\sigma^{(\ell)}_{\alpha_{6}}\rangle_{K^{(\ell)}} + \langle \sigma^{(\ell)}_{\alpha_{1}}\sigma^{(\ell)}_{\alpha_{4}}\rangle_{K^{(\ell)}}\langle \sigma^{(\ell)}_{\alpha_{2}}\sigma^{(\ell)}_{\alpha_{3}}\rangle_{K^{(\ell)}}\langle \sigma^{(\ell)}_{\alpha_{5}}\sigma^{(\ell)}_{\alpha_{6}}\rangle_{K^{(\ell)}}\nonumber\\
&\hspace{-30pt} + \langle \sigma^{(\ell)}_{\alpha_{1}}\sigma^{(\ell)}_{\alpha_{5}}\rangle_{K^{(\ell)}}\langle \sigma^{(\ell)}_{\alpha_{2}}\sigma^{(\ell)}_{\alpha_{6}}\rangle_{K^{(\ell)}}\langle \sigma^{(\ell)}_{\alpha_{3}}\sigma^{(\ell)}_{\alpha_{4}}\rangle_{K^{(\ell)}} + \langle \sigma^{(\ell)}_{\alpha_{1}}\sigma^{(\ell)}_{\alpha_{6}}\rangle_{K^{(\ell)}}\langle \sigma^{(\ell)}_{\alpha_{2}}\sigma^{(\ell)}_{\alpha_{5}}\rangle_{K^{(\ell)}}\langle \sigma^{(\ell)}_{\alpha_{3}}\sigma^{(\ell)}_{\alpha_{4}}\rangle_{K^{(\ell)}}\bigg]
\end{align}
or, more compactly
\begin{align}
\frac{1}{n_{\ell}^2}\Delta V^{(\ell+1)}_{\alpha_{1}\ldots \alpha_{6}} &= -\frac{(C_{W}^{(\ell+1)})^{3}}{n_{\ell}^{2}}\bigg[\langle \widehat{\Delta G}^{(\ell)}_{\alpha_{1}\alpha_{2}}\widehat{\Delta G}^{(\ell)}_{\alpha_{3}\alpha_{5}}\rangle_{K^{(\ell)}}\langle \sigma^{(\ell)}_{\alpha_{4}}\sigma^{(\ell)}_{\alpha_{6}}\rangle_{K^{(\ell)}} + \langle \widehat{\Delta G}^{(\ell)}_{\alpha_{1}\alpha_{2}}\widehat{\Delta G}^{(\ell)}_{\alpha_{4}\alpha_{6}}\rangle_{K^{(\ell)}}\langle \sigma^{(\ell)}_{\alpha_{3}}\sigma^{(\ell)}_{\alpha_{5}}\rangle_{K^{(\ell)}}\nonumber\\
&\hspace{-30pt} + \langle \widehat{\Delta G}^{(\ell)}_{\alpha_{1}\alpha_{2}}\widehat{\Delta G}^{(\ell)}_{\alpha_{3}\alpha_{6}}\rangle_{K^{(\ell)}}\langle \sigma^{(\ell)}_{\alpha_{4}}\sigma^{(\ell)}_{\alpha_{5}}\rangle_{K^{(\ell)}}
+ \langle \widehat{\Delta G}^{(\ell)}_{\alpha_{1}\alpha_{2}}\widehat{\Delta G}^{(\ell)}_{\alpha_{4}\alpha_{5}}\rangle_{K^{(\ell)}}\langle \sigma^{(\ell)}_{\alpha_{3}}\sigma^{(\ell)}_{\alpha_{6}}\rangle_{K^{(\ell)}}\nonumber\\
&\hspace{-30pt} + \langle \widehat{\Delta G}^{(\ell)}_{\alpha_{1}\alpha_{3}}\widehat{\Delta G}^{(\ell)}_{\alpha_{5}\alpha_{6}}\rangle_{K^{(\ell)}}\langle \sigma^{(\ell)}_{\alpha_{2}}\sigma^{(\ell)}_{\alpha_{4}}\rangle_{K^{(\ell)}} + \langle \widehat{\Delta G}^{(\ell)}_{\alpha_{2}\alpha_{4}}\widehat{\Delta G}^{(\ell)}_{\alpha_{5}\alpha_{6}}\rangle_{K^{(\ell)}}\langle \sigma^{(\ell)}_{\alpha_{1}}\sigma^{(\ell)}_{\alpha_{3}}\rangle_{K^{(\ell)}} \nonumber\\
&\hspace{-30pt} + \langle \widehat{\Delta G}^{(\ell)}_{\alpha_{1}\alpha_{4}}\widehat{\Delta G}^{(\ell)}_{\alpha_{5}\alpha_{6}}\rangle_{K^{(\ell)}}\langle \sigma^{(\ell)}_{\alpha_{2}}\sigma^{(\ell)}_{\alpha_{3}}\rangle_{K^{(\ell)}} + \langle \widehat{\Delta G}^{(\ell)}_{\alpha_{2}\alpha_{3}}\widehat{\Delta G}^{(\ell)}_{\alpha_{5}\alpha_{6}}\rangle_{K^{(\ell)}}\langle \sigma^{(\ell)}_{\alpha_{1}}\sigma^{(\ell)}_{\alpha_{4}}\rangle_{K^{(\ell)}}\nonumber\\
&\hspace{-30pt} + \langle \widehat{\Delta G}^{(\ell)}_{\alpha_{1}\alpha_{5}}\widehat{\Delta G}^{(\ell)}_{\alpha_{3}\alpha_{4}}\rangle_{K^{(\ell)}}\langle \sigma^{(\ell)}_{\alpha_{2}}\sigma^{(\ell)}_{\alpha_{6}}\rangle_{K^{(\ell)}} + \langle \widehat{\Delta G}^{(\ell)}_{\alpha_{2}\alpha_{6}}\widehat{\Delta G}^{(\ell)}_{\alpha_{3}\alpha_{4}}\rangle_{K^{(\ell)}}\langle \sigma^{(\ell)}_{\alpha_{1}}\sigma^{(\ell)}_{\alpha_{5}}\rangle_{K^{(\ell)}}
\nonumber\\
&\hspace{-30pt} + \langle \widehat{\Delta G}^{(\ell)}_{\alpha_{1}\alpha_{6}}\widehat{\Delta G}^{(\ell)}_{\alpha_{3}\alpha_{4}}\rangle_{K^{(\ell)}}\langle \sigma^{(\ell)}_{\alpha_{2}}\sigma^{(\ell)}_{\alpha_{5}}\rangle_{K^{(\ell)}} + \langle \widehat{\Delta G}^{(\ell)}_{\alpha_{2}\alpha_{5}}\widehat{\Delta G}^{(\ell)}_{\alpha_{3}\alpha_{4}}\rangle_{K^{(\ell)}}\langle \sigma^{(\ell)}_{\alpha_{1}}\sigma^{(\ell)}_{\alpha_{6}}\rangle_{K^{(\ell)}}
\bigg]\nonumber\\
&\hspace{-30pt} -\frac{(C_{W}^{(\ell+1)})^{3}}{n_{\ell}n_{\ell-1}}\bigg[\langle\widehat{\Delta G}^{(\ell)}_{\alpha_{1}\alpha_{2}}|\widehat{\Delta G}^{(\ell)}_{\alpha_{3}\alpha_{5}}\rangle_{V^{(\ell)}}\langle \sigma^{(\ell)}_{\alpha_{4}}\sigma^{(\ell)}_{\alpha_{6}}\rangle_{K^{(\ell)}} + \langle\widehat{\Delta G}^{(\ell)}_{\alpha_{1}\alpha_{2}}|\widehat{\Delta G}^{(\ell)}_{\alpha_{4}\alpha_{6}}\rangle_{V^{(\ell)}}\langle \sigma^{(\ell)}_{\alpha_{3}}\sigma^{(\ell)}_{\alpha_{5}}\rangle_{K^{(\ell)}} \nonumber\\
&\hspace{-30pt} + \langle \widehat{\Delta G}^{(\ell)}_{\alpha_{1}\alpha_{2}}|\widehat{\Delta G}^{(\ell)}_{\alpha_{3}\alpha_{6}}\rangle_{V^{(\ell)}}\langle \sigma^{(\ell)}_{\alpha_{4}}\sigma^{(\ell)}_{\alpha_{5}}\rangle_{K^{(\ell)}}
+ \langle \widehat{\Delta G}^{(\ell)}_{\alpha_{1}\alpha_{2}}|\widehat{\Delta G}^{(\ell)}_{\alpha_{4}\alpha_{5}}\rangle_{V^{(\ell)}}\langle \sigma^{(\ell)}_{\alpha_{3}}\sigma^{(\ell)}_{\alpha_{6}}\rangle_{K^{(\ell)}}
\nonumber\\
&\hspace{-30pt} + \langle \widehat{\Delta G}^{(\ell)}_{\alpha_{1}\alpha_{3}}|\widehat{\Delta G}^{(\ell)}_{\alpha_{5}\alpha_{6}}\rangle_{V^{(\ell)}}\langle \sigma^{(\ell)}_{\alpha_{2}}\sigma^{(\ell)}_{\alpha_{4}}\rangle_{K^{(\ell)}} + \langle \widehat{\Delta G}^{(\ell)}_{\alpha_{2}\alpha_{4}}|\widehat{\Delta G}^{(\ell)}_{\alpha_{5}\alpha_{6}}\rangle_{V^{(\ell)}}\langle \sigma^{(\ell)}_{\alpha_{1}}\sigma^{(\ell)}_{\alpha_{3}}\rangle_{K^{(\ell)}} \nonumber\\
&\hspace{-30pt} + \langle \widehat{\Delta G}^{(\ell)}_{\alpha_{1}\alpha_{4}}|\widehat{\Delta G}^{(\ell)}_{\alpha_{5}\alpha_{6}}\rangle_{V^{(\ell)}}\langle \sigma^{(\ell)}_{\alpha_{2}}\sigma^{(\ell)}_{\alpha_{3}}\rangle_{K^{(\ell)}} + \langle \widehat{\Delta G}^{(\ell)}_{\alpha_{2}\alpha_{3}}|\widehat{\Delta G}^{(\ell)}_{\alpha_{5}\alpha_{6}}\rangle_{V^{(\ell)}}\langle \sigma^{(\ell)}_{\alpha_{1}}\sigma^{(\ell)}_{\alpha_{4}}\rangle_{K^{(\ell)}}\nonumber\\
&\hspace{-30pt} + \langle \widehat{\Delta G}^{(\ell)}_{\alpha_{1}\alpha_{5}}|\widehat{\Delta G}^{(\ell)}_{\alpha_{3}\alpha_{4}}\rangle_{V^{(\ell)}}\langle \sigma^{(\ell)}_{\alpha_{2}}\sigma^{(\ell)}_{\alpha_{6}}\rangle_{K^{(\ell)}} + \langle \widehat{\Delta G}^{(\ell)}_{\alpha_{2}\alpha_{6}}|\widehat{\Delta G}^{(\ell)}_{\alpha_{3}\alpha_{4}}\rangle_{V^{(\ell)}}\langle \sigma^{(\ell)}_{\alpha_{1}}\sigma^{(\ell)}_{\alpha_{5}}\rangle_{K^{(\ell)}}
\nonumber\\
&\hspace{-30pt} + \langle \widehat{\Delta G}^{(\ell)}_{\alpha_{1}\alpha_{6}}|\widehat{\Delta G}^{(\ell)}_{\alpha_{3}\alpha_{4}}\rangle_{V^{(\ell)}}\langle \sigma^{(\ell)}_{\alpha_{2}}\sigma^{(\ell)}_{\alpha_{5}}\rangle_{K^{(\ell)}} + \langle \widehat{\Delta G}^{(\ell)}_{\alpha_{2}\alpha_{5}}|\widehat{\Delta G}^{(\ell)}_{\alpha_{3}\alpha_{4}}\rangle_{V^{(\ell)}}\langle \sigma^{(\ell)}_{\alpha_{1}}\sigma^{(\ell)}_{\alpha_{6}}\rangle_{K^{(\ell)}}
\bigg]\nonumber\\
&\hspace{-30pt} + \frac{2(C_{W}^{(\ell+1)})^{3}}{n_{\ell}^{2}}\bigg[\langle\sigma^{(\ell)}_{\alpha_{1}}\sigma^{(\ell)}_{\alpha_{3}}\rangle_{K^{(\ell)}}\langle\sigma^{(\ell)}_{\alpha_{2}}\sigma^{(\ell)}_{\alpha_{5}}\rangle_{K^{(\ell)}}\langle\sigma^{(\ell)}_{\alpha_{4}}\sigma^{(\ell)}_{\alpha_{6}}\rangle_{K^{(\ell)}}  + \langle\sigma^{(\ell)}_{\alpha_{1}}\sigma^{(\ell)}_{\alpha_{3}}\rangle_{K^{(\ell)}}\langle\sigma^{(\ell)}_{\alpha_{2}}\sigma^{(\ell)}_{\alpha_{6}}\rangle_{K^{(\ell)}}\langle\sigma^{(\ell)}_{\alpha_{4}}\sigma^{(\ell)}_{\alpha_{5}}\rangle_{K^{(\ell)}}\nonumber\\
&\hspace{-30pt} + \langle\sigma^{(\ell)}_{\alpha_{1}}\sigma^{(\ell)}_{\alpha_{4}}\rangle_{K^{(\ell)}}\langle\sigma^{(\ell)}_{\alpha_{2}}\sigma^{(\ell)}_{\alpha_{5}}\rangle_{K^{(\ell)}}\langle\sigma^{(\ell)}_{\alpha_{3}}\sigma^{(\ell)}_{\alpha_{6}}\rangle_{K^{(\ell)}}  + \langle\sigma^{(\ell)}_{\alpha_{1}}\sigma^{(\ell)}_{\alpha_{4}}\rangle_{K^{(\ell)}}\langle\sigma^{(\ell)}_{\alpha_{2}}\sigma^{(\ell)}_{\alpha_{6}}\rangle_{K^{(\ell)}}\langle\sigma^{(\ell)}_{\alpha_{3}}\sigma^{(\ell)}_{\alpha_{5}}\rangle_{K^{(\ell)}}\nonumber\\
&\hspace{-30pt}+ \langle\sigma^{(\ell)}_{\alpha_{1}}\sigma^{(\ell)}_{\alpha_{5}}\rangle_{K^{(\ell)}}\langle\sigma^{(\ell)}_{\alpha_{2}}\sigma^{(\ell)}_{\alpha_{3}}\rangle_{K^{(\ell)}}\langle\sigma^{(\ell)}_{\alpha_{4}}\sigma^{(\ell)}_{\alpha_{6}}\rangle_{K^{(\ell)}} + \langle\sigma^{(\ell)}_{\alpha_{1}}\sigma^{(\ell)}_{\alpha_{5}}\rangle_{K^{(\ell)}}\langle\sigma^{(\ell)}_{\alpha_{2}}\sigma^{(\ell)}_{\alpha_{4}}\rangle_{K^{(\ell)}}\langle\sigma^{(\ell)}_{\alpha_{3}}\sigma^{(\ell)}_{\alpha_{6}}\rangle_{K^{(\ell)}}\nonumber\\
&\hspace{-30pt} + \langle\sigma^{(\ell)}_{\alpha_{1}}\sigma^{(\ell)}_{\alpha_{6}}\rangle_{K^{(\ell)}}\langle\sigma^{(\ell)}_{\alpha_{2}}\sigma^{(\ell)}_{\alpha_{3}}\rangle_{K^{(\ell)}}\langle\sigma^{(\ell)}_{\alpha_{4}}\sigma^{(\ell)}_{\alpha_{5}}\rangle_{K^{(\ell)}} + \langle\sigma^{(\ell)}_{\alpha_{1}}\sigma^{(\ell)}_{\alpha_{6}}\rangle_{K^{(\ell)}}\langle\sigma^{(\ell)}_{\alpha_{2}}\sigma^{(\ell)}_{\alpha_{4}}\rangle_{K^{(\ell)}}\langle\sigma^{(\ell)}_{\alpha_{3}}\sigma^{(\ell)}_{\alpha_{5}}\rangle_{K^{(\ell)}}\bigg]\label{simplifiedv6alg}
\end{align}

\subsection{Diagrammatric derivation}\label{v6-diagrammatic-derivation}
We now derive the recursion relation for the sextic vertex~\eqref{simplifiedv6alg} using the Feynman rules of Appendix~\ref{app:general_feynman_rules}. Taking the reference pairing to be $(12)(34)(56)$, the application of rule (3) generates the 1-class diagram
\begin{align}
\, ,
\end{align}
as well as the corresponding 2-class diagrams
\begin{align}
%
 \, .
\end{align}
The application of rule (4) introduces additional diagrams, each accompanied by a Weingarten function determined by the cycle structure resulting from the composition of the corresponding pairing with the reference pairing. As discussed in the previous subsection, for 1-class diagrams there are three possible cycle types, namely $(1,1,1)$, $(2,1)$, and $(3)$. The fifteen 3-class diagrams can thus be organized as follows: one diagram of type $(1,1,1)$
\begin{align}
\mathcal{W}[1,1,1]%
\, ,
\end{align}
six diagrams of type $(2,1)$
\begin{align}
\mathcal{W}[2,1]%
 \quad ,
\end{align}
and eight diagrams of type $(3)$
\begin{align}
\mathcal{W}[3]%
 \quad .
\end{align}
Similarly, the nine 2-class diagrams generated by rule (4) can be classified according to the associated $k=2$ Weingarten factor. Explicitly, three diagrams are weighted by $\mathcal{W}[1,1]$
\begin{align}
\mathcal{W}[1,1]\cdot \mathcal{W}[1]%
\, ,
\end{align}
and six diagrams are weighted by $\mathcal{W}[2]$
\begin{align}
\mathcal{W}[2]\cdot \mathcal{W}[1]%
\quad .
\end{align}
The single 3-class diagram contributes as
\begin{align}
(\mathcal{W}[1])^{3}
%
\end{align}
M\"obius factors are incorporated through rule (5): 1-class diagrams are multiplied by $1$, 2-class diagrams by $-1$, and 3-class diagrams by $2$.

After summing over all neural indices and all diagrams described above, one recovers~\eqref{v6algebraic}, with $V^{\mathcal{G}}$ given by~\eqref{vg6} and $\Delta V$ defined in~\eqref{deltav6}.

We now apply the Feynman rules (6)–(9) to the square propagator. In particular, we focus on the layer evolution of the sextic vertex at order \(1/n^2\). To this end, the Weingarten functions are expanded as in~\eqref{w11expanded}, \eqref{w2expanded}, and~\eqref{k3wfunctions}. The bare propagators, together with the corresponding quartic and sextic vertices, are then introduced in accordance with the selection rules (a)–(f). After a careful, albeit straightforward, implementation of these rules, we arrive at the following diagrammatic decomposition of the tensor $V_{6}^{(\ell+1)}$:
\begingroup
\allowdisplaybreaks
\begin{align}
& \frac{1}{n_{\ell}^{2}}V^{(\ell + 1)}_{123456} =   \frac{1}{n_{\ell}^{3}}\sum_j

\right)\label{v6indiagrams}
\end{align}
\endgroup
Remarkably, these diagrams exactly reproduce the algebraic expression derived in the previous subsection at order \(1/n^{2}\), see~\eqref{v6ordern2}.

Moreover, \eqref{v6indiagrams} provides a direct validation of the leading-order (1/n) Feynman rules presented in Appendix~\ref{app:simplified_feynman_rules}. Specifically, the first three lines of~\eqref{v6indiagrams} follow from the Gaussian-like Feynman rules (1)-(4). The remaining contributions, namely the two groups of terms in parentheses multiplied by \(-1/n^{4}\) and \(2/n^{5}\), arise from rule (5): the former corresponds to $\lambda=(2,1)$ with $\beta_{(2,1)}=-1$, while the latter corresponds to $\lambda=(3)$ with $\beta_{(3)}=2$. In both cases, the cubic vertices defined in~\eqref{orthogonalityvertexsimp} appear as disconnected components, as prescribed by rule (5).

\subsection{Single-input case}
In the single-input setting, the recursion relation for $V^{(\ell)}_{6}$ takes a simple and compact form. Explicitly, $V^{\mathcal{G}}_{6}$ is readily obtained as
\begin{align}
 \frac{1}{n^{2}_{\ell}}V_{6}^{\mathcal{G}(\ell+1)} &= \frac{(C^{(\ell+1)}_{W})^{3}}{n^{2}_{\ell}}\left(\langle\sigma^{6}\rangle_{K} - 3\langle\sigma^{4}\rangle_{K}\langle\sigma^{2}\rangle_{K} + 2(\langle \sigma^{2}\rangle_{K})^{3} \right)\nonumber\\
 & + 6\frac{(C_{W}^{(\ell+1)})^{2}}{n_{\ell}n_{\ell-1}}V_{4}^{(\ell)}\chi^{(\ell)}_{||}\bigg[3\langle \sigma^{2}(\sigma')^{2}\rangle_{K} + \langle \sigma^{3}\sigma''\rangle_{K}- \langle(\sigma')^{2}\rangle_{K}\langle \sigma^{2}\rangle_{K} - \langle\sigma^{2}\rangle_{K}\langle \sigma\sigma''\rangle_{K}\bigg] \nonumber\\
 & + \frac{3C_{W}^{(\ell+1)}}{2n^{2}_{\ell-1}}(\chi_{||}^{(\ell)})^{2}(V_{4}^{(\ell)})^{2}\bigg[3\langle (\sigma'')^{2}\rangle_{K}+ 4\langle \sigma'\sigma'''\rangle_{K} + \langle\sigma \sigma^{''''}\rangle_{K}\bigg] + \frac{1}{n_{\ell-1}^{2}}(\chi^{(\ell)}_{||})^{3}V_{6}^{(\ell)}
\end{align}
whereas $\Delta V_{6}$ is given by
\begin{align}
\frac{1}{n_{\ell}^{2}}\Delta V_{6}^{(\ell+1)} &= \frac{(C_{W}^{(\ell+1)})^{3}}{n_{\ell}^{2}}\bigg[-12\langle\sigma^{4}\rangle_{K}\langle\sigma^{2}\rangle_{K} + 28(\langle \sigma^{2}\rangle_{K})^{3} - 12\left(\frac{n_{\ell}}{n_{\ell-1}}\right)\langle \sigma^{2}|\sigma^{2}\rangle_{V^{(\ell)}}\langle \sigma^{2}\rangle_{K}\bigg]\nonumber\\
& = \frac{(C_{W}^{(\ell+1)})^{3}}{n_{\ell}^{2}}\bigg[-12\langle\sigma^{4}\rangle_{K}\langle \sigma^{2}\rangle_{K} + 28(\langle \sigma^{2}\rangle_{K})^{3} - \frac{12}{(C_{W}^{(\ell+1)})^{2}}\left(\frac{n_{\ell}}{n_{\ell-1}}\right)V^{(\ell)}(\chi^{(\ell)}_{||})^{2}\langle \sigma^{2}\rangle_{K}\bigg]
\end{align}
Therefore, the orthogonal six-point cumulant reads
\begin{align}\label{v6-recursion-relation-si}
 \frac{1}{n^{2}_{\ell}}V_{6}^{(\ell+1)} &= \frac{(C^{(\ell+1)}_{W})^{3}}{n^{2}_{\ell}}\left(\langle\sigma^{6}\rangle_{K} - 15\langle\sigma^{4}\rangle_{K}\langle\sigma^{2}\rangle_{K} + 30(\langle \sigma^{2}\rangle_{K})^{3} \right)\nonumber\\
 & + 6\frac{(C_{W}^{(\ell+1)})^{2}}{n_{\ell}n_{\ell-1}}V_{4}^{(\ell)}\chi^{(\ell)}_{||}\bigg[3\langle \sigma^{2}(\sigma')^{2}\rangle_{K} + \langle \sigma^{3}\sigma''\rangle_{K}- 3\langle(\sigma')^{2}\rangle_{K}\langle \sigma^{2}\rangle_{K} - 3\langle\sigma^{2}\rangle_{K}\langle \sigma\sigma''\rangle_{K}\bigg]\nonumber\\
 & + \frac{3C_{W}^{(\ell+1)}}{2n^{2}_{\ell-1}}(\chi_{||}^{(\ell)})^{2}(V_{4}^{(\ell)})^{2}\bigg[3\langle (\sigma'')^{2}\rangle_{K}+ 4\langle \sigma'\sigma'''\rangle_{K} + \langle\sigma \sigma^{''''}\rangle_{K}\bigg] + \frac{1}{n_{\ell-1}^{2}}(\chi^{(\ell)}_{||})^{3}V_{6}^{(\ell)}
\end{align}
We use \textit{Mathematica} to solve~\eqref{v6-recursion-relation-si} for a rectangular $\tanh$ MLP of width $n=50$, depth $L=30$ and input vector as in~\eqref{n50-inputvector}, both exactly and asymptotically; see Appendix~\ref{app:large_l_expansion} for details. The results for the normalized tensor $\tilde{V}_{6}^{(\ell)} = V_{6}^{(\ell)}/(K^{(\ell)})^{3}$ are shown in Figure~\ref{fig:V6_tensor_plot}.
\begin{figure*}[tb]
  \centering
  \includegraphics[width=0.5\textwidth,height=0.5\textheight,keepaspectratio]{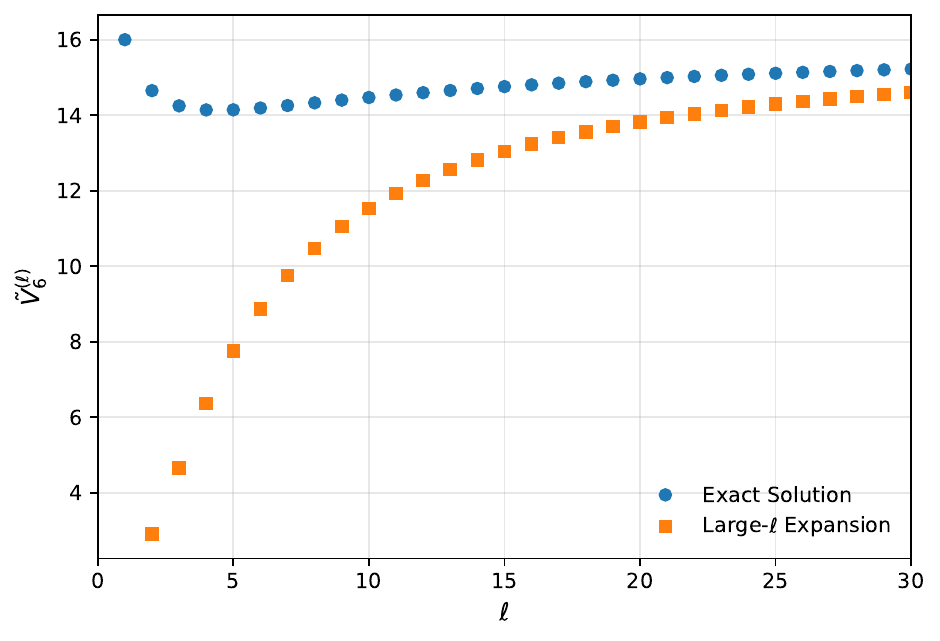}
  \vspace{-0.1cm}
  \caption{\emph{The sextic vertex.} Solution to the single-input recursion relation~\eqref{v6-recursion-relation-si} at order $1/n^{2}$. We consider a $\tanh$ network with width $n=50$ and depth $L=30$, with inputs drawn from $(0,1)$. Blue points show the exact solution of~\eqref{v6-recursion-relation-si}, while orange boxes represent the large-$\ell$ expansion~\eqref{V6largel}, evaluated at integer $\ell$. The tensor magnitude remains below its Gaussian counterpart and exhibits early-layer saturation, consistent with the behavior of lower-rank orthogonal tensors.}
  \label{fig:V6_tensor_plot}
\end{figure*}

\section{Large-$\ell$ expansion}
\label{app:large_l_expansion}
In this appendix, we solve the single-input recursion relations~\eqref{dsingleinput}-\eqref{usingleinput} and~\eqref{v6-recursion-relation-si} for $\tanh$ networks in the large-$\ell$ regime. We adopt a standard renormalization-inspired ansatz and expand a generic tensor $\mathcal{O}^{(\ell)}$ as
\begin{align}
\mathcal{O}^{(\ell)} = \ell^{-p_{\mathcal{O}}}\sum_{i,j=0}^{k}c^{\mathcal{O}}_{i,j}\frac{\log^{j}(\ell)}{\ell^{i}}\label{largelexpansionapp}
\end{align}
where $p_{\mathcal{O}}$ is the critical exponent and $k$ controls the truncation order. Throughout, we set $k=5$.

The coefficients $c^{\mathcal{O}}_{i,j}$ in~\eqref{largelexpansionapp} depend on both the input data and the network parameters through the initial conditions
\begin{align}
K^{(1)} &= \frac{C_{W}}{n}\sum_{i=1}^{n} x_{i}\cdot x_{i}, 
\quad 
\Theta^{(1)} = \lambda^{(1)}_{b} + \frac{\lambda_{W}^{(1)}}{n}\sum_{i=1}^{n} x_{i}\cdot x_{i}, 
\quad 
V^{(1)} = -2\big(K^{(1)}\big)^2, \nonumber\\
D^{(1)} &= 0, \quad
F^{(1)} = 0, \quad
A^{(1)} = 0, \quad
B^{(1)} = 0, \quad
P^{(1)} = 0, \nonumber\\
Q^{(1)} &= 0, \quad
R^{(1)} = 0, \quad
S^{(1)} = 0, \quad
T^{(1)} = 0, \quad
U^{(1)} = 0.\label{initialconditions}
\end{align}
In addition, they depend parametrically on the choice of activation function. We compute these coefficients using \textit{Mathematica}, combining symbolic manipulation with numerical evaluation. We report below the critical exponents $p_{\mathcal{O}}$ and the corresponding coefficients for all tensors considered in this work.

We consider a square neural network with activation function $\tanh$, and fix $C_{W}=1$, thereby focusing on the critical regime~\cite{roberts2021a}. As discussed in~\cite{roberts2021a}, for the $\tanh$ nonlinearity the training hyperparameters can be chosen as
\begin{align}
\lambda_{b}^{(\ell)} = \frac{1}{\ell}   \quad,\quad  \lambda_{W}^{(\ell)} = 1   
\end{align}
Since the recursion relations~\eqref{dsingleinput}-\eqref{usingleinput} connect quantities at consecutive layers, i.e. at depths $\ell$ and $\ell+1$, the following large-$\ell$ asymptotic expansions are useful:
\begin{align}
\frac{1}{(1+\ell)^{a}} &= \frac{1}{\ell^{a}}\sum_{i=0}^{\infty}\binom{-a}{i}\left(\frac{1}{\ell}\right)^{i}\label{largelidentitiesone}\\
\log(1+\ell) &= \log(\ell) - \sum_{i=0}^{\infty}\frac{1}{i}\left(-\frac{1}{\ell}\right)^{i}\label{largelidentitiestwo}
\end{align}

The Gaussian expectations appearing in~\eqref{dsingleinput}-\eqref{usingleinput} are evaluated by expanding the activation function as
\begin{align}\label{taylorexpansionsigmam}
\sigma(z) = \sum_{m=0}^{\infty} \sigma_{m}\frac{z^{m}}{m!}
\end{align}
where the coefficients $\sigma_{m} = \sigma^{(m)}(0)$ are determined by derivatives of $\sigma$ at the origin. For $\sigma=\tanh$, these coefficients are known in closed form.

For the trucantion order $k=5$ in~\eqref{largelexpansionapp}, retaining a finite number of terms in the expansion~\eqref{taylorexpansionsigmam} is sufficient to obtain a consistent large-$\ell$ expansion. In practice, we include terms up to $m=20$, which ensures stability of all coefficients at the desired order.

Concretely, the non-vanishing coefficients are
\begin{align}\label{tanhcoeff}
\sigma_{1} = 1 \quad , \quad  \sigma_{3} = -2 \quad , \quad \sigma_{5} = 16 \quad &, \quad \sigma_{7} = -272 \quad , \quad \sigma_{9} = 7936 \quad , \quad \sigma_{11} = -353792 \nonumber\\
 \sigma_{13} = 22368256 \quad &, \quad \sigma_{15} =-1903757312 \nonumber\\
\sigma_{17} = 209865342976 \quad &, \quad \sigma_{19} = -29088885112832
\end{align}
with $\sigma_{m}=0$ for $m$ even.

\subsection{The NNGP}
To illustrate the procedure, we derive the expansion of the NNGP kernel $K^{(\ell)}$. Substituting the ansatz~\eqref{largelexpansionapp} into the recursion~\eqref{nngprecursion} yields
\begin{align}
K^{(\ell)} &=\frac{1}{2\ell}
+ \frac{1}{\ell^2} \left( \frac{5 \log \ell}{24} - \frac{5 \log \ell_0}{24} \right) \nonumber\\
&+ \frac{1}{\ell^3} \left(
\frac{25 (\log \ell)^2}{288}
+ \frac{5 \log \ell \left(-5 - 10 \log \ell_0 \right)}{288}
+ \frac{53 + 25 \log \ell_0 + 25 (\log \ell_0)^2}{288}
\right) \nonumber\\
&+ \frac{1}{\ell^4} \left(
\frac{125 (\log \ell)^3}{3456}
+ \frac{25 (\log \ell)^2 \left(-25 - 30 \log \ell_0 \right)}{6912}
+ \frac{5 \log \ell \left(23 + \frac{125}{8} \log \ell_0 + \frac{75}{8} (\log \ell_0)^2 \right)}{432}
\right. \nonumber\\
&\qquad \left.
+ \frac{-8597 - 9200 \log \ell_0 - 3125 (\log \ell_0)^2 - 1250 (\log \ell_0)^3}{34560}
\right) \nonumber\\
&+ \frac{1}{\ell^5} \left(
\frac{625 (\log \ell)^4}{41472}
+ \frac{125 (\log \ell)^3 \left(-65 - 60 \log \ell_0 \right)}{124416}
+ \frac{25 (\log \ell)^2 \left(287 + \frac{650}{3} \log \ell_0 + 100 (\log \ell_0)^2 \right)}{27648}
\right. \nonumber\\
&\qquad
+ \frac{\log \ell \left(-10897 - \frac{21525}{2} \log \ell_0 - \frac{8125}{2} (\log \ell_0)^2 - 1250 (\log \ell_0)^3 \right)}{20736} \nonumber\\
&\qquad \left.
+ \frac{-2479663 + 653820 \log \ell_0 + 322875 (\log \ell_0)^2 + 81250 (\log \ell_0)^3 + 18750 (\log \ell_0)^4}{1244160}\label{nngplexpansion}
\right).
\end{align}
where we defined $c_{1,0}^{K} = -\frac{5}{24}\log(\ell_{0})$. Consistency of the expansion fixes the critical exponent to $p_{K}=1$, ensuring that the recursion~\eqref{nngprecursion} admits a solution within the ansatz. The first two lines of~\eqref{nngplexpansion} reproduce the known large-$\ell$ expansion of the midpoint kernel up to order $\ell^{-3}$~\cite{roberts2021a}. The scale $\ell_{0}$ is fixed by the initial condition~\eqref{initialconditions} after replacing $\ell = 1$ in \eqref{nngplexpansion}.

\subsection{The quartic vertex}
Having obtained the large-$\ell$ expansion of $K^{(\ell)}$, we now compute the corresponding expansion of the quartic vertex $V_{4}^{(\ell)}$ from~\eqref{fourpointrecursion}. We again adopt the ansatz~\eqref{largelexpansionapp} together with the initial conditions~\eqref{initialconditions}. Using the large-$\ell$ identities~\eqref{largelidentitiesone}, \eqref{largelidentitiestwo}, the $\tanh$ coefficients~\eqref{tanhcoeff}, and fixing the critical exponent to $p_{V}=2$, we obtain
\begin{align}\label{Vlargelgeneral}
V_{4}^{(\ell)} &= 
-\frac{1}{2\ell^2} + \frac{1}{\ell^3} \left(
- \frac{5}{12} \log \ell
+ \frac{1}{12} \left( 8 + 5 \log \ell_0 \right)
\right) \nonumber\\
&+ \frac{1}{\ell^4} \left(
- \frac{25}{96} (\log \ell)^2
+ \frac{1}{144} \log \ell \left( -167 + 75 \log \ell_0 \right)
+ c^{V}_{2,0}
\right).
\end{align}
Although the ansatz formally includes coefficients $c_{i,j}^{V}$ for
$i,j=0,\ldots ,5$, the recursion~\eqref{fourpointrecursion} only constrains a subset of these coefficients within our truncation scheme. In particular, the Gaussian expectation values in the first two terms on the left-hand side of~\eqref{fourpointrecursion}, which depend on $K^{(\ell)}$, contribute at most at order $\ell^{-5}$. Since $p_{V}=2$, this implies that only coefficients with $i=0,1,2$ can be determined. The coefficient $c^{V}_{2,0}$ is fixed by the initial conditions~\eqref{initialconditions} after replacing $\ell = 1$ in~\eqref{Vlargelgeneral}.

\subsection{The NTK}
We next determine the large-$\ell$ behavior of the frozen NTK $\Theta^{(\ell)}$ from~\eqref{ntkrecursion}. Substituting the expansion of $K^{(\ell)}$ from~\eqref{nngplexpansion} and proceeding as in the previous cases, we find that the critical exponent is $p_{\Theta}=0$, yielding
\begin{align}\label{Thetalargelgeneral}
\Theta^{(\ell)} &= \frac{3}{2}
+ \frac{1}{\ell}\Bigl[
- \frac{5}{24} (\log \ell)^2
+ \frac{1}{24} \log \ell (27 + 10 \log \ell_0)
+ c^{\Theta}_{1,0}
\Bigr] \nonumber\\
&+ \frac{1}{\ell^2}\Bigl[
- \frac{25}{288} (\log \ell)^3
+ \frac{5}{288} (\log \ell)^2 (31 + 15 \log \ell_0)
+ \frac{447 + 205 \log \ell_0}{288}
\nonumber\\
&\qquad
+ \log \ell \Bigl(
\frac{-313 - 175 \log \ell_0 - 50 (\log \ell_0)^2}{288}
\Bigr)
+ \log \ell \Bigl(\frac{5}{12} c^{\Theta}_{1,0}\Bigr)
\nonumber\\
&\qquad
- \frac{1}{12} (4 + 5 \log \ell_0) c^{\Theta}_{1,0}
\Bigr] \nonumber\\
&+ \frac{1}{\ell^3}\Bigl[
- \frac{125}{3456} (\log \ell)^4
+ \frac{125}{864} (\log \ell)^3 (2 + \log \ell_0)
\nonumber\\
&\qquad
+ \frac{-183829 - 116850 \log \ell_0 - 26500 (\log \ell_0)^2}{69120}
\nonumber\\
&\qquad
+ (\log \ell)^2 \Bigl(
- \frac{25 (143 + 93 \log \ell_0 + 25 (\log \ell_0)^2)}{3456}
\Bigr)
+ (\log \ell)^2 \Bigl(\frac{25}{144} c^{\Theta}_{1,0}\Bigr)
\nonumber\\
&\qquad
+ \frac{99 + 65 \log \ell_0 + 25 (\log \ell_0)^2}{144}\, c^{\Theta}_{1,0}
\nonumber\\
&\qquad
+ \log \ell \Bigl(
\frac{17031 + 10790 \log \ell_0 + 2650 (\log \ell_0)^2 + 500 (\log \ell_0)^3}{6912}
\Bigr)
\nonumber\\
&\qquad
+ \log \ell \Bigl(
- \frac{5}{144} (13 + 10 \log \ell_0)\, c^{\Theta}_{1,0}
\Bigr)
\Bigr] \nonumber\\
&+ \frac{1}{\ell^4}\Bigl[
- \frac{625}{41472} (\log \ell)^5
+ \frac{125 (\log \ell)^4 (103 + 50 \log \ell_0)}{82944}
\nonumber\\
&\qquad
+ \frac{287109682 + 282873525 \log \ell_0 + 85689000 (\log \ell_0)^2}{67184640}
\nonumber\\
&\qquad
+ \frac{12240000 (\log \ell_0)^3}{67184640}
\nonumber\\
&\qquad
+ (\log \ell)^3 \Bigl(
- \frac{25 (23863 + 16110 \log \ell_0 + 4050 (\log \ell_0)^2)}{746496}
\Bigr)
\nonumber\\
&\qquad
+ (\log \ell)^3 \Bigl(\frac{125}{1728} c^{\Theta}_{1,0}\Bigr)
\nonumber\\
&\qquad
- \frac{21949 + 18100 \log \ell_0 + 6125 (\log \ell_0)^2}{17280}\, c^{\Theta}_{1,0}
\nonumber\\
&\qquad
- \frac{1250 (\log \ell_0)^3}{17280}\, c^{\Theta}_{1,0}
\nonumber\\
&\qquad
+ (\log \ell)^2 \Bigl(
\frac{2029301 + 1492050 \log \ell_0 + 457875 (\log \ell_0)^2}{746496}
\Bigr)
\nonumber\\
&\qquad
+ (\log \ell)^2 \Bigl(
\frac{78750 (\log \ell_0)^3}{746496}
- \frac{25 (49 + 30 \log \ell_0)}{3456} c^{\Theta}_{1,0}
\Bigr)
\nonumber\\
&\qquad
+ \log \ell \Bigl(
\frac{-126292817 - 95368260 \log \ell_0 - 30944250 (\log \ell_0)^2}{22394880}
\Bigr)
\nonumber\\
&\qquad
+ \log \ell \Bigl(
\frac{-5130000 (\log \ell_0)^3 - 675000 (\log \ell_0)^4}{22394880}
\Bigr)
\nonumber\\
&\qquad
+ \log \ell \Bigl(
\frac{5 (362 + 245 \log \ell_0 + 75 (\log \ell_0)^2)}{1728} c^{\Theta}_{1,0}
\Bigr)
\Bigr].
\end{align}
As in the previous cases, the truncation order $k$ together with the scaling $p_{\Theta}=0$ restricts the set of coefficients $c_{i,j}^{\Theta}$ that can be determined within this approximation. The coefficient $c^{\Theta}_{1,0}$ is determined by the initial conditions~\eqref{initialconditions} after replacing $\ell=1$ in~\eqref{Thetalargelgeneral}.

\subsection{The NTK-preactivation mixed cumulant}
We next determine the preactivation-NTK mixed cumulants $D^{(\ell)}$ and $F^{(\ell)}$. Applying the same procedure as above yields the critical exponents $p_{D}=2$ and $p_{F}=1$, together with the large-$\ell$ expansions
\begin{align}
D^{(\ell)} &= -\frac{4}{3\ell^2} + \frac{1}{\ell^3} \Bigg[
\frac{5}{54} (\log \ell)^3
+ \frac{1}{36} (\log \ell)^2 \left( 51 - 10 \log \ell_0 \right)
+ c^{D}_{1,0} \nonumber\\
&\qquad\qquad
+ \frac{1}{144} \log \ell \left(
-274 - 290 \log \ell_0 - 75 (\log \ell_0)^2
- 192 c^{\Theta}_{1,0} - 288 c^{V}_{2,0}
\right)
\Bigg],\label{Dlargelgeneral}\\
F^{(\ell)} &= -\frac{1}{2\ell} + \frac{1}{\ell^2} \Bigg[
\frac{5}{48} (\log \ell)^2
+ \frac{1}{24} \log \ell \left( -21 - 5 \log \ell_0 \right)
+ \frac{1}{16} \left( 27 + 5 \log \ell_0 - 8 c^{\Theta}_{1,0} \right)
\Bigg] \nonumber\\
&+ \frac{1}{\ell^3} \Bigg[
\frac{25}{288} (\log \ell)^3
- \frac{25}{288} (\log \ell)^2 \left( 10 + 3 \log \ell_0 \right) \nonumber\\
&\qquad\qquad
+ \frac{1}{288} \log \ell \left(
1061 + 365 \log \ell_0 + 50 (\log \ell_0)^2 - 120 c^{\Theta}_{1,0}
\right) \nonumber\\
&\qquad\qquad
+ \frac{1}{288} \left(
-3321 - 575 \log \ell_0 - 25 (\log \ell_0)^2
+ 432 c^{\Theta}_{1,0} + 120 \log \ell_0\, c^{\Theta}_{1,0}
\right)
\Bigg] \nonumber\\
&+ \frac{1}{\ell^4} \Bigg[
\frac{625}{13824} (\log \ell)^4
- \frac{5}{3456} (\log \ell)^3 \left( 284 + 125 \log \ell_0 \right)
+ c^{F}_{3,0} \nonumber\\
&\qquad\qquad
+ \frac{(\log \ell)^2}{13824} \left(
23833 + 12990 \log \ell_0 + 3000 (\log \ell_0)^2 - 3600 c^{\Theta}_{1,0}
\right) \nonumber\\
&\qquad\qquad
+ \frac{\log \ell}{69120} \left(
1107057 - 450550 \log \ell_0 - 84000 (\log \ell_0)^2
- 5000 (\log \ell_0)^3 \right. \nonumber\\
&\qquad\qquad\qquad\qquad\left.
- 188640 c^{\Theta}_{1,0} + 36000 \log \ell_0\, c^{\Theta}_{1,0} \label{Flargelgeneral}
\right)
\Bigg].
\end{align}
The coefficients $c^{D}_{1,0}$ and $c^{F}_{3,0}$ are fixed by the initial conditions~\eqref{initialconditions} together with the previously determined parameters $\log \ell_{0}$, $c^{V}_{2,0}$, and $c^{\Theta}_{1,0}$.

\subsection{The NTK-variance}
We now turn to the NTK variance tensors $A^{(\ell)}$ and $B^{(\ell)}$. Using the expansion of $D^{(\ell)}$ in~\eqref{Dlargelgeneral}, we find the critical exponents $p_{A}=1$ and $p_{B}=1$ together with
\begin{align}
A^{(\ell)} &=  \frac{16}{3\ell} + \frac{1}{\ell^2} \Bigg[
- \frac{5}{54} (\log \ell)^4
+ \frac{1}{27} (\log \ell)^3 \left( -61 + 10 \log \ell_0 \right)
+ c^{A}_{1,0} \nonumber\\
&\qquad\qquad
+ \frac{4}{9} \log \ell \left(
-15 + 10 \log \ell_0 - 9 dd_{1,0} + 12 c^{\Theta}_{1,0}
\right) \nonumber\\
&\qquad\qquad
+ \frac{1}{72} (\log \ell)^2 \left(
330 + 370 \log \ell_0 + 75 (\log \ell_0)^2
+ 192 c^{\Theta}_{1,0} + 288 c^{V}_{2,0}
\right)
\Bigg],\label{Alargelgeneral}\\
B^{(\ell)} &= \frac{9}{2\ell} + \frac{1}{\ell^2} \Bigg[
- \frac{5}{12} (\log \ell)^3
+ \frac{1}{8} (\log \ell)^2 \left( 27 + 10 \log \ell_0 \right)
+ c^{B}_{1,0} \nonumber\\
&\qquad\qquad
+ \frac{3}{4} \log \ell \left( -21 + 8 c^{\Theta}_{1,0} \right)
\Bigg] \nonumber\\
&+ \frac{1}{\ell^3} \Bigg[
- \frac{125}{288} (\log \ell)^4
+ \frac{5}{144} (\log \ell)^3 \left( 121 + 50 \log \ell_0 \right) \nonumber\\
&\qquad
- \frac{1}{6} (4 + 5 \log \ell_0)\, c^{B}_{1,0} \nonumber\\
&\qquad
+ \frac{1}{72} (\log \ell)^2 \left(
-1683 - 435 \log \ell_0 - 100 (\log \ell_0)^2 + 420 c^{\Theta}_{1,0}
\right) \nonumber\\
&\qquad
+ \frac{1}{288} \left(
-18516 - 690 \log \ell_0 - 125 (\log \ell_0)^2
+ 5664 c^{\Theta}_{1,0} + 240 \log \ell_0\, c^{\Theta}_{1,0}
- 576 (c^{\Theta}_{1,0})^{2}
\right) \nonumber\\
&\qquad
+ \log \ell \Bigg(
\frac{5}{6} c^{B}_{1,0}
+ \frac{5043 + 3330 \log \ell_0 + 50 (\log \ell_0)^2
- 1344 c^{\Theta}_{1,0} - 960 \log \ell_0\, c^{\Theta}_{1,0}}{144}
\Bigg)
\Bigg] \nonumber\\
&+ \frac{1}{\ell^4} \Bigg[
- \frac{125}{384} (\log \ell)^5
+ \frac{125 (\log \ell)^4 (211 + 90 \log \ell_0)}{6912} \nonumber\\
&\qquad
+ \frac{214 + 170 \log \ell_0 + 75 (\log \ell_0)^2}{144}\, c^{B}_{1,0} \nonumber\\
&\qquad
- \frac{5}{3456} (\log \ell)^3 \left(
17317 + 7310 \log \ell_0 + 1650 (\log \ell_0)^2
- 2880 c^{\Theta}_{1,0}
\right) \nonumber\\
&\qquad
+ \frac{15676647 + 5933100 \log \ell_0 + 221500 (\log \ell_0)^2 + 22500 (\log \ell_0)^3}{69120} \nonumber\\
&\qquad
- \frac{4405920 + 1816800 \log \ell_0 + 60000 (\log \ell_0)^2}{69120}\, c^{\Theta}_{1,0} \nonumber\\
&\qquad
+ \frac{339840 + 172800 \log \ell_0}{69120}\, (c^{\Theta}_{1,0})^{2} \nonumber\\
&\qquad
+ (\log \ell)^2 \Bigg(
\frac{25}{48} c^{B}_{1,0}
+ \frac{174843 + 113770 \log \ell_0 + 16900 (\log \ell_0)^2 + 2500 (\log \ell_0)^3}{2304} \nonumber\\
&\qquad\qquad
- \frac{36000 + 21600 \log \ell_0}{2304}\, c^{\Theta}_{1,0}
\Bigg) \nonumber\\
&\qquad
+ \log \ell \Bigg(
- \frac{5}{72} (17 + 15 \log \ell_0)\, c^{B}_{1,0} \nonumber\\
&\qquad\qquad
+ \frac{-625380 - 280395 \log \ell_0 - 72950 (\log \ell_0)^2 - 1250 (\log \ell_0)^3}{3456} \nonumber\\
&\qquad\qquad
+ \frac{159888 + 64080 \log \ell_0 + 18000 (\log \ell_0)^2}{3456}\, c^{\Theta}_{1,0}
- \frac{8640}{3456}\, (c^{\Theta}_{1,0})^{2}
\Bigg)
\Bigg]. \label{Blargelgeneral}
\end{align}
The parameters $c^{A}_{1,0}$ and $c^{B}_{1,0}$ are again fixed by the initial conditions~\eqref{initialconditions}.

\subsection{The dNTK-preactivation mixed cumulant}
The asymptotic behavior of the preactivation-dNTK cumulant tensors $P^{(\ell)}$ and $Q^{(\ell)}$ follows from the same procedure. This yields the critical exponents $p_{P} =0$ and $p_{Q} = 0$, together with the large-$\ell$ expansions
\allowdisplaybreaks
\begin{align}
P^{(\ell)}={}&-\frac34
+\frac1\ell\Bigl[
\frac5{16}(\log\ell)^2
+\frac18\log\ell(-16-5\log\ell_0)
+\frac1{16}(31+5\log\ell_0-24c^{\Theta}_{1,0})
\Bigr]
\nonumber\\
&+\frac1{\ell^2}\Bigl[
-\frac{25}{576}(\log\ell)^4
+\frac5{576}(\log\ell)^3(137+20\log\ell_0)
\nonumber\\
&\qquad
+\frac{(\log\ell)^2}{576}
\bigl(-5748-1785\log\ell_0-100(\log\ell_0)^2+240c^{\Theta}_{1,0}\bigr)
\nonumber\\
&\qquad
+\frac{\log\ell}{576}
\bigl(26199+5205\log\ell_0+350(\log\ell_0)^2\bigr)
\nonumber\\
&\qquad
+\frac{\log\ell}{576}
\bigl(-5592c^{\Theta}_{1,0}-480\log\ell_0\,c^{\Theta}_{1,0}\bigr)
\nonumber\\
&\qquad
+\frac1{576}
\bigl(-40257-6300\log\ell_0-125(\log\ell_0)^2-576c^{B}_{1,0}\bigr)
\nonumber\\
&\qquad
+\frac1{576}
\bigl(9624c^{\Theta}_{1,0}+840\log\ell_0\,c^{\Theta}_{1,0}-576(c^{\Theta}_{1,0})^{2}\bigr)
\Bigr]
\nonumber\\
&+\frac1{\ell^3}\Bigl[
-\frac{55}{3456}(\log\ell)^5
+\frac5{13824}(\log\ell)^4(2009+220\log\ell_0)
\nonumber\\
&\qquad
+\frac{(\log\ell)^3}{20736}
\bigl(-128583-62430\log\ell_0-3700(\log\ell_0)^2-1920c^{\Theta}_{1,0}\bigr)
\nonumber\\
&\qquad
+\frac{(\log\ell)^2}{13824}
\bigl(396027+182700\log\ell_0+50800(\log\ell_0)^2\bigr)
\nonumber\\
&\qquad
+\frac{(\log\ell)^2}{13824}
\bigl(3000(\log\ell_0)^3-66192c^{\Theta}_{1,0}+3840\log\ell_0\,c^{\Theta}_{1,0}\bigr)
\nonumber\\
&\qquad
+\frac{\log\ell}{23040}
\bigl(1355239-1599000\log\ell_0-260500(\log\ell_0)^2\bigr)
\nonumber\\
&\qquad
+\frac{\log\ell}{23040}
\bigl(-10000(\log\ell_0)^3-19200c^{B}_{1,0}-516720c^{\Theta}_{1,0}\bigr)
\nonumber\\
&\qquad
+\frac{\log\ell}{23040}
\bigl(269600\log\ell_0\,c^{\Theta}_{1,0}
+24000(\log\ell_0)^2c^{\Theta}_{1,0}
+72960(c^{\Theta}_{1,0})^{2}\bigr)
\nonumber\\
&\qquad
+c^{P}_{3,0}
\Bigr].\label{Plargelgeneral}\\
Q^{(\ell)}={}&-\frac{17}{12}
+\frac1\ell\Bigl[
\frac58(\log\ell)^2
+\frac1{24}\log\ell(-101-30\log\ell_0)
+\frac1{12}(93+10\log\ell_0-36c^{\Theta}_{1,0})
\Bigr]
\nonumber\\
&+\frac1{\ell^2}\Bigl[
-\frac{25}{288}(\log\ell)^4
+\frac5{576}(\log\ell)^3(183+40\log\ell_0)
\nonumber\\
&\qquad
+\frac{(\log\ell)^2}{576}
\bigl(-6013-2205\log\ell_0-200(\log\ell_0)^2+480c^{\Theta}_{1,0}\bigr)
\nonumber\\
&\qquad
+\frac{\log\ell}{576}
\bigl(20897+7085\log\ell_0+750(\log\ell_0)^2\bigr)
\nonumber\\
&\qquad
+\frac{\log\ell}{576}
\bigl(-4392c^{\Theta}_{1,0}-960\log\ell_0\,c^{\Theta}_{1,0}\bigr)
\nonumber\\
&\qquad
+\frac1{576}
\bigl(-53509-8045\log\ell_0-150(\log\ell_0)^2\bigr)
\nonumber\\
&\qquad
+\frac1{576}
\bigl(11424c^{\Theta}_{1,0}+1800\log\ell_0\,c^{\Theta}_{1,0}-1152(c^{\Theta}_{1,0})^{2}\bigr)
\Bigr]
\nonumber\\
&+\frac1{\ell^3}\Bigl[
-\frac{305}{3456}(\log\ell)^5
+\frac5{13824}(\log\ell)^4(4273+1220\log\ell_0)
\nonumber\\
&\qquad
+\frac{(\log\ell)^3}{20736}
\bigl(-209617-115500\log\ell_0-15950(\log\ell_0)^2+11280c^{\Theta}_{1,0}\bigr)
\nonumber\\
&\qquad
+\frac{(\log\ell)^2}{17280}
\bigl(1546071+268150\log\ell_0+81125(\log\ell_0)^2\bigr)
\nonumber\\
&\qquad
+\frac{(\log\ell)^2}{17280}
\bigl(9375(\log\ell_0)^3-227460c^{\Theta}_{1,0}-28200\log\ell_0\,c^{\Theta}_{1,0}\bigr)
\nonumber\\
&\qquad
+\frac{\log\ell}{6912}
\bigl(-697585-864585\log\ell_0-137400(\log\ell_0)^2\bigr)
\nonumber\\
&\qquad
+\frac{\log\ell}{6912}
\bigl(-6250(\log\ell_0)^3+27648c^{F}_{3,0}-62712c^{\Theta}_{1,0}\bigr)
\nonumber\\
&\qquad
+\frac{\log\ell}{6912}
\bigl(129840\log\ell_0\,c^{\Theta}_{1,0}
+22800(\log\ell_0)^2c^{\Theta}_{1,0}
+10368(c^{\Theta}_{1,0})^{2}\bigr)
\nonumber\\
&\qquad
+c^{Q}_{3,0}
\Bigr].\label{Qlargelgeneral}
\end{align}
The coefficients $c^{P}_{3,0}$ and $c^{Q}_{3,0}$ follow from the initial conditions~\eqref{initialconditions}.
\subsection{The dd$_{\text{I}}$NTK mean}
Applying this framework to the d$_{\text{I}}$NTK mean tensor $R^{(\ell)}$ fixes the critical exponent to $p_{R}=-1$, yielding
\begin{align}\label{Rlargelgeneral}
R^{(\ell)} ={}& -0.43821 - 1.75\,\ell
+ 0.9375 (\log \ell)^2
+ \log \ell\,(-6.67172 - 2.5 \log \ell_0)
\nonumber\\
&+ 0.520833 \log \ell_0
- 6 c^{\Theta}_{1,0}
\nonumber\\
&+ \frac{1}{\ell}\Bigl[
-30.8931
- 0.173611 (\log \ell)^4
- 39.3424 \log \ell_0
- 2.08333 (\log \ell_0)^2
\nonumber\\
&\qquad
+ (\log \ell)^3 (4.17447 + 1.30208 \log \ell_0)
- 3 c^{B}_{1,0}
+ 49.7536 c^{\Theta}_{1,0}
+ 7.5 \log \ell_0\, c^{\Theta}_{1,0}
- 9 (c^{\Theta}_{1,0})^{2}
\nonumber\\
&\qquad
+ (\log \ell)^2 \bigl(
-27.3051
- 16.1153 \log \ell_0
- 1.5625 (\log \ell_0)^2
+ 3.125 c^{\Theta}_{1,0}
\bigr)
\nonumber\\
&\qquad
+ \log \ell \bigl(
75.8169
+ 33.0744 \log \ell_0
+ 3.125 (\log \ell_0)^2
\bigr)
\nonumber\\
&\qquad
+ \log \ell \bigl(
- 43.9268 c^{\Theta}_{1,0}
- 7.5 \log \ell_0\, c^{\Theta}_{1,0}
\bigr)
\Bigr]
\nonumber\\
&+ \frac{1}{\ell^2}\Bigl[
(\log \ell)^5 \bigl(
0.256462
+ 0.336123 \log \ell_0
+ 0.0434028 (\log \ell_0)^2
- 0.0347222 c^{\Theta}_{1,0}
\bigr)
\nonumber\\
&\qquad
+ (\log \ell)^4 \bigl(
-0.898367
- 1.47402 \log \ell_0
- 0.580512 (\log \ell_0)^2
- 0.036169 (\log \ell_0)^3
\nonumber\\
&\qquad\qquad
+ 1.24491 c^{\Theta}_{1,0}
+ 0.260417 \log \ell_0\, c^{\Theta}_{1,0}
\bigr)
\nonumber\\
&\qquad
+ (\log \ell)^3 \bigl(
-7.80677
- 6.49926 \log \ell_0
- 1.31955 (\log \ell_0)^2
+ 0.138889 c^{B}_{1,0}
\nonumber\\
&\qquad\qquad
- 7.24624 c^{\Theta}_{1,0}
- 4.46181 \log \ell_0\, c^{\Theta}_{1,0}
- 0.347222 (\log \ell_0)^2 c^{\Theta}_{1,0}
+ 0.416667 (c^{\Theta}_{1,0})^{2}
\bigr)
\nonumber\\
&\qquad
+ (\log \ell)^2 \bigl(
-111.626
+ 21.3942 \log \ell_0
+ 13.3665 (\log \ell_0)^2
+ 1.30208 (\log \ell_0)^3
\nonumber\\
&\qquad\qquad
- 1.125 c^{B}_{1,0}
- 0.416667 \log \ell_0\, c^{B}_{1,0}
- 3.46714 c^{\Theta}_{1,0}
- 1.02641 \log \ell_0\, c^{\Theta}_{1,0}
\nonumber\\
&\qquad\qquad
- 9.375 (c^{\Theta}_{1,0})^{2}
- 1.25 \log \ell_0\, (c^{\Theta}_{1,0})^{2}
\bigr)
\nonumber\\
&\qquad
+ \log \ell \bigl(
131.914
- 39.0856 \log \ell_0
- 32.2968 (\log \ell_0)^2
- 2.17014 (\log \ell_0)^3
\bigr)
\nonumber\\
&\qquad
+ \log \ell \bigl(
+ c^{B}_{1,0}
- 6.96453 c^{\Theta}_{1,0}
+ 32.6793 \log \ell_0\, c^{\Theta}_{1,0}
+ 6.25 (\log \ell_0)^2 c^{\Theta}_{1,0}
\bigr)
\nonumber\\
&\qquad
+ \log \ell \bigl(
- 2 c^{B}_{1,0} c^{\Theta}_{1,0}
+ 11 (c^{\Theta}_{1,0})^{2}
- 2 (c^{\Theta}_{1,0})^{3}
- 2 c^{P}_{3,0}
\bigr)
\nonumber\\
&\qquad
+ c^{R}_{3,0}
\Bigr].
\end{align}
The parameter $c^{R}_{3,0}$ is determined analogously from the initial conditions~\eqref{initialconditions}.
\subsection{The dd$_{\text{II}}$NTK mean}
Applying the same analysis to the d$_{\text{II}}$NTK mean tensors $S^{(\ell)}$, $T^{(\ell)}$ and $U^{(\ell)}$, yields the critical exponents $p_{S}=-1$, $p_{T}=-1$, and $p_{U} = 0$, together with
\begin{align}
S^{(\ell)} ={}& \frac{3}{4}\,\ell
- \frac{5}{16} (\log \ell)^2
+ \frac{1}{16} \log \ell \,(27 + 10 \log \ell_0)
+ \frac{3}{32} (45 + 16 c^{\Theta}_{1,0})
\nonumber\\
&+ \frac{1}{\ell}\Bigl[
\frac{25}{576} (\log \ell)^4
- \frac{5}{288} (\log \ell)^3 (61 + 10 \log \ell_0)
\nonumber\\
&\qquad
+ \frac{1}{576} (\log \ell)^2 \bigl(
2763 + 1560 \log \ell_0 + 100 (\log \ell_0)^2 - 240 c^{\Theta}_{1,0}
\bigr)
\nonumber\\
&\qquad
+ \frac{1}{576} \log \ell \bigl(
-7137 + 360 \log \ell_0 - 200 (\log \ell_0)^2
+ 5232 c^{\Theta}_{1,0} + 480 \log \ell_0\, c^{\Theta}_{1,0}
\bigr)
\nonumber\\
&\qquad
+ \frac{1}{576} \bigl(
6 + 765 \log \ell_0 + 50 (\log \ell_0)^2
+ 576 c^{B}_{1,0} + 2400 c^{\Theta}_{1,0}
\nonumber\\
&\qquad\qquad
- 480 \log \ell_0\, c^{\Theta}_{1,0}
+ 576 (c^{\Theta}_{1,0})^{2}
\bigr)
\Bigr]
\nonumber\\
&+ \frac{1}{\ell^2}\Bigl[
\frac{335}{3456} (\log \ell)^5
- \frac{5}{3456} (\log \ell)^4 (1328 + 335 \log \ell_0)
\nonumber\\
&\qquad
+ \frac{5}{3456} (\log \ell)^3 \bigl(
8685 + 4286 \log \ell_0 + 480 (\log \ell_0)^2 - 912 c^{\Theta}_{1,0}
\bigr)
\nonumber\\
&\qquad
+ \frac{1}{6912} (\log \ell)^2 \bigl(
-412047 - 75465 \log \ell_0 - 14625 (\log \ell_0)^2
- 1000 (\log \ell_0)^3
\nonumber\\
&\qquad\qquad
+ 164520 c^{\Theta}_{1,0}
+ 27360 \log \ell_0\, c^{\Theta}_{1,0}
\bigr)
\nonumber\\
&\qquad
+ \frac{1}{13824} \log \ell \bigl(
1165869 + 298080 \log \ell_0 + 1200 (\log \ell_0)^2
+ 2500 (\log \ell_0)^3
\nonumber\\
&\qquad\qquad
+ 52992 c^{B}_{1,0}
- 458208 c^{\Theta}_{1,0}
- 96480 \log \ell_0\, c^{\Theta}_{1,0}
- 9600 (\log \ell_0)^2 c^{\Theta}_{1,0}
\nonumber\\
&\qquad\qquad
+ 108288 (c^{\Theta}_{1,0})^{2}
\bigr)
+ c^{S}_{3,0}
\Bigr],\label{Slargelgeneral}\\
T^{(\ell)} ={}& -18.981 + 1.88889\,\ell
- 1.84028 (\log \ell)^2
+ 0.196759 \log \ell_0
\nonumber\\
&+ \log \ell \bigl(5.80387 + 1.18056 \log \ell_0\bigr)
+ 2.83333\, c^{\Theta}_{1,0}
\nonumber\\
&+ \frac{1}{\ell}\Bigl[
440.657
+ 0.868056 (\log \ell)^4
+ (\log \ell)^3 \bigl(-8.44101 - 1.04167 \log \ell_0\bigr)
\nonumber\\
&\qquad
+ 8.02013 \log \ell_0
- 0.491898 (\log \ell_0)^2
+ (\log \ell)^2 \bigl(
46.0699 + 3.38594 \log \ell_0 - 2.5\, c^{\Theta}_{1,0}
\bigr)
\nonumber\\
&\qquad
- 32.6732\, c^{\Theta}_{1,0}
+ \log \ell \bigl(
-158.347 - 13.6717 \log \ell_0 + 8.12626\, c^{\Theta}_{1,0}
\bigr)
\Bigr]
\nonumber\\
&+ \frac{1}{\ell^2}\Bigl[
(\log \ell)^5 \bigl(
-0.1989 - 0.0241419 \log \ell_0 + 0.0694444\, c^{\Theta}_{1,0}
\bigr)
\nonumber\\
&\qquad
+ (\log \ell)^4 \bigl(
-2.50973 - 1.10439 \log \ell_0 - 0.108507 (\log \ell_0)^2
- 0.376245\, c^{\Theta}_{1,0}
\bigr)
\nonumber\\
&\qquad
+ (\log \ell)^3 \bigl(
-75.8647 + 8.00378 \log \ell_0 + 0.962168 (\log \ell_0)^2
+ 1.63315\, c^{\Theta}_{1,0}
\nonumber\\
&\qquad\qquad
- 0.347222 \log \ell_0\, c^{\Theta}_{1,0}
\bigr)
\nonumber\\
&\qquad
+ (\log \ell)^2 \bigl(
-256.807 - 13.8107 \log \ell_0 - 2.88587 (\log \ell_0)^2
- 10.1288\, c^{\Theta}_{1,0}
\nonumber\\
&\qquad\qquad
+ 3.15131 \log \ell_0\, c^{\Theta}_{1,0}
\bigr)
\nonumber\\
&\qquad
+ \log \ell \bigl(
-297.85 + 308.089 \log \ell_0 + 8.69265 (\log \ell_0)^2
- 0.204958 (\log \ell_0)^3
\nonumber\\
&\qquad\qquad
+ 229.472\, c^{\Theta}_{1,0}
- 10.9002 \log \ell_0\, c^{\Theta}_{1,0}
+ (c^{\Theta}_{1,0})^{2}
- 4\, c^{Q}_{3,0}
\bigr)
\nonumber\\
&\qquad
+ c^{T}_{3,0}
\Bigr],\label{Tlargelgeneral}\\
U^{(\ell)} ={}& \frac{27}{8}
+ \frac{1}{\ell}\Bigl[
-\frac{45}{16}(\log \ell)^2
+ \frac{9}{16}\log \ell\,(37+10\log \ell_0)
\nonumber\\
&\qquad
- \frac{9}{8}\bigl(47+5\log \ell_0-12c^{\Theta}_{1,0}\bigr)
\Bigr]
\nonumber\\
&+ \frac{1}{\ell^2}\Bigl[
\frac{5}{64}(\log \ell)^5
- \frac{5}{128}(\log \ell)^4(27+10\log \ell_0)
\nonumber\\
&\qquad
+ \frac{1}{192}(\log \ell)^3\bigl(
1119+540\log \ell_0+100(\log \ell_0)^2-240c^{\Theta}_{1,0}
\bigr)
\nonumber\\
&\qquad
+ \frac{3}{128}(\log \ell)^2\bigl(
-837-260\log \ell_0+432c^{\Theta}_{1,0}
+160\log \ell_0\,c^{\Theta}_{1,0}
\bigr)
\nonumber\\
&\qquad
- \frac{3}{64}\log \ell\bigl(
-2733+405\log \ell_0+100(\log \ell_0)^2
+1104c^{\Theta}_{1,0}-192(c^{\Theta}_{1,0})^{2}
\bigr)
\nonumber\\
&\qquad
+ c^{U}_{2,0}
\Bigr]
\nonumber\\
&+ \frac{1}{\ell^3}\Bigl[
-\frac{5}{2304}(\log \ell)^5(819+230\log \ell_0)
\nonumber\\
&\qquad
+ \frac{5}{1152}(\log \ell)^4\bigl(
1593+1135\log \ell_0+225(\log \ell_0)^2-300c^{\Theta}_{1,0}
\bigr)
\nonumber\\
&\qquad
+ \frac{1}{3456}(\log \ell)^3\bigl(
-187047-81075\log \ell_0-14850(\log \ell_0)^2
-2000(\log \ell_0)^3
\nonumber\\
&\qquad\qquad
+43560c^{\Theta}_{1,0}
+18000\log \ell_0\,c^{\Theta}_{1,0}
\bigr)
\nonumber\\
&\qquad
+ \frac{1}{2304}(\log \ell)^2\bigl(
460782+110370\log \ell_0+21525(\log \ell_0)^2
+500(\log \ell_0)^3
\nonumber\\
&\qquad\qquad
-173088c^{\Theta}_{1,0}
-41760\log \ell_0\,c^{\Theta}_{1,0}
-9600(\log \ell_0)^2c^{\Theta}_{1,0}
+20160(c^{\Theta}_{1,0})^{2}
\bigr)
\nonumber\\
&\qquad
+ \frac{1}{46080}\bigl(
30188151-884850\log \ell_0+309000(\log \ell_0)^2
+65000(\log \ell_0)^3
\nonumber\\
&\qquad\qquad
-12139200c^{\Theta}_{1,0}
-302400\log \ell_0\,c^{\Theta}_{1,0}
-60000(\log \ell_0)^2c^{\Theta}_{1,0}
+1497600(c^{\Theta}_{1,0})^{2}
\nonumber\\
&\qquad\qquad
+57600\log \ell_0\,(c^{\Theta}_{1,0})^{2}
-92160(c^{\Theta}_{1,0})^{3}
\bigr)
\nonumber\\
&\qquad
+ \log \ell \Bigl(
\frac{-1946727-1035240\log \ell_0+48450(\log \ell_0)^2
+15500(\log \ell_0)^3}{4608}
\nonumber\\
&\qquad\qquad
+ \frac{526176+348480\log \ell_0+4800(\log \ell_0)^2}{4608}\,c^{\Theta}_{1,0}
- \frac{64512+46080\log \ell_0}{4608}\,(c^{\Theta}_{1,0})^{2}
\nonumber\\
&\qquad\qquad
+ \frac{5}{6}c^{U}_{2,0}\Bigr) - \frac{1}{6}(4+5\log \ell_0)\,c^{U}_{2,0}
\Bigr].\label{Ulargelgeneral}
\end{align}
The coefficients $c^{S}_{3,0}$, $c^{T}_{3,0}$, and $c^{U}_{2,0}$ are fixed by the initial conditions~\eqref{initialconditions}.
\subsection{The sextic vertex}
Finally, for the sextic vertex $V^{(\ell)}_{6}$, we obtain the critical exponent $p_{V_{6}} = 3$, and the large-$\ell$ expansion
\begin{align}
V_{6}^{(\ell)} &= \frac{2}{\ell^3}
 + \frac{1}{\ell^4}
   \left(
     -0.573233 + 2.5 \log \ell
     - 1.25 \log \ell_0
   \right).
\end{align}
Within our approximation scheme, $V_{6}^{(\ell)}$ is entirely determined by the lower-rank tensors $V_{4}^{(\ell)}$ and $K^{(\ell)}$.
\subsection{Application}
We apply the above results to a square neural network of width $n=50$, using the same input vector as in Section~\ref{app:analytical_recursion_relations}. This choice fixes the initial conditions
\begin{align}\label{initialconditionsnumerical}
K^{(1)} = 0.238565 \quad , \quad \Theta^{(1)} = 1.23857 \quad , \quad V_{4}^{(1)} = -0.113827
\end{align}
These determine the scale $\ell_{0}$ via $\log(\ell_{0}) = -2.74141$, as well as the coefficients $c^{\mathcal{O}}$ in~\eqref{nngplexpansion}-\eqref{Ulargelgeneral}:
\allowdisplaybreaks
\begin{align}
c^{V}_{2,0} &= 0.861763, 
& c^{\Theta}_{1,0} &= 0.829493, 
& c^{D}_{1,0} &= \frac{4}{3}, & c^{F}_{3,0} &= 6.49751, \nonumber\\ 
 c^{A}_{1,0} &= -\frac{16}{3}, 
& c^{B}_{1,0} &= 7.19961, 
& c^{P}_{3,0} &= 39.7952, & c^{Q}_{3,0} &= 47.0357,\nonumber\\ 
 c^{R}_{3,0} &= -49.1347, 
& c^{S}_{3,0} &= -16.4737, 
& c^{T}_{3,0} &= -372.59,
& c^{U}_{2,0} &= -196.103
\end{align}
Substituting these values into~\eqref{nngplexpansion}–\eqref{Ulargelgeneral} yields the corresponding large-$\ell$ expansions. We now present these expressions explicitly. The NNGP takes the form
\begin{align}\label{Klargel}
K^{(\ell)}
&= \frac{1}{2\ell}
 + \frac{1}{\ell^2}
   \left(
     0.571128 + \frac{5}{24}\log \ell
   \right)
\nonumber\\
&\quad
 + \frac{1}{\ell^3}
   \left(
     0.598432 + 0.389134 \log \ell
     + \frac{25}{288} (\log \ell)^2
   \right)
\nonumber\\
&\quad
 + \frac{1}{\ell^4}
   \left(
     0.54664 + 0.585901 \log \ell
   \right.
\nonumber\\
&\qquad\qquad\left.
     + 0.20704 (\log \ell)^2
     + \frac{125}{3456} (\log \ell)^3
   \right)
\nonumber\\
&\quad
 + \frac{1}{\ell^5}
   \left(
     -1.97764 + 0.666942 \log \ell
   \right.
\nonumber\\
&\qquad\qquad\left.
     + 0.401984 (\log \ell)^2
     + 0.0999518 (\log \ell)^3
     + \frac{625}{41472} (\log \ell)^4
   \right).
\end{align}
For completeness, the parameters $p_{K}$ and $c^{K}_{i,j}$ are summarized in Table~\ref{tableforK}.
\begin{table}[t]
\centering
\renewcommand{\arraystretch}{1.2}
\begin{tabular}{|c|c|c|c|c|c|c|c|}
\hline
 $p_{K}$ & \multicolumn{7}{c|}{$c^{K}_{i,j}$} \\
\hline
\multirow{6}{*}{$1$}
& $i \backslash j$ & $0$ & $1$ & $2$ & $3$ & $4$ & $5$ \\
\cline{2-8}
 & $0$ & $\frac{1}{2}$ & $0$ & $0$ & $0$ & $0$ & $0$ \\
\cline{2-8}
 & $1$ & $0.571128$ & $\frac{5}{24}$ & $0$ & $0$ & $0$ & $0$ \\
\cline{2-8}
 & $2$ & $0.598432$ & $0.389134$ & $\frac{25}{288}$ & $0$ & $0$ & $0$ \\
\cline{2-8}
 & $3$ & $0.54664$ & $0.585901$ & $0.20704$ & $\frac{125}{3456}$ & $0$ & $0$ \\
\cline{2-8}
 & $4$ & $-1.97764$ & $0.666942$ & $0.401984$ & $0.0999518$ & $\frac{625}{41472}$ & $0$ \\
\hline
\end{tabular}
\caption{Coefficients of the large-$\ell$ expansion for the tensor $K^{(\ell)}$.}\label{tableforK}
\end{table} 
The quartic vertex reads
\begin{align}\label{Vlargel}
V_{4}^{(\ell)} =& -\frac{1}{2\ell^2} + \frac{1}{\ell^3} \left( -0.475589 - \frac{5}{12}\log \ell \right) \nonumber\\
&+ \frac{1}{\ell^4} \left(
0.861763 - 2.58754 \log \ell - \frac{25}{96} (\log \ell)^2
\right),
\end{align}
with coefficients given in Table~\ref{tableforV}.
\begin{table}[t]
\centering
\renewcommand{\arraystretch}{1.2}
\begin{tabular}{|c|c|c|c|c|c|c|c|}
\hline
$p_{V}$ & \multicolumn{7}{c|}{$c^{V}_{i,j}$} \\
\hline
\multirow{4}{*}{$2$}
& $i \backslash j$ & $0$ & $1$ & $2$ & $3$ & $4$ & $5$ \\
\cline{2-8}
& $0$ & $-\frac{1}{2}$ & $0$ & $0$ & $0$ & $0$ & $0$ \\
\cline{2-8}
& $1$ & $-0.475589$ & $-\frac{5}{12}$ & $0$ & $0$ & $0$ & $0$ \\
\cline{2-8}
& $2$ & $0.861763$ & $-2.58754$ & $-\frac{25}{96}$ & $0$ & $0$ & $0$ \\
\hline
\end{tabular}
\caption{Coefficients of the large-$\ell$ expansion for the tensor $V^{(\ell)}$.}\label{tableforV}
\end{table}
The NTK takes the form
\begin{align}\label{Thetalargel}
\Theta^{(\ell)}
&= \frac{3}{2}
 + \frac{1}{\ell}
   \left(
     0.829493 - 0.0172557 \log \ell
     - \frac{5}{24} (\log \ell)^2
   \right)
\nonumber\\
&\quad
 + \frac{1}{\ell^2}
   \left(
     0.271725 - 0.380142 \log \ell
     - 0.175715 (\log \ell)^2
     - \frac{25}{288} (\log \ell)^3
   \right)
\nonumber\\
&\quad
 + \frac{1}{\ell^3}
   \left(
     -0.280312 - 0.00940075 \log \ell
   \right.
\nonumber\\
&\qquad\qquad\left.
     - 0.405269 (\log \ell)^2
     - 0.107265 (\log \ell)^3
     - \frac{125}{3456} (\log \ell)^4
   \right)
\nonumber\\
&\quad
 + \frac{1}{\ell^4}
   \left(
     -1.08234 - 0.722724 \log \ell
   \right.
\nonumber\\
&\qquad\qquad\left.
     - 0.125245 (\log \ell)^2
     - 0.279448 (\log \ell)^3
     - 0.0513459 (\log \ell)^4
     - \frac{625}{41472} (\log \ell)^5
   \right),
\end{align}   
with coefficients summarized in Table~\ref{tableforTheta}.
\begin{table}[t]
\centering
\renewcommand{\arraystretch}{1.2}
\begin{tabular}{|c|c|c|c|c|c|c|c|}
\hline
$p_{\Theta}$ & \multicolumn{7}{c|}{$c^{\Theta}_{i,j}$} \\
\hline
\multirow{6}{*}{$0$}
& $i \backslash j$ & $0$ & $1$ & $2$ & $3$ & $4$ & $5$ \\
\cline{2-8}
& $0$ & $\frac{3}{2}$ & $0$ & $0$ & $0$ & $0$ & $0$ \\
\cline{2-8}
& $1$ & $0.829493$ & $-0.0172557$ & $-\frac{5}{24}$ & $0$ & $0$ & $0$ \\
\cline{2-8}
& $2$ & $0.271725$ & $-0.380142$ & $-0.175715$ & $-\frac{25}{288}$ & $0$ & $0$ \\
\cline{2-8}
& $3$ & $-0.280312$ & $-0.00940075$ & $-0.405269$ & $-0.107265$ & $-\frac{125}{3456}$ & $0$ \\
\cline{2-8}
& $4$ & $-1.08234$ & $-0.722724$ & $-0.125245$ & $-0.279448$ & $-0.0513459$ & $-\frac{625}{41472}$ \\
\hline
\end{tabular}
\caption{Coefficients of the large-$\ell$ expansion for the tensor $\Theta^{(\ell)}$.}
\label{tableforTheta}
\end{table}

The NTK-preactivation mixed cumulants are
\begin{align}
D^{(\ell)}
&= -\frac{4}{3\ell^2} + \frac{1}{\ell^3}
   \left(
     \frac{4}{3}
     - 3.12564 \log \ell
     + 2.17817 (\log \ell)^2
     + \frac{5}{54} (\log \ell)^3
   \right),\label{Dlargel}\\
F^{(\ell)}
&= -\frac{1}{2\ell} + \frac{1}{\ell^2}
   \left(
     0.416062 - 0.303872 \log \ell
     + \frac{5}{48} (\log \ell)^2
   \right)
\nonumber\\
&\quad
 + \frac{1}{\ell^3}
   \left(
     -6.41357 + 1.16879 \log \ell
   \right.
\nonumber\\
&\qquad\qquad\left.
     - 0.154146 (\log \ell)^2
     + \frac{25}{288} (\log \ell)^3
   \right)
\nonumber\\
&\quad
 + \frac{1}{\ell^4}
   \left(
     6.49751 + 22.7949 \log \ell
   \right.
\nonumber\\
&\qquad\qquad\left.
     + 0.562927 (\log \ell)^2
     + 0.0848911 (\log \ell)^3
     + \frac{625}{13824} (\log \ell)^4
   \right),\label{Flargel}
\end{align}
with coefficients listed in Tables~\ref{tableforD} and~\ref{tableforF}.
\begin{table}[t]
\centering
\renewcommand{\arraystretch}{1.2}
\begin{tabular}{|c|c|c|c|c|c|c|c|}
\hline
$p_{D}$ & \multicolumn{7}{c|}{$c^{D}_{i,j}$} \\
\hline
\multirow{3}{*}{$2$}
& $i \backslash j$ & $0$ & $1$ & $2$ & $3$ & $4$ & $5$ \\
\cline{2-8}
& $0$ & $-\frac{4}{3}$ & $0$ & $0$ & $0$ & $0$ & $0$ \\
\cline{2-8}
& $1$ & $\frac{4}{3}$ & $-3.12564$ & $2.17817$ & $\frac{5}{54}$ & $0$ & $0$ \\
\hline
\end{tabular}
\caption{Coefficients of the large-$\ell$ expansion for the tensor $D^{(\ell)}$.}
\label{tableforD}
\end{table}
\begin{table}[t]
\centering
\renewcommand{\arraystretch}{1.2}
\begin{tabular}{|c|c|c|c|c|c|c|c|}
\hline
$p_{F}$ & \multicolumn{7}{c|}{$c^{F}_{i,j}$} \\
\hline
\multirow{5}{*}{$1$}
& $i \backslash j$ & $0$ & $1$ & $2$ & $3$ & $4$ & $5$ \\
\cline{2-8}
& $0$ & $-\frac{1}{2}$ & $0$ & $0$ & $0$ & $0$ & $0$ \\
\cline{2-8}
& $1$ & $0.416062$ & $-0.303872$ & $\frac{5}{48}$ & $0$ & $0$ & $0$ \\
\cline{2-8}
& $2$ & $-6.41357$ & $1.16879$ & $-0.154146$ & $\frac{25}{288}$ & $0$ & $0$ \\
\cline{2-8}
& $3$ & $6.49751$ & $22.7949$ & $0.562927$ & $0.0848911$ & $\frac{625}{13824}$ & $0$ \\
\hline
\end{tabular}
\caption{Coefficients of the large-$\ell$ expansion for the tensor $F^{(\ell)}$.}
\label{tableforF}
\end{table}
The NTK variance tensors are
\begin{align}
A^{(\ell)}
&= \frac{16}{3\ell} + \frac{1}{\ell^2}
   \left(
     -\frac{16}{3} - 19.7601 \log \ell + 3.98303 (\log \ell)^2
     - 3.2746 (\log \ell)^3
     - \frac{5}{54} (\log \ell)^4
   \right),\label{Alargel}\\  
B^{(\ell)}
&= \frac{9}{2\ell} + \frac{1}{\ell^2}
   \left(
     7.19961 - 10.773 \log \ell
     - 0.0517672 (\log \ell)^2
     - \frac{5}{12} (\log \ell)^3
   \right)
\nonumber\\
&\quad
 + \frac{1}{\ell^3}
   \left(
     -36.2955 - 12.3472 \log \ell
   \right.
\nonumber\\
&\qquad\qquad\left.
     - 12.4116 (\log \ell)^2
     - 0.55801 (\log \ell)^3
     - \frac{125}{288} (\log \ell)^4
   \right)
\nonumber\\
&\quad
 + \frac{1}{\ell^4}
   \left(
     24.5959 - 70.6998 \log \ell
   \right.
\nonumber\\
&\qquad\qquad\left.
     - 14.6047 (\log \ell)^2
     - 10.5449 (\log \ell)^3
     - 0.646109 (\log \ell)^4
     - \frac{125}{384} (\log \ell)^5
   \right) ,\label{Blargel}
\end{align}
with coefficients reported in Tables~\ref{tableforA} and~\ref{tableforB}.
\begin{table}[t]
\centering
\renewcommand{\arraystretch}{1.2}
\begin{tabular}{|c|c|c|c|c|c|c|c|}
\hline
$p_{A}$ & \multicolumn{7}{c|}{$c^{A}_{i,j}$} \\
\hline
\multirow{3}{*}{$1$}
& $i \backslash j$ & $0$ & $1$ & $2$ & $3$ & $4$ & $5$ \\
\cline{2-8}
& $0$ & $\frac{16}{3}$ & $0$ & $0$ & $0$ & $0$ & $0$ \\
\cline{2-8}
& $1$ & $-\frac{16}{3}$ & $-19.7601$ & $3.98303$ & $-3.2746$ & $-\frac{5}{54}$ & $0$ \\
\hline
\end{tabular}
\caption{Coefficients of the large-$\ell$ expansion for the tensor $A^{(\ell)}$.}
\label{tableforA}
\end{table}
\begin{table}[t]
\centering
\renewcommand{\arraystretch}{1.2}
\begin{tabular}{|c|c|c|c|c|c|c|c|}
\hline
$p_{B}$ & \multicolumn{7}{c|}{$c^{B}_{i,j}$} \\
\hline
\multirow{5}{*}{$1$}
& $i \backslash j$ & $0$ & $1$ & $2$ & $3$ & $4$ & $5$ \\
\cline{2-8}
& $0$ & $\frac{9}{2}$ & $0$ & $0$ & $0$ & $0$ & $0$ \\
\cline{2-8}
& $1$ & $7.19961$ & $-10.773$ & $-0.0517672$ & $-\frac{5}{12}$ & $0$ & $0$ \\
\cline{2-8}
& $2$ & $-36.2955$ & $-12.3472$ & $-12.4116$ & $-0.55801$ & $-\frac{125}{288}$ & $0$ \\
\cline{2-8}
& $3$ & $24.5959$ & $-70.6998$ & $-14.6047$ & $-10.5449$ & $-0.646109$ & $-\frac{125}{384}$ \\
\hline
\end{tabular}
\caption{Coefficients of the large-$\ell$ expansion for the tensor $B^{(\ell)}$.}
\label{tableforB}
\end{table}
The dNTK tensors read
\begin{align}
P^{(\ell)}
&= -\frac{3}{4} + \frac{1}{\ell}
   \left(
     -0.163431 - 0.286616 \log \ell
     + \frac{5}{16} (\log \ell)^2
   \right)
\nonumber\\
&\quad
 + \frac{1}{\ell^2}
   \left(
     -38.8818 + 19.1203 \log \ell
   \right.
\nonumber\\
&\qquad\qquad\left.
     - 2.44277 (\log \ell)^2
     + 0.713296 (\log \ell)^3
     - \frac{25}{576} (\log \ell)^4
   \right)
\nonumber\\
&\quad
 + \frac{1}{\ell^3}
   \left(
     39.7952 + 130.51 \log \ell
   \right.
\nonumber\\
&\qquad\qquad\left.
     + 10.9595 (\log \ell)^2
     + 0.63484 (\log \ell)^3
     + 0.508496 (\log \ell)^4
     - \frac{55}{3456} (\log \ell)^5
   \right),\label{Plargel}\\
Q^{(\ell)}
&= -\frac{17}{12} + \frac{1}{\ell}
   \left(
     2.97701 - 0.781566 \log \ell
     + \frac{5}{8} (\log \ell)^2
   \right)
\nonumber\\
&\quad
 + \frac{1}{\ell^2}
   \left(
     -48.596 + 9.80987 \log \ell
   \right.
\nonumber\\
&\qquad\qquad\left.
     - 1.86301 (\log \ell)^2
     + 0.636662 (\log \ell)^3
     - \frac{25}{288} (\log \ell)^4
   \right)
\nonumber\\
&\quad
 + \frac{1}{\ell^3}
   \left(
     47.9186 + 108.564 \log \ell
   \right.
\nonumber\\
&\qquad\qquad\left.
     + 63.8277 (\log \ell)^2
     - 0.168637 (\log \ell)^3
     + 0.33582 (\log \ell)^4
     - \frac{305}{3456} (\log \ell)^5
   \right) ,\label{Qlargel}
\end{align}
with coefficients given in Tables~\ref{tableforP} and~\ref{tableforQ}.
\begin{table}[t]
\centering
\renewcommand{\arraystretch}{1.2}
\begin{tabular}{|c|c|c|c|c|c|c|c|}
\hline
$p_{P}$ & \multicolumn{7}{c|}{$c^{P}_{i,j}$} \\
\hline
\multirow{4}{*}{$0$}
& $i \backslash j$ & $0$ & $1$ & $2$ & $3$ & $4$ & $5$ \\
\cline{2-8}
& $0$ & $-\frac{3}{4}$ & $0$ & $0$ & $0$ & $0$ & $0$ \\
\cline{2-8}
& $1$ & $-0.163431$ & $-0.286616$ & $\frac{5}{16}$ & $0$ & $0$ & $0$ \\
\cline{2-8}
& $2$ & $-38.8818$ & $19.1203$ & $-2.44277$ & $0.713296$ & $-\frac{25}{576}$ & $0$ \\
\cline{2-8}
& $3$ & $39.7952$ & $130.51$ & $10.9595$ & $0.63484$ & $0.508496$ & $-\frac{55}{3456}$ \\
\hline
\end{tabular}
\caption{Coefficients of the large-$\ell$ expansion for the tensor $P^{(\ell)}$.}
\label{tableforP}
\end{table}
\begin{table}[t]
\centering
\renewcommand{\arraystretch}{1.2}
\begin{tabular}{|c|c|c|c|c|c|c|c|}
\hline
$p_{Q}$ & \multicolumn{7}{c|}{$c^{Q}_{i,j}$} \\
\hline
\multirow{4}{*}{$0$}
& $i \backslash j$ & $0$ & $1$ & $2$ & $3$ & $4$ & $5$ \\
\cline{2-8}
& $0$ & $-\frac{17}{12}$ & $0$ & $0$ & $0$ & $0$ & $0$ \\
\cline{2-8}
& $1$ & $2.97701$ & $-0.781566$ & $\frac{5}{8}$ & $0$ & $0$ & $0$ \\
\cline{2-8}
& $2$ & $-48.596$ & $9.80987$ & $-1.86301$ & $0.636662$ & $-\frac{25}{288}$ & $0$ \\
\cline{2-8}
& $3$ & $47.9186$ & $108.564$ & $63.8277$ & $-0.168637$ & $0.33582$ & $-\frac{305}{3456}$ \\
\hline
\end{tabular}
\caption{Coefficients of the large-$\ell$ expansion for the tensor $Q^{(\ell)}$.}
\label{tableforQ}
\end{table}
The dd$_{\text{I}}$NTK tensor is
\begin{align}
R^{(\ell)}
&= -6.84299 - 1.75\,\ell
 + 0.181818 \log \ell
 + 0.9375 (\log \ell)^2
\nonumber\\
&\quad
 + \frac{1}{\ell}
   \left(
     57.7277 - 10.7504 \log \ell
   \right.
\nonumber\\
&\qquad\qquad\left.
     + 7.72305 (\log \ell)^2
     + 0.604921 (\log \ell)^3
     - 0.173611 (\log \ell)^4
   \right)
\nonumber\\
&\quad
 + \frac{1}{\ell^2}
   \left(
     -49.1347 - 77.9822 \log \ell
   \right.
\nonumber\\
&\qquad\qquad\left.
     - 101.16 (\log \ell)^2
     + 3.35098 (\log \ell)^3
     - 0.034582 (\log \ell)^4
     - 0.367605 (\log \ell)^5
   \right), \label{Rlargel}
\end{align}
with coefficients listed in Table~\ref{tableforR}.
\begin{table}[t]
\centering
\renewcommand{\arraystretch}{1.2}
\begin{tabular}{|c|c|c|c|c|c|c|c|}
\hline
$p_{R}$ & \multicolumn{7}{c|}{$c^{R}_{i,j}$} \\
\hline
\multirow{4}{*}{$-1$}
& $i \backslash j$ & $0$ & $1$ & $2$ & $3$ & $4$ & $5$ \\
\cline{2-8}
& $0$ & $-\frac{7}{4}$ & $0$ & $0$ & $0$ & $0$ & $0$ \\
\cline{2-8}
& $1$ & $-6.84299$ & $0.181818$ & $0.9375$ & $0$ & $0$ & $0$ \\
\cline{2-8}
& $2$ & $57.7277$ & $-10.7504$ & $7.72305$ & $0.604921$ & $-0.173611$ & $0$ \\
\cline{2-8}
& $3$ & $-49.1347$ & $-77.9822$ & $-101.16$ & $3.35098$ & $-0.034582$ & $-0.367605$ \\
\hline
\end{tabular}
\caption{Coefficients of the large-$\ell$ expansion for the tensor $R^{(\ell)}$.}
\label{tableforR}
\end{table}
The dd$_{\text{II}}$NTK tensors are
\begin{align}
S^{(\ell)}
&= 5.46299 + \frac{3}{4}\,\ell
 - 0.0258836 \log \ell
 - \frac{5}{16} (\log \ell)^2
\nonumber\\
&\quad
 + \frac{1}{\ell}
   \left(
     10.2607 - 11.0739 \log \ell
   \right.
\nonumber\\
&\qquad\qquad\left.
     - 1.66866 (\log \ell)^2
     - 0.583088 (\log \ell)^3
     + \frac{25}{576} (\log \ell)^4
   \right)
\nonumber\\
&\quad
 + \frac{1}{\ell^2}
   \left(
     -16.4737 + 39.1868 \log \ell
   \right.
\nonumber\\
&\qquad\qquad\left.
     - 31.861 (\log \ell)^2
     - 0.309359 (\log \ell)^3
     - 0.592631 (\log \ell)^4
     + \frac{335}{3456} (\log \ell)^5
   \right),\label{Slargel}\\
T^{(\ell)}
&= -17.1701 + 1.88889\,\ell
 + 2.56748 \log \ell
 - 1.84028 (\log \ell)^2
\nonumber\\
&\quad
 + \frac{1}{\ell}
   \left(
     387.871 - 114.126 \log \ell
   \right.
\nonumber\\
&\qquad\qquad\left.
     + 34.7139 (\log \ell)^2
     - 5.58537 (\log \ell)^3
     + 0.868056 (\log \ell)^4
   \right)
\nonumber\\
&\quad
 + \frac{1}{\ell^2}
   \left(
     -372.59 - 1045.22 \log \ell
   \right.
\nonumber\\
&\qquad\qquad\left.
     - 256.202 (\log \ell)^2
     - 88.4311 (\log \ell)^3
     - 0.609713 (\log \ell)^4
     - 0.0751136 (\log \ell)^5
     \right),\label{Tlargel}\\
U^{(\ell)}
&= \frac{27}{8} + \frac{1}{\ell}
   \left(
     -26.2564 + 5.39205 \log \ell
     - \frac{45}{16} (\log \ell)^2
   \right)
\nonumber\\
&\quad
 + \frac{1}{\ell^2}
   \left(
     -196.103 + 108.191 \log \ell
   \right.
\nonumber\\
&\qquad\qquad\left.
     - 3.04052 (\log \ell)^2
     + 0.995277 (\log \ell)^3
     + 0.0161773 (\log \ell)^4
     + \frac{5}{64} (\log \ell)^5
   \right)
\nonumber\\
&\quad
 + \frac{1}{\ell^3}
   \left(
     218.985 - 21.8079 \log \ell
   \right.
\nonumber\\
&\qquad\qquad\left.
     + 93.3553 (\log \ell)^2
     - 11.5693 (\log \ell)^3
     - 0.331592 (\log \ell)^4
     - 0.409017 (\log \ell)^5
   \right),\label{Ulargel}
\end{align}
with coefficients given in Tables~\ref{tableforS}-\ref{tableforU}.
\begin{table}[t]
\centering
\renewcommand{\arraystretch}{1.2}
\begin{tabular}{|c|c|c|c|c|c|c|c|}
\hline
$p_{S}$ & \multicolumn{7}{c|}{$c^{S}_{i,j}$} \\
\hline
\multirow{4}{*}{$-1$}
& $i \backslash j$ & $0$ & $1$ & $2$ & $3$ & $4$ & $5$ \\
\cline{2-8}
& $0$ & $\frac{3}{4}$ & $0$ & $0$ & $0$ & $0$ & $0$ \\
\cline{2-8}
& $1$ & $5.46299$ & $-0.0258836$ & $-\frac{5}{16}$ & $0$ & $0$ & $0$ \\
\cline{2-8}
& $2$ & $10.2607$ & $-11.0739$ & $-1.66866$ & $-0.583088$ & $\frac{25}{576}$ & $0$ \\
\cline{2-8}
& $3$ & $-16.4737$ & $39.1868$ & $-31.861$ & $-0.309359$ & $-0.592631$ & $\frac{335}{3456}$ \\
\hline
\end{tabular}
\caption{Coefficients of the large-$\ell$ expansion for the tensor $S^{(\ell)}$.}
\label{tableforS}
\end{table}
\begin{table}[t]
\centering
\renewcommand{\arraystretch}{1.2}
\begin{tabular}{|c|c|c|c|c|c|c|c|}
\hline
$p_{T}$ & \multicolumn{7}{c|}{$c^{T}_{i,j}$} \\
\hline
\multirow{4}{*}{$-1$}
& $i \backslash j$ & $0$ & $1$ & $2$ & $3$ & $4$ & $5$ \\
\cline{2-8}
& $0$ & $1.88889$ & $0$ & $0$ & $0$ & $0$ & $0$ \\
\cline{2-8}
& $1$ & $-17.1701$ & $2.56748$ & $-1.84028$ & $0$ & $0$ & $0$ \\
\cline{2-8}
& $2$ & $387.871$ & $-114.126$ & $34.7139$ & $-5.58537$ & $0.868056$ & $0$ \\
\cline{2-8}
& $3$ & $-372.59$ & $-1045.22$ & $-256.202$ & $-88.4311$ & $-0.609713$ & $-0.0751136$ \\
\hline
\end{tabular}
\caption{Coefficients of the large-$\ell$ expansion for the tensor $T^{(\ell)}$.}
\label{tableforT}
\end{table}
\begin{table}[t]
\centering
\renewcommand{\arraystretch}{1.2}
\begin{tabular}{|c|c|c|c|c|c|c|c|}
\hline
$p_{U}$ & \multicolumn{7}{c|}{$c^{U}_{i,j}$} \\
\hline
\multirow{4}{*}{$0$}
& $i \backslash j$ & $0$ & $1$ & $2$ & $3$ & $4$ & $5$ \\
\cline{2-8}
& $0$ & $\frac{27}{8}$ & $0$ & $0$ & $0$ & $0$ & $0$ \\
\cline{2-8}
& $1$ & $-26.2564$ & $5.39205$ & $-\frac{45}{16}$ & $0$ & $0$ & $0$ \\
\cline{2-8}
& $2$ & $-196.103$ & $108.191$ & $-3.04052$ & $0.995277$ & $0.0161773$ & $\frac{5}{64}$ \\
\cline{2-8}
& $3$ & $218.985$ & $-21.8079$ & $93.3553$ & $-11.5693$ & $-0.331592$ & $-0.409017$ \\
\hline
\end{tabular}
\caption{Coefficients of the large-$\ell$ expansion for the tensor $U^{(\ell)}$.}
\label{tableforU}
\end{table}
Finally, the sextic vertex reads
\begin{align}\label{V6largel}
V_{6}^{(\ell)} &= \frac{2}{\ell^3}
 + \frac{1}{\ell^4}
   \left(
     2.85353 + 2.5 \log \ell
   \right).
\end{align}
with coefficients reported in Table~\ref{tableforV6}.
\begin{table}[t]
\centering
\renewcommand{\arraystretch}{1.2}
\begin{tabular}{|c|c|c|c|c|c|c|c|}
\hline
$p_{V_6}$ & \multicolumn{7}{c|}{$c^{V_6}_{i,j}$} \\
\hline
\multirow{3}{*}{$3$}
& $i \backslash j$ & $0$ & $1$ & $2$ & $3$ & $4$ & $5$ \\
\cline{2-8}
& $0$ & $2$ & $0$ & $0$ & $0$ & $0$ & $0$ \\
\cline{2-8}
& $1$ & $2.85353$ & $2.5$ & $0$ & $0$ & $0$ & $0$ \\
\hline
\end{tabular}
\caption{Coefficients of the large-$\ell$ expansion for the tensor $V_6^{(\ell)}$.}
\label{tableforV6}
\end{table}

\section{Experimental setup}
\label{app:experimental_setup}


\subsection{Stability analysis}

In this section, we provide details of the experiments in Section~\ref{sec:applications-orthogonal} and present additional results validating the stability at criticality of the NNGP $K$, the four-point vertex $V_{4}$, and the NTK tensors $D$, $F$, $A$, and $B$ as the network depth $\ell$ increases.

We estimate the preactivation and NTK tensors via Monte Carlo sampling. For this purpose, individual networks are initialized in JAX using \texttt{neural-tangents} \texttt{stax} layers. Computing NTK-related quantities requires the Jacobian $\frac{\partial z_{i}^{(\ell)}(x_{\alpha})}{\partial\theta_{\mu}}$, which we obtain via JAX's automatic differentiation. Since forming the full Jacobian is computationally and memory intensive for wide networks, we instead compute layer-wise statistics sequentially and propagate them to deeper layers using the chain rule. Sampling across initializations is parallelized with \texttt{vmap}. Orthogonal weight initialization is implemented using JAX's \texttt{orthogonal} initializer. As tanh is not directly available in this setup, we approximate it using \texttt{ElementwiseNumerical} with degree 80.

All experiments are conducted on tanh MLPs without biases. Inputs are drawn with i.i.d. standard normal components, resulting in
\begin{align}
x_0 &=
\begin{pmatrix}\label{eq:x0_input}
0.934738 \\
0.26696 \\
0.784097 \\
0.656448 \\
0.305308 \\
0.401958 \\
0.894594 \\
0.0559893 \\
0.000643274 \\
0.0274513
\end{pmatrix}
\oplus
\begin{pmatrix}
0.377754 \\
0.127474 \\
0.879907 \\
0.710555 \\
0.509949 \\
0.312682 \\
0.0854376 \\
0.869372 \\
0.114232 \\
0.0851646
\end{pmatrix}
\oplus
\begin{pmatrix}
0.254697 \\
0.560475 \\
0.508664 \\
0.0271565 \\
0.426426 \\
0.457646 \\
0.913778 \\
0.40436 \\
0.407187 \\
0.0644401
\end{pmatrix}
\oplus
\begin{pmatrix}
0.256718 \\
0.869761 \\
0.0406222 \\
0.431362 \\
0.906228 \\
0.55979 \\
0.275852 \\
0.553722 \\
0.235762 \\
0.751627
\end{pmatrix}
\oplus
\begin{pmatrix}
0.178558 \\
0.411167 \\
0.100846 \\
0.220264 \\
0.215917 \\
0.490943 \\
0.596323 \\
0.0799147 \\
0.205998 \\
0.0372218
\end{pmatrix}\,,
\end{align}

\begin{align}\label{eq:x1_input}
x_1 &=
\begin{pmatrix}
0.986304 \\
0.396331 \\
0.829442 \\
0.163461 \\
0.0583318 \\
0.0385021 \\
0.885504 \\
0.160447 \\
0.583711 \\
0.389812
\end{pmatrix}
\oplus
\begin{pmatrix}
0.193679 \\
0.632048 \\
0.2954 \\
0.12661 \\
0.440952 \\
0.949114 \\
0.824362 \\
0.373311 \\
0.386399 \\
0.101208
\end{pmatrix}
\oplus
\begin{pmatrix}
0.815355 \\
0.475999 \\
0.493653 \\
0.267819 \\
0.0133166 \\
0.814708 \\
0.315126 \\
0.47199 \\
0.992467 \\
0.570161
\end{pmatrix}
\oplus
\begin{pmatrix}
0.23285 \\
0.183045 \\
0.92565 \\
0.199642 \\
0.38384 \\
0.184987 \\
0.518954 \\
0.078869 \\
0.456603 \\
0.333712
\end{pmatrix}
\oplus
\begin{pmatrix}
0.752504 \\
0.0212271 \\
0.805301 \\
0.696779 \\
0.0208987 \\
0.71065 \\
0.335234 \\
0.908037 \\
0.94978 \\
0.318431
\end{pmatrix}\,,
\end{align}

\begin{align}\label{eq:x2_input}
x_2 &=
\begin{pmatrix}
0.268954 \\
0.486888 \\
0.22212 \\
0.915653 \\
0.795563 \\
0.797374 \\
0.826671 \\
0.11106 \\
0.853911 \\
0.298177
\end{pmatrix}
\oplus
\begin{pmatrix}
0.0628893 \\
0.679924 \\
0.792366 \\
0.986086 \\
0.936489 \\
0.273049 \\
0.360604 \\
0.921974 \\
0.820319 \\
0.53683
\end{pmatrix}
\oplus
\begin{pmatrix}
0.631468 \\
0.779267 \\
0.0763499 \\
0.669761 \\
0.155254 \\
0.343043 \\
0.907627 \\
0.0647726 \\
0.0988953 \\
0.761094
\end{pmatrix}
\oplus
\begin{pmatrix}
0.901974 \\
0.196523 \\
0.939642 \\
0.794076 \\
0.225599 \\
0.66191 \\
0.912069 \\
0.0218007 \\
0.303759 \\
0.0650933
\end{pmatrix}
\oplus
\begin{pmatrix}
0.345452 \\
0.888144 \\
0.360429 \\
0.207685 \\
0.600649 \\
0.377979 \\
0.827548 \\
0.871934 \\
0.364661 \\
0.384977
\end{pmatrix}\,,
\end{align}

\begin{align}\label{eq:x3_input}
x_3 &=
\begin{pmatrix}
0.0682295 \\
0.140096 \\
0.340508 \\
0.359852 \\
0.613812 \\
0.19288 \\
0.134531 \\
0.49299 \\
0.591431 \\
0.893789
\end{pmatrix}
\oplus
\begin{pmatrix}
0.773795 \\
0.604593 \\
0.842464 \\
0.882483 \\
0.93913 \\
0.872499 \\
0.317102 \\
0.289739 \\
0.0178811 \\
0.160385
\end{pmatrix}
\oplus
\begin{pmatrix}
0.814678 \\
0.0561101 \\
0.193425 \\
0.413272 \\
0.622813 \\
0.668977 \\
0.771141 \\
0.884678 \\
0.516841 \\
0.54484
\end{pmatrix}
\oplus
\begin{pmatrix}
0.54987 \\
0.101687 \\
0.224264 \\
0.370917 \\
0.731496 \\
0.234507 \\
0.906523 \\
0.431445 \\
0.115241 \\
0.275449
\end{pmatrix}
\oplus
\begin{pmatrix}
0.810621 \\
0.0049892 \\
0.133457 \\
0.778419 \\
0.169455 \\
0.0719057 \\
0.1728 \\
0.906299 \\
0.322143 \\
0.847235
\end{pmatrix}\,.
\end{align}
For the two-dimensional tensors $K$ and $\Theta$, computations are restricted to the input pair $(x_0, x_1)$.

\textbf{Monte Carlo estimation of the kernels.}
We consider an ensemble of \(N_{\text{net}}\) neural networks with orthogonal weight initialization. The empirical NNGP and NTK are estimated as
\begin{align}
  \overline{K}^{(\ell)}_{\alpha \beta} &= \frac{1}{N_\text{net}} \sum_{I=1}^{N_\text{net}} z^{(\ell)}_{I;i,\alpha} z^{(\ell)}_{I;i,\beta} \,, \label{eq:nngp_mc}\\
\overline{\Theta}^{(\ell)}_{\alpha \beta} &= \frac{1}{N_\text{net}} \sum_{I=1}^{N_\text{net}} \left( \sum_{\mu}\frac{\partial z^{(\ell)}_{I;i,\alpha}}{\partial\theta_{\mu}}\frac{\partial z^{(\ell)}_{I; i, \beta}}{\partial\theta_{\mu}}\right) \,,
\end{align}
where \(i\) denotes a fixed channel and \(I\) indexes network initializations. Error bars indicate standard errors, computed as the sample standard deviation divided by \(\sqrt{N_{\text{net}}}\).


\begin{figure}
    \begin{center}
        \includegraphics[width=0.99\textwidth]{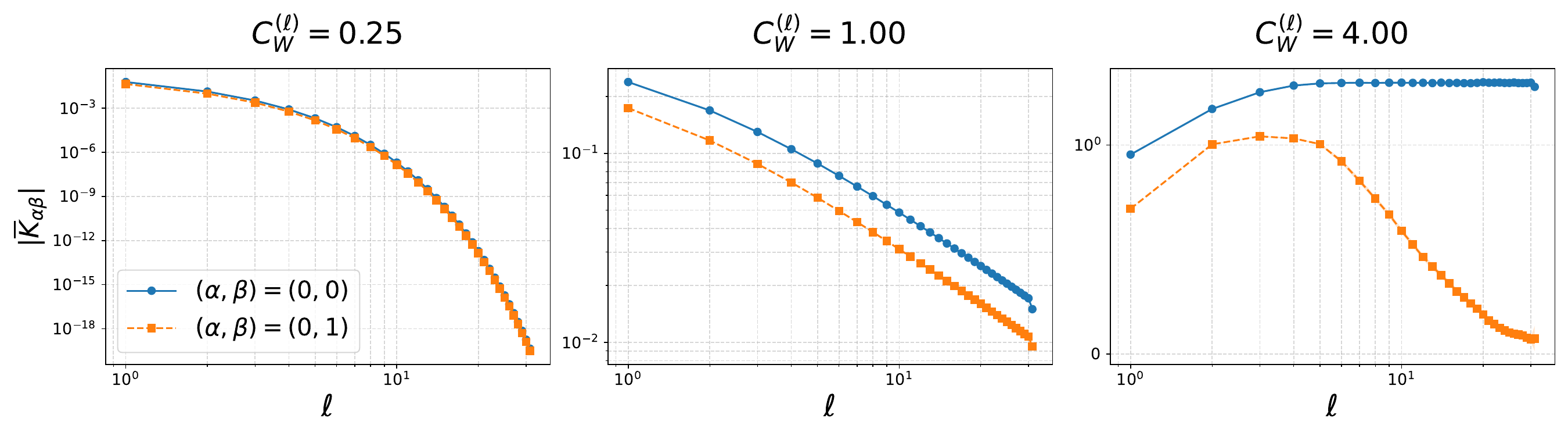}
    \end{center}
    \caption{\emph{Variance stability.} Components of the Monte Carlo estimate of the NNGP \(\overline{K}_{\alpha \beta}\) for a \(\tanh\) MLP, shown as a function of layer depth \(\ell\) for identical and distinct inputs, across three values of \(C_W^{(\ell)}\). Hidden layers have width \num{50}. Means are computed over \num{600} initializations for both non-critical (left, right) and critical (middle) cases. Error bars are shown in all panels (see text).}\label{fig:nngp_comparison}
\end{figure}
\textbf{Variance stability.} Figure~\ref{fig:nngp_comparison} demonstrates variance stability under the same setup used for the gradient analysis in Figure~\ref{fig:gradient_stability_plot}. At the critical value $C_{W}=1$, both the orthogonal NNGP and NTK remain stable. While variance stability has been studied numerically for Gaussian initializations in~\cite{banta2024}, our results show that the corresponding criticality conditions transfer directly to the orthogonal setting.

Furthermore, at criticality, the empirical NNGP remains stable beyond the perturbative regime $\ell<n$. In Figure~\ref{fig:nngp_V_100_layers} (a), we compare its diagonal component with both the single-input exact solution (Section~\ref{app:solut-single-input}) and the large-$\ell$ expansion (Section~\ref{app:large_l_expansion}). The Monte Carlo estimate agrees quantitatively with the exact solution across all depths, and matches the large-$\ell$ expansion at large $\ell$. This confirms the stability of the orthogonal NNGP in the regime $\ell>n$, consistent with the observations of~\cite{day2023}.

\begin{figure}
    \begin{center}
        \includegraphics[width=0.99\textwidth]{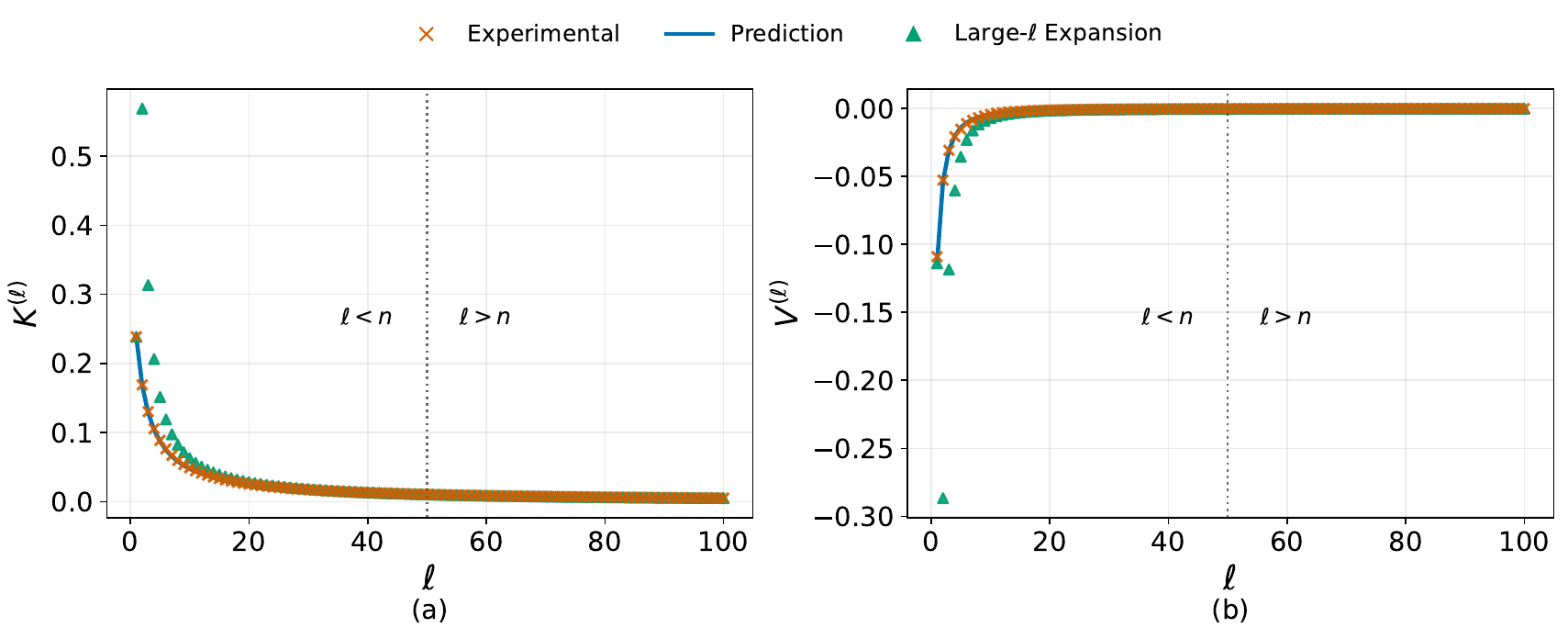}
    \end{center}
\caption{\emph{Stability beyond the perturbative regime.}
Comparison of the diagonal components of the Monte Carlo estimate, single-input exact solution, and large-$\ell$ expansion for the NNGP $K$ and quartic vertex $V$ in a tanh MLP with orthogonal initialization. Hidden layers have width \num{50}; means are computed over \num{600} initializations. (a) The NNGP estimates are in quantitative agreement with the exact solution at both small and large depths. The large-$\ell$ expansion is inaccurate at small $\ell$, as expected, but becomes accurate after a few layers. Stability persists up to $\ell=100$, well beyond the perturbative regime $\ell<n$. (b) The quartic vertex $V$ exhibits analogous scaling behavior.
\label{fig:nngp_V_100_layers}}
\end{figure}
\textbf{Monte--Carlo estimation of the four-point cumulant $V_{4}$.} We estimate the quartic vertex \(V_{4}\) to leading order as
\begin{align}
  \overline{V}^{(\ell)}_{\alpha \beta \gamma \delta} &= \left(\frac{1}{N_\text{net}} \sum_{I=1}^{N_\text{net}} \frac{n_{\ell -1}}{n_{\ell} (n_{\ell}-1)} \sum_{\substack{i, j = 1 \\i \ne j}}^{n_\ell} z^{(\ell)}_{I;i,\alpha} z^{(\ell)}_{I;i,\beta} z^{(\ell)}_{I;j,\gamma} z^{(\ell)}_{I;j,\delta}\right)  - n_{\ell-1}\overline{K}^{(\ell)}_{\alpha \beta} \overline{K}^{(\ell)}_{\gamma \delta}+ \mathcal{O}\left(\frac{1}{n}\right) \,. 
\end{align}
where \(\overline{K}\) is defined in~\eqref{eq:nngp_mc}. Here we exploit channel symmetry and estimate
\begin{equation}
\overline{K}^{(\ell)}_{\alpha \beta}
= \frac{1}{N_{\text{net}}} \sum_{I=1}^{N_{\text{net}}}
\frac{1}{n_\ell} \sum_{i=1}^{n_\ell}
z^{(\ell)}_{I;i,\alpha} z^{(\ell)}_{I;i,\beta}.
\label{eq:nngp_trace_avg}
\end{equation}
To estimate uncertainties, we repeat the Monte Carlo procedure \(N_{\text{stats}}\) times and report the mean and standard deviation. Results up to \(\ell=30\) are shown in Figures~\ref{fig:v4_comparison_critical} (critical) and~\ref{fig:v4_comparison_non_critical} (non-critical). In the critical case, fitted asymptotic power laws confirm the stability of \(V_{4}\).

Moreover, at criticality, the quartic vertex $V$ remains well-behaved beyond the perturbative regime $\ell<n$. In Figure~\ref{fig:nngp_V_100_layers} (b), we compare its diagonal component with the single-input exact solution (Section~\ref{app:solut-single-input}) and the large-$\ell$ expansion (Section~\ref{app:large_l_expansion}). The Monte Carlo estimates closely track the exact solution across all depths and align with the asymptotic expansion at large $\ell$. This provides further evidence for the stability of orthogonal networks in the regime $\ell>n$, consistent with~\cite{day2023}.

\textbf{Monte--Carlo estimation of the tensors $D, F, A$ and $B$.} Similarly to $V$, these tensors are estimated to leading order as
\begin{align}
    \overline{D}^{(\ell)}_{\alpha\beta\gamma\delta} &= \frac{1}{N_\text{net}}\sum_{I=1}^{N_\text{net}}\frac{n_{\ell-1}}{n_{\ell}^2}\sum_{i,j=1}^{n_{\ell}}z_{I;i,\alpha}^{(\ell)}z_{I;i,\beta}^{(\ell)}\widehat{\Delta \Theta}_{I;jj,\gamma\delta}^{(\ell)} + \mathcal{O}\left(\frac{1}{n}\right) \\
    \overline{F}^{(\ell)}_{\alpha\gamma\beta\delta} &= \frac{1}{N_\text{net}}\sum_{I=1}^{N_\text{net}}\frac{n_{\ell-1}}{n_{\ell}^2}\sum_{i,j=1}^{n_{\ell}}z_{I;i,\alpha}^{(\ell)}z_{I;j,\beta}^{(\ell)}\widehat{\Delta \Theta}_{I;ij,\gamma\delta}^{(\ell)} + \mathcal{O}\left(\frac{1}{n}\right) \\
    \overline{A}^{(\ell)}_{\alpha\beta\gamma\delta} &= \frac{1}{N_\text{net}}\sum_{I=1}^{N_\text{net}}\frac{n_{\ell-1}}{n_{\ell}^2}\sum_{i,j=1}^{n_{\ell}}\widehat{\Delta\Theta}^{(\ell)}_{I;ii,\alpha\beta}\widehat{\Delta \Theta}_{I;jj,\gamma\delta}^{(\ell)} + \mathcal{O}\left(\frac{1}{n}\right) \\
    \overline{B}^{(\ell)}_{\alpha\gamma\beta\delta} &= \frac{1}{N_\text{net}}\sum_{I=1}^{N_\text{net}}\frac{n_{\ell-1}}{n_{\ell}^2}\sum_{i,j=1}^{n_{\ell}}\widehat{\Delta\Theta}^{(\ell)}_{I;ij,\alpha\beta}\widehat{\Delta \Theta}_{I;ij,\gamma\delta}^{(\ell)} + \mathcal{O}\left(\frac{1}{n}\right) \, ,
\end{align}
where the NTK fluctuation $\widehat{\Delta \Theta}^{(\ell)}_{ij,\alpha \beta}$ was introduced in Section~\ref{subsec:ntk-finite-width}. The computation is repeated $N_\text{stats}$ times to estimate the mean and standard deviation. For each tensor, both critical and non-critical cases are evaluated using the same input configurations as for $V$, up to $\ell=30$. Results for $D$, $F$, $A$, and $B$ are shown in Figures~\ref{fig:D_comparison_critical}--\ref{fig:B_comparison_non_critical}. In the critical case, regressions of the asymptotic power laws (shown in orange) confirm the expected power-law behavior. This provides consistent evidence that the NTK tensors $D$, $F$, $A$, and $B$ are stabilized under the same criticality conditions as in the infinite-width limit.

Similarly, at criticality, the NTK tensors remain well-behaved beyond the perturbative regime $\ell<n$. In Figure~\ref{fig:D_F_100_layers}, we compare the diagonal components of the tensors D and F with their single-input exact solutions (Section~\ref{app:solut-single-input}) and large-$\ell$ expansions (Section~\ref{app:large_l_expansion}). The Monte Carlo estimates closely track the exact solutions across all depths and align with the asymptotic expansions at large $\ell$. This provides further evidence for the stability of orthogonal networks in the regime $\ell>n$, consistent with~\cite{day2023}.
\begin{figure}
    \begin{center}
        \includegraphics[width=0.99\textwidth]{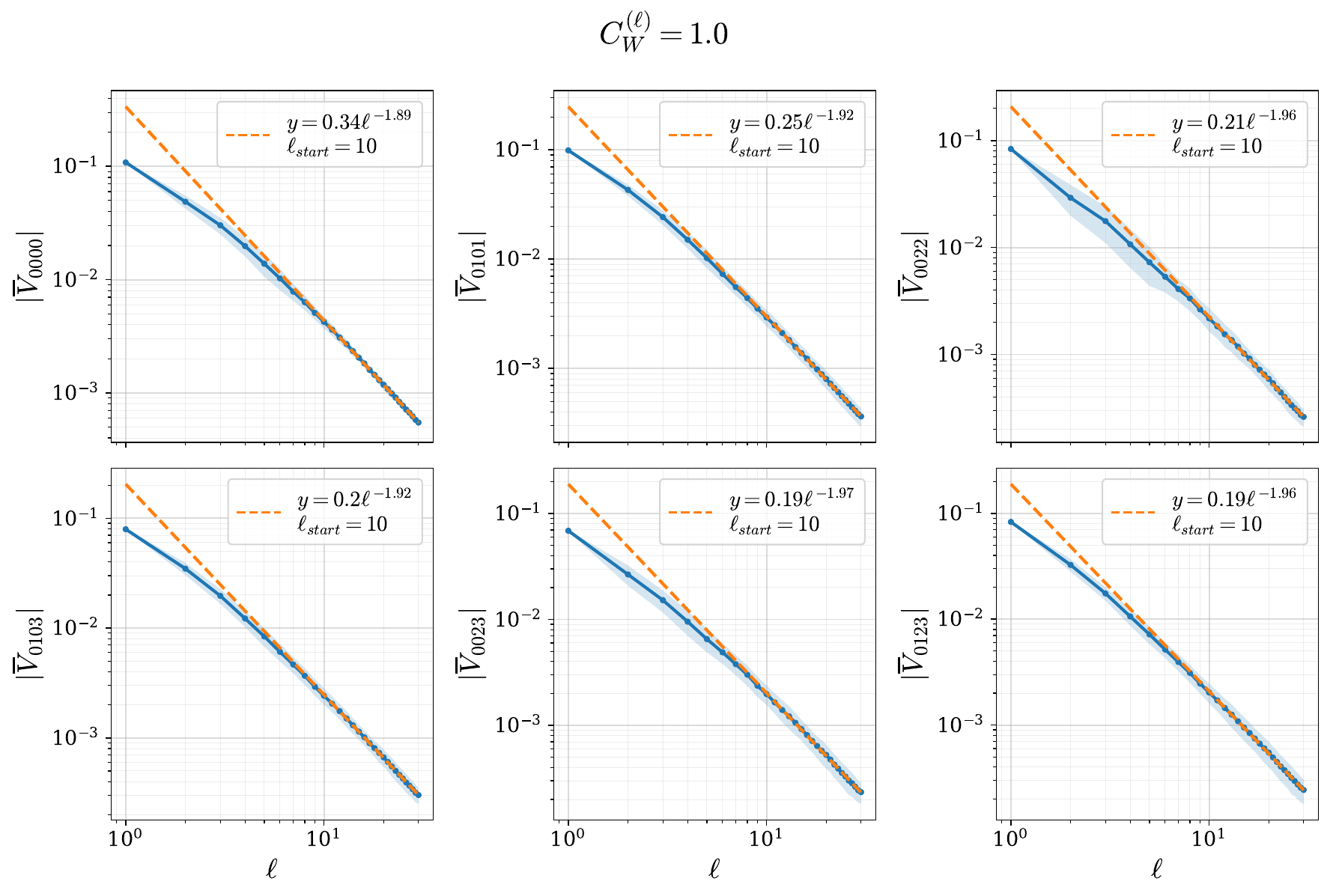}
    \end{center}
\caption{\emph{Stability of the four-point cumulant \(V_4\) at criticality.}
Selected components of the Monte Carlo estimate \(\overline{V}_4\) for a \(\tanh\) MLP are shown as a function of layer depth \(\ell\) at the critical value \(C_W^{(\ell)} = 1\). Hidden layers have width \num{50}. An asymptotic power-law fit is shown in orange, with the fit starting at \(\ell_{\text{start}}\). Estimates are obtained from \(N_\text{net}=\num{600}\) initializations, with means and error bars computed over \(N_\text{stats}=\num{10}\) repetitions (see text).}
\label{fig:v4_comparison_critical}
\end{figure}

\begin{figure}
  \begin{center}
    \includegraphics[width=0.95\textwidth]{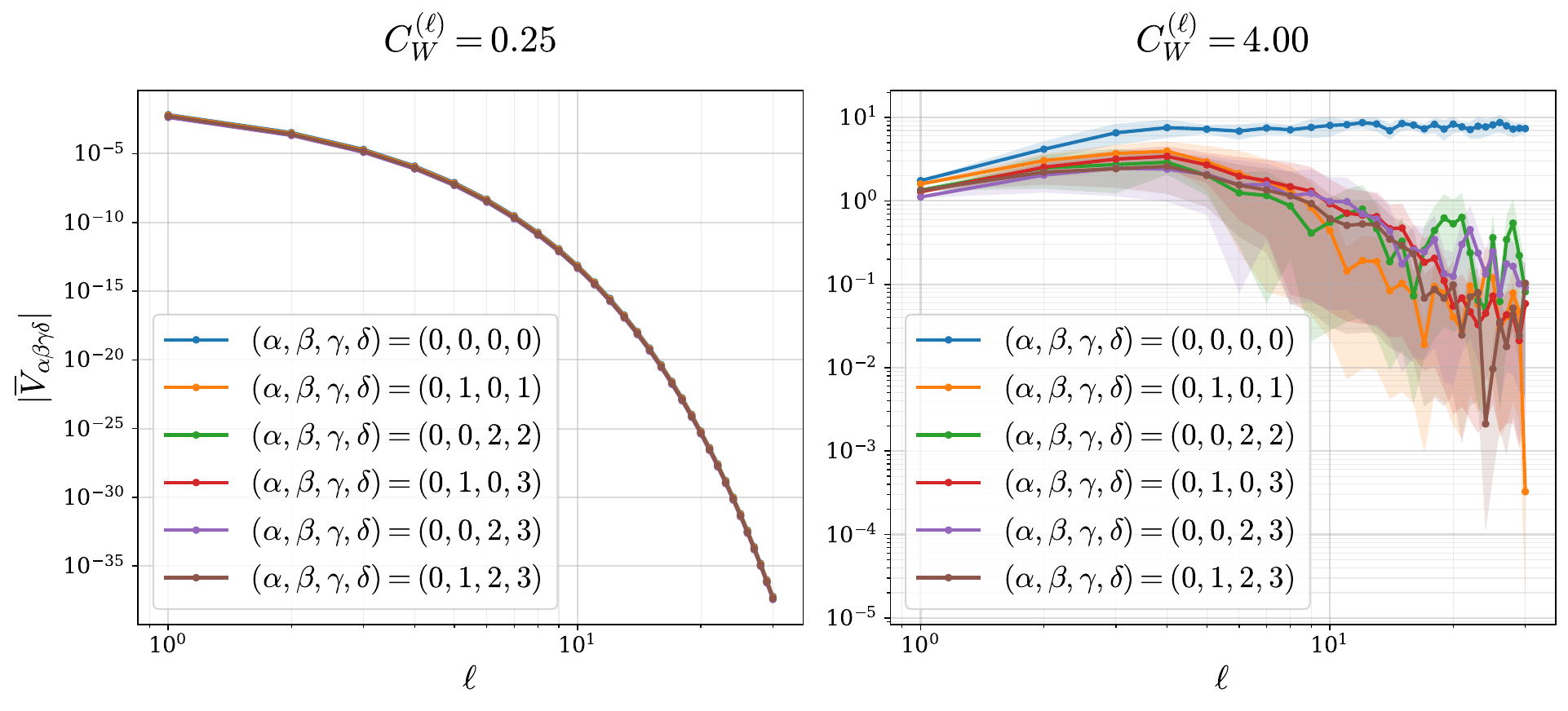}
  \end{center}
\caption{\emph{Instability of the four-point cumulant \(V_4\) away from criticality.}Selected components of the Monte Carlo estimate \(\overline{V}_4\) are shown for \(C_W^{(\ell)} < 1\) (left) and \(C_W^{(\ell)} > 1\) (right). Estimates are computed from \(N_\text{net}=\num{600}\) initializations, with means and error bars obtained from \(N_\text{stats}=\num{10}\) repetitions (see text).}
\label{fig:v4_comparison_non_critical}
\end{figure}

\begin{figure}
    \begin{center}
        \includegraphics[width=0.99\textwidth]{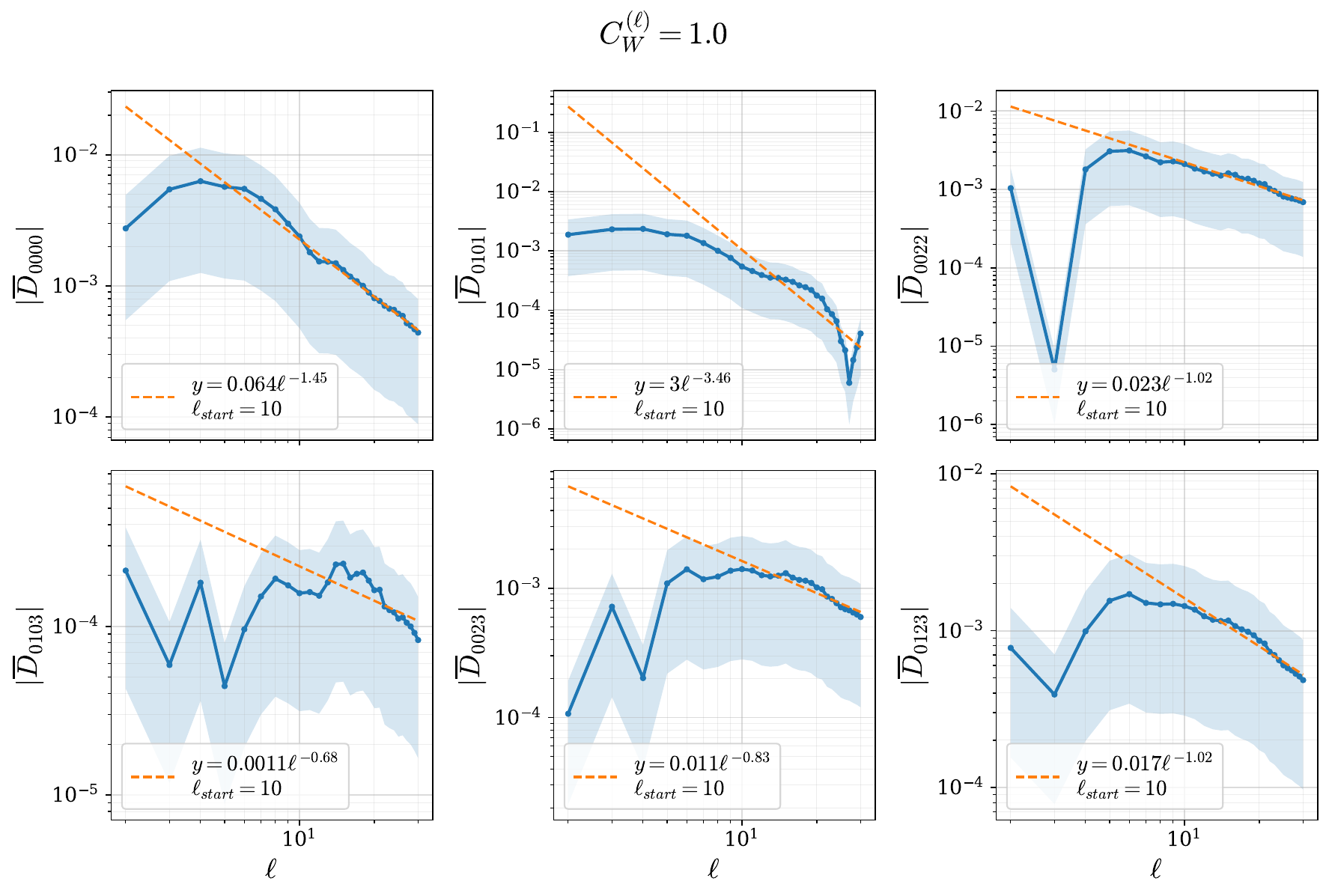}
    \end{center}
    \caption{\emph{Stability of the $D$ tensor at criticality.} Selected components of the Monte Carlo estimate \(\overline{D}\) for a \(\tanh\) MLP are shown as a function of layer depth \(\ell\) at the critical value \(C_W^{(\ell)} = 1\). Hidden layers have width \num{50}. An asymptotic power-law fit is shown in orange, with the fit starting at \(\ell_{\text{start}}\). Estimates are obtained from \(N_\text{net}=\num{600}\) initializations, with means and error bars computed over \(N_\text{stats}=\num{10}\) repetitions (see text).}\label{fig:D_comparison_critical}
\end{figure}

\begin{figure}
  \begin{center}
    \includegraphics[width=0.95\textwidth]{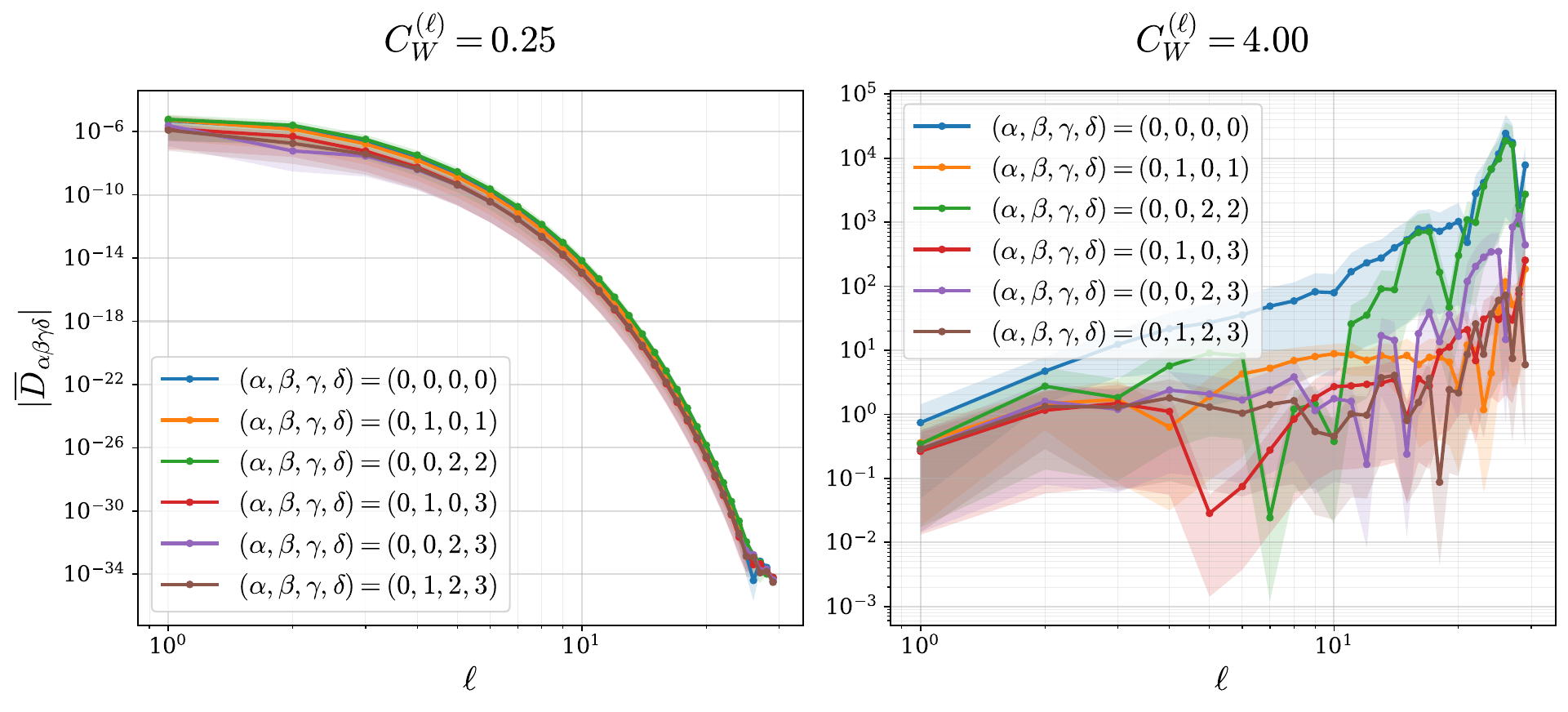}
  \end{center}
  \caption{\emph{Instability of the $D$ tensor away from criticality.} Selected components of the Monte Carlo estimate \(\overline{D}\) are shown for \(C_W^{(\ell)} < 1\) (left) and \(C_W^{(\ell)} > 1\) (right). Estimates are computed from \(N_\text{net}=\num{600}\) initializations, with means and error bars obtained from \(N_\text{stats}=\num{10}\) repetitions (see text).}\label{fig:D_comparison_non_critical}
\end{figure}

\begin{figure}
    \begin{center}
        \includegraphics[width=0.99\textwidth]{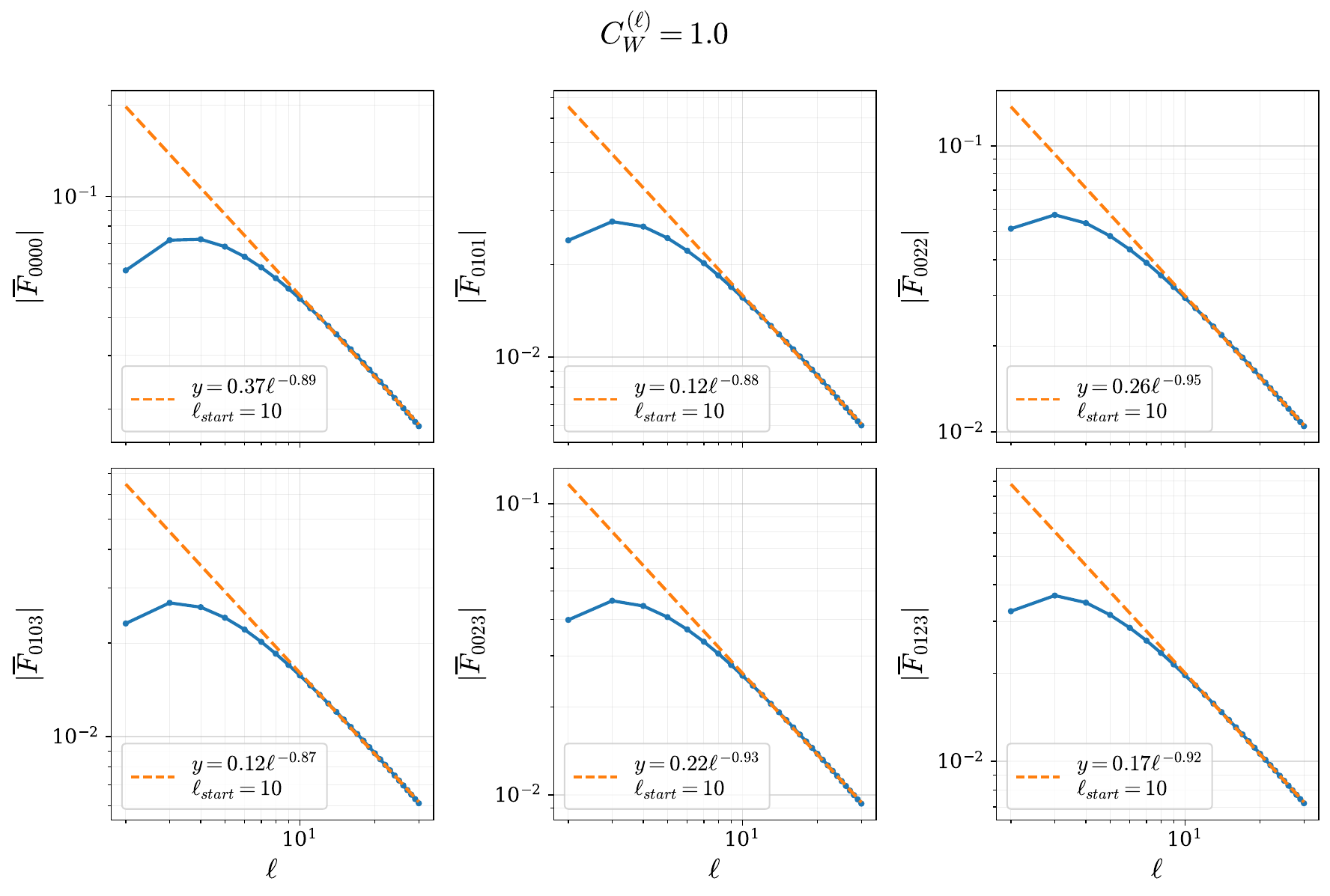}
    \end{center}
    \caption{\emph{Stability of the $F$ tensor at criticality.} Selected components of the Monte Carlo estimate \(\overline{F}\) for a \(\tanh\) MLP are shown as a function of layer depth \(\ell\) at the critical value \(C_W^{(\ell)} = 1\). Hidden layers have width \num{50}. An asymptotic power-law fit is shown in orange, with the fit starting at \(\ell_{\text{start}}\). Estimates are obtained from \(N_\text{net}=\num{600}\) initializations, with means and error bars computed over \(N_\text{stats}=\num{10}\) repetitions (see text).}\label{fig:F_comparison_critical}
\end{figure}

\begin{figure}
  \begin{center}
    \includegraphics[width=0.95\textwidth]{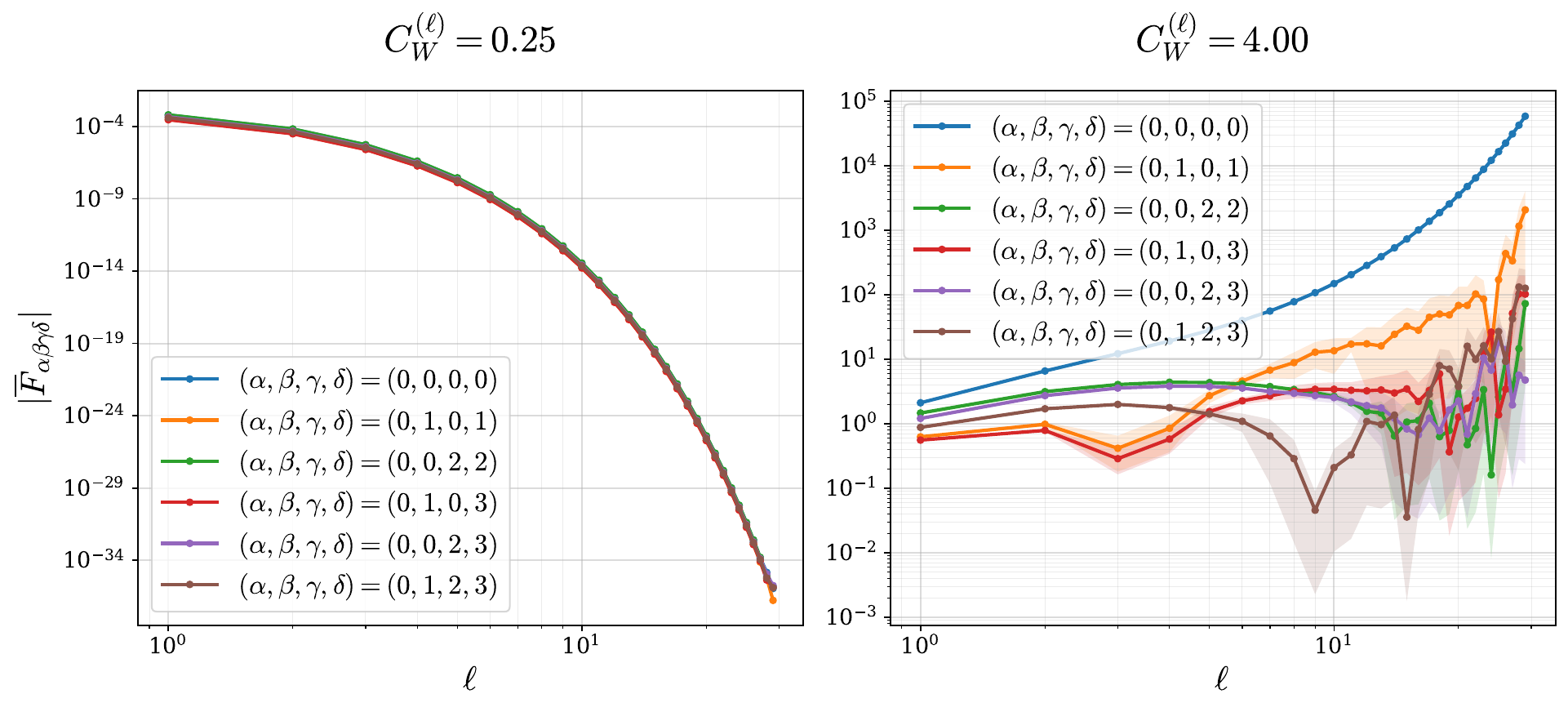}
  \end{center}
  \caption{\emph{Instability of the $F$ tensor away from criticality.} Selected components of the Monte Carlo estimate \(\overline{F}\) are shown for \(C_W^{(\ell)} < 1\) (left) and \(C_W^{(\ell)} > 1\) (right). Estimates are computed from \(N_\text{net}=\num{600}\) initializations, with means and error bars obtained from \(N_\text{stats}=\num{10}\) repetitions (see text).}\label{fig:F_comparison_non_critical}
\end{figure}

\begin{figure}
    \begin{center}
        \includegraphics[width=0.99\textwidth]{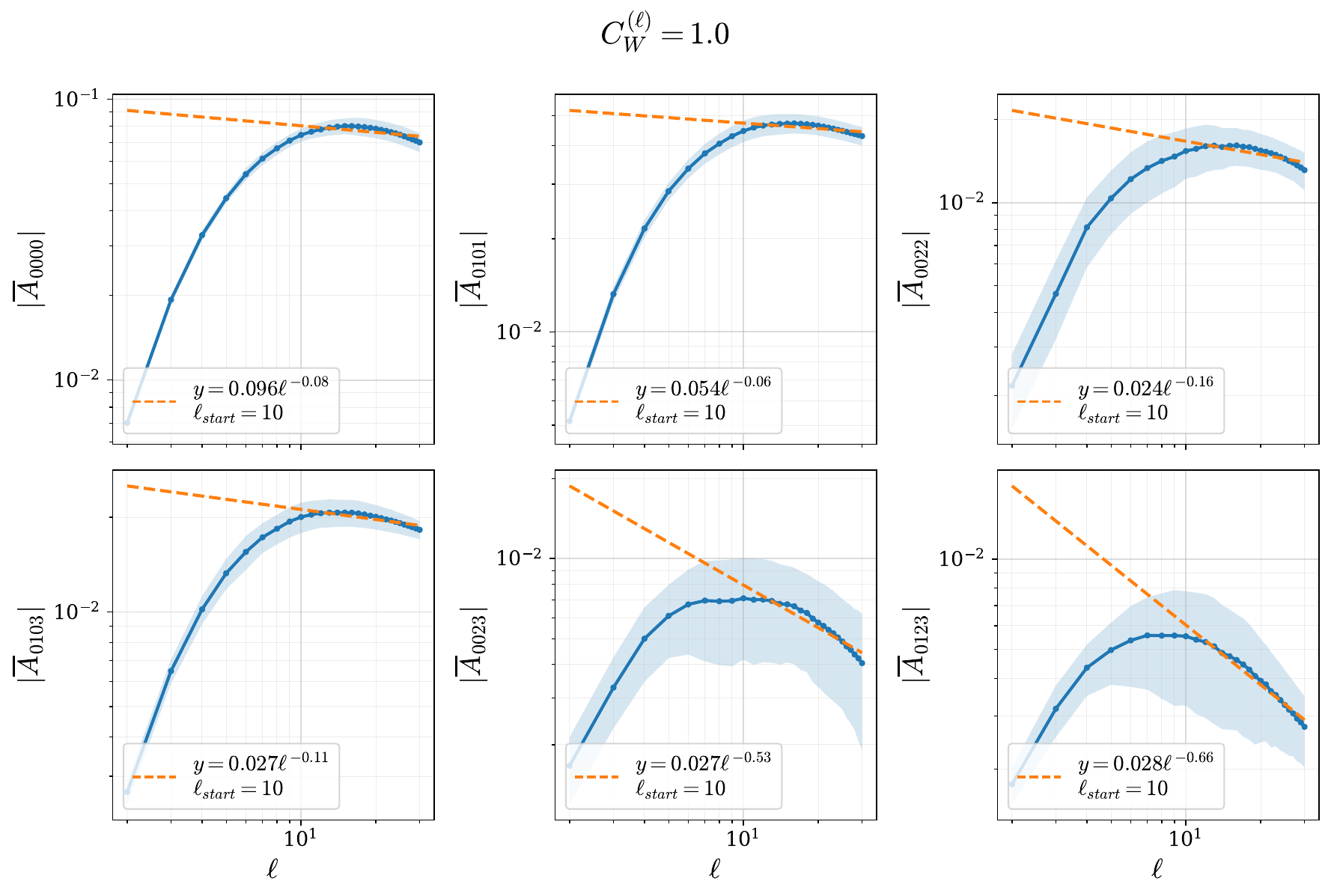}
    \end{center}
    \caption{\emph{Stability of the $A$ tensor at criticality.} Selected components of the Monte Carlo estimate \(\overline{A}\) for a \(\tanh\) MLP are shown as a function of layer depth \(\ell\) at the critical value \(C_W^{(\ell)} = 1\). Hidden layers have width \num{50}. An asymptotic power-law fit is shown in orange, with the fit starting at \(\ell_{\text{start}}\). Estimates are obtained from \(N_\text{net}=\num{600}\) initializations, with means and error bars computed over \(N_\text{stats}=\num{10}\) repetitions (see text).}\label{fig:A_comparison_critical}
\end{figure}

\begin{figure}
  \begin{center}
    \includegraphics[width=0.95\textwidth]{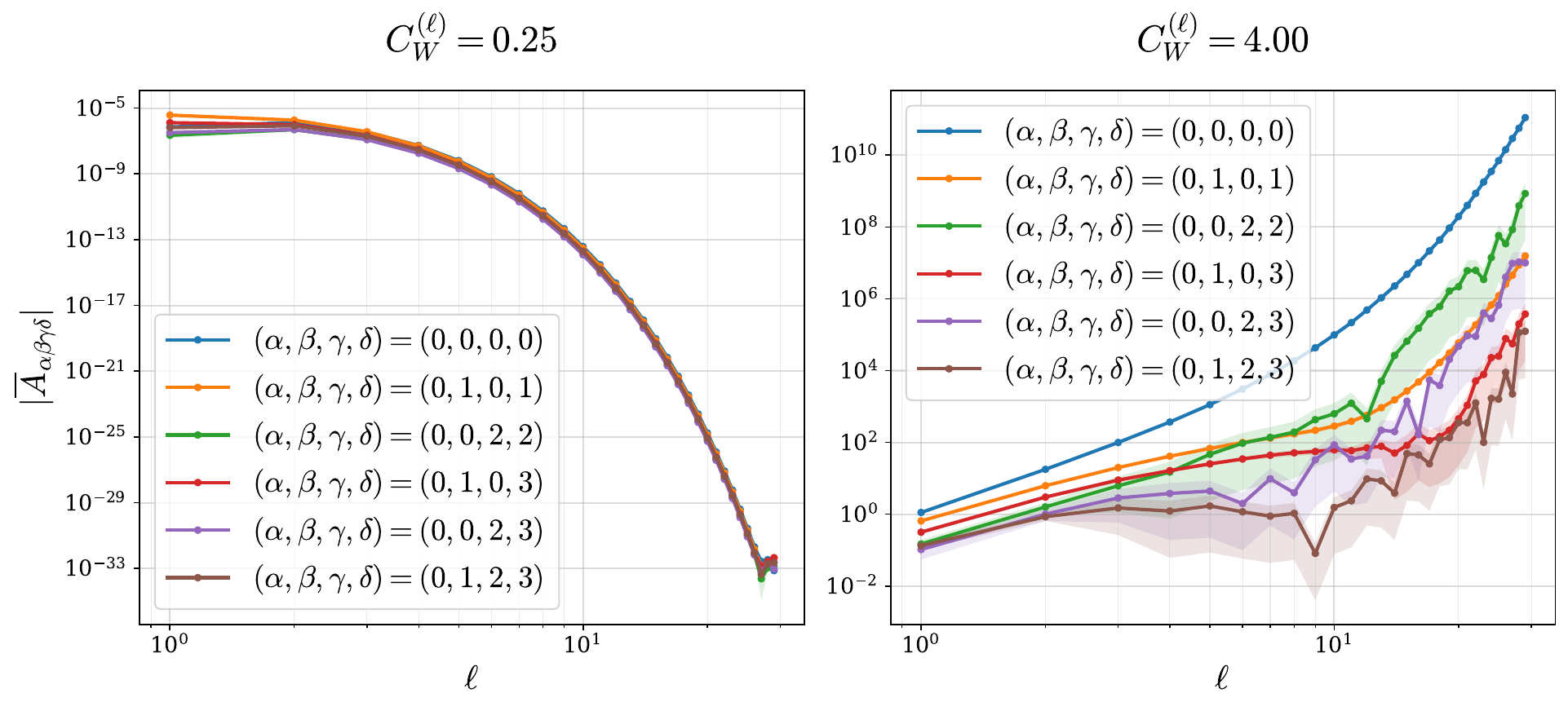}
  \end{center}
  \caption{\emph{Instability of the $A$ tensor away from criticality.} Selected components of the Monte Carlo estimate \(\overline{A}\) are shown for \(C_W^{(\ell)} < 1\) (left) and \(C_W^{(\ell)} > 1\) (right). Estimates are computed from \(N_\text{net}=\num{600}\) initializations, with means and error bars obtained from \(N_\text{stats}=\num{10}\) repetitions (see text).}\label{fig:A_comparison_non_critical}
\end{figure}

\begin{figure}
    \begin{center}
        \includegraphics[width=0.99\textwidth]{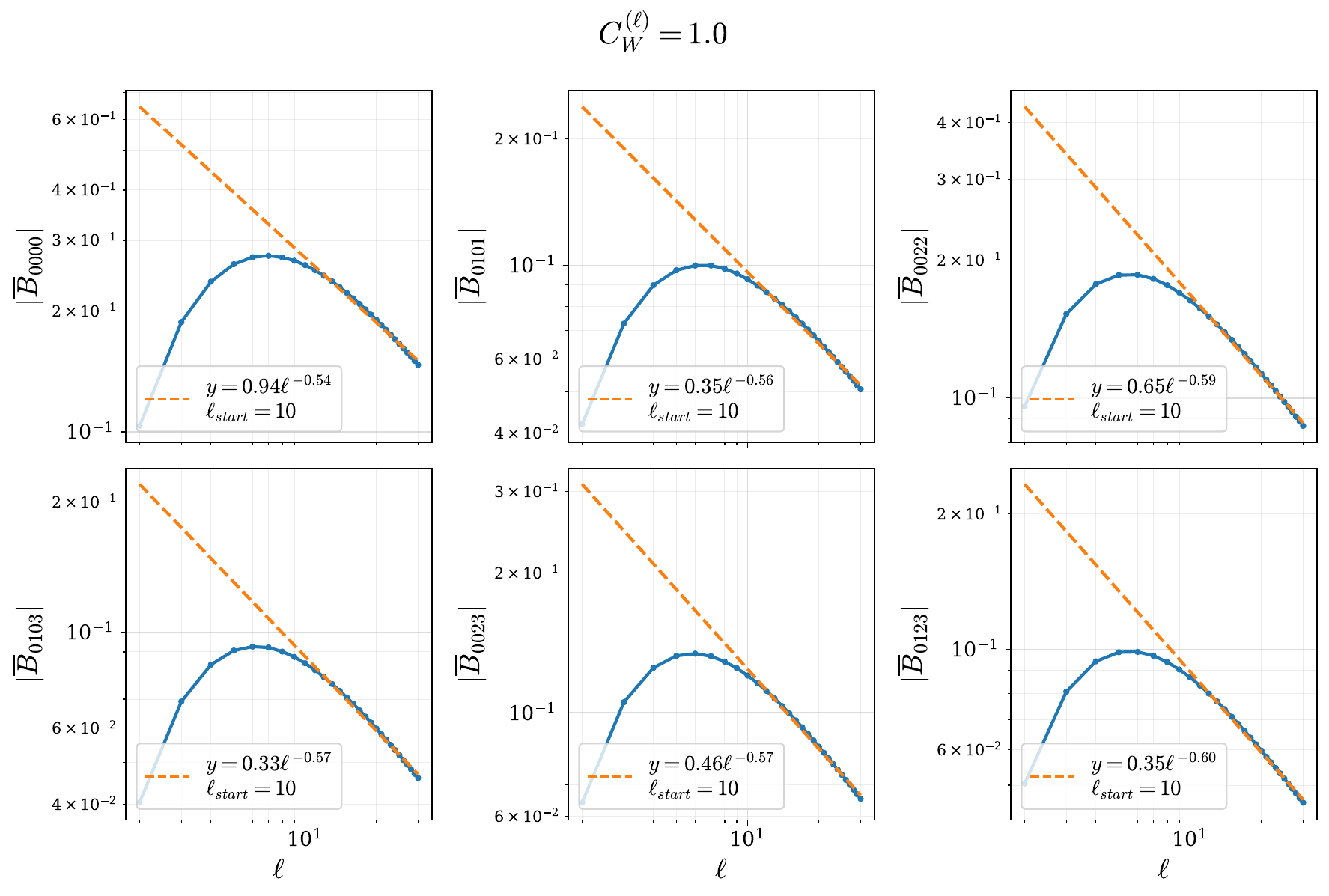}
    \end{center}
    \caption{\emph{Stability of the $B$ tensor at criticality.} Selected components of the Monte Carlo estimate \(\overline{B}\) for a \(\tanh\) MLP are shown as a function of layer depth \(\ell\) at the critical value \(C_W^{(\ell)} = 1\). Hidden layers have width \num{50}. An asymptotic power-law fit is shown in orange, with the fit starting at \(\ell_{\text{start}}\). Estimates are obtained from \(N_\text{net}=\num{600}\) initializations, with means and error bars computed over \(N_\text{stats}=\num{10}\) repetitions (see text).}\label{fig:B_comparison_critical}
\end{figure}

\begin{figure}
  \begin{center}
    \includegraphics[width=0.95\textwidth]{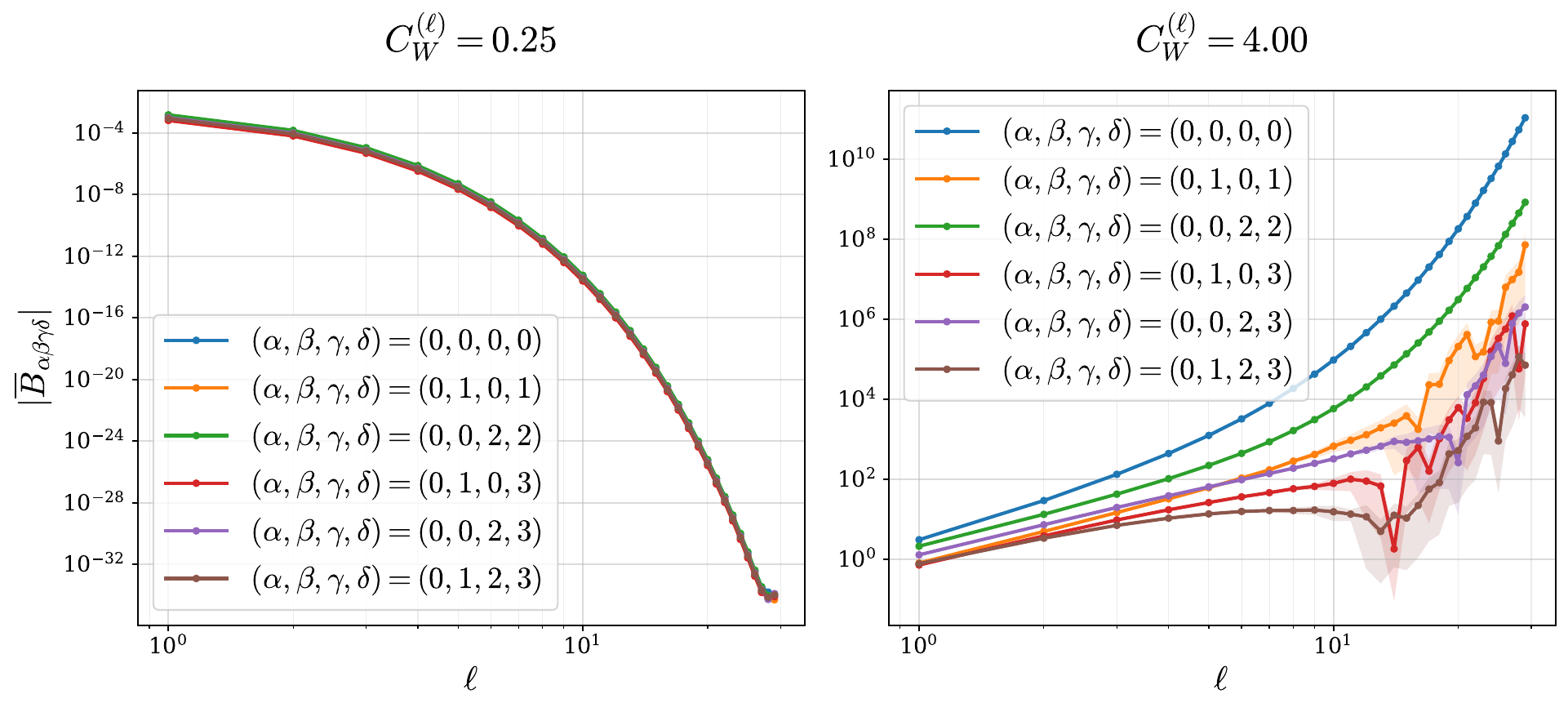}
  \end{center}
  \caption{\emph{Instability of the $B$ tensor away from criticality.} Selected components of the Monte Carlo estimate \(\overline{B}\) are shown for \(C_W^{(\ell)} < 1\) (left) and \(C_W^{(\ell)} > 1\) (right). Estimates are computed from \(N_\text{net}=\num{600}\) initializations, with means and error bars obtained from \(N_\text{stats}=\num{10}\) repetitions (see text).}\label{fig:B_comparison_non_critical}
\end{figure}

\begin{figure}
    \begin{center}
        \includegraphics[width=0.99\textwidth]{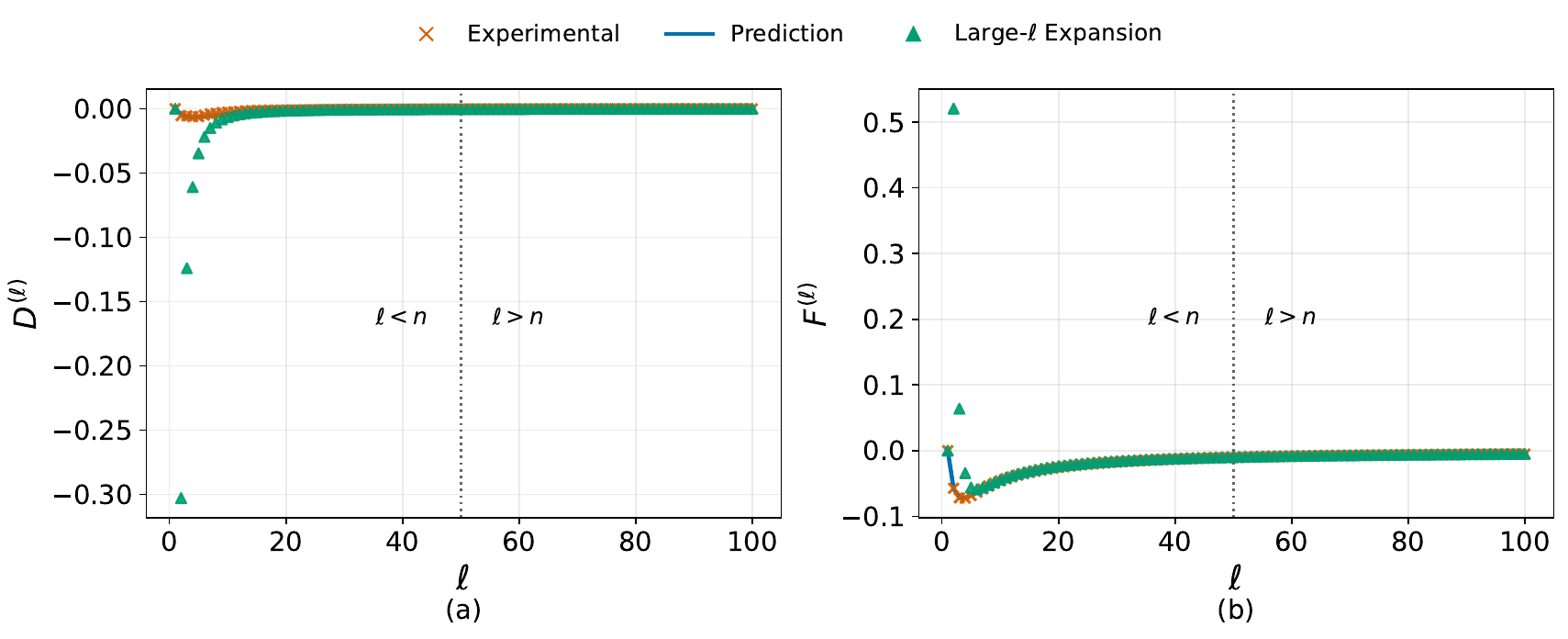}
    \end{center}
\caption{\emph{Stability beyond the perturbative regime.}
Comparison of the diagonal components of the Monte Carlo estimate, single-input exact solution, and large-$\ell$ expansion for the NTK tensors $D$ and $F$ in a tanh MLP with orthogonal initialization. Hidden layers have width \num{50}; means are computed over \num{600} initializations. (a) The tensor $D$ estimates are in quantitative agreement with the exact solution at both small and large depths. The large-$\ell$ expansion is inaccurate at small $\ell$, as expected, but becomes accurate after a few layers. Stability persists up to $\ell=100$, well beyond the perturbative regime $\ell<n$. (b) The tensor $F$ exhibits analogous scaling behavior.
\label{fig:D_F_100_layers}}
\end{figure}

\subsection{Compute resources}
\label{sec:compute-resources}

All experiments were conducted on a laptop equipped with an AMD Radeon Pro 5500M (4\,GB VRAM). Computations were performed on a single GPU. The stability analysis for the NNGP and $V_4$ was completed within several hours for all three values of $C_W$. Gradient-dependent tensors are more computationally demanding; accordingly, the layer width was reduced to \num{50}, and each tensor (for three values of $C_W$) required up to $\mathcal{O}(10)$ hours to compute.













 }



\end{document}